\documentclass[twoside,11pt]{article}

%

\usepackage{jmlr2e}
\usepackage{amsmath,amssymb,amstext,bbm,algorithm,algorithmic,graphicx,wrapfig,
  color,multirow,bigstrut,wasysym}
\RequirePackage[OT1]{fontenc}
\usepackage{stmaryrd,enumerate,latexsym,bm,amsfonts,
  subfigure,wrapfig,verbatim,cases}
\usepackage[small]{caption}



\renewcommand{\th}{^{\rm th}}

 \newcommand{\floor}[1]{\lfloor #1 \rfloor}
 \newcommand{\ceil}[1]{\lceil #1 \rceil}

\hyphenation{half-space hypo-thesis}



\providecommand{\abs}[1]{\lvert#1\rvert}
\providecommand{\norm}[1]{\lVert#1\rVert}

\newcommand{\set}[1]{\left\{#1\right\}}

\newcommand{\mat}[1]{{\mathbf #1}}

\newcommand{\B}{\mat{B}}
\newcommand{\C}{\mat{C}}
\newcommand{\vh}{\mat{h}}

\newcommand{\1}{\mat{1}}

 
 \renewcommand{\H}{\mathcal{H}}
\newenvironment{proof*}{\noindent{\bf Proof:}}{}
\newcommand{\ignore}[1]{}


\newcommand{\bb}[1]{\llbracket{} #1 \rrbracket{}}

\newcommand{\vzero}{\mathbf{0}}

\newcommand{\U}{\mat{U}}
\newcommand{\M}{\mat{M}}
\newcommand{\Q}{\mat{Q}}

\newcommand{\dotp}[2]{\left \langle #1 , #2 \right \rangle}
\newcommand{\E}{\mathbb{E}}
\newcommand{\eps}{\varepsilon}
\newcommand{\vx}{\vec{x}}

\newcommand{\ve}{\vec{e}}

\newcommand{\vs}{\vec{s}}
\newcommand{\vu}{\vec{u}}

\newcommand{\vv}{\vec{v}}
\newcommand{\vp}{\vec{p}}

\newcommand{\vlams}{\bm{\lambda}^*}

\newcommand{\vc}{\vec{c}}
\newcommand{\vq}{\vec{q}}

\newcommand{\f}{\mat{f}}

\newcommand{\va}{\mat{a}}
\newcommand{\vb}{\mat{b}}
\newcommand{\err}{\text{err}}

\newcommand{\enc}[1]{\left( #1 \right)}
\newcommand{\enco}[1]{\left [ #1 \right ]}
\newcommand{\enct}[1]{\left \{ #1 \right \}}
\newcommand{\one}{\mathbbm{1}}

\newcommand{\rw}{\mathcal{R}}

\newcommand{\wlsq}{\mat{W}\mat{L}^{\bm{\sigma}}_\vq}

\newcommand{\vlambda}{{\bm{\lambda}}}


\newcommand{\argmax}{\operatornamewithlimits{argmax}}
\newcommand{\argmin}{\operatornamewithlimits{argmin}}




\newcommand{\To}{\rightarrow}
\newcommand{\R}{\mathbb{R}}

\newcommand{\Z}{\mathbb{Z}}

\newcommand{\eqdef}{\stackrel{\vartriangle}{=}}

{\vspace{2ex} \begin{center} \textbf{\large Homework 4}
  \end{center}
  
  \begin{enumerate}}{\end{enumerate}}

\renewcommand{\vec}[1]{\mathbf{#1}}


\newcommand{\thesis}[1]{}
\newcommand{\notthesis}[1]{#1}
\newcommand{\old}[1]{#1}




\ShortHeadings{A Theory of Multiclass Boosting}{I. Mukherjee and
  R. E. Schapire}

\begin{document}

\title{A Theory of Multiclass Boosting}

\author{\name Indraneel Mukherjee \email imukherj@cs.princeton.edu\\
    \addr   Princeton University,
  Department of Computer Science,
  Princeton, NJ 08540 USA
  \AND
  \name Robert E. Schapire \email schapire@cs.princeton.edu\\
  \addr   Princeton University,
  Department of Computer Science,
  Princeton, NJ 08540 USA}

\editor{}

\maketitle

\begin{abstract}%
     Boosting combines weak classifiers to form highly accurate
  predictors. Although the case of binary classification is well
  understood, in the multiclass setting, the ``correct'' requirements
  on the weak classifier, or the notion of the most efficient boosting
  algorithms are missing. In this paper, we create a broad and general
  framework, within which we make precise and identify the optimal
  requirements on the weak-classifier, as well as design the most
  effective, in a certain sense, boosting algorithms that assume such
  requirements.
\end{abstract}

\begin{keywords}
  Multiclass, boosting, weak learning condition, drifting games
\end{keywords}

\newcommand{\picfile}{paper_graphs}
\newcommand{\Csp}{\mathcal{C}}
\newcommand{\Crow}{\mathcal{C}_0}
\newcommand{\Csam}{{\mathcal{C}^{\text{SAM}}}}
\newcommand{\ada}{\texttt{M1}}
\newcommand{\greedy}{\texttt{Greedy}}
\newcommand{\greedyinfo}{\texttt{Greedy-Info}}
\newcommand{\mh}{\texttt{MH}}
\newcommand{\opt}{\texttt{MM}}
\newcommand{\abe}{\texttt{Abe}}
\newcommand{\eqwls}{\exists\Q.\wlsq}
\newcommand{\Cbin}{\mathcal{C}^{\text{bin}}}
\newcommand{\Csig}{\mathcal{C}^{\text{eor}}}
\newcommand{\csig}{\mathcal{C}^{\text{eor}}_0}
\newcommand{\Bgam}{\mathcal{B}^{\text{eor}}_{\gamma}}
\newcommand{\cB}{\mathcal{B}^{\text{eor}}}
\newcommand{\Ubin}{\mat{U}^{\text{bin}}}
\newcommand{\Ugam}{\mat{U}^{}_{\gamma}}
\newcommand{\Cmr}{\mathcal{C}^{\text{MR}}}
\newcommand{\Cmh}{\mathcal{C}^{\text{MH}}}
\newcommand{\Cmone}{\mathcal{C}^{\text{M1}}}
\newcommand{\Hall}{\mathcal{H}^{\text{all}}}
\newcommand{\Mone}{\mat{B}^{\text{M1}}}
\newcommand{\MH}{\mat{B}^{\text{MH}}}
\newcommand{\MR}{\mat{B}^{\text{MR}}}
\renewcommand{\vh}{{\mat{1}_h}}
\renewcommand{\1}{\mathbbm{1}}
\newcommand{\mow}{(\Cmone,\Mone_\gamma)}
\newcommand{\mhw}{(\Cmh,\MH_\gamma)}
\newcommand{\phib}{\phi}
\newcommand{\dgam}{\Delta_\gamma^k}
\newcommand{\phin}{\tilde{\phi}}
\newcommand{\Lexpe}{L^{\exp}_\eta}
\newcommand{\Lexp}{L^{\exp}}
\newcommand{\Lzero}{L^{{\rm err}}}
\newcommand{\ptope}{\mathcal{P}}
\newcommand{\vps}{\vp^*}
\newcommand{\tp}{\tilde{p}^+}

\ifdefined\thesismode
\renewcommand{\X}{\mathcal{X}}
\else
\newcommand{\X}{\mathcal{X}}
\fi
\newcommand{\Y}{\mathcal{Y}}
\newcommand{\Hopt}{H_{{\rm opt}}}
\renewcommand{\err}{{\rm err}_D}
\newcommand{\ersk}{\widehat{\rm risk}}
\newcommand{\rsk}{{\rm risk}_D}
\newcommand{\cF}{\bar{F}}
\newcommand{\cH}{\bar{H}}
\newcommand{\Fopt}{F^*}

\newcommand{\tf}[1]{\widetilde{#1}}

\section{Introduction}
\label{multi:intro:sec}

Boosting~\citep{SchapireFr12} refers to a general technique of
 combining rules of thumb, or weak classifiers,
 to form highly accurate combined classifiers.
Minimal
 demands are placed on the
 weak classifiers, so that a variety of learning algorithms, also
 called weak-learners, can be employed to discover these simple rules,
 making the algorithm widely applicable. The theory of boosting is
 well-developed for the case of binary classification. In particular,
 the exact requirements on the weak classifiers in this setting are known:
 any algorithm that predicts better than random on any distribution
 over the training set is said to satisfy the weak learning
 assumption. Further, boosting algorithms that minimize loss as
 efficiently as possible have been designed.  Specifically, it is
 known that the Boost-by-majority~\citep{Freund95}  algorithm is optimal
 in a certain sense, and that AdaBoost~\citep{FreundSc97} is a practical
 approximation.

 Such an understanding would be desirable in the multiclass setting as
 well, since many natural classification problems involve more than
 two labels, e.g. recognizing a digit from its image, natural language
 processing tasks such as part-of-speech tagging, and object
 recognition in vision.
However, for such multiclass problems, a complete
 theoretical understanding of boosting is lacking.
In particular, we
 do not know the ``correct'' way to define the requirements on the
 weak classifiers, nor has the notion of optimal boosting been explored in
 the multiclass setting.

 Straightforward extensions of the binary weak-learning condition to
 multiclass do not work. Requiring less error than random guessing on
 every distribution, as in the binary case, turns out to be too weak
 for boosting to be possible when there are more than two labels. On
 the other hand, requiring more than 50\% accuracy even when the
 number of labels is much larger than two is too stringent, and simple
 weak classifiers like decision stumps fail to meet this criterion,
 even though they often can be combined to produce highly accurate
 classifiers~\citep{FreundSc96}. The most common approaches so far have
 relied on reductions to binary classification~\citep{AllweinScSi00},
 but it is hardly clear that the weak-learning conditions implicitly
 assumed by such reductions are the most appropriate.

The purpose of a weak-learning condition is to clarify the goal of the
weak-learner, thus aiding in its design, while providing a specific
minimal guarantee on performance that can be exploited by a boosting
algorithm.
These considerations may significantly impact learning and
generalization because
 knowing the correct weak-learning conditions might allow the use of
 simpler weak classifiers, which in turn can help prevent
 overfitting. Furthermore, boosting algorithms that more efficiently
 and effectively minimize training error may prevent underfitting,
 which can also be important.

 In this \thesis{chapter}\notthesis{paper},
 we create a broad and general framework for studying
 multiclass boosting that formalizes the interaction between the
 boosting algorithm and the weak-learner.
Unlike much, but not all, of the previous work on multiclass boosting,
we focus specifically on the most natural, and perhaps weakest,
 case in which the weak classifiers are
 genuine classifiers in the sense of predicting a single multiclass
 label for each instance.
Our new framework allows us to
 express a range of weak-learning conditions, both new ones and most
 of the ones that had previously been assumed (often only implicitly).
Within this formalism, we can also now finally
 make precise what is meant by \emph{correct} weak-learning conditions
 that are neither too weak nor too strong.

 We focus particularly on a family of novel weak-learning conditions
 that have an especially appealing form: like the binary conditions,
 they require performance that is only slightly better than random
 guessing, though with respect to performance measures that are more
 general than ordinary classification error.  We introduce a whole
 family of such conditions since there are many ways of randomly
 guessing on more than two labels, a key difference between the binary
 and multiclass settings.  Although these conditions impose seemingly
 mild demands on the weak-learner, we show that each one of them is
 powerful enough to guarantee boostability,
 meaning that some combination of the weak classifiers has high
 accuracy.  And while no individual member of the family is necessary
 for boostability, we also show that the entire family taken together
 is necessary in the sense that for every boostable learning problem,
 there exists one member of the family that is satisfied.  Thus, we
 have identified a family of conditions which, as a whole, is
 necessary and sufficient for multiclass boosting.  Moreover, we can
 combine the entire family into a single weak-learning condition that
 is necessary and sufficient by taking a kind of union, or logical
 {\sc or}, of all the members.  This combined condition can also be
 expressed in our framework.

With this understanding, we are able to characterize previously
studied weak-learning conditions.  In particular, the condition
implicitly used by AdaBoost.MH~\citep{SchapireSi99}, which
is based on a one-against-all reduction to binary, turns out to be
strictly stronger than necessary for boostability.  This also applies
to AdaBoost.M1~\citep{FreundSc96}, the most direct generalization of
AdaBoost to multiclass, whose conditions can be shown to be equivalent
to those of AdaBoost.MH in our setting.  On the other hand, the
condition implicit to the SAMME
algorithm by ~\citet{ZhuZoRoHa09} is too weak in the sense that even when
the condition is satisfied,
no boosting algorithm can guarantee to drive down the
training error.  Finally, the condition implicit to
AdaBoost.MR~\citep{SchapireSi99,FreundSc96} (also called AdaBoost.M2)
turns out to be exactly necessary and sufficient for boostability.

Employing proper weak-learning conditions is important, but we also
need boosting algorithms that can exploit these conditions to
effectively drive down error. For a given weak-learning condition, the
boosting algorithm that drives down training error most efficiently in
our framework can be understood as the optimal strategy for playing a
certain two-player game. These games are non-trivial to
analyze. However, using the powerful machinery of drifting
games~\citep{FreundOp02,Schapire01}, we are able to compute
the optimal strategy for the games arising out of each weak-learning
condition in the family described above.
\old{
Compared to earlier work, our
optimality results hold more generally and also achieve tighter
bounds.
}
These optimal strategies
have a natural interpretation in terms of random walks, a phenomenon
that has been observed in other
settings~\citep{AbernethyBaRaTe08,Freund95}.

We also analyze the optimal boosting strategy when using the minimal
weak learning condition, and this poses additional challenges.
Firstly, the minimal weak learning condition has multiple natural
formulations --- e.g., as the union of all the conditions in the family
described above, or the formulation used in AdaBoost.MR --- and each
formulation leading to a different game specification.
A priori, it is not clear which game would lead to the best
strategy. 
We resolve this dilemma by proving that the optimal strategies arising out of
different formulations of the same weak learning condition lead to
algorithms that are essentially equally good, and therefore we are
free to choose whichever formulation leads to an easier analysis
without fear of suffering in performance.
We choose the union of conditions formulation, since it leads to
strategies that share the same interpretation in terms of random walks
as before.
However, even with this choice, the resulting games are hard to
analyze, and although we can explicitly compute the optimum
strategies in general, the computational complexity is usually
exponential. 
Nevertheless, we identify key situations under which efficient
computation is possible.  

The game-theoretic strategies are non-adaptive in that they presume
prior knowledge about the {\em edge}, that is, how much better than
random are the weak classifiers.
Algorithms that are adaptive, such as AdaBoost, are much more
practical because they do not require such prior information.
We show therefore how to derive an adaptive
boosting algorithm by modifying the game-theoretic strategy based
on the minimal condition.
This algorithm enjoys a number of theoretical guarantees.
Unlike some of the non-adaptive strategies, it is efficiently
computable, and since it is based on the minimal weak learning
condition, it makes minimal assumptions.
In fact, whenever presented with a boostable learning problem, this
algorithm can approach zero training error at an exponential rate.
More importantly, the algorithm is
effective even beyond the boostability framework.
In particular, we show empirical consistency, i.e.,
the algorithm always converges to the minimum of a certain exponential loss over the
training data, whether or not the dataset is boostable.
Furthermore, using the results in \citep{MukherjeeRuSc11} we can show
that this convergence occurs rapidly.

Our focus in this \thesis{chapter}\notthesis{paper} is only on minimizing
training error, which, 
for the algorithms we derive, provably decreases exponentially fast
with the number of rounds of boosting under boostability assumptions.
Such results can be used in
turn to derive bounds on the generalization error using standard
techniques that have been applied to other boosting
algorithms~\citep{SchapireFrBaLe98,FreundSc97,KoltchinskiiPa02}.
\old{
Consistency in the multiclass classification setting has been studied
by \citet{TewariBa07} and has been shown to be trickier than binary
classification consistency.
Nonetheless, by following the approach in \citep{BartlettTr07} for
showing consistency in the binary setting,
we are able to extend the empirical consistency guarantees to general
consistency guarantees in the multiclass setting: we show that
under certain conditions and with sufficient data, our adaptive
algorithm approaches the Bayes-optimum error on the \emph{test}
dataset.
}
 
We present experiments aimed at testing the
efficacy of the adaptive algorithm when working with a very weak
 weak-learner to check that the conditions we have identified are
indeed weaker than others that had previously been used.
We find that our new adaptive strategy
 achieves low test error compared to other multiclass boosting
 algorithms which usually heavily underfit.
This validates the potential practical benefit of a better theoretical
understanding of multiclass boosting.

{\bf Previous work.}
The first boosting algorithms were given
by \citet{Schapire90b} and \citet{Freund95}, followed by
their AdaBoost algorithm~\citep{FreundSc97}.
Multiclass boosting techniques include
~AdaBoost.M1 and AdaBoost.M2~\citep{FreundSc97},
as well as AdaBoost.MH and AdaBoost.MR~\citep{SchapireSi99}.
Other approaches include the work by \citet{EiblPf05,ZhuZoRoHa09}. 
There are also more general approaches that can be applied to boosting
including~\citep{AllweinScSi00,BeygelzimerLaRa09,DietterichBa95,HastieTi98}.
Two game-theoretic perspectives have been applied to boosting.
The first one~\citep{FreundSc96b,RatschWa05} views the weak-learning
condition as a minimax game, while drifting
games~\citep{Schapire01,Freund95} were 
designed to analyze the most efficient boosting algorithms. These
games have been further analyzed in the multiclass and continuous time
setting in~\citep{FreundOp02}.

\section{Framework}
\label{multi:prelim:sec}

We introduce some notation. Unless otherwise stated, matrices will be
denoted by bold capital letters like $\M$, and vectors by bold small
letters like $\vv$. Entries of a matrix and vector will be denoted as
$M(i,j)$ or $v(i)$, while $\M(i)$ will denote the $i$th row of a
matrix. Inner product of two vectors $\vu,\vv$ is denoted by
$\dotp{\vu}{\vv}$. The Frobenius inner product of two matrices
$\text{Tr}(\M\M')$ will be denoted by $\M\bullet\M'$, where $\M'$ is
the transpose of $\M$.
The indicator
function is denoted by $\1\enco{\cdot}$.
The set of all distributions over the set
$\set{1,\ldots,k}$ will be denoted by $\Delta\set{1,\ldots,k}$, and in
general, the set of all distributions over any set $S$ will be denoted
by $\Delta(S)$.

In multiclass classification, we want to predict the labels of
examples lying in some set $X$. We are provided
a training set of labeled examples $\set{(x_1,y_1),\ldots,(x_m,y_m)}$,
where each example $x_i\in X$ has a label $y_i$ in the set
$\set{1,\ldots,k}$. 

Boosting combines several mildly powerful predictors, called
\emph{weak classifiers}, to form a highly accurate combined
classifier, and has been previously applied for multiclass
classification. In this \thesis{chapter}\notthesis{paper},
we only allow weak classifier that
predict a single class for each example. This is appealing, since the
combined classifier has the same form, although it differs from what
has been used in much previous work. Later we will expand our
framework to include \emph{multilabel} weak classifiers, that may
predict multiple labels per example.

We adopt a game-theoretic view of boosting. A game is played between
two players, Booster and Weak-Learner, for a fixed number of rounds
$T$. With binary labels, Booster outputs a distribution in each round,
and Weak-Learner returns a weak classifier achieving more than $50\%$
accuracy on that distribution. The multiclass game is an extension of
the binary game. In particular, in each round $t$:
\begin{itemize}
\item
  Booster creates a cost-matrix $\C_t \in \R^{m\times k}$,
  specifying to Weak-Learner that the cost of classifying example
  $x_i$ as $l$ is $C_t(i,l)$. The cost-matrix may not be arbitrary, but
  should conform to certain restrictions as discussed below.
\item
  Weak-Learner returns some weak classifier $h_t\colon X\To
  \set{1,\ldots,k}$ from a fixed space $h_t\in\H$ so that the
  cost incurred is
  \[
  \C_t \bullet \mat{1}_{h_t} = \sum_{i=1}^m C_t(i,h_t(x_i)),
  \]
  is ``small enough'', according to some conditions discussed
  below. Here by $\vh$ we mean the $m\times k$ matrix whose $(i,j)$-th
  entry is $\1\enco{h(i)=j}$.
\item
  Booster computes a weight $\alpha_t$ for the current
  weak classifier based on how much cost was incurred in this round.
\end{itemize}

At the end, Booster predicts according to the weighted plurality vote
of the classifiers returned in each round:
\begin{equation}
  \label{multi:fdef:eqn}
  H(x) \eqdef \argmax_{l\in \set{1,\ldots,k}}f_T(x,l), \mbox{ where }
  f_T(x,l) \eqdef \sum_{t=1}^T\1 \enco{h_{t}(x) = l} \alpha_{t}.
\end{equation}
By carefully choosing the cost matrices in each round, Booster aims to
minimize the training error of the final classifer $H$, even when
Weak-Learner is adversarial. The restrictions on cost-matrices created
by Booster, and the maximum cost Weak-Learner can suffer in each
round, together define the \emph{weak-learning condition} being
used. For binary labels, the traditional weak-learning condition
states: for any non-negative weights $w(1),\ldots,w(m)$ on the
training set, the error of the weak classfier returned is at most
$(1/2 - \gamma/2)\sum_iw_i$.  Here $\gamma$ parametrizes the
condition. There are many ways to translate this condition into our
language. The one with fewest restrictions on the cost-matrices
requires labeling correctly should be less costly than labeling
incorrectly:
\[
\forall i: C(i,y_i) \leq C(i,\bar{y}_i)
\mbox{ (here $\bar{y}_i \neq y_i$ is the other binary label),}
\]
while the
restriction on the returned weak classifier $h$ requires less cost
than predicting randomly:
\[
\sum_i C(i,h(x_i)) \leq
\sum_i\enct{\enc{\frac{1}{2}-\frac{\gamma}{2}}C(i,\bar{y}_i) +
  \enc{\frac{1}{2}+\frac{\gamma}{2}}C(i,y_i)}.
\]
By the correspondence $w(i) = C(i,\bar{y}_i) - C(i,y_i)$, we may
verify the two conditions are the same.

We will rewrite this condition after making some simplifying
assumptions. Henceforth, without loss of generality, we assume that
the true label is always $1$.  Let $\Cbin \subseteq \R^{m\times 2}$
consist of matrices $\C$ which satisfy $C(i,1) \leq C(i,2)$. Further,
let $\Ubin_\gamma\in \R^{m\times 2}$ be the matrix whose each row is
$\enc{1/2+\gamma/2,1/2-\gamma/2}$. Then, Weak-Learner searching space
$\H$ satisfies the binary weak-learning condition if:
$
\forall \C\in\Cbin, \exists h\in\H: \C\bullet\enc{\vh-\Ubin_\gamma}
\leq \vzero.
$
There are two main benefits to this reformulation. With linear
homogeneous constraints, the mathematics is simplified, as will be
apparent later. More importantly, by varying the restrictions $\Cbin$
on the cost vectors and the matrix $\Ubin$, we can generate a vast
variety of weak-learning conditions for the multiclass setting $k\geq
2$ as we now show.

Let $\Csp\subseteq \R^{m\times k}$ and let $\B\in\R^{m\times
  k}$ be a matrix which we call the \emph{baseline}.
We say a weak classifier
space $\H$ satisfies the condition $\enc{\Csp,\B}$ if
\begin{eqnarray}
  \label{multi:wl:eqn}
  \forall \C \in \Csp, \exists h\in\H: & \C \bullet\enc{\vh - \B}
  \leq \vzero, &  \mbox{ i.e., }  \sum_{i=1}^m C(i,h(i)) \leq \sum_{i=1}^m
  \dotp{\C(i)}{\B(i)}.
\end{eqnarray}
In \eqref{multi:wl:eqn}, the variable matrix $\C$ specifies how costly each
misclassification is, while the baseline $\B$ specifies a weight for each
misclassification. The condition therefore states that a weak classifier
should not exceed the average cost when weighted according to baseline
$\B$.  This large class of weak-learning conditions captures many
previously used conditions, such as the ones used by
AdaBoost.M1~\citep{FreundSc96}, AdaBoost.MH~\citep{SchapireSi99} and
AdaBoost.MR~\citep{FreundSc96,SchapireSi99} (see below), as well as
novel conditions introduced in the next section.

By studying this vast class of weak-learning conditions, we hope to
find the one that will serve the main purpose of the boosting game:
finding a convex combination of weak classifiers that has zero
training error. For this to be possible, at the minimum the weak
classifiers should be sufficiently rich for such a perfect combination
to exist. Formally, a collection $\H$ of weak classifiers is
\emph{boostable} if it is eligible for boosting in the sense that
there exists a distribution $\vlambda$ on this space that linearly separates the
data: $\forall i:
\argmax_{l\in\set{1,\ldots,k}}\sum_{h\in\H}\lambda(h)\1\enco{h(x_i)=l}
= y_i$. The weak-learning condition plays two roles. It rejects spaces
that are not boostable, and provides an algorithmic means of searching
for the right combination. Ideally, the second factor will not cause
the weak-learning condition to impose additional restrictions on the
weak classifiers; in that case, the weak-learning condition is merely
a reformulation of being boostable that is more appropriate for
deriving an algorithm. In general, it could be \emph{too strong},
i.e. certain boostable spaces will fail to satisfy the conditions. Or
it could be \emph{too weak} i.e., non-boostable spaces might satisfy
such a condition. Booster strategies relying on either of these
conditions will fail to drive down error, the former due to
underfitting, and the latter due to overfitting. Later we will
describe conditions captured by our framework that avoid being too
weak or too strong. But before that, we show in the next section how
our flexible framework captures weak learning conditions that have
appeared previously in the literature.

\section{Old conditions}
\label{multi:old:sec}

\ignore{SAMME, M1, MH, MR, MH=M1}

In this section, we rewrite, in the language of our framework, the
weak learning conditions explicitly or implicitly employed in the
multiclass boosting algorithms SAMME~\citep{ZhuZoRoHa09},
AdaBoost.M1~\citep{FreundSc96}, and AdaBoost.MH and
AdaBoost.MR~\citep{SchapireSi99}.
This will be useful later on for comparing the strengths and
weaknesses of the various conditions.
We will end this section with a
curious equivalence between the conditions of AdaBoost.MH and
AdaBoost.M1. 

Recall that we have
assumed the correct label is 1 for every example.
Nevertheless, we
continue to use $y_i$ to denote the correct label in this
section.

\subsection{Old conditions in the new framework}
\label{multi:oldnew:sec}
Here we restate, in the language of our new framework, the weak
learning conditions of four algorithms that 
have earlier appeared in the literature.

\paragraph{SAMME.} The SAMME algorithm~\citep{ZhuZoRoHa09} requires
less error than random guessing on any distribution on the
examples. Formally, a space $\H$ satisfies the condition if there is a
$\gamma' > 0$ such that,
\begin{equation}
  \label{multi:SAMMEwl:eqn}
\forall d(1),\ldots,d(m)\geq 0, \exists h\in\H:
\sum_{i=1}^m d(i)\one\enco{h(x_i)\neq y_i} \leq (1-1/k - \gamma')\sum_{i=1}^md(i).
\end{equation}
Define a cost matrix $\C$ whose entries are given by 
\[
C(i,j) = 
\begin{cases}
  d(i) & \mbox{ if } j \neq y_i, \\
  0    & \mbox{ if } j = y_i.
  \end{cases}
  \]
  Then the left hand side of \eqref{multi:SAMMEwl:eqn} can be written
  as
 \[
 \sum_{i=1}^mC(i,h(x_i)) = \C \bullet \vh.
 \]
 Next let $\gamma = (1-1/k)\gamma'$ and define baseline $\Ugam$ to be
 the multiclass extension of $\Ubin$,
 \[
 U_\gamma(i,l) =
 \begin{cases}
   \frac{(1-\gamma)}{k} + \gamma & \mbox{ if } l = y_i, \\
   \frac{(1-\gamma)}{k} & \mbox { if } l \neq y_i.
   \end{cases}
 \]
 Then the right hand side of \eqref{multi:SAMMEwl:eqn} can be written
 as
\[
\sum_{i=1}^m\sum_{l\neq y_i}C(i,l)U_{\gamma}(i,l) = \C \bullet \Ugam,
\]
since $C(i,y_i) = 0$ for every example $i$.  Define $\Csam$ to be the
following collection of cost matrices:
\[
\Csam \eqdef \set{\C: C(i,l) =
  \begin{cases}
    0 & \mbox{ if } l = y_i, \\
    t_i & \mbox { if } l \neq y_i,
  \end{cases}
  \mbox { for non-negative } t_1, \ldots, t_m.
}
\]
Using the last two equations,
\eqref{multi:SAMMEwl:eqn} is equivalent to
\[
\forall \C\in\Csam, \exists h\in\H:
 \C \bullet \enc{\vh - \Ugam} \leq 0.
 \]
 Therefore, the weak-learning condition of SAMME is given by
 $(\Csam,\Ugam)$.

 \paragraph{AdaBoost.M1}
 Adaboost.M1~\citep{FreundSc97} measures the
 performance of weak classifiers using ordinary error.
 It requires $1/2 +
 \gamma/2$ accuracy with respect to any non-negative weights
 $d(1),\ldots,d(m)$ on the training set:
\begin{eqnarray}
  \label{multi:m1wlbasic:eqn}
  \sum_{i=1}^m d(i)\1\enco{h(x_i)\neq y_i} &\leq&
  \enc{1/2 - \gamma/2}\sum_{i=1}^m d(i), \\
   \mbox { i.e. } \sum_{i=1}^m d(i) \bb{h(x_i) \neq y_i} &\leq&
   -\gamma\sum_{i=1}^m d(i). \nonumber 
 \end{eqnarray}
 where $\bb{\cdot}$ is the $\pm 1$ indicator function, taking value $+1$
 when its argument is true, and $-1$ when false. Using the
 transformation
 \begin{equation}
   \label{multi:m1one:eqn}
C(i,l) = \bb{l\neq y_i} d(i)
\end{equation}
we may rewrite \eqref{multi:m1one:eqn} as
\begin{eqnarray}
  \label{multi:m1set:eqn}
  \lefteqn{\forall C\in\R^{m\times k} \mbox { satisfying }
     0 \leq -C(i,y_i) =
      C(i,l) \mbox { for } l\neq y_i},\\
    && \exists h\in \H:
    \sum_{i=1}^m C(i,h(x_i)) \leq
    \gamma\sum_{i=1}^m C(i,y_i)
  \nonumber\\
  \label{multi:m1wl:eqn}
  &\mbox{ i.e. }& \forall \C\in\Cmone, \exists h\in\H: \C  \bullet
  \enc{\vh - \Mone_\gamma} \leq 0, 
\end{eqnarray}
where $\Mone_\gamma(i,l) = \gamma \1\enco{l=y_i}$, and $\Cmone
\subseteq \R^{m\times k}$ consists of matrices satisfying
the constraints in \eqref{multi:m1set:eqn}.

\paragraph{AdaBoost.MH}
AdaBoost.MH \citep{SchapireSi99} is a popular multiclass boosting
algorithm that is based on the one-against-all reduction,
and was originally designed to use
weak-hypotheses that return a prediction for every example and every
label.
The implicit weak learning condition requires that for any matrix with
non-negative entries $d(i,l)$, the weak-hypothesis should achieve $1/2
+ \gamma$ accuracy
\begin{eqnarray}
  \label{multi:mhwlbasic:eqn}
  \sum_{i=1}^m
  \enct{\1\enco{h(x_i)\neq y_i}d(i,y_i) +
    \sum_{l\neq  y_i} \1\enco{h(x_i)=l} d(i,l)}
  &\leq&
  \enc{\frac{1}{2} - \frac{\gamma}{2}}
  \sum_{i=1}^m\sum_{l=1}^k d(i,l). \nonumber \\
&&
\end{eqnarray}
This can be rewritten as
\begin{eqnarray*}
  \lefteqn{\sum_{i=1}^m
    \enct{-\1\enco{h(x_i)= y_i}d(i,y_i) +
      \sum_{l\neq y_i} \1\enco{h(x_i)=l} d(i,l)}} \nonumber \\
  &\leq&
  \sum_{i=1}^m\enct{
    \enc{\frac{1}{2}-\frac{\gamma}{2}}\sum_{l\neq y_i}d(i,l) -
  \enc{\frac{1}{2} + \frac{\gamma}{2}}d(i,y_i)}.  \nonumber
\end{eqnarray*}
Using the mapping
\[
  C(i,l) =
  \begin{cases}
    d(i,l) & \mbox{ if } l\neq y_i \\
    -d(i,l) & \mbox { if } l = y_i,
  \end{cases}
\]
their weak-learning condition may be rewritten as follows
\begin{eqnarray}
  \label{multi:mhset:eqn}
  \lefteqn{\forall \C\in\R^{m\times k}
    \mbox{ satisfying }
    C(i,y_i)\leq 0, C(i,l) \geq 0
      \mbox { for  }
      l\neq y_i,} \\
  & & \exists h\in \H: \nonumber\\
  & &\sum_{i=1}^m C(i,h(x_i))\leq
  \sum_{i=1}^m
  \enct{\enc{\frac{1}{2} + \frac{\gamma}{2}}
    C(i,y_i) +
    \enc{\frac{1}{2}-\frac{\gamma}{2}}
    \sum_{l\neq y_i}C(i,l)}.
\end{eqnarray}
Defining $\Cmh$ to be the space of all cost matrices satisfying the
constraints in \eqref{multi:mhset:eqn}, the above condition is the
same as
\[
\forall \C\in\Cmh, \exists h\in\H: \C \bullet \enc{\vh - \MH_\gamma} \leq 0,
\]
where $\MH_\gamma(i,l) = (1/2 + \gamma\bb{l=y_i}/2)$.

\paragraph{AdaBoost.MR}
AdaBoost.MR
\citep{SchapireSi99} is based on the all-pairs multiclass to binary
reduction.
Like AdaBoost.MH, it was originally designed to use
weak-hypotheses that return a prediction for every example and every
label.
The weak learning condition for AdaBoost.MR requires that for any non-negative
cost-vectors $\{d(i,l)\}_{l\neq y_i}$, the weak-hypothesis returned
should satisfy the following:
\begin{eqnarray*}
  \sum_{i=1}^m \sum_{l\neq y_i} \enc{\1\enco{h(x_i) = l} -
    \1\enco{h(x_i)=y_i}} d(i,l) &\leq& 
  -\gamma \sum_{i=1}^m\sum_{l\neq y_i} d(i,l) \\
  \mbox { i.e. }
  \sum_{i=1}^m \enct{-\1\enco{h(x_i)=y_i}
    \sum_{l\neq y_i} d(i,l) +
    \sum_{l\neq y_i} \1\enco{h(x_i) = l} d(i,l)} 
  &\leq&
  -\gamma \sum_{i=1}^m\sum_{l\neq y_i} d(i,l).
\end{eqnarray*}
Substituting
\[
C(i,l) = 
\begin{cases}
  d(i,l) & l \neq y_i \\
  -\sum_{l\neq y_i} d(i,l) & l = y_i,
\end{cases}
\]
we may rewrite AdaBoost.MR's weak-learning condition as
\begin{eqnarray}
  \label{multi:mrset:eqn}
  \lefteqn{\forall \C\in\R^{m\times k} \mbox{ satisfying } C(i,l)
      \geq 0 \mbox { for 
      } l\neq y_i, C(i,y_i) = -\sum_{l\neq y_i} C(i,l),} \\
    & & \exists h\in \H: \sum_{i=1}^m C(i,h(x_i))\leq
    -\frac{\gamma}{2}
    \sum_{i=1}^m \enct{ 
    -C(i,y_i) + \sum_{l\neq y_i} C(i,l)} \nonumber.
\end{eqnarray}
Defining $\Cmr$ to be the collection of cost matrices satisfying the
constraints in \eqref{multi:mrset:eqn}, the above condition is the
same as
\[
\forall \C\in\Cmr, \exists h\in\H: \C \bullet \enc{\vh - \MR_\gamma}
\leq 0,
\]
where $\MR_\gamma(i,l) = \bb{l=y_i}\gamma/2$.

%
%

\subsection{A curious equivalence}

We show that the weak learning conditions of AdaBoost.MH and
AdaBoost.M1 are identical in our framework. This is surprising because
the original motivations behind these algorithms were completely
different. AdaBoost.M1 is a direct extension of
binary AdaBoost to the multiclass setting, whereas
AdaBoost.MH is based on the one-against-all
multiclass to binary reduction. This equivalence is a sort of
degeneracy, and arises because the weak classifiers being used predict
single labels per example. With multilabel weak classifiers, for which
AdaBoost.MH was originally designed, the equivalence no longer holds.

The proofs in this and later sections will make use of the following
minimax result, that is a weaker version of Corollary 37.3.2 of
\citep{Rockafellar70}.

\begin{theorem}(Minimax Theorem)
  \label{multi:minmax:thm}
  Let $C,D$ be non-empty closed convex subsets of $\R^m,\R^n$
  respectively, and let $K$ be a\ignore{finite concave-convex} linear  
  function on $C\times D$. If either $C$ or $D$ is bounded, then
  \[
  \min_{v\in D} \max_{u\in C} K(u,v) = \max_{u\in C} \min_{v\in D}
  K(u,v).
  \]
\end{theorem}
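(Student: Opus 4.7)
My plan is to prove the equality by the usual two-step argument that combines a trivial direction with a separation-based construction. The inequality $\max_{u \in C}\min_{v \in D} K(u,v) \le \min_{v \in D}\max_{u \in C} K(u,v)$ is immediate and requires no hypotheses on $C$, $D$, or $K$: for any $u_0 \in C$ and $v_0 \in D$ we have $\min_{v \in D} K(u_0,v) \le K(u_0,v_0) \le \max_{u \in C} K(u,v_0)$, and taking supremum over $u_0$ on the left and infimum over $v_0$ on the right preserves the inequality.

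For the reverse direction, I would assume without loss of generality that $C$ is the bounded, hence compact, set. Define $\varphi(v) = \max_{u \in C} K(u,v)$, which is well defined by continuity of $K$ and compactness of $C$. Since $K(u,\cdot)$ is affine in $v$ for each fixed $u$, $\varphi$ is a pointwise supremum of affine functions, hence convex and lower semicontinuous on $D$. Setting $\alpha = \inf_{v \in D}\varphi(v)$, the heart of the proof is to produce $u^* \in C$ with $K(u^*,v) \ge \alpha$ for every $v \in D$. I would obtain this by a Hahn--Banach separation: construct an appropriate epigraph-like convex set in $\R^{n+1}$ that is disjoint from an open half-space determined by $\alpha$, extract a separating linear functional, and use the linearity of $K$ in $u$ together with the convexity and compactness of $C$ to realize the functional as evaluation at a single point $u^* \in C$ rather than as a mixture.

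The main obstacle is handling the possible unboundedness of $D$, which prevents a direct appeal to the symmetric von Neumann theorem. I would address this by truncation: restrict to compact subsets $D_R = D \cap \overline{B(0,R)}$, apply the standard compact minimax theorem on $C \times D_R$ to obtain saddle points $(u^*_R, v^*_R)$ with common value $\alpha_R$, and pass to the limit $R \to \infty$. Compactness of $C$ yields a convergent subsequence $u^*_R \to u^* \in C$, and the monotonicity $\alpha_R \downarrow \alpha$ combined with linearity (not just convexity) of $K$ in $v$ allows us to pass the inequality $K(u^*_R, v) \ge \alpha_R$ through the limit for every fixed $v \in D$, giving $K(u^*,v) \ge \alpha$. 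Both extrema are then attained, and the equality $\min_v \max_u K(u,v) = \max_u \min_v K(u,v)$ follows. This is essentially Sion's minimax theorem specialized to the bilinear setting, which in turn is subsumed by the closed-saddle-function machinery of Rockafellar's Corollary 37.3.2.
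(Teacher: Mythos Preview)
The paper does not prove this theorem at all: it is stated without proof and attributed as ``a weaker version of Corollary~37.3.2 of \citep{Rockafellar70}.'' So there is no argument in the paper to compare your proposal against; the authors simply invoke the result as a black box in their subsequent lemmas.

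Your sketch is a standard and essentially correct route to the bilinear minimax equality. The truncation argument is clean: compactness of $C$ gives a convergent subsequence $u^*_R \to u^*$, and for each fixed $v \in D$ the inequality $K(u^*_R,v) \ge \alpha_R$ passes to the limit, yielding $\inf_{v} K(u^*,v) \ge \alpha$ and hence equality with the weak direction. One small point worth tightening: your final claim that ``both extrema are then attained'' is not fully justified by the truncation argument. You have shown that $\max_{u \in C}$ is attained at $u^*$, but attainment of $\min_{v \in D}$ when $D$ is unbounded does not follow automatically; for a linear $K(u^*,\cdot)$ on an unbounded closed convex $D$, the infimum can fail to be achieved (or be $-\infty$). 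Rockafellar handles this through recession-cone conditions, and in the paper's actual applications the issue is moot because one side is always a simplex or the relevant quantities are bounded by construction. But as written, your sketch establishes equality of the $\inf$/$\sup$ values rather than of attained $\min$/$\max$.
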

 \begin{lemma}
   A weak classifier space $\H$ satisfies $(\Cmone,\Mone_\gamma)$
   if and only if it satisfies $(\Cmh, \MH_\gamma)$.
 \end{lemma}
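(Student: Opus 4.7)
The plan is to apply the Minimax Theorem (Theorem~\ref{multi:minmax:thm}) separately to each of the two conditions in order to reduce each to a transparent constraint on a row-stochastic matrix obtained by averaging the matrices $\vh$, and then to observe that the two reduced constraints coincide.

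First I would reformulate a generic weak-learning condition $(\Csp,\B)$ in minimax form. Both $\Cmone$ and $\Cmh$ are cones closed under positive scaling, and the function $\min_{h\in\H}\C\bullet(\vh-\B)$ is positively homogeneous in $\C$, so the condition is unchanged upon intersecting $\Csp$ with a compact convex set such as $\{\C:\|\C\|_1\leq 1\}$. Theorem~\ref{multi:minmax:thm} applied to the bilinear function $K(\C,p)=\C\bullet\bigl(\sum_{h\in\H}p(h)\vh-\B\bigr)$ on this truncated cone crossed with $\Delta(\H)$ then yields the equivalent statement: there exists $p\in\Delta(\H)$ such that the non-negative row-stochastic matrix $B:=\sum_{h\in\H}p(h)\vh$ satisfies $\C\bullet(B-\B)\leq 0$ for every $\C\in\Csp$.

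Next I would specialize this reformulation to each condition. For $(\Cmone,\Mone_\gamma)$, matrices $\C\in\Cmone$ are parametrized by non-negative weights $d(1),\ldots,d(m)$ via $C(i,y_i)=-d(i)$ and $C(i,l)=d(i)$ for $l\neq y_i$; using $\sum_l B(i,l)=1$, a short calculation gives $\C\bullet(B-\Mone_\gamma)=\sum_i d(i)\bigl[1+\gamma-2B(i,y_i)\bigr]$, so the condition holds for all $d(i)\geq 0$ iff $B(i,y_i)\geq (1+\gamma)/2$ for every $i$. For $(\Cmh,\MH_\gamma)$, the entries $C(i,y_i)\leq 0$ and $C(i,l)\geq 0$ (for $l\neq y_i$) are independent; grouping the terms of $\C\bullet(B-\MH_\gamma)$ by coefficient produces the two-part condition $B(i,y_i)\geq(1+\gamma)/2$ together with $B(i,l)\leq(1-\gamma)/2$ for every $i$ and every $l\neq y_i$.

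The equivalence is then immediate: the $(\Cmh,\MH_\gamma)$ reformulation trivially implies the $(\Cmone,\Mone_\gamma)$ one, and conversely, if $B(i,y_i)\geq(1+\gamma)/2$ then row-stochasticity of $B$ forces $B(i,l)\leq 1-B(i,y_i)\leq(1-\gamma)/2$ for each $l\neq y_i$. The only subtle step is the compactness bookkeeping when invoking minimax on the unbounded cone $\Csp$, which the homogeneity observation resolves cleanly; once that is set up, the remainder is elementary.
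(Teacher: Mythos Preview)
Your proposal is correct and uses essentially the same approach as the paper: apply the Minimax Theorem to pass from the weak-learning condition to a constraint on a row-stochastic average $B=\sum_{h}p(h)\vh$, then identify that constraint with $B(i,y_i)\geq(1+\gamma)/2$ (and, for MH, the companion bound $B(i,l)\leq(1-\gamma)/2$), and finally note that row-stochasticity makes the two sets of constraints equivalent. The paper organizes the argument asymmetrically in three steps (MH $\Rightarrow$ M1 directly; M1 $\Rightarrow$ the entrywise bounds on $H_{\vlams}$ via minimax and a contradiction with a specific cost matrix; those bounds $\Rightarrow$ MH via minimax again), whereas you derive the dual characterization for each condition up front and compare them; your organization is a bit more symmetric, but the key ideas and tools are identical.
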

 \begin{proof}
   We will refer to $\mow$ by M1 and $\mhw$ by MH for brevity.
   The proof is in three steps.

   \emph{Step (i)}: If $\H$ satisfies MH, then it also satisfies
   M1.
   This follows since any constraint \eqref{multi:m1wlbasic:eqn}
   imposed by M1 on $\H$ can be reproduced by MH by plugging
   the following values of $d(i,l)$ in \eqref{multi:mhwlbasic:eqn}
  \[
  d(i,l) = 
  \begin{cases}
    d(i) & \mbox { if } l = y_i \\
    0   & \mbox { if } l \neq y_i.
  \end{cases}
  \]

  \emph{Step (ii)}: If $\H$ satisfies M1, then there is a convex
  combination $\mat{H}_{\vlams}$ of the matrices $\vh \in \H$,
  defined as
  \[
  \mat{H}_{\vlams} \eqdef \sum_{h\in\H} \lambda^*(h) \vh,
  \]
  such that
  \begin{equation}
    \label{multi:m1mh:eqn}
  \forall i: \enc{\mat{H}_{\vlams} - \MH_\gamma}(i,l) 
  \begin{cases}
    \geq 0  & \mbox { if } l = y_i \\
    \leq 0 & \mbox { if } l \neq y_i.
  \end{cases}
  \end{equation}
  Indeed, Theorem~\ref{multi:minmax:thm} yields
  \begin{equation}
    \label{multi:minmaxineq:eqn}
   \min_{\vlambda \in \Delta\enc{\H}} \max_{\C \in \Cmone}
  \C \bullet \enc{\mat{H}_{\vlambda} - \Mone_{\gamma}}
  = \max_{\C \in \Cmone} \min_{h\in \H} \C \bullet \enc{\vh -
    \Mone_{\gamma}} \leq 0,
  \end{equation}
  where the inequality is a restatement of our assumption that $\H$
  satisfies M1.  If $\vlams$ is a minimizer of the minmax
  expression, then $\mat{H}_{\vlams}$ must satisfy
  \begin{equation}
    \label{multi:half:eqn}
  \forall i: \mat{H}_{\vlams}(i,l) 
  \begin{cases}
    \geq \frac{1}{2} + \frac{\gamma}{2} & \mbox { if } l = y_i \\
    \leq \frac{1}{2} - \frac{\gamma}{2} & \mbox { if } l \neq y_i,
  \end{cases}
  \end{equation}
  or else some choice of $\C\in\Cmone$ can cause $\C\bullet
  \enc{\mat{H}_{\vlams} - \Mone}$ to exceed 0.
  In particular, if
  $\mat{H}_{\vlams}(i_0,l) < 1/2 + \gamma/2$, then
  \[
  \enc{\mat{H}_{\vlams} - \Mone_\gamma}(i_0,y_{i_0}) <
  \sum_{l\neq y_{i_0}}  \enc{\mat{H}_{\vlams} - \Mone_\gamma}(i_0,l).
  \]
  Now, if we choose $\C\in\Cmone$ as
  \[
  C(i,l) =
  \begin{cases}
    0 & \mbox { if } i \neq i_0 \\
    1 & \mbox { if } i=i_0, l\neq y_{i_0}\\
    -1 & \mbox { if } i=i_0, l= y_{i_0},
  \end{cases}
  \]
  then,
  \[
  \C\bullet \enc{\mat{H}_{\vlams} - \Mone_\gamma} =
  - \enc{\mat{H}_{\vlams} - \Mone_\gamma}(i_0,y_{i_0}) +
  \sum_{l\neq y_{i_0}}  \enc{\mat{H}_{\vlams} - \Mone_\gamma}(i_0,l)
  > 0,
  \]
  contradicting the inequality in
  \eqref{multi:minmaxineq:eqn}.
  Therefore \eqref{multi:half:eqn} holds.
  Eqn. \eqref{multi:m1mh:eqn}, and thus Step (ii), now follows by
  observing that $\MH_\gamma$, by definition, satisfies
  \[
  \forall i: \MH_\gamma(i,l) = 
  \begin{cases}
    \frac{1}{2} + \frac{\gamma}{2} & \mbox { if } l = y_i \\
    \frac{1}{2} - \frac{\gamma}{2} & \mbox { if } l \neq y_i.
  \end{cases}
  \]

  \emph{Step (iii)} If there is some convex combination $\H_{\vlams}$
  satisfying \eqref{multi:m1mh:eqn}, then $\H$ satisfies MH.
  Recall that $\MH$ consists of entries that are non-positive on the
  correct labels and non-negative for incorrect labels.
  Therefore, \eqref{multi:m1mh:eqn} implies
  \[
  0 \geq \max_{\C \in \Cmh}
  \C \bullet \enc{\mat{H}_{\vlams} - \MH_{\gamma}}
  \geq
  \min_{\vlambda \in \Delta\enc{\H}} \max_{\C \in \Cmh}
  \C \bullet \enc{\mat{H}_{\vlambda} - \MH_{\gamma}}.
  \]
  On the other hand, using Theorem~\ref{multi:minmax:thm} we have
  \[
  \min_{\vlambda \in \Delta\enc{\H}} \max_{\C \in \Cmh}
  \C \bullet \enc{\mat{H}_{\vlambda} - \MH_{\gamma}}
  =
  \max_{\C \in \Cmh} \min_{h\in \H} \C \bullet \enc{\vh - \MH_{\gamma}}.
  \]
  Combining the two, we get
  \[
  0 \geq \max_{\C \in \Cmh} \min_{h\in \H} \C \bullet \enc{\vh - \MH_{\gamma}},
  \]
  which is the same as saying that $\H$ satisfies MH's condition.

  Steps (ii) and (iii) together imply that if $\H$ satisfies M1, then
  it also satisfies MH.
  Along with Step (i), this concludes the proof.
\end{proof}

\section{Necessary and sufficient weak-learning conditions}
\label{multi:necsuf:sec}

The binary weak-learning condition has an appealing form: for any
distribution over the examples, the weak classifier needs to achieve
error not greater than that of a random player who guesses the correct
answer with probability $1/2+\gamma/2$. Further, this is the weakest
condition under which boosting is possible as follows from a
game-theoretic perspective~\citep{FreundSc96b,RatschWa05} . Multiclass
weak-learning conditions with similar properties are missing in the
literature. In this section we show how our framework captures such
conditions.

\subsection{Edge-over-random conditions}
\label{multi:eor:sec}

In the multiclass setting, we model a random player as a baseline
predictor $\B\in\R^{m\times k}$ whose rows are distributions over the
labels, $\B(i)\in\Delta\set{1,\ldots,k}$. The prediction on example
$i$ is a sample from $\B(i)$. We only consider the space of
\emph{edge-over-random} baselines $\Bgam \subseteq \R^{m\times k}$ who
have a faint clue about the correct answer.  More precisely, any
baseline $\B\in\Bgam$ in this space is $\gamma$ more likely to predict
the correct label than an incorrect one on every example $i$: $\forall
l\neq 1, B(i,1) \geq B(i,l) + \gamma$, with equality holding for some
$l$, i.e.:
\[
B(i,1) = \max\set{B(i,l)+\gamma: l\neq 1}.
\]
Notice that the edge-over-random baselines are different from the
baselines used by earlier weak learning conditions discussed in the
previous section.

When $k=2$, the space $\Bgam$ consists of the unique player
$\Ubin_\gamma$, and the binary weak-learning condition is given by
$(\Cbin,\Ubin_\gamma)$. The new conditions generalize this to
$k>2$. In particular, define $\Csig$ to be the multiclass extension of
$\Cbin$: any cost-matrix in $\Csig$ should put the least cost on the
correct label, i.e., the rows of the cost-matrices should come from
the set $\set{\vc\in\R^k: \forall l, c(1) \leq c(l) }$. Then, for
every baseline $\B\in\Bgam$, we introduce the condition $(\Csig,\B)$,
which we call an \emph{edge-over-random} weak-learning
condition. Since $\C\bullet\B$ is the expected cost of the
edge-over-random baseline $\B$ on matrix $\C$, the constraints
\eqref{multi:wl:eqn} imposed by the new condition essentially require better
than random performance.

Also recall that we have assumed that the true label $y_i$ of example
$i$ in our training set is always $1$. Nevertheless, we may
occasionally continue to refer to the true labels as $y_i$.

We now present the central results of this section. The seemingly
mild edge-over-random conditions guarantee boostability, meaning
weak classifiers that satisfy any one such condition can be combined
to form a highly accurate combined classifier. 
\begin{theorem}[Sufficiency]
  \label{multi:suf:thm}
  If a weak classifier space $\H$ satisfies a weak-learning condition
  $(\Csig,\B)$, for some $\B\in\Bgam$, then $\H$ is boostable.
\end{theorem}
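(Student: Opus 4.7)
The plan is to turn the weak-learning condition into an existence statement for a convex combination via the minimax theorem, and then read off boostability from the structure of the baseline $\B$.

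First, I would rephrase the hypothesis in the form
\[
\max_{\C \in \Csig} \min_{h \in \H}\, \C \bullet (\vh - \B) \leq 0.
\]
To apply Theorem~\ref{multi:minmax:thm} I would intersect $\Csig$ with a bounded set, e.g.\ $\Csig_1 \eqdef \{\C \in \Csig : \|\C\|_1 \leq 1\}$, which is convex, compact, and still suffices (since $\Csig$ is a cone, $\C \bullet (\vh - \B) \leq 0$ for all $\C \in \Csig_1$ is equivalent to the same inequality for all $\C \in \Csig$). Replacing the min over $h \in \H$ by a min over distributions $\vlambda \in \Delta(\H)$ (which does not change the value, since the objective is linear in $\vlambda$) and swapping with the minimax theorem gives a distribution $\vlams \in \Delta(\H)$ with
\[
\max_{\C \in \Csig}\, \C \bullet (\mat{H}_{\vlams} - \B) \leq 0, \quad \text{where } \mat{H}_{\vlams} \eqdef \sum_{h\in\H} \lambda^*(h)\,\vh.
\]

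The next step is to compute the dual cone of $\Csig$ row by row. A row $\vc \in \R^k$ lies in the row-cone of $\Csig$ iff $c(1) \leq c(l)$ for all $l$; writing $c(l) = c(1) + \delta(l)$ with $\delta(1)=0$ and $\delta(l)\geq 0$ for $l\neq 1$, and letting $\vm = \mat{H}_{\vlams}(i) - \B(i)$, the inequality $\vc \cdot \vm \leq 0$ for every admissible $\vc$ forces $\sum_l m(l) = 0$ (take $c(1)$ free) and $m(l) \leq 0$ for $l \neq 1$ (take $\delta(l) \geq 0$ free). The first condition is automatic since both $\mat{H}_{\vlams}(i)$ and $\B(i)$ are probability distributions. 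The second yields
\[
\mat{H}_{\vlams}(i,l) \leq B(i,l) \quad \text{for all } l \neq 1.
\]

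Finally, because $\B \in \Bgam$ satisfies $B(i,1) \geq B(i,l) + \gamma$ for every $l\neq 1$, and since $\sum_l \mat{H}_{\vlams}(i,l)=1=\sum_l B(i,l)$ forces $\mat{H}_{\vlams}(i,1) \geq B(i,1)$, we get
\[
\mat{H}_{\vlams}(i,1) \;\geq\; B(i,1) \;\geq\; B(i,l) + \gamma \;\geq\; \mat{H}_{\vlams}(i,l) + \gamma \quad (l \neq 1).
\]
Since $\gamma > 0$, $\argmax_l \mat{H}_{\vlams}(i,l) = 1 = y_i$ for every $i$, so $\vlams$ is a linearly separating convex combination, i.e., $\H$ is boostable.

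The only delicate point is justifying the minimax exchange despite $\Csig$ being an unbounded cone; I expect this to be straightforward by truncating to $\Csig_1$ and exploiting positive homogeneity, but it is the one step that requires care. Everything else is a clean combination of LP duality (in the guise of the dual-cone calculation) with the edge-over-random structure of $\B$.
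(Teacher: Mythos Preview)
Your proof is correct and follows essentially the same approach as the paper: apply the minimax theorem to extract a convex combination $\vlams$, then use the structure of $\Csig$ to constrain $\mat{H}_{\vlams}-\B$, and finally invoke the edge-over-random property of $\B$ to get the margin. The only minor differences are that the paper uses a single test vector per row (with $C(i,1)=-1$, $C(i,j_0)=1$) to conclude that the first entry of each row of $\mat{H}_{\vlams}-\B$ is the largest, whereas you compute the full dual cone and obtain the slightly stronger per-entry bound $\mat{H}_{\vlams}(i,l)\leq B(i,l)$ for $l\neq 1$; and you handle the unboundedness of $\Csig$ explicitly by truncation, which the paper glosses over.
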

\begin{proof}
  The proof is in the spirit of the ones in~\citep{FreundSc96b}.
  Applying Theorem~\ref{multi:minmax:thm} yields
 \[
 0 \geq
 \displaystyle \max_{\C\in\Csig}
 \displaystyle \min_{h\in\H}
 \C \bullet \enc{\vh - \B}
 =
 \displaystyle \min_{\vlambda\in\Delta(\H)}
 \displaystyle \max_{\C\in\Csig}
 \C \bullet \enc{\mat{H}_{\vlambda} - \B},
 \]
 where the first inequality follows from the definition
 \eqref{multi:wl:eqn} of the weak-learning condition. Let $\vlams$ be
 a minimizer of the min-max expression. Unless the first entry of
 each row of $ \enc{\mat{H}_{\vlams} - \B}$ is the largest, the right
 hand side of the min-max expression can be made arbitrarily large by
 choosing $\C\in\Csig$ appropriately. For example, if in some row $i$,
 the $j_0\th$ element is strictly larger than the first element, by
 choosing
 \[
 C(i,j) =
 \begin{cases}
   -1  & \mbox { if } j=1 \\
   1  & \mbox { if } j=j_0 \\
   0  & \mbox { otherwise},
 \end{cases}
 \]
 we get a matrix in $\Csig$ which causes $\C\bullet\enc{\mat{H}_{\vlams} -
   \B}$ to be equal to $C(i,j_0) - C(i,1) > 0$, an impossibility by
 the first inequality.

 Therefore, the convex combination of the weak classifiers, obtained
 by choosing each weak classifier with weight given by $\vlams$,
 perfectly classifies the training data, in fact with a margin
 $\gamma$.
 \end{proof}
 On the other hand, the family of such conditions, taken as a whole, is
 necessary for boostability in the sense that every eligible space of
 weak classifiers satisfies some edge-over-random condition.
\begin{theorem}[Relaxed necessity]
  \label{multi:nec:thm}
  For every boostable weak classifier space $\H$, there exists a
  $\gamma>0$ and $\B\in\Bgam$ such that $\H$ satisfies the
  weak-learning condition $(\Csig,\B)$.
\end{theorem}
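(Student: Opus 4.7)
The plan is to extract the required baseline directly from the distribution promised by boostability. By definition, there exists $\vlams\in\Delta(\H)$ such that the convex combination $\mat{H}_{\vlams} \eqdef \sum_{h\in\H}\lambda^*(h)\vh$ has its first column strictly dominating every other column in every row. For each example $i$, let $l^*_i \in \argmax_{l\neq 1}\mat{H}_{\vlams}(i,l)$ and $\delta_i \eqdef \mat{H}_{\vlams}(i,1) - \mat{H}_{\vlams}(i,l^*_i) > 0$. I would then take $\gamma \eqdef \min_i \delta_i > 0$ as the candidate advantage.

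The raw matrix $\mat{H}_{\vlams}$ already looks like an edge-over-random baseline, but membership in $\Bgam$ demands the dominance gap be \emph{exactly} $\gamma$ for some coordinate in every row. To enforce this I would construct $\B$ by tightening each row individually: set $B(i,1) \eqdef \mat{H}_{\vlams}(i,1) - (\delta_i-\gamma)/2$ and $B(i,l^*_i) \eqdef \mat{H}_{\vlams}(i,l^*_i) + (\delta_i-\gamma)/2$, leaving all other entries unchanged. Row sums are preserved and no entry becomes negative, so $\B$ is row-stochastic; a short check shows $B(i,1) - B(i,l) \geq \gamma$ for every $l\neq 1$ with equality at $l=l^*_i$, placing $\B\in\Bgam$.

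It remains to verify the weak learning condition $(\Csig,\B)$. The adjustment only transfers mass from column $1$ to column $l^*_i$, and every $\C\in\Csig$ satisfies $C(i,1) \leq C(i,l)$ for all $l$, so row by row
\[
\C\bullet\B - \C\bullet\mat{H}_{\vlams} = \sum_i \tfrac{\delta_i-\gamma}{2}\bigl[C(i,l^*_i) - C(i,1)\bigr] \geq 0.
\]
Given any $\C\in\Csig$, pick $h \in \argmin_{h'\in\H} \C\bullet\vh'$. The standard min-versus-weighted-average inequality yields $\C\bullet\vh \leq \sum_{h'}\lambda^*(h')\,\C\bullet\vh' = \C\bullet\mat{H}_{\vlams} \leq \C\bullet\B$, which is precisely $\C\bullet(\vh - \B) \leq 0$, as required.

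The only real obstacle is the equality-for-some-$l$ clause in the definition of $\Bgam$, which prevents the naive choice $\B \eqdef \mat{H}_{\vlams}$ from working verbatim; the explicit per-row redistribution above resolves this while preserving the key inequality $\C\bullet\B\geq\C\bullet\mat{H}_{\vlams}$ throughout $\Csig$. A minimax-style argument in the spirit of Theorem~\ref{multi:suf:thm} seems less natural here since $\Bgam$ is not a simple convex polytope once the tight-edge constraint is enforced, so I would favor the direct construction.
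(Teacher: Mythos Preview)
Your proof is correct and follows the same core strategy as the paper: extract the baseline from the boostability-witnessing convex combination $\mat{H}_{\vlams}$, then verify the weak-learning condition by comparing the minimum cost over $\H$ to the cost of this convex combination. The paper simply sets $\B = \mat{H}_{\vlams}$ with $\gamma = \min_{i,\,j\neq 1}\bigl(\mat{H}_{\vlams}(i,1) - \mat{H}_{\vlams}(i,j)\bigr)$ and asserts $\B\in\Bgam$, then checks the condition via the chain $\max_{\C}\min_{h}\C\bullet(\vh-\B)\leq \min_{\vlambda}\max_{\C}\C\bullet(\mat{H}_{\vlambda}-\B)\leq \max_{\C}\C\bullet(\mat{H}_{\vlams}-\B)=0$.

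You are in fact more careful than the paper on exactly the point you flagged. The definition of $\Bgam$ requires \emph{every} row to satisfy $B(i,1) = \max_{l\neq 1}B(i,l) + \gamma$ with equality, whereas the raw $\mat{H}_{\vlams}$ only guarantees this for the rows where the global minimum gap is attained; other rows may have strictly larger gaps. Your per-row redistribution from column $1$ to column $l^*_i$ repairs this cleanly, and your observation that the redistribution only increases $\C\bullet\B$ over $\Csig$ is the key step that lets the rest of the argument go through unchanged. Your final step (the minimum over $\H$ is bounded by any convex combination, hence by $\C\bullet\mat{H}_{\vlams}\leq\C\bullet\B$) is the same content as the paper's min-max chain, stated more directly.
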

\begin{proof}
The proof shows existence through non-constructive averaging
arguments. We will reuse notation from the proof of Theorem~\ref{multi:suf:thm}
above. $\H$ is boostable implies there exists some distribution
$\vlams\in\Delta(\H)$ such that
\[
\forall j\neq 1, i: \mat{H}_{\vlams} (i,1) - \mat{H}_{\vlams} (i,j) > 0.
\]
Let $\gamma > 0$ be the minimum of the above expression over all
possible $(i,j)$, and let $\B = \mat{H}_{\vlams}$. Then $\B\in\Bgam$, and
\[
 \displaystyle \max_{\C\in\Csig}
 \displaystyle \min_{h\in\H}
 \C \bullet \enc{\vh - \B}
 \leq
 \displaystyle \min_{\vlambda\in\Delta(\H)}
 \displaystyle \max_{\C\in\Csig}
 \C \bullet \enc{\mat{H}_{\vlambda} - \B}
 \leq
 \displaystyle \max_{\C\in\Csig}
 \C \bullet \enc{\mat{H}_{\vlams} - \B}
 = 0,
 \]
 where the equality follows since by definition $\mat{H}_{\vlams} - \B =
 \mathbf{0}$. The max-min expression is at most zero is another way of
 saying that $\H$ satisfies the weak-learning condition
 $(\Csig,\B)$ as in \eqref{multi:wl:eqn}.
\end{proof}
Theorem~\ref{multi:nec:thm} states that any boostable weak classifier
space will satisfy some condition in our family, but it does not help
us choose the right condition. Experiments in
Section~\ref{multi:expts:section} suggest $\enc{\Csig,\Ugam}$ is effective
with very simple weak-learners compared to popular boosting
algorithms. (Recall $\Ugam\in\Bgam$ is the edge-over-random baseline
closest to uniform; it has weight $(1-\gamma)/k$ on incorrect labels
and $(1-\gamma)/k + \gamma$ on the correct label.) However, there are
theoretical examples showing each condition in our family is too
strong.
\begin{theorem}
  \label{multi:eorstrong:thm}
  For any $\B\in\Bgam$, there exists a boostable space $\H$ that fails
  to satisfy the condition $(\Csig,\B)$.
\end{theorem}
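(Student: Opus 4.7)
My plan is to exhibit, for an arbitrary $\B \in \Bgam$, a boostable space $\H$ together with a witnessing cost matrix $\C \in \Csig$ such that $\C \bullet (\vh - \B) > 0$ for every $h \in \H$; by \eqref{multi:wl:eqn} this is precisely the negation of the weak-learning condition $(\Csig, \B)$. The core idea is to have each weak classifier in $\H$ be ``mostly correct'' (so the uniform mixture is boostable) but attack exactly the label that $\B$ downweights most, so that $\B$'s per-coordinate budget cannot simultaneously absorb all of the mass.

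For each example $i$, pick $l^*(i) \in \argmin_{l \neq 1} B(i,l)$. Since the $k-1$ non-negative entries $B(i,l)$ with $l \neq 1$ sum to $1 - B(i,1)$, and since $\B \in \Bgam$ forces $B(i,1) \geq 1/k + (k-1)\gamma/k$, the minimum satisfies $B(i, l^*(i)) \leq (1 - B(i,1))/(k-1) \leq (1-\gamma)/k$, so $\sum_i B(i, l^*(i)) \leq m(1-\gamma)/k$. Define $\C$ by $C(i, l^*(i)) = 1$ and $C(i, l) = 0$ for $l \neq l^*(i)$. Then $\C \in \Csig$ because $C(i,1) = 0$ is minimal in every row, and $\C \bullet \B = \sum_i B(i, l^*(i))$. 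Now pick an integer $r$ with $m(1-\gamma)/k < r < m/2$ (possible once $m$ is moderately large relative to $k/\gamma$, since the gap has width $m[1/2 - (1-\gamma)/k] > 0$ for $k \geq 3$), and set
\[
  \H = \{h^{(S)} : S \subseteq \{1,\ldots,m\},\ |S| = r\},
  \qquad
  h^{(S)}(x_i) = \begin{cases} l^*(i) & i \in S, \\ 1 & i \notin S. \end{cases}
\]

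Finally I verify the two required properties. For boostability, the uniform distribution over $\H$ yields, by symmetry, $H_\vlambda(i,1) = (m-r)/m$ and $H_\vlambda(i, l^*(i)) = r/m$ with all other entries zero; since $r < m/2$ the correct label strictly dominates on every example. For failure of the condition, each $h^{(S)}$ predicts $l^*(i)$ on exactly the $r$ examples in $S$, giving $\C \bullet \vh^{(S)} = r > m(1-\gamma)/k \geq \C \bullet \B$, so every $h \in \H$ violates the inequality $\C \bullet (\vh - \B) \leq 0$ required by \eqref{multi:wl:eqn}. The chief technical obstacle is ensuring the interval $(m(1-\gamma)/k,\, m/2)$ contains an integer $r$; this is immediate for training sets of moderate size, and small-$m$ boundary cases can be handled by including classifiers with varying numbers of errors (e.g.\ the union $\bigcup_{r'} \H_{r'}$) without changing the essential argument.
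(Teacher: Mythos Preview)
Your proof is correct and follows essentially the same approach as the paper's: both identify the ``attack label'' $l^*(i)=\argmin_{l\neq 1}B(i,l)$, take the cost matrix that puts unit cost only on that label (yielding $\C\bullet\B\le m(1-\gamma)/k$), and build $\H$ from classifiers that are correct on more than half the examples but always predict the attack label when wrong, so the uniform mixture witnesses boostability while every classifier's cost exceeds $\C\bullet\B$. The only difference is cosmetic: the paper uses $m$ classifiers defined by cyclic windows of length $\lfloor m(1/2+\gamma/k)\rfloor$, whereas you use all $r$-element subsets with $r\in(m(1-\gamma)/k,\,m/2)$; this makes your $\H$ much larger but the symmetry argument for boostability slightly cleaner, and your parameter range incidentally covers $k=2$ where the paper's final inequality degenerates to equality.
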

\begin{proof}
 We provide, for any
 $\gamma > 0$ and edge-over-random baseline $\B\in\Bgam$, a dataset and
 weak classifier space that is boostable but fails to satisfy the
 condition $(\Csig,\B)$.
 
 Pick $\gamma' = \gamma/k$ and set $m>1/\gamma'$ so that $\floor{m(1/2
   + \gamma')} > m/2$.  Our 
 dataset will have $m$ labeled examples $\set{(0,y_0), \ldots,
   (m-1,y_{m-1})}$, and $m$ weak classifiers. We want the following
 symmetries in our weak classifiers:
\begin{itemize}
\item Each weak classifier correctly classifies $\floor{m(1/2 + \gamma')}$
  examples and misclassifies the rest.
\item On each example, $\floor{m(1/2 + \gamma')}$  weak classifiers
  predict correctly.
\end{itemize}
Note the second property implies boostability, since the uniform
convex combination of all the weak classifiers is a perfect
predictor.

The two properties can be satisfied by the following design. A window
is a contiguous sequence of examples that may wrap around; for example
\[
\set{i, (i+1)\mod m, \ldots, (i+k)\mod m}
\]
is a window containing $k$
elements, which may wrap around if $i+k \geq m$.  For each window of
length $\floor{m(1/2 + \gamma')}$ create a hypothesis that correctly
classifies within the window, and misclassifies outside. This
weak-hypothesis space has size $m$, and has the required properties.

We still have flexibility as to how the misclassifications occur, and
which cost-matrix to use, which brings us to the next two choices:
\begin{itemize}
\item Whenever a hypothesis misclassifies on example $i$, it predicts label
  \begin{equation}
    \label{multi:yhat:eqn}
  \hat{y}_i \eqdef \argmin \set{B(i,l) : l \neq y_i}.
  \end{equation}
\item A cost-matrix is chosen so that the cost of predicting
  $\hat{y}_i$ on example $i$ is 1, but for any other prediction the
  cost is zero. Observe this cost-matrix belongs to $\Csig$.
\end{itemize}
Therefore, every time a weak classifier predicts incorrectly, it also
suffers cost 1. Since each weak classifier predicts correctly only
within a window of length $\floor{m(1/2+\gamma')}$, it suffers cost
$\ceil{m(1/2 - \gamma')}$. On the other hand, by the choice of
$\hat{y}_i$ in \eqref{multi:yhat:eqn},
\begin{eqnarray*}
B(i,\hat{y}_i)
&=& \min\set{B(i,1)-\gamma,B(i,2),\ldots,B(i,k)}\\
&\leq&
\frac{1}{k}\enct{B(i,1)-\gamma + B(i,2) + B(i,3) + \ldots + B(i,k)}\\
&=& 1/k-\gamma/k.
\end{eqnarray*}
So the cost of $\B$ on the
chosen cost-matrix is at most $m(1/k-\gamma/k)$, which is less than the cost
$\ceil{m(1/2 - \gamma')} \geq m(1/2-\gamma/k)$ of any weak classifier
whenever the number 
of labels $k$ is more than two.
Hence our boostable space of weak
classifiers fails to satisfy $(\Csig,\B)$. 
\end{proof}
Theorems~\ref{multi:nec:thm} and \ref{multi:eorstrong:thm} can be
interpreted as follows. While a boostable space will satisfy
\emph{some} edge-over-random condition, without further information
about the dataset it is not possible to know \emph{which} particular
condition will be satisfied. The kind of prior knowledge required to
make this guess correctly is provided by
Theorem~\ref{multi:suf:thm}: the appropriate weak learning condition
is determined by the distribution of votes on the labels for each
example that a target weak classifier combination might be able to
get. Even with domain expertise, such knowledge may or may not be
obtainable in practice before running boosting. We therefore need
conditions that assume less.

\subsection{The minimal weak learning condition}
\label{multi:minwl:sec}
A perhaps extreme way of weakening the condition is by requiring the
performance on a cost matrix to be competitive not with a {\em
  fixed\/} baseline $\B\in\Bgam$, but with the {\em worst} of them:
\begin{equation}
  \label{multi:minwl:eqn}
\forall\C\in\mathcal{\Csig},\exists h\in\H: \C\bullet\vh \leq
\max_{\B\in\Bgam} \C\bullet\B.
\end{equation}
Condition \eqref{multi:minwl:eqn} states that during the course of the same
boosting game, Weak-Learner may choose to beat {\em any}
edge-over-random baseline $\B\in\Bgam$, possibly a different one for
every round and every cost-matrix.  This may superficially seem much
too weak.  On the contrary, this condition turns out to be equivalent
to boostability.  In other words, according to our criterion, it is
neither too weak nor too strong as a weak-learning condition.
However, unlike the edge-over-random conditions, it also turns out to
be more difficult to work with algorithmically.

Furthermore, this condition can be shown to be equivalent to the one
used by AdaBoost.MR~\citep{SchapireSi99,FreundSc96}. This is perhaps
remarkable since the latter is based on the apparently completely
unrelated all-pairs multiclass to binary reduction. In
Section~\ref{multi:old:sec} we saw that the MR condition is given by
$(\Cmr,\MR_\gamma)$, where $\Cmr$ consists of cost-matrices that put
non-negative costs on incorrect labels and whose rows sum up to zero,
while $\MR_\gamma\in\R^{m\times k}$ is the matrix that has $\gamma$ on
the first column and $-\gamma$ on all other
columns. Further, the MR condition, and hence
\eqref{multi:minwl:eqn}, can be shown to be neither too weak nor too
strong.
\begin{theorem}[MR]
\label{multi:mrminwl:thm}
  A weak classifier space $\H$ satisfies AdaBoost.MR's weak-learning
  condition $(\Cmr,\MR_\gamma)$ if and only if it satisfies
  \eqref{multi:minwl:eqn}. Moreover, this condition is equivalent to being
  boostable.
\end{theorem}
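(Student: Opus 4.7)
I would prove the three-way equivalence by first handling the MR condition via minimax, then reducing the minimal condition to the MR condition by direct computation, and finally reading off boostability.

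\emph{Step 1 (MR $\Leftrightarrow$ existence of a $\gamma$-margin separator).} I would repeat Step~(ii) of the MH/M1 lemma. By Theorem~\ref{multi:minmax:thm},
\[
\max_{\C\in\Cmr}\min_{h\in\H}\C\bullet(\vh-\MR_\gamma)
\;=\;\min_{\vlambda\in\Delta(\H)}\max_{\C\in\Cmr}\C\bullet(\mat{H}_{\vlambda}-\MR_\gamma),
\]
and since every $\C\in\Cmr$ has $C(i,1)=-\sum_{l\neq 1}C(i,l)$, the inner product rewrites as $\sum_i\sum_{l\neq 1}C(i,l)\,[\mat{H}_\vlambda(i,l)-\mat{H}_\vlambda(i,1)+\gamma]$. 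Because the entries $C(i,l)$ for $l\neq 1$ are free non-negative reals, the max is $\leq 0$ iff each bracket is $\leq 0$, i.e.\ there exists $\vlambda$ with $\mat{H}_\vlambda(i,1)\geq\mat{H}_\vlambda(i,l)+\gamma$ for all $i$ and all $l\neq 1$. This yields MR $\Leftrightarrow$ margin-$\gamma$ separation.

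\emph{Step 2 (MR $\Leftrightarrow$ \eqref{multi:minwl:eqn}).} First, $\Cmr\subseteq\Csig$ since $\C\in\Cmr$ forces $C(i,1)=-\sum_{l\neq 1}C(i,l)\leq 0\leq C(i,l)$. For any such $\C$ and any $\B\in\Bgam$, the row-sum-zero identity yields
\[
\C\bullet\B \;=\; \sum_i\sum_{l\neq 1}C(i,l)\bigl(B(i,l)-B(i,1)\bigr)
\;\leq\; -\gamma\sum_i\sum_{l\neq 1}C(i,l) \;=\; \C\bullet\MR_\gamma,
\]
using $B(i,1)-B(i,l)\geq\gamma$ and $C(i,l)\geq 0$. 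Thus $\max_{\B\in\Bgam}\C\bullet\B\leq\C\bullet\MR_\gamma$ on $\Cmr$, so \eqref{multi:minwl:eqn} immediately implies the MR condition. Conversely, given MR, Step~1 supplies a $\vlambda$ with margin $\geq\gamma$; for any $\C\in\Csig$ I would construct $\B'\in\Bgam$ by smoothing each row $\mat{H}_\vlambda(i,\cdot)$ toward the uniform distribution just enough to pull the worst-case gap down to exactly $\gamma$. This repair puts $\B'$ into $\Bgam$, and since $C(i,1)\leq C(i,l)$ for $\C\in\Csig$, moving mass from label $1$ to labels $l\neq 1$ can only raise the row's contribution to $\C\bullet(\cdot)$. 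Hence $\min_h\C\bullet\vh\leq\C\bullet\mat{H}_\vlambda\leq\C\bullet\B'\leq\max_{\B\in\Bgam}\C\bullet\B$, which is \eqref{multi:minwl:eqn}.

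\emph{Step 3 (boostability).} If $\H$ is boostable then some $\vlams$ strictly separates the data, and setting $\gamma\eqdef\min_{i,l\neq 1}(\mat{H}_{\vlams}(i,1)-\mat{H}_{\vlams}(i,l))>0$ gives MR at this $\gamma$ via Step~1. Conversely, MR at any $\gamma>0$ produces a margin-$\gamma$ separator, which is a boostable distribution.

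\emph{Main obstacle.} The only nontrivial piece is the baseline construction in the MR $\Rightarrow$ \eqref{multi:minwl:eqn} direction: $\mat{H}_\vlambda$ satisfies the margin inequality but generally violates the equality $B(i,1)=\max_{l\neq 1}(B(i,l)+\gamma)$ built into $\Bgam$, so one must perturb into $\Bgam$ without ever decreasing $\C\bullet(\cdot)$ for a worst-case $\C\in\Csig$. The sign pattern defining $\Csig$ makes ``move mass away from label $1$'' monotone in the favorable direction, so a uniform row-wise smoothing of $\mat{H}_\vlambda$ succeeds.
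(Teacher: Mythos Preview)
Your overall plan is sound, and organizing around the margin-$\gamma$ separator as a hub is clean. Step~1 and Step~3 are correct and essentially match the paper's (C)$\Rightarrow$(A) argument. Your (min)$\Rightarrow$MR computation in Step~2 is correct and in fact tighter than the paper's, which loses a factor of $2$ in $\gamma$ at that step (the paper shows condition~\eqref{multi:minwl:eqn} at edge $2\gamma$ implies MR at edge $\gamma$, whereas you get the same $\gamma$ on both sides).

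There is, however, a gap in your MR$\Rightarrow$\eqref{multi:minwl:eqn} direction. You correctly identify that the obstacle is perturbing $\mat{H}_\vlambda$ into $\Bgam$ without decreasing $\C\bullet(\cdot)$, and that moving mass \emph{away from label~1} is monotone in the right direction for every $\C\in\Csig$. But your proposed construction---smoothing each row toward the uniform distribution---does not only move mass away from label~1; it also takes mass from any incorrect label whose weight exceeds $1/k$. Concretely, with $k=3$, $\mat{H}_\vlambda(i)=(0.5,\,0.4,\,0.1)$, and $\C(i)=(0,1,0)\in\csig$, smoothing toward uniform lowers the weight on label~2 and strictly \emph{decreases} $\langle\C(i),\cdot\rangle$ from $0.4$ toward $1/3$. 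So $\C\bullet\mat{H}_\vlambda\leq\C\bullet\B'$ can fail.

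The repair is immediate from your own monotonicity observation: for each row $i$, let $l^*_i=\argmax_{l\neq 1}\mat{H}_\vlambda(i,l)$ and transfer mass $\epsilon_i=\tfrac{1}{2}\bigl(\gamma_i-\gamma\bigr)$ from label~1 to label~$l^*_i$, where $\gamma_i$ is the row's current minimum gap. This lands each row in $\dgam$, and since only label~1 loses mass, $\C\bullet\B'\geq\C\bullet\mat{H}_\vlambda$ for every $\C\in\Csig$. For comparison, the paper sidesteps this construction entirely by arguing in a cycle (A)$\Rightarrow$(B)$\Rightarrow$(C)$\Rightarrow$(A) and obtaining (A)$\Rightarrow$(B) via Theorem~\ref{multi:nec:thm}, which simply takes $\B=\mat{H}_{\vlams}$.
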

\begin{proof}
We will show the following three conditions are equivalent:
\begin{enumerate}[(A)]
\item $\H$  is boostable
\item $\exists \gamma > 0 \mbox{ such that } 
   \forall \C \in \Csig, \exists
  h\in \H: \C\bullet\vh \leq \displaystyle \max_{\B\in\Bgam} \C\bullet \B$
\item $\exists \gamma > 0 \mbox{ such that }
  \forall \C \in \Cmr, \exists
  h\in \H: \C\bullet\vh \leq \C\bullet \MR$.
\end{enumerate}

We will show (A) implies (B), (B) implies (C), and (C) implies (A) to
achieve the above.

{\it (A) implies (B)}: Immediate from Theorem~2.

{\it (B) implies (C)}: Suppose (B) is satisfied with $2\gamma$. We
will show that this implies $\H$ satisfies $(\Cmr,
\MR_\gamma)$. Notice $\Cmr \subset \Csig$. Therefore it suffices to
show that
\[
\forall \C\in \Cmr, \B\in \cB_{2\gamma}: \C\bullet \enc{\B -
  \MR_\gamma} \leq 0.
\]
Notice that $\B\in\cB_{2\gamma}$ implies $\B' = \B - \MR_\gamma$ is a
matrix whose largest entry in each row is in the first column of that
row. Then, for any $\C\in\Cmr$, $\C\bullet \B'$ can be written as
\[
\C\bullet \B' = \sum_{i=1}^{m}\sum_{j=2}^k C(i,j) \enc{B'(i,j) - B'(i,1)}.
\]
Since $C(i,j) \geq 0$ for $j>1$, and $B'(i,j) - B'(i,1) \leq 0$, we
have our result.

{\it (C) implies (A)}:  Applying Theorem~\ref{multi:minmax:thm}
  \[
  0 \geq \max_{\C \in \Cmr} \min_{h \in \H} \C \bullet \enc{ \vh -
    \MR_\gamma} = \min_{\vlambda \in \Delta(\H)} \max_{\C \in \Cmr} \C
  \bullet \enc{ \mat{H}_{\vlambda} - \MR_\gamma}.
  \]
  For any $i_0$ and $l_0\neq 1$, the
  following cost-matrix $\C$ satisfies $\C \in \Cmr$,
  \[
  C(i,l) = 
  \begin{cases}
    0  & \mbox { if } i \neq i_0 \mbox { or } l \not \in \set{1,l_0}\\
    1 & \mbox { if } i = i_0, l = l_0 \\
    -1 & \mbox { if } i = i_0, l = 1.
  \end{cases}
  \]
  Let $\vlambda$ belong to the $\argmin$ of the $\min\max$
  expression. Then $\C \bullet \enc{ \mat{H}_{\vlambda} -
    \MR_\gamma} \leq 0$ implies $\mat{H}_{\vlambda}(i_0,1) -
  \mat{H}_{\vlambda}(i_0,l_0) \geq 2\gamma$. Since this is true for all $i_0$
  and $l_0\neq 1$, we conclude that the $(\Cmr, \MR_\gamma)$ condition
  implies boostability.

  This concludes the proof of equivalence.
\end{proof}
  Next, we illustrate the strengths of our \ignore{random-over-edge}
  minimal weak-learning condition through concrete comparisons with previous
algorithms.

\paragraph{Comparison with SAMME.}
The SAMME algorithm of \citet{ZhuZoRoHa09} requires the weak classifiers
to achieve less error than uniform random guessing for multiple
labels; in our language, their weak-learning condition is
$(\Csam,\Ugam)$, as shown in Section~\ref{multi:old:sec}, where
$\Csam$ consists of cost matrices whose rows are of the form
$(0,t,t,\ldots)$ for some non-negative $t$.
As is well-known, this
condition is not sufficient for boosting to be possible. In
particular, consider the dataset $\set{(a,1),(b,2)}$ with $k=3,m=2$,
and a weak classifier space consisting of $h_1,h_2$ which always
predict $1,2$, respectively (Figure~\ref{multi:samme:fig}).
\begin{figure}
\begin{center}
  \begin{tabular}{c|cc}
      & $h_1$ & $h_2$ \\
    \hline
    $a$ & $1$  & $2$ \\
    $b$ & $1$  & $2$ \\
  \end{tabular}
\end{center}
\caption[Dataset demonstrating SAMME's condition is too weak]{A weak
  classifier space which satisfies SAMME's weak learning condition but
  is not boostable.} 
\label{multi:samme:fig}
\end{figure}
Since neither classifier distinguishes between $a,b$ we cannot achieve
perfect accuracy by combining them in any way. Yet, due to the
constraints on the cost-matrix, one of $h_1,h_2$ will always manage
non-positive cost while random always suffers positive cost. On the
other hand our weak-learning condition allows the Booster to choose
far richer cost matrices. In particular, when the cost matrix
$\C\in\Csig$ is given by
\begin{center}
  \begin{tabular}{l|rrr}
    & $1$ & $2$ & $3$ \\
    \hline
  $a$ & $-1$ & $+1$ & $0$ \\
  $b$ & $+1$ & $-1$ & $0$,
\end{tabular}
\end{center}
both classifiers in the above example suffer more loss than the random
player $\Ugam$, and fail to satisfy our condition.

\paragraph{Comparison with AdaBoost.MH.}
AdaBoost.MH~\citep{SchapireSi99} was designed for use with weak
hypotheses that on each example return a prediction for every label.
When used in our framework, where the weak classifiers return only a
single multiclass prediction per example,
the implicit demands made by AdaBoost.MH on the weak classifier space
turn out to be too strong.
To demonstrate this, we construct a classifier space that satisfies
the condition $(\Csig,\Ugam)$ in our family, but cannot satisfy
AdaBoost.MH's weak-learning condition.
Note that this does not imply that the conditions are too strong when
used with more powerful weak classifiers that return multilabel
multiclass predictions.

Consider a space $\H$ that has, for every $(1/k+\gamma)m$ element
subset of the examples, a classifier that predicts correctly on
exactly those elements. The expected loss of a randomly chosen
classifier from this space is the same as that of the random player
$\Ugam$. Hence $\H$ satisfies this weak-learning condition. On the
other hand, it was shown in Section~\ref{multi:old:sec} that
AdaBoost.MH's weak-learning condition is the pair $(\Cmh,\MH_\gamma)$,
where $\Cmh$ consists of cost matrices with non-negative entries on
incorrect labels and non-positive entries on real labels, 
and where each row of the matrix $\MH_\gamma$ is the vector
$(1/2+\gamma/2,1/2-\gamma/2,\ldots,1/2-\gamma/2)$. A quick calculation
shows that for any $h\in\H$, and $\C\in\Cmh$ with $-1$ in the first
column and zeroes elsewhere, $\C\bullet\enc{\vh - \MH_\gamma} = 1/2 -
1/k$.  This is positive when $k>2$, so that $\H$ fails to satisfy
AdaBoost.MH's condition.

We have seen how our framework allows us to capture the strengths and
weaknesses of old conditions, describe a whole new family of
conditions and also identify the condition making minimal assumptions.
In the next few sections, we show how to design boosting algorithms
that employ these new conditions and enjoy strong theoretical
guarantees.

\section{Algorithms}
\label{multi:algo:sec}

In this section we devise algorithms by analyzing the boosting games
that employ weak-learning conditions in our framework. We compute
the optimum Booster strategy against a completely adversarial
Weak-Learner, which here is permitted to choose weak classifiers
without restriction, i.e. the entire space $\Hall$ of all possible
functions mapping examples to labels. By modeling Weak-Learner
adversarially, we make absolutely no assumptions on the algorithm it
might use. Hence, error guarantees enjoyed in this situation will be
universally applicable. Our algorithms are derived from the very
general drifting games framework~\citep{Schapire01} for solving
boosting games, which in turn was inspired by Freund's Boost-by-majority
algorithm~\citep{Freund95}, which we review next.

\paragraph{The OS Algorithm.}
Fix the number of rounds $T$ and a
weak-learning condition $(\Csp,\B)$.
We will only consider conditions that are not \emph{vacuous}, i.e., at
least some classifier space satisfies the condition, or equivalently,
the space $\Hall$ satisfies $(\Csp,\B)$.
Additionally, we assume the
constraints placed by $\Csp$ are on individual rows. In other
words, there is some subset $\Crow \subseteq \R^k$ of all possible
rows, such that a cost matrix $\C$ belongs to the collection
$\Csp$ if and only if each of its rows belongs to this subset:
\begin{equation}
  \label{multi:crow:eqn}
\C\in\Csp \iff
\forall i: \C(i) \in \Crow.
\end{equation}
Further, we assume
$\Crow$ forms a convex cone i.e $\vc,\vc' \in \Crow$ implies $t\vc + t'\vc' \in
\Crow$ for any non-negative $t,t'$. This also implies that $\Csp$ is a convex
cone.
This is a very natural restriction, and is satisfied by the space $\C$
used by the weak learning conditions
of AdaBoost.MH, AdaBoost.M1, 
AdaBoost.MR, SAMME as well as every edge-over-random condition.
\footnote{All our results hold under the weaker restriction on
  the space $\Csp$, where the set of possible cost vectors $\Crow$ for
  a row $i$ could depend on $i$.
  For simplicity of exposition, we stick to the more restrictive
  assumption that $\Crow$ is common across all rows.}
For simplicity of presentation we fix the weights
$\alpha_t=1$ in each round. With $\f_T$ defined as in
\eqref{multi:fdef:eqn}, whether the final hypotheses output by Booster
makes a prediction error on an example $i$ is decided by whether an
incorrect label received the maximum number of votes, $f_T(i,1) \leq
\max_{l=2}^kf_T(i,l)$. Therefore, the optimum Booster payoff can be written as
\begin{equation}
  \label{multi:payoff:eqn}
\min_{\C_1\in \Csp} \max_{\substack{h_1\in\Hall:\\
    \C_1\bullet\enc{\mat{1}_{h_1} - \B}\leq \vzero}} \ldots
\min_{\C_T\in\Csp}\max_{\substack{h_T\in\Hall:\\
    \C_T\bullet\enc{\mat{1}_{h_T} - \B} \leq \vzero}}
\frac{1}{m}\sum_{i=1}^m
\Lzero(f_T(x_i,1),\ldots,f_T(x_i,k)).
\end{equation}
where the function $\Lzero:\R^k\to\R$ encodes 0-1 error 
\begin{equation}
  \label{multi:zero_one:eqn}
  \Lzero(\vs) =
  \1\enco{s(1) \leq \max_{l>1} s(l)}.
\end{equation}
In general, we will also consider
other loss functions $L:\R^k\to\R$ such as exponential loss, hinge
loss, etc. that upper-bound error and are \emph{proper}: i.e. $L(\vs)$
is increasing 
in the weight of the correct label $s(1)$, and decreasing in the
weights of the incorrect labels $s(l),l\neq 1$.

Directly analyzing the optimal payoff is hard. However,
\cite{Schapire01} observed that the payoffs can be very well
approximated by certain potential functions.  Indeed, for any
$\vb\in\R^k$ define the \emph{potential function} $\phi^{\vb}_t:\R^k
\To \R$ by the following recurrence:
\begin{eqnarray}
  \phi^{\vb}_0 &=& L \nonumber \\
    \label{multi:dgrec:eqn}
    \phi^{\vb}_t (\vs) &=&
    \begin{array}{ccc}
      \displaystyle \min_{\vc\in\Crow}
      &
      \displaystyle
      \max_{\vp\in\Delta{\set{1,\ldots,k}}}
      &
      \E_{l\sim \vp}\enco{\phi^{\vb}_{t-1}\enc{\vs + \ve_l}} \\
      & \mbox { s.t. } &
      \E_{l\sim \vp} \enco{c(l)} \leq \dotp{\vb}{\vc},
    \end{array}
  \end{eqnarray}
  where $l\sim \vp$ denotes that label $l$ is sampled from the
  distribution $\vp$, and $\ve_l\in\R^k$ is the unit-vector whose $l$th coordinate is
  $1$ and the remaining coordinates zero.
  Notice the recurrence uses
  the collection of rows $\Crow$ instead of the collection of cost
  matrices $\Csp$.
When there are $T-t$ rounds
remaining (that is, after $t$ rounds of boosting), these potential
functions 
compute an estimate $\phi_{T-t}^{\vb}(\vs_{t})$ of whether an example $x$
will be misclassified, based on its current state $\vs_{t}$ consisting
of counts of votes received so far on various classes:
\begin{equation}
    \label{multi:state:eqn}
  s_{t}(l) =
  \sum_{t'=1}^{t-1} \1\enco{h_{t'}(x) = l}.
\end{equation}
Notice this definition of state assumes that $\alpha_t=1$ in each
round. Sometimes, we will choose the weights differently. In such
cases, a more appropriate definition is the weighted state
$\f_t\in\R^k$, tracking the weighted counts of votes received so far:
\begin{equation}
  \label{multi:var_state:eqn}
  f_t(l) = \sum_{t'=1}^{t-1} \alpha_{t'}\1\enco{h_{t'}(x) = l}.
\end{equation}
However, unless otherwise noted, we will assume $\alpha_t=1$, and so
the definition in \eqref{multi:state:eqn} will suffice.

\old{
The recurrence in \eqref{multi:dgrec:eqn} requires the $\max$ player's
response $\vp$ to satisfy the constraint that the expected cost under
the distribution $\vp$ is at most the inner-product $\dotp{\vc}{\vb}$.
If there is no distribution satisfying this requirement, then
the value of the $\max$ expression is $-\infty$.
The existence of a valid distribution depends on both $\vb$ and $\vc$
and is captured by the following:
\begin{equation}
  \label{multi:pexists:eqn}
  \exists \vp\in\Delta\set{1,\ldots,k}:
  \E_{l\sim\vp}\enco{c(l)} \leq \dotp{\vc}{\vb}
  \iff
  \min_{l}c(l) \leq \dotp{\vb}{\vc}.
\end{equation}
In this \thesis{chapter}\notthesis{paper}, the vector $\vb$ will
always correspond to some row $\B(i)$ of the baseline used in the
weak learning condition.
In such a situation, the next lemma shows that a distribution
satisfying the required constraints will always exist.
\begin{lemma}
  \label{multi:pexists:lem}
  If $\Crow$ is a cone and \eqref{multi:crow:eqn} holds, then for any row
  $\vb=\B(i)$ of the baseline 
  and any cost vector $\vc\in\Crow$,
  \eqref{multi:pexists:eqn} holds unless the condition $(\Csp,\B)$ is
  vacuous. 
\end{lemma}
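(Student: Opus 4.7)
The plan is to invoke the equivalence in \eqref{multi:pexists:eqn} and show directly that $\min_l c(l) \leq \dotp{\vb}{\vc}$ holds whenever $\vb = \B(i)$ and $\vc\in\Crow$, using non-vacuity of $(\Csp,\B)$ together with the cone structure of $\Crow$.

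First, I would observe that since $\Crow$ is a cone (closed under nonnegative combinations), taking $t=t'=0$ in the cone property shows $\vzero\in\Crow$. This lets me build a cost matrix that ``isolates'' a single row: given the target row index $i$ and the target cost vector $\vc\in\Crow$, define $\C\in\R^{m\times k}$ by setting $\C(i)=\vc$ and $\C(i')=\vzero$ for all $i'\neq i$. By \eqref{multi:crow:eqn}, this matrix lies in $\Csp$ because each of its rows belongs to $\Crow$.

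Next I would apply non-vacuity. Since $(\Csp,\B)$ is not vacuous, the space $\Hall$ of all functions $X\to\set{1,\ldots,k}$ satisfies the condition, so there exists $h\in\Hall$ with $\C\bullet(\vh-\B)\leq 0$, i.e.
\[
\sum_{i'=1}^m C(i',h(x_{i'})) \;\leq\; \sum_{i'=1}^m \dotp{\C(i')}{\B(i')}.
\]
Because $\C$ vanishes outside row $i$, this collapses to $c(h(x_i))\leq \dotp{\vc}{\B(i)} = \dotp{\vb}{\vc}$. Since $\min_l c(l)\leq c(h(x_i))$, I obtain $\min_l c(l)\leq \dotp{\vb}{\vc}$, which by \eqref{multi:pexists:eqn} yields the desired distribution $\vp\in\Delta\set{1,\ldots,k}$ with $\E_{l\sim\vp}[c(l)]\leq \dotp{\vb}{\vc}$.

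There isn't really a ``hard step'' here; the only subtlety is noticing that the cone property delivers $\vzero\in\Crow$ for free, which is what legitimizes padding unused rows with zeros to stay inside $\Csp$. Everything else is a direct unpacking of definitions plus the equivalence \eqref{multi:pexists:eqn} (which is itself a one-line LP feasibility observation: an expectation can be made at most $\dotp{\vb}{\vc}$ iff the minimum coordinate is).
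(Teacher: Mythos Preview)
Your proof is correct and follows essentially the same approach as the paper: both use the cone property to obtain $\vzero\in\Crow$, build the same cost matrix isolating row $i$ with zeros elsewhere, and apply the weak-learning condition to that matrix to extract $c(h(x_i))\leq\dotp{\vb}{\vc}$. The only cosmetic difference is that the paper phrases it as the contrapositive (assume \eqref{multi:pexists:eqn} fails and derive vacuity) whereas you argue directly, but the mathematical content is identical.
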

\begin{proof}
  We show that if \eqref{multi:pexists:eqn} does not hold, then the
  condition is vacuous.
  Assume that for row $\vb=\B(i_0)$ of the baseline, and
  some choice of cost vector $\vc\in\Crow$, 
  \eqref{multi:pexists:eqn} does not hold.
  We pick a cost-matrix
  $\C\in\Csp$, such that no weak classifier $h$ can satisfy the requirement
  \eqref{multi:wl:eqn}, implying the condition must be vacuous.
  The $i_0^{\rm th}$ row of the cost matrix is $\vc$, and the remaining
  rows are $\vzero$.
  Since $\Crow$ is a cone, $\vzero\in\Crow$ and hence the cost matrix
  lies in $\Csp$.
  With this choice for $\C$, the condition
  \eqref{multi:wl:eqn} becomes
  \[
  c(h(x_i)) = C\enc{i,h(x_i)} \leq \dotp{\C(i)}{\B(i)} = \dotp{\vc}{\vb}
  < \min_{l}c(l),
  \]
  where the last inequality holds since, by assumption, \eqref{multi:pexists:eqn} is
  not true for this choice of $\vc,\vb$.
  The previous equation is an impossibility, and hence no such weak
  classifier $h$ exists, showing the condition is vacuous.
\end{proof}
  Lemma~\ref{multi:pexists:lem} shows that the expression
  in \eqref{multi:dgrec:eqn} is well defined, and takes on
  finite values.
  We next record an alternate dual form for the same recurrence which
  will be useful later.
\begin{lemma}
  \label{multi:dgrec_dual:lem}
  The recurrence in \eqref{multi:dgrec:eqn} is equivalent to
  \begin{equation}
    \label{multi:dgrec_dual:eqn}
    \phi^{\vb}_t (\vs)
    =
    \min_{\vc\in\Crow}
    \max_{l=1}^k
    \enct{\phi^{\vb}_{t-1}\enc{\vs+\ve_l}
    - \enc{c(l)-\dotp{\vc}{\vb}}}.
  \end{equation}
\end{lemma}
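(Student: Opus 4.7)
The plan is to dualize the inner constrained maximization in \eqref{multi:dgrec:eqn} using linear programming duality, and then exploit the fact that $\Crow$ is a cone to absorb the resulting Lagrange multiplier into the cost vector~$\vc$.

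First, I would fix $\vc \in \Crow$ and examine the inner maximization
\[
M(\vc) \;=\; \max_{\vp \in \Delta\{1,\ldots,k\}}\, \E_{l\sim\vp}\bigl[\phi^{\vb}_{t-1}(\vs+\ve_l)\bigr]
\quad\text{s.t.}\quad \E_{l\sim\vp}\bigl[c(l)\bigr] \leq \dotp{\vc}{\vb}.
\]
This is a finite-dimensional LP in $\vp$ with linear objective and a single linear inequality constraint (plus the simplex constraints). By Lemma~\ref{multi:pexists:lem}, the feasible region is nonempty (since the condition is not vacuous), and the objective is obviously bounded above by $\max_l \phi^{\vb}_{t-1}(\vs+\ve_l)$, so strong LP duality applies. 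Introducing a multiplier $\mu \geq 0$ for the cost constraint yields the dual
\[
M(\vc) \;=\; \min_{\mu \geq 0}\; \Bigl\{\max_{l=1}^k\bigl[\phi^{\vb}_{t-1}(\vs+\ve_l) - \mu\, c(l)\bigr] + \mu\,\dotp{\vc}{\vb}\Bigr\},
\]
since for fixed $\mu$ the inner maximum over the simplex is attained at a vertex $\vp = \ve_l$.

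Next I would plug this back into \eqref{multi:dgrec:eqn} to obtain
\[
\phi^{\vb}_t(\vs) \;=\; \min_{\vc \in \Crow}\,\min_{\mu \geq 0}\,\Bigl\{\max_{l=1}^k\bigl[\phi^{\vb}_{t-1}(\vs+\ve_l) - \mu\, c(l)\bigr] + \mu\,\dotp{\vc}{\vb}\Bigr\},
\]
and make the substitution $\vc' = \mu\, \vc$. Because $\Crow$ is a convex cone containing $\vzero$, the pair $(\vc,\mu)$ with $\vc \in \Crow,\, \mu \geq 0$ ranges over exactly the same set of $\vc'$ values as $\vc' \in \Crow$: for $\mu > 0$ any $\vc' \in \Crow$ is realizable as $\mu\,(\vc'/\mu)$, and for $\mu = 0$ we get $\vc' = \vzero \in \Crow$, matching the $\vc' = \vzero$ term on the right. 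Under this substitution the bracketed expression becomes $\max_l\{\phi^{\vb}_{t-1}(\vs+\ve_l) - c'(l) + \dotp{\vc'}{\vb}\}$, which after renaming $\vc'$ back to $\vc$ is exactly \eqref{multi:dgrec_dual:eqn}.

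The main conceptual obstacle is checking that the cone absorption collapses the double minimization cleanly without losing or gaining any values; handling $\mu = 0$ separately and using $\vzero \in \Crow$ is what makes this step go through. Once that is in place, LP strong duality (justified by feasibility from Lemma~\ref{multi:pexists:lem} and boundedness of $\phi^{\vb}_{t-1}$) does the rest, and no further calculation is needed.
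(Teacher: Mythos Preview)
Your proposal is correct and follows essentially the same route as the paper: dualize the inner constrained maximization via a nonnegative multiplier, absorb the multiplier into $\vc$ using the cone property of $\Crow$, and then observe that the maximum over the simplex is attained at a vertex. The only cosmetic difference is that the paper invokes the Minimax Theorem (Theorem~\ref{multi:minmax:thm}) to swap $\max_{\vp}$ and $\min_{\lambda}$ whereas you appeal directly to LP strong duality; your explicit treatment of the $\mu=0$ case in the cone absorption is a nice touch the paper glosses over.
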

\begin{proof}
  Using Lagrangean multipliers, we may convert \eqref{multi:dgrec:eqn}
  to an unconstrained expression as follows:
  \[
  \phi^{\vb}_t(\vs)
  =
  \min_{\vc\in\Crow}
  \max_{\vp\in\Delta\set{1,\ldots,k}}
  \min_{\lambda \geq 0}
  \enct{
    \E_{l\sim\vp}\enco{\phi^{\vb}_{t-1}\enc{\vs+\ve_l}}
    - \lambda
    \enc{\E_{l\sim\vp}\enco{c(l)} - \dotp{\vc}{\vb}}}.
  \]
  Applying Theorem~\ref{multi:minmax:thm} to the inner min-max
  expression we get
  \[
    \phi^{\vb}_t(\vs)
  =
  \min_{\vc\in\Crow}
  \min_{\lambda \geq 0}
  \max_{\vp\in\Delta\set{1,\ldots,k}}
  \enct{
    \E_{l\sim\vp}\enco{\phi^{\vb}_{t-1}\enc{\vs+\ve_l}}
    - 
    \enc{\E_{l\sim\vp}\enco{\lambda c(l)} - \dotp{\lambda\vc}{\vb}}}.
  \]
  Since $\Crow$ is a cone, $\vc\in\Crow$ implies
  $\lambda\vc\in\Crow$. Therefore we may absorb the Lagrange
  multiplier into the cost vector:
  \[
  \phi^{\vb}_t(\vs)
  =
  \min_{\vc\in\Crow}
  \max_{\vp\in\Delta\set{1,\ldots,k}}
    \E_{l\sim\vp}\enco{\phi^{\vb}_{t-1}\enc{\vs+\ve_l}
    - \enc{c(l) - \dotp{\vc}{\vb}}}.
  \]
  For a fixed choice of $\vc$, the expectation is maximized when the
  distribution $\vp$ is concentrated on a single label that maximizes
  the inner expression, which completes our proof. 
\end{proof}
The dual form of the recurrence is useful
for optimally choosing the cost matrix in each round.
  When the weak learning condition being used is $(\Csp,\B)$,
  \cite{Schapire01}
  proposed a Booster strategy, called the OS strategy, which always
  chooses the weight $\alpha_t=1$, and uses the potential
  functions to construct a cost matrix $\C_{t}$ as follows.
  Each row $\C_{t}(i)$ of the matrix achieves the minimum of the right
  hand side of 
  \eqref{multi:dgrec_dual:eqn} with $\vb$ replaced by $\B(i)$, $t$ replaced by
  $T-t$, and $\vs$ replaced by current state $\vs_t(i)$:
  \begin{equation}
    \label{multi:os_ct:eqn}
  \C_{t}(i) =
  \argmin_{\vc\in\Crow}
  \max_{l=1}^k
    \enct{\phi^{\B(i)}_{T-t-1}\enc{\vs+\ve_l}
    - \enc{c(l) - \dotp{\vc}{\B(i)}}}.
\end{equation}
}
The
  following theorem, proved in the appendix, provides a guarantee for
  the loss suffered by the 
  OS algorithm, and also shows that it is the game-theoretically
  optimum strategy when the number of examples is large.
  Similar results have been proved by \citet{Schapire01}, but our
  theorem holds much more generally, and also achieves tighter lower
  bounds. 
  \old{
\begin{theorem}[Extension of results in \citep{Schapire01}]
  \label{multi:dgstrat:thm}
 Suppose the weak-learning condition is not vacuous and is given by
 $(\Csp,\B)$, where 
 $\Csp$ is such that, for some convex cone $\Crow\subseteq\R^k$, the condition
 \eqref{multi:crow:eqn} holds.
 Let the potential functions $\phi_t^{\vb}$ be defined as in
 \eqref{multi:dgrec:eqn}, 
 and assume the Booster employs the OS algorithm, choosing
 $\alpha_t=1$ and 
 $\C_t$ as in \eqref{multi:os_ct:eqn} in each round $t$.
 Then the average potential of
 the states,
 \[
 \frac{1}{m}\sum_{i=1}^m \phi_{T-t}^{\B(i)}\enc{\vs_t(i)},
 \]
  never increases in any round. In particular, the loss suffered after $T$
  rounds of play is at most
  \begin{equation}
    \label{multi:dgstratbnd:eqn}
  \frac{1}{m}\sum_{i=1}^m \phi_T^{\B(i)}(\vzero).
\end{equation}

Further, under certain conditions, this bound is nearly tight.
In particular, assume the loss function does not vary too much but
satisfies
\begin{equation}
  \label{multi:lossvar:eqn}
  \sup_{\vs,\vs'\in \mathcal{S}_T} \abs{L(\vs)-L(\vs')} \leq \diameter(L,T),
\end{equation}
where $\mathcal{S}_T$, a subset of
$\set{\vs\in\R^k:\norm{\vs}_\infty \leq  T}$,
is the set of 
all states reachable in $T$ iterations, and $\diameter(L,T)$ is an
upper bound on the discrepancy of losses between any two reachable
states when the loss function is $L$ and the total number of
iterations is $T$.
Then, for any $\eps > 0$, when the number of examples $m$ is
sufficiently large,
\begin{equation}
  \label{multi:mlarge:eqn}
m \geq \frac{T\diameter(L,T)}{\eps},
\end{equation}
no Booster strategy can guarantee to achieve in $T$ rounds a loss that
is $\eps$ less than the bound \eqref{multi:dgstratbnd:eqn}.
\end{theorem}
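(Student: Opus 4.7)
The plan has two parts. The upper bound \eqref{multi:dgstratbnd:eqn} follows from a standard potential-invariance argument; the tightness claim requires constructing an adversarial Weak-Learner whose expected behaviour matches the recurrence, together with a concentration/derandomization argument over the $m$ independent examples.

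For the upper bound, define the average potential after round $t$ by $\Phi_t = \frac{1}{m}\sum_{i=1}^{m}\phi^{\B(i)}_{T-t}(\vs_t(i))$. Since $\vs_0=\vzero$, $\Phi_0 = \frac{1}{m}\sum_i\phi^{\B(i)}_T(\vzero)$, and since $\phi^{\vb}_0=L$, $\Phi_T$ equals the final average loss. I will show $\Phi_{t+1}\leq\Phi_t$ in every round; the bound \eqref{multi:dgstratbnd:eqn} then follows by telescoping. Specialize the dual recurrence \eqref{multi:dgrec_dual:eqn} at $\vb = \B(i)$: because the OS rule \eqref{multi:os_ct:eqn} picks $\C_{t+1}(i)$ to attain the outer $\min$, for every label $l$ we have $\phi^{\B(i)}_{T-t-1}(\vs_t(i)+\ve_l) - (C_{t+1}(i,l) - \dotp{\C_{t+1}(i)}{\B(i)}) \leq \phi^{\B(i)}_{T-t}(\vs_t(i))$. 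Setting $l = h_{t+1}(x_i)$ and summing over $i$ gives $\Phi_{t+1} - \Phi_t \leq \frac{1}{m}\C_{t+1}\bullet(\vh_{t+1}-\B) \leq 0$, where the last inequality is the weak-learning condition satisfied by $h_{t+1}$.

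For the tightness claim, given any cost matrix $\C_t$ chosen by Booster, the primal form \eqref{multi:dgrec:eqn} supplies, for each example $i$, a maximizing distribution $\vp^*_i\in\Delta\set{1,\ldots,k}$ attaining $\phi^{\B(i)}_{T-t}(\vs_t(i))$; primal feasibility forces $\E_{l\sim\vp^*_i}[C_t(i,l)] \leq \dotp{\C_t(i)}{\B(i)}$. The adversarial Weak-Learner draws $h_t(x_i)\sim\vp^*_i$ independently across $i$, so that in expectation both the weak-learning constraint and the identity $\E[\Phi_{t+1}] = \Phi_t$ hold. The loss-diameter hypothesis \eqref{multi:lossvar:eqn} controls how much $\Phi_t$ can fluctuate around its expectation via the range $\diameter(L,T)$ and the independence across the $m$ coordinates; a round-by-round averaging argument then shows that the gap between the achieved loss and \eqref{multi:dgstratbnd:eqn} is of order $T\diameter(L,T)/m$, which is at most $\eps$ exactly under the sample-size bound \eqref{multi:mlarge:eqn}.

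The main obstacle is promoting this randomized adversary into a legitimate Weak-Learner whose \emph{realized} $h_t$ obeys the hard constraint $\C_t\bullet(\vh_t-\B)\leq\vzero$ rather than only in expectation. My plan is a deterministic correction: reassign labels on a small set of examples whenever the sample fails the constraint, using the diameter assumption \eqref{multi:lossvar:eqn} to bound the resulting loss perturbation. The non-vacuity assumption on $(\Csp,\B)$, together with the cone structure of $\Crow$, is what guarantees that such a feasible repair always exists, and is also what enables the tighter lower bound compared to \citep{Schapire01}.
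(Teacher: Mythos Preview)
Your upper bound is correct and essentially identical to the paper's: both invoke the dual recurrence \eqref{multi:dgrec_dual:eqn} at the OS minimizer, specialize to the returned label $h_{t}(x_i)$, sum over $i$, and apply the weak-learning condition.

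For the lower bound, your starting point---given any Booster matrix $\C_t$, take for each $i$ a primal maximizer $\vp_i^*$ in \eqref{multi:dgrec:eqn}---matches the paper, but your derandomization does not work. Sampling $l_i\sim\vp_i^*$ independently makes the cost constraint hold only in expectation, and its fluctuations scale with the cost entries, which are unbounded since $\Crow$ is a cone; neither concentration nor a ``small-set repair'' controls this side. Even granting cost feasibility, Hoeffding on the potential side gives fluctuations of order $\diameter(L,T)/\sqrt{m}$ per round, hence a total gap of order $T\,\diameter(L,T)/\sqrt{m}$---quadratically worse than what \eqref{multi:mlarge:eqn} requires. And since cost feasibility and potential non-decrease must hold on the \emph{same} realization, marginal expectations alone do not produce a common witness.

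The paper avoids randomization entirely. It first shows each $\vp_i^*$ can be taken with support of size at most two, say $\{l_i^+,l_i^-\}$ with $C_t(i,l_i^+)\leq C_t(i,l_i^-)$, reducing the adversary's choice to a subset $S_+\subseteq\{1,\ldots,m\}$ of examples receiving $l_i^+$. Writing $a_i=C_t(i,l_i^-)-C_t(i,l_i^+)\geq 0$ and $b_i$ for the corresponding potential difference, the two requirements become $\sum_{i\in S_+}a_i\geq\sum_i p_i^+a_i$ and $\sum_{i\in S_+}b_i\leq\sum_i p_i^+b_i+\max_i|b_i|$. A deterministic balancing lemma---sort by $(a_i-b_i)/a_i$ and take a greedy prefix---produces such an $S_+$; the slack is a single extra term $\max_i|b_i|\leq\diameter(L,T)$, which after averaging over $m$ examples gives the per-round drop $\diameter(L,T)/m$ rather than $1/\sqrt{m}$. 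This support-two reduction together with the balancing lemma is the missing ingredient in your outline.
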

}
In order to implement the near optimal OS strategy, we need to solve
\eqref{multi:os_ct:eqn}. 
This is computationally only as hard as
evaluating the potentials, which in turn reduces to computing the
recurrences in \eqref{multi:dgrec:eqn}. 
In the next few sections, we study how to do this when
using various losses and  weak learning conditions.

\section{Solving for any fixed edge-over-random condition}

In this section we show how to implement the OS strategy when the weak
learning condition is any fixed edge-over-random condition:
$(\Csp,\B)$ for some $\B\in\Bgam$.
By our previous discussions, this is equivalent to
computing the potential $\phi^{\vb}_t$ by solving the recurrence in
\eqref{multi:dgrec:eqn}, where the vector $\vb$ corresponds to some
row of the baseline $\B$.
Let $\dgam\subseteq
\Delta\set{1,\ldots,k}$ denote the set of all edge-over-random
distributions on 
$\set{1,\ldots,k}$ with $\gamma$ more weight on the first coordinate:
\begin{equation}
  \label{multi:dgam:eqn}
  \dgam = \set{\vb\in \Delta\set{1,\ldots,k}: b(1) - \gamma
    = \max\set{b(2), \ldots, b(k)}}.
\end{equation}
 Note, that $\Bgam$ consists of all matrices whose rows belong to the
 set $\dgam$.
 Therefore we are interested in computing $\phi^{\vb}$, where $\vb$ is
 an arbitrary edge-over-random distribution: $\vb\in\dgam$.
 We begin by simplifying the recurrence \eqref{multi:dgrec:eqn}
 satisfied by such potentials, and
 show how to compute the optimal cost matrix in terms of the
 potentials.
\begin{lemma}
  \label{multi:homogsol:lem}
  Assume $L$ is proper, and $\vb\in\dgam$ is an edge-over-random
  distribution. 
  Then the recurrence \eqref{multi:dgrec:eqn} may be simplified as
  \begin{eqnarray}
    \label{multi:fixed_dgsimp:eqn}
    \phi_t^{\vb}(\vs) 
    &=&
    \E_{l\sim \vb}\enco{\phi_{t-1}\enc{\vs + \ve_l}}.
  \end{eqnarray}
  Further, if the cost matrix $\C_t$ is chosen as follows
  \begin{equation}
    \label{multi:fixed_ctopt:eqn}
    C_t(i,l) = \phi^{\vb}_{T-t-1}(\vs_{t}(i)+\ve_l),    
  \end{equation}
  then $\C_t$ satisfies the condition in \eqref{multi:os_ct:eqn}, and hence
  is the optimal choice.
\end{lemma}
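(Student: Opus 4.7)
The plan is to prove both parts simultaneously by induction on $t$, using the dual form of the recurrence (the identity stated in the ignored lemma, that \eqref{multi:dgrec:eqn} is equivalent to $\phi_t^{\vb}(\vs) = \min_{\vc\in\Crow}\max_l\{\phi_{t-1}^{\vb}(\vs+\ve_l) - (c(l)-\dotp{\vc}{\vb})\}$). The key invariant I will maintain is that $\phi_t^{\vb}$ is itself proper in $\vs$, so that certain cost vectors built out of $\phi_{t-1}^{\vb}$ automatically lie in $\Crow=\csig$.

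First I would establish the lower bound on $\phi_t^{\vb}(\vs)$. Since $\vb\in\dgam$ is itself a distribution on $\{1,\ldots,k\}$, the choice $\vp=\vb$ always satisfies the feasibility constraint $\E_{l\sim\vp}[c(l)]\leq\dotp{\vb}{\vc}$ (with equality), regardless of $\vc$. Hence for every $\vc\in\Crow$ the inner max is at least $\E_{l\sim\vb}[\phi_{t-1}^{\vb}(\vs+\ve_l)]$, and so is the min.

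For the matching upper bound, I would exhibit a cost vector $\vc^\star\in\Crow$ for which the dual expression collapses. Define
\[
c^\star(l) = \phi_{t-1}^{\vb}(\vs+\ve_l).
\]
The inductive hypothesis that $\phi_{t-1}^{\vb}$ is proper means $c^\star(1)\leq c^\star(l)$ for every $l$, so $\vc^\star\in\csig=\Crow$. With this choice, the bracketed quantity in the dual form becomes $\phi_{t-1}^{\vb}(\vs+\ve_l)-c^\star(l)+\dotp{\vc^\star}{\vb} = \dotp{\vc^\star}{\vb}$, which is independent of $l$. Therefore the max over $l$ equals $\dotp{\vc^\star}{\vb}=\sum_l b(l)\phi_{t-1}^{\vb}(\vs+\ve_l)=\E_{l\sim\vb}[\phi_{t-1}^{\vb}(\vs+\ve_l)]$, matching the lower bound and proving \eqref{multi:fixed_dgsimp:eqn}.

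To close the induction I would check that $\phi_t^{\vb}$ remains proper: since $\vb(1)>\vb(l)$ for $l\neq 1$ (edge-over-random), and since by induction $\phi_{t-1}^{\vb}$ is proper, a direct computation on $\E_{l\sim\vb}[\phi_{t-1}^{\vb}(\vs+\ve_l)]$ shows that increasing $s(1)$ can only increase this expectation while increasing any $s(l)$ with $l\neq 1$ can only decrease it. The base case $\phi_0^{\vb}=L$ is proper by assumption. Finally, the second claim of the lemma is immediate: the choice $\vc^\star$ constructed above is precisely the per-row formula \eqref{multi:fixed_ctopt:eqn} with $\vb=\B(i)$ and $t$ replaced by $T-t-1$ and $\vs$ replaced by $\vs_t(i)$; since $\vc^\star$ attains the min in the dual form, \eqref{multi:os_ct:eqn} is satisfied row by row.

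The main obstacle is verifying that the candidate $\vc^\star$ actually lies in $\Crow$ — everything else is essentially bookkeeping. This is precisely where properness of the loss and the edge-over-random structure of $\vb$ interact; without properness, $c^\star(1)\leq c^\star(l)$ can fail and the construction breaks, and without $\vb$ being a bona fide distribution the feasibility argument in the lower bound fails. The induction is what keeps these two ingredients aligned across rounds.
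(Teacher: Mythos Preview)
Your argument is correct and takes a more direct route than the paper. The paper proves \eqref{multi:fixed_dgsimp:eqn} by applying the minimax theorem twice to rewrite the recurrence as $\max_{\vp}\min_{\vc}\{\E_{l\sim\vp}[\phi_{t-1}^{\vb}(\vs+\ve_l)]+\langle\vb-\vp,\vc\rangle\}$, then arguing that the inner minimum over $\vc\in\csig$ forces $p(1)\ge b(1)$ and $p(l)\le b(l)$ for $l\neq 1$, and finally invoking properness of $\phi_{t-1}^{\vb}$ (citing an external lemma from \citet{Schapire01}) to conclude the optimal $\vp$ equals $\vb$. You instead sandwich the value directly: feasibility of $\vp=\vb$ gives the lower bound from the primal form \eqref{multi:dgrec:eqn}, and the explicit choice $c^\star(l)=\phi_{t-1}^{\vb}(\vs+\ve_l)$ gives the matching upper bound from the dual form \eqref{multi:dgrec_dual:eqn}. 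This is cleaner and exhibits the optimal cost row simultaneously, so the second claim falls out immediately; the paper instead verifies \eqref{multi:os_ct:eqn} by a separate computation that is essentially identical to your upper-bound step. Two small remarks: the monotonicity directions you state in the properness-preservation step are backwards relative to what you actually use (for $\vc^\star\in\csig$ you need $\phi_{t-1}^{\vb}$ \emph{decreasing} in $s(1)$ and \emph{increasing} in $s(l)$, $l\neq 1$; the paper's verbal definition of ``proper'' has the same slip, but the concrete losses $\Lzero,\Lexpe$ have the correct monotonicity); and the edge-over-random inequality $b(1)>b(l)$ is unnecessary for that step, since nonnegativity of $\vb$ already makes $\phi_t^{\vb}$ a nonnegative combination of shifts of $\phi_{t-1}^{\vb}$.
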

\begin{proof}
Let $\csig \subseteq \R^k$ denote all vectors $\vc$ satisfying
$\forall l:c(1)\leq c(l)$. Then, we have
  \begin{eqnarray*}
    \phi^{\vb}_t (\vs) &=&
    \begin{array}{ccc}
      \displaystyle \min_{\vc\in\csig} &
      \displaystyle \max_{\vp\in\Delta{\set{1,\ldots,k}}} &
      \E_{l\sim \vp}\enco{\phi_{t-1}\enc{\vs + \ve_l}} \\
      &  \mbox { s.t. } & \E_{l\sim\vp}[c(l)] \leq \E_{l\sim \vb}\enco{c(l)},
    \end{array} \enc{ \mbox { by \eqref{multi:dgrec:eqn} } }\\
    &=&  \displaystyle \min_{\vc\in\csig}
    \displaystyle \max_{\vp\in\Delta}
    \displaystyle \min_{\lambda\geq 0}
    \enct{
    \E_{l\sim \vp}\enco{\phi_{t-1}^{\vb}\enc{\vs + \ve_l}} +
    \lambda \enc{\E_{l\sim \vb}\enco{c(l)}-\E_{l\sim\vp}[c(l)]}}
  \enc{\mbox{Lagrangean}}\\
  &=&  \displaystyle \min_{\vc\in\csig}
  \displaystyle \min_{\lambda\geq 0}
    \displaystyle \max_{\vp\in\Delta}
    \E_{l\sim \vp}\enco{\phi_{t-1}^{\vb}\enc{\vs + \ve_l}} +
    \lambda \dotp{\vb-\vp}{\vc}
    \enc{\mbox{Theorem~\ref{multi:minmax:thm}}}\\
  &=&  \displaystyle \min_{\vc\in\csig}
  \displaystyle \max_{\vp\in\Delta}
   \E_{l\sim \vp}\enco{\phi_{t-1}^{\vb}\enc{\vs + \ve_l}} +
    \dotp{\vb-\vp}{\vc} \enc{\mbox{absorb }\lambda\mbox{ into } \vc }\\
  &=& \displaystyle \max_{\vp\in\Delta}
  \displaystyle \min_{\vc\in\csig}
  \E_{l\sim \vp}\enco{\phi_{t-1}^{\vb}\enc{\vs + \ve_l}} +
  \dotp{\vb-\vp}{\vc} \enc{\mbox{Theorem~\ref{multi:minmax:thm}}}.
\end{eqnarray*}
Unless $b(1) - p(1) \leq 0$ and $b(l) - p(l) \geq 0$ for each $l>1$,
the quantity $\dotp{\vb-\vp}{\vc}$ can be made arbitrarily small for
appropriate choices of $\vc\in\csig$. The max-player is therefore
forced to constrain its choices of $\vp$, and the above expression
becomes 
\[
\begin{array}{cl}
  \displaystyle \max_{\vp\in\Delta}  &
\E_{l\sim \vp}\enco{\phi_{t-1}^{\vb}\enc{\vs + \ve_l}} \\
{\rm s.t.} &
b(l) - q(l) 
\begin{cases}
  \geq 0 & \mbox{ if } l=1, \\
  \leq 0 & \mbox{ if } l>1.
\end{cases}
\end{array}
\]
Lemma~6 of \citep{Schapire01} states that if $L$ is \emph{proper} (as
defined here), so is
$\phi_t^{\vb}$; the same result can be extended to our drifting
games. This implies the optimal choice of $\vp$ in the above
expression is in fact the distribution that puts as small weight as
possible in the first coordinate, namely $\vb$.
Therefore the optimum choice of $\vp$ is $\vb$, and
the potential is the same as in \eqref{multi:fixed_dgsimp:eqn}.

\old{
We end the proof by showing that the choice of cost matrix in
\eqref{multi:fixed_ctopt:eqn} is optimum.
Theorem~\ref{multi:dgstrat:thm} states that a cost matrix $\C_t$ is
the optimum choice if it satisfies \eqref{multi:os_ct:eqn}, that is, if
the expression
\begin{equation}
  \label{multi:ctopt_one:eqn}  
  \max_{l=1}^k
  \enct{\phi^{\B(i)}_{T-t-1}\enc{\vs+\ve_l}
    - \enc{C_t(i,l) - \dotp{\C_t(i)}{\B(i)}}}
\end{equation}
is equal to
\begin{equation}
  \label{multi:ctopt_two:eqn}    
  \min_{\vc\in\Crow}
  \max_{l=1}^k
  \enct{\phi^{\B(i)}_{T-t-1}\enc{\vs+\ve_l}
    - \enc{c(l) - \dotp{\vc}{\B(i)}}}
  =
  \phi^{\B(i)}_{T-t}\enc{\vs},
\end{equation}
where the equality in \eqref{multi:ctopt_two:eqn} follows from
\eqref{multi:dgrec_dual:eqn}. 
If $\C_t(i)$ is chosen as in \eqref{multi:fixed_ctopt:eqn}, then, for
any label $l$, the expression within $\max$ in
\eqref{multi:ctopt_one:eqn} evaluates to 
\begin{eqnarray*}
  \phi^{\B(i)}_{T-t-1}\enc{\vs+\ve_l}
  &-& \enc{\phi^{\B(i)}_{T-t-1}\enc{\vs+\ve_l} -
      \dotp{\C_t(i)}{\B(i)}}\\
&=&
\dotp{\B(i)}{\C_t(i)}\\
&=&
\E_{l\sim\B(i)}\enco{C_t(i,l)}\\
&=&
\E_{l\sim\B(i)}\enco{\phi^{\B(i)}_{T-t-1}\enc{\vs+\ve_l}}\\
&=&
\phi^{\B(i)}_{T-t}(\vs),
\end{eqnarray*}
where the last equality follows from \eqref{multi:fixed_dgsimp:eqn}.
Therefore the $\max$ expression in \eqref{multi:ctopt_one:eqn} is also
equal to $\phi^{\B(i)}_{T-t}(\vs)$, which is what we needed to show.
}
\end{proof}
Eq. \eqref{multi:fixed_ctopt:eqn} in Lemma~\ref{multi:homogsol:lem}
implies the cost matrix chosen by the 
OS strategy can be expressed in terms of the potentials,
which is the only thing left to calculate.
Fortunately, the simplification \eqref{multi:fixed_dgsimp:eqn} of the
drifting games recurrence, allows the potentials to be
solved completely in terms of a random-walk 
$\rw^t_\vb(\vx)$.
This random variable denotes the
position of a particle after 
$t$ time steps, that starts at location $\vx\in \R^k$, and in each
step moves in direction $\ve_l$ with probability $b(l)$.
\begin{corollary}
  \label{multi:fixed_rw:cor}
  The recurrence in \eqref{multi:fixed_dgsimp:eqn} can be solved as
  follows: 
  \begin{equation}
    \label{multi:fixed_rw:eqn}
  \phi^{\vb}_t(\vs) = \E \enco{L\enc{\rw^t_\vb(\vs)}}.
  \end{equation}
\end{corollary}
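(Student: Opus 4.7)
The plan is to prove the corollary by straightforward induction on $t$, using the simplified recurrence \eqref{multi:fixed_dgsimp:eqn} established in Lemma~\ref{multi:homogsol:lem} together with the one-step decomposition of the random walk $\rw^t_\vb$. No additional optimization or minimax argument is needed at this stage, since Lemma~\ref{multi:homogsol:lem} has already eliminated the $\min$-$\max$ from the recurrence; the remaining task is purely to recognize the resulting expectation as the expectation of $L$ evaluated at a random walk.

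For the base case $t=0$, the definition gives $\phi^{\vb}_0 = L$, and the walk $\rw^0_\vb(\vs)$ takes zero steps and therefore equals $\vs$ deterministically, so $\E[L(\rw^0_\vb(\vs))] = L(\vs)$. For the inductive step, assume \eqref{multi:fixed_rw:eqn} holds at time $t-1$ for every starting position. Applying \eqref{multi:fixed_dgsimp:eqn} and then the inductive hypothesis to the shifted starting point $\vs+\ve_l$ gives
\[
\phi^{\vb}_t(\vs) \;=\; \E_{l\sim \vb}\!\left[\phi^{\vb}_{t-1}(\vs+\ve_l)\right]
\;=\; \E_{l\sim\vb}\!\left[\E\!\left[L\!\left(\rw^{t-1}_\vb(\vs+\ve_l)\right)\right]\right].
\]
The final ingredient is the Markov/one-step decomposition of the walk: a $t$-step walk started at $\vs$ is distributionally identical to first taking one step to $\vs+\ve_l$ with $l\sim\vb$, and then running an independent $(t-1)$-step walk from $\vs+\ve_l$. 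Hence the nested expectation above equals $\E[L(\rw^t_\vb(\vs))]$, completing the induction.

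There is essentially no obstacle here; the corollary is a direct unrolling of the simplified recurrence in Lemma~\ref{multi:homogsol:lem}. The only mild care required is in the one-step decomposition, but this is immediate from the definition of $\rw^t_\vb$ as a sum of $t$ i.i.d.\ increments distributed as $\ve_l$ with $l\sim\vb$, so that $\rw^t_\vb(\vs) = \ve_{l_1} + \rw^{t-1}_\vb(\vs + \vzero)$ conditional on the first step, which after shifting the base point gives the needed identity in distribution.
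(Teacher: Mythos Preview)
Your proposal is correct and follows essentially the same approach as the paper: induction on $t$, applying the simplified recurrence \eqref{multi:fixed_dgsimp:eqn} and the one-step decomposition of the random walk to identify the nested expectation with $\E[L(\rw^t_\vb(\vs))]$. The paper's proof is slightly terser (it omits the explicit base case and writes the inductive step as $\rw^{t-1}_\vb(\vs)+\ve_l$ being distributed as $\rw^t_\vb(\vs)$), but the argument is the same.
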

\begin{proof}
Inductively assuming $\phi_{t-1}^{\vb}(\vx) =
\E\enco{L(\rw^{t-1}_\vb(\vx))}$,
\begin{eqnarray*}
  \phi_t(\vs) = \E_{l\sim \vb}\enco{L(\rw^{t-1}_\vb(\vs) + \ve_l)}
  =  \E\enco{L(\rw^t_\vb(\vs))}.
\end{eqnarray*}
The last equality follows by observing that the random position
$\rw^{t-1}_\vb(\vs) + \ve_l$ is distributed as $\rw^t_\vb(\vs)$ when
$l$ is sampled from $\vb$.
\end{proof}
Lemma~\ref{multi:homogsol:lem} and Corollary~\ref{multi:fixed_rw:cor}
together imply:
\begin{theorem}
  \label{multi:fixedsol:thm}
  Assume $L$ is proper and $\vb\in\dgam$ is an edge-over-random
  distribution. Then the potential $\phi_t^{\vb}$, defined by the
  recurrence in \eqref{multi:dgrec:eqn}, has the solution given in
  \eqref{multi:fixed_rw:eqn} in terms of random walks.
\end{theorem}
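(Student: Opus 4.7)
The plan is to obtain the theorem almost immediately by chaining the two preceding results, so most of the real work is already done and only needs to be assembled. First, I would invoke Lemma~\ref{multi:homogsol:lem}, which tells us that whenever $L$ is proper and $\vb\in\dgam$, the general minimax recurrence \eqref{multi:dgrec:eqn} collapses to the simple averaging recurrence
\[
\phi_t^{\vb}(\vs) = \E_{l\sim \vb}\enco{\phi_{t-1}^{\vb}\enc{\vs + \ve_l}},
\]
with the base case $\phi_0^{\vb} = L$. This is the key step conceptually, but its hard work has already been carried out: it rests on the minimax theorem (Theorem~\ref{multi:minmax:thm}), absorption of the Lagrange multiplier into the cost vector (using that $\Crow$ is a cone), and the fact that properness of $L$ propagates to properness of $\phi_t^{\vb}$, which forces the worst-case distribution $\vp$ to coincide with $\vb$ itself.

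Once the recurrence has this pleasant averaging form, I would invoke Corollary~\ref{multi:fixed_rw:cor}, which solves it explicitly in terms of the random walk $\rw_\vb^t$. The corollary's short inductive argument simply observes that if $\phi_{t-1}^{\vb}(\vx) = \E\enco{L(\rw^{t-1}_\vb(\vx))}$, then drawing a step $\ve_l$ with $l\sim\vb$ and prepending it to the walk again yields $\rw^t_\vb(\vs)$ in distribution, closing the induction. Combining the two results gives exactly \eqref{multi:fixed_rw:eqn}.

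Because both ingredients are already established, the only thing the proof of Theorem~\ref{multi:fixedsol:thm} itself needs to do is state the chain of implications; no further obstacle arises. If I were to worry about anything, it would be making sure the hypotheses line up: that the edge-over-random vector $\vb$ lies in $\dgam$ (so that Lemma~\ref{multi:homogsol:lem} applies) and that $L$ is proper (so the simplification step is justified). Both are assumed in the statement, so the proof is essentially a one-line composition of Lemma~\ref{multi:homogsol:lem} and Corollary~\ref{multi:fixed_rw:cor}.
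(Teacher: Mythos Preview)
Your proposal is correct and matches the paper's own approach exactly: the paper states the theorem immediately after noting that ``Lemma~\ref{multi:homogsol:lem} and Corollary~\ref{multi:fixed_rw:cor} together imply'' it, with no further argument. Your one-line composition of these two results is precisely what is intended.
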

Before we can compute \eqref{multi:fixed_rw:eqn}, we need to choose a loss 
function $L$. We next consider two options for the loss
 --- the non-convex 0-1 error, and exponential loss.

\paragraph{Exponential Loss.}
The exponential loss serves as a smooth convex proxy for
discontinuous non-convex 0-1 error \eqref{multi:zero_one:eqn} that we
would ultimately like to bound, and is given by
\begin{equation}
  \label{multi:exploss:eqn}
  \Lexpe(\vs) = \sum_{l=2}^ke^{\eta(s_l-s_1)}.
\end{equation}
The parameter $\eta$ can be thought of as the weight in each round, that is,
$\alpha_t = \eta$ in each round. Then notice that the weighted state $\f_t$
of the examples, defined in \eqref{multi:var_state:eqn}, is related to
the unweighted states $\vs_t$ as $f_t(l) = \eta s_t(l)$. Therefore the
exponential loss function in \eqref{multi:exploss:eqn} directly
measures the loss of the weighted state as
\begin{equation}
  \label{multi:exploss_varstate:eqn}
  \Lexp(\f_t) = \sum_{l=2}^k e^{f_t(l) - f_t(1)}.
\end{equation}
Because of this correspondence, the optimal strategy with the loss
function $\Lexp$ and $\alpha_t=\eta$ is the same as that using loss
$\Lexpe$ and $\alpha_t=1$. We study the latter setting so that we may
use the results derived earlier. With the choice of the exponential
loss $\Lexpe$,  the potentials are easily
computed, and in fact have a 
closed form solution.
\begin{theorem}
  \label{multi:expdgsol:thm}
  If $\Lexpe$ is as in \eqref{multi:exploss:eqn}, where $\eta$ is
  non-negative, then the 
  solution in Theorem~\ref{multi:fixedsol:thm} evaluates to
  $\phi^{\vb}_t(\vs) = \sum_{l=2}^k (a_l)^te^{\eta_l\enc{s_l-s_1}}$,
  where $a_l = 1 - (b_1 + b_l) + e^{\eta}b_l + e^{-\eta}b_1$.
\end{theorem}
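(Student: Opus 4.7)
The plan is to verify the closed form by straightforward induction on $t$, leveraging the simplified recurrence \eqref{multi:fixed_dgsimp:eqn} from Lemma~\ref{multi:homogsol:lem}. (One could equally well plug into the random-walk solution \eqref{multi:fixed_rw:eqn} from Theorem~\ref{multi:fixedsol:thm} and compute an MGF, but induction avoids introducing extra notation.)

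For the base case $t=0$, both the claimed formula and $\phi_0^{\vb}=\Lexpe$ evaluate to $\sum_{l=2}^k e^{\eta(s_l - s_1)}$. For the inductive step, I would substitute the inductive hypothesis into \eqref{multi:fixed_dgsimp:eqn}:
\[
\phi_t^{\vb}(\vs) \;=\; \sum_{j=1}^k b_j \sum_{l=2}^k (a_l)^{t-1}\, e^{\eta\bigl((\vs+\ve_j)_l-(\vs+\ve_j)_1\bigr)}.
\]
Then, swap the order of summation and look, for each fixed $l\in\{2,\dots,k\}$, at the coefficient of $(a_l)^{t-1} e^{\eta(s_l-s_1)}$. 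The contribution from the inner sum depends on whether the random step lands on coordinate $1$, on coordinate $l$, or elsewhere: choosing $j=1$ produces a factor $e^{-\eta}$ with weight $b_1$; choosing $j=l$ produces $e^{\eta}$ with weight $b_l$; and any other $j$ produces $1$, with total weight $1-b_1-b_l$. Summing gives precisely the factor
\[
b_1 e^{-\eta} + b_l e^{\eta} + (1-b_1-b_l) \;=\; a_l,
\]
so each term advances by exactly one power of $a_l$, yielding $\phi_t^{\vb}(\vs) = \sum_{l=2}^k (a_l)^t e^{\eta(s_l-s_1)}$ as desired.

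There is essentially no obstacle here — the calculation is routine once one recognizes that the exponential loss decouples across the $k-1$ coordinate pairs $(1,l)$, so the recurrence acts diagonally on the basis functions $e^{\eta(s_l-s_1)}$ with eigenvalue $a_l$. The only thing one must be careful about is making sure Lemma~\ref{multi:homogsol:lem} actually applies, which requires that $\Lexpe$ be proper: this is immediate since $\Lexpe$ is strictly decreasing in $s_1$ and strictly increasing in each $s_l$ for $l>1$, and that $\vb\in\dgam$, which is assumed by the edge-over-random setting. Non-negativity of $\eta$ ensures $a_l\geq 0$, so that the formula retains its intended interpretation as a non-negative potential.
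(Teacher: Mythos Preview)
Your proof is correct and follows exactly the approach the paper indicates: the paper simply states ``The proof by induction is straightforward,'' and your induction on $t$ via the recurrence \eqref{multi:fixed_dgsimp:eqn} is precisely that straightforward argument carried out in detail.
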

The proof by induction is straightforward.
 By tuning the weight $\eta$,
each $a_l$ can be always made less than $1$. 
This ensures the exponential loss decays exponentially with rounds.  
In particular, when $\B=\Ugam$ (so that the 
condition is $(\Csig,\Ugam)$), the
relevant potential $\phi_t(\vs)$ or $\phi_t(\f)$ is given by
\begin{equation}
  \label{multi:unifpot:eqn}
  \phi_t(\vs) = \phi_t(f) =
  \kappa(\gamma,\eta)^t\sum_{l=2}^ke^{\eta\enc{s_l-s_1}}
  =   \kappa(\gamma,\eta)^t\sum_{l=2}^ke^{f_l-f_1}
\end{equation}
where
\begin{equation}
  \label{multi:kappa:eqn}
\kappa(\gamma,\eta) = 1 +
\frac{\enc{1-\gamma}}{k}\enc{e^\eta + e^{-\eta}-2} -
\enc{1-e^{-\eta}}\gamma.
\end{equation}
 \ignore{
\begin{equation}
  \label{multi:unifsol:eqn}
C(i,l) =
\begin{cases}
  \kappa(\gamma,\eta)^{T-t}\enct{\sum_{j=2}^k e^{\eta(s_j-s_1)} +
    \enc{e^\eta-1}e^{\eta(s_l-s_1)}}& \mbox { if } l>1, \\
  \kappa(\gamma,\eta)^{T-t}\enct{\sum_{j=2}^k e^{\eta(s_j-s_1)}\cdot
    e^{-\eta}} & \mbox { if } l=1.
\end{cases}
\end{equation}}
The cost-matrix output by the OS algorithm can be simplified by
rescaling, or adding the same number to each coordinate
of a cost vector, without affecting the constraints it imposes on a
weak classifier, to the following form
\[
C(i,l) =
\begin{cases}
  \enc{e^\eta-1}e^{\eta(s_l-s_1)}& \mbox { if } l>1, \\
  \enc{e^{-\eta}-1}\sum_{l=2}^k e^{\eta(s_l-s_1)} & \mbox { if } l=1.
\end{cases}
\]
Using the correspondence between unweighted and weighted states, the
above may also be rewritten as:
\begin{equation}
  \label{multi:unifsol2:eqn}
C(i,l) =
\begin{cases}
  \enc{e^\eta-1}e^{f_l-f_1}& \mbox { if } l>1, \\
  \enc{e^{-\eta}-1}\sum_{l=2}^k e^{f_l-f_1} & \mbox { if } l=1.
\end{cases}
\end{equation}
With such a choice, Theorem~\ref{multi:dgstrat:thm} and the form of the
potential guarantee that the average loss
\begin{equation}
  \label{multi:avgexploss:eqn}
\frac{1}{m}\sum_{i=1}^m\Lexpe(\vs_t(i))
=
\frac{1}{m}\sum_{i=1}^m\Lexp(\f_t(i))
\end{equation}
of the states changes by a
factor of at most $\kappa\enc{\gamma,\eta}$ every round. Therefore the
final loss, which upper bounds the error, i.e., the fraction of
misclassified training examples, 
is at most $(k-1)\kappa\enc{\gamma,\eta}^T$.
Since this upper bound holds for any value of $\eta$, we may tune it to
optimize the bound.
Setting $\eta=\ln\enc{1+\gamma}$, the error can be upper bounded by
$(k-1)e^{-T\gamma^2/2}$.

\ignore{When the loss function is $L(\vx) = \exp\enc{\eta\enc{-x_1 +
      x_2 + \ldots + x_k}}$, we recover the Multiclass AdaBoost
  algorithm. However, since this loss does not upper-bound the
  training error, and can go to zero even when training error remains
  high, we will not discuss this due to lack of space.}

\paragraph{Zero-one Loss.} There is no simple closed form solution for
the potential when using the zero-one loss $\Lzero$
\eqref{multi:zero_one:eqn}.
However, we may compute the 
potentials efficiently as follows. To compute $\phi_t^{\vb}(\vs)$, we
need to find the probability that a random walk (making steps
according to $\vb$) of length $t$ in $\Z^k$,
starting at $\vs$ will end up in a region where the loss function is
$1$. Any such random walk will consist of $x_l$ steps in direction
$\ve_l$ where the non-negative $\sum_l x_l = t$.
The probability of each such path is
$\prod_{l} b_l^{x_l}$. Further, there are exactly
$\binom{t}{x_1,\ldots,x_{k}}$ such paths. Starting at state $\vs$,
such a path will lead to a correct answer only if $s_1 + x_1 > s_l +
x_l$ for each $l>1$. Hence we may write the potential $\phi_t^{\vb}(\vs)$ as
\begin{eqnarray*}
  \phi_t^{\vb}(\vs) = 1 - \sum_{x_1,\ldots,x_k}^t&
  \binom{t}{x_1,\ldots,x_{k}}\prod_{l=1}^kb_l^{x_l} & \\
  \text { s.t. } & x_1 + \ldots + x_{k} &= t \\
  \forall l:& x_l &\geq 0\\
  \forall l:& x_l+s_l &\leq x_1 + s_1.
\end{eqnarray*}
Since the $x_l$'s are restricted to be integers, this problem is
presumably hard. In particular, the only algorithms known to the
authors that take time logarithmic in $t$ is also exponential in
$k$. However, by using dynamic programming, we can compute the
summation in time polynomial in $|s_l|$, $t$ and $k$. In fact, the
runtime is always $O(t^3k)$, and at least $\Omega(tk)$.

The bounds on error we achieve, although not in closed form, are much
tighter than those obtainable using 
exponential loss. The exponential loss analysis yields an error upper
bound of $(k-1)e^{-T\gamma^2/2}$. Using a different initial
distribution, \citet{SchapireSi99} achieve the slightly better bound
$\sqrt{(k-1)}e^{-T\gamma^2/2}$. However, when the edge $\gamma$ is small and
the number of 
rounds are few, each bound is greater than 1 and hence trivial. On the
other hand, the bounds computed by the above dynamic program are sensible
for all values of $k$, $\gamma$ and $T$. When $\vb$ is the
$\gamma$-biased uniform distribution
 $\vb=(\frac{1-\gamma}{k} + \gamma,
 \frac{1-\gamma}{k},
 \frac{1-\gamma}{k},
 \ldots,
 \frac{1-\gamma}{k})$
 a table containing the error upper bound
$\phi_T^{\vb}(0)$ for $k=6$, $\gamma=0$ and small values for the
number of rounds $T$ is shown in
Figure~\ref{multi:phitab:fig}; note that with the exponential loss,
the bound is always 1 if the edge $\gamma$ is 0.
Further, the bounds due to the exponential
loss analyses seem to imply that the dependence of the error on the
number of labels is monotonic. However, a plot of the tighter bounds
with edge $\gamma=0.1$, number of rounds $T=10$ 
against various values of $k$, shown in Figure~\ref{multi:kcurve:fig},
indicates that the true dependence is more complicated. Therefore the
tighter analysis also provides qualitative insights not obtainable via
the exponential loss bound.
\begin{figure}[ht]
  \begin{center}
    \subfigure[]{
      \centering
    \begin{tabular}{|r|r||r|r|}
      \hline
      $T$ &  $\phi^{\vb}_T(\vzero)$ & $T$ &  $\phi^{\vb}_T(\vzero)$ \\
      \hline
      \hline
      0 & 1.00 & 6 & 0.90 \\
      1 & 0.83 & 7 & 0.91 \\
      2 & 0.97 & 8 & 0.90 \\
      3 & 0.93 & 9 & 0.89 \\
      4 & 0.89 & 10 & 0.89 \\
      5 & 0.89 & & \\
      \hline
    \end{tabular}
    \label{multi:phitab:fig}
  }\;\;\;\;
  \subfigure[]{
    \centering
    \begin{tabular}{c}
      \includegraphics[width=0.4\textwidth]{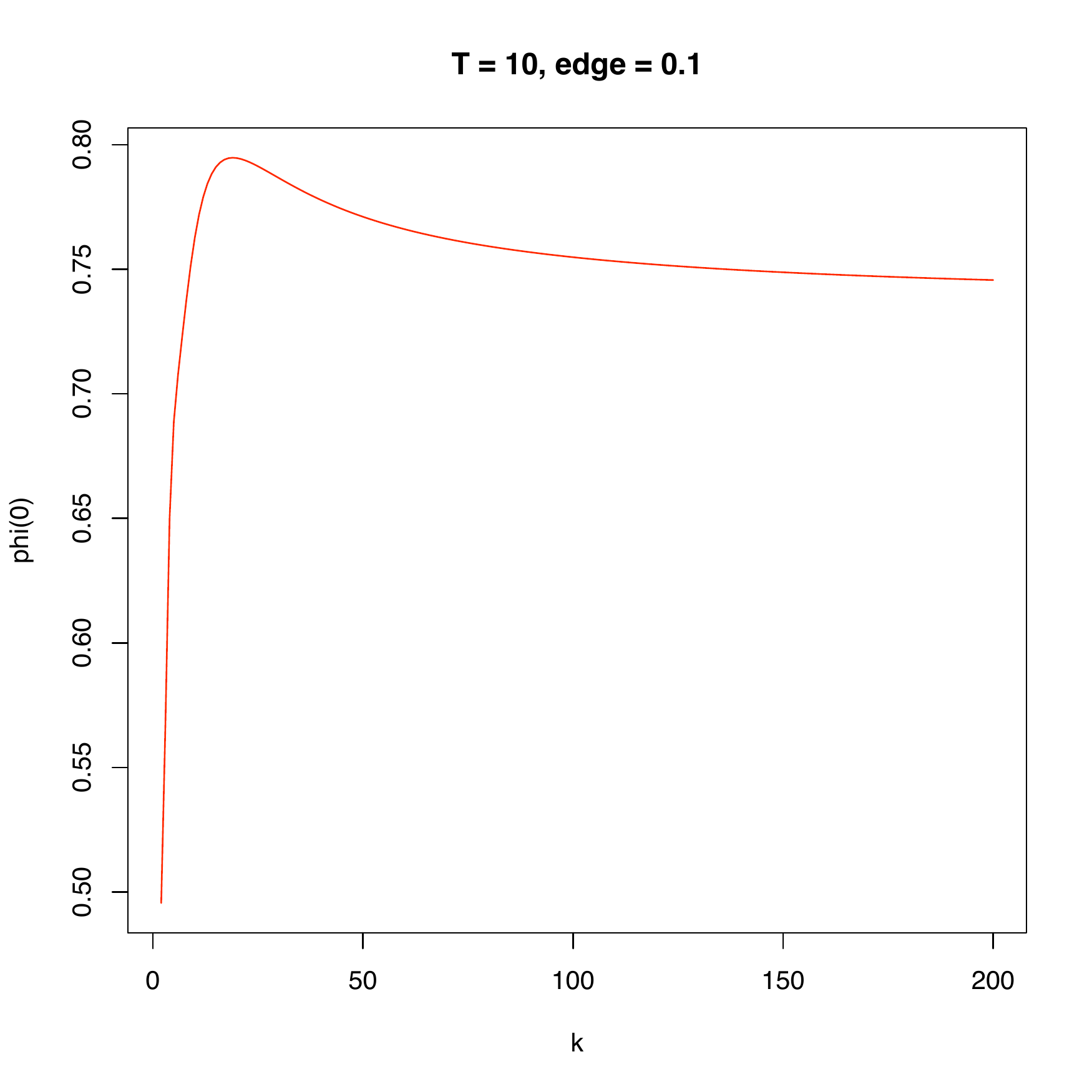}
    \end{tabular}
    \label{multi:kcurve:fig}
  }
\end{center}
  \caption[Potential plots when loss is 0-1 error.]{Plot of potential
    value $\phi_T^{\vb}(\vzero)$ where $\vb$ 
    is the $\gamma$-biased uniform distribution:
    $\vb=(\frac{1-\gamma}{k} + \gamma,
    \frac{1-\gamma}{k},
    \frac{1-\gamma}{k},
    \ldots,
    \frac{1-\gamma}{k})$.
    {\bf \subref{multi:phitab:fig}:} Potential values (rounded to two
    decimal places) for different number of rounds $T$ using
    $\gamma=0$ and  $k=6$. These are bounds on the error, and less
    than 1 even when the edge and number of rounds are small.
    {\bf \subref{multi:kcurve:fig}:} Potential values for different
    number of  classes $k$, with $\gamma=0.1$, and $T=10$. These are
    tight estimates for the optimal error, and yet not monotonic in
    the number of classes.
  }
  \label{multi:zo_pot:fig}  
\end{figure}
\section{Solving for the minimal weak learning condition}

In the previous section we saw how to find the optimal boosting
strategy when using any fixed edge-over-random condition. However
as we have seen before, these conditions can be stronger than
necessary, and therefore lead to boosting algorithms that require
additional assumptions. Here we show how to compute the optimal
algorithm while using the weakest weak learning condition, provided by
\eqref{multi:minwl:eqn}, or equivalently the condition used by AdaBoost.MR,
$(\Cmr,\MR_{\gamma})$. Since there are two possible formulations for
the minimal condition, it is not immediately clear which to use to
compute the optimal boosting strategy. To resolve this, we first show
that the optimal boosting strategy based on any formulation of a
necessary and sufficient weak learning condition is the same. Having
resolved this ambiguity, we show how to compute this strategy for the
exponential loss and 0-1 error using the first formulation.

\subsection{Game-theoretic equivalence of necessary and sufficient
  weak-learning conditions}

In this section we study the effect of the weak learning condition on
the game-theoretically optimal boosting strategy. We introduce the
notion of \emph{game-theoretic equivalence} between two weak learning
conditions, that determines if the payoffs \eqref{multi:payoff:eqn} of
the optimal boosting strategies based on the two conditions are
identical. This is different from the usual notion of equivalence
between two conditions, which holds if any weak classifier space
satisfies both conditions or neither condition. In fact we prove that
game-theoretic equivalence is a broader notion; in other words,
equivalence implies game-theoretic equivalence. A special case of
this general result is that any two weak
learning conditions that are necessary and sufficient, and hence
equivalent to boostability, are also game-theoretically equivalent.
In particular, so are the conditions of AdaBoost.MR and
\eqref{multi:minwl:eqn}, and the resulting optimal Booster strategies
enjoy equally good payoffs. We conclude that in order to derive
the optimal boosting strategy that uses the minimal weak-learning
condition, it is sound to use either of these two formulations.

The purpose of a weak learning condition $(\Csp,\B)$ is to impose
restrictions on 
the Weak-Learner's responses in each round. These restrictions are
captured by subsets of the weak classifier space as follows.
If Booster chooses
cost-matrix $\C\in\Csp$ in a round, the Weak-Learner's response
$h$ is restricted to the subset $S_{\C}\subseteq\Hall$ defined
as
\[
S_{\C} = \set{h\in\Hall: \C\bullet\vh \leq \C\bullet\B}.
\]
Thus, a weak
learning condition is essentially a family of subsets of the weak
classifier space. Further, smaller
subsets mean fewer options for Weak-Learner, and hence better payoffs
for the optimal boosting strategy. Based on this idea, we may define
when a weak learning condition  $(\Csp_1,\B_1)$ is \emph{game-theoretically
  stronger} than another condition $(\Csp_2,\B_2)$ if the following holds:
For every subset $S_{\C_2}$ in the second 
condition (that is $\C_2\in\Csp_2$), there is a subset $S_{\C_1}$ in
the  first condition (that is $\C_1\in\Csp_1$), such that $S_{\C_1}
\subseteq S_{\C_2}$. Mathematically, this may be written as follows:
\[
\forall \C_1 \in \Csp_1,
\exists \C_2 \in \Csp_2
:
S_{\C_1} \subseteq S_{\C_2}.
\]
Intuitively, a game theoretically stronger
condition will allow Booster to place similar or stricter
restrictions on Weak-Learner in each round. Therefore, the optimal
Booster payoff using a game-theoretically stronger condition is at least
equally good, if not better. 
It therefore follows that if two conditions are both
game-theoretically stronger than each other, the corresponding Booster
payoffs must be equal, that is they must be \emph{game-theoretically
  equivalent}. 

Note that game-theoretic equivalence of two conditions does not mean
that they are identical as families of subsets, for we may arbitrarily
add large and ``useless'' subsets to the two conditions without affecting
the Booster payoffs, since these subsets will never be used by an
optimal Booster strategy. In fact we next show that game-theoretic
equivalence is a broader notion than just equivalence.
\begin{theorem}
  \label{multi:eq_gteq:thm}
  Suppose $(\Csp_1,\B_1)$ and $(\Csp_2,\B_2)$ are two equivalent weak
  learning conditions, that is, every space $\H$ satisfies both or
  neither condition. Then each condition is game-theoretically
  stronger than the other, and hence game-theoretically equivalent.
\end{theorem}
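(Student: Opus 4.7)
The plan is to establish both directions of game-theoretic dominance by a single symmetric contradiction argument; I sketch only the direction ``$(\Csp_1,\B_1)$ is game-theoretically stronger than $(\Csp_2,\B_2)$.'' Suppose otherwise. Reading the definition as in the prose preceding the theorem, this failure means there is a ``hard'' matrix $\C_2^* \in \Csp_2$ such that no $\C_1 \in \Csp_1$ satisfies $S_{\C_1} \subseteq S_{\C_2^*}$. From this single offending $\C_2^*$ I will manufacture a weak classifier space $\H \subseteq \Hall$ that satisfies condition~1 but violates condition~2, contradicting the assumed equivalence.

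The construction is a transparent diagonalization: since $S_{\C_1} \not\subseteq S_{\C_2^*}$ for every $\C_1 \in \Csp_1$, each set $S_{\C_1} \setminus S_{\C_2^*}$ is nonempty, and I take $\H = \bigcup_{\C_1 \in \Csp_1} \bigl( S_{\C_1} \setminus S_{\C_2^*} \bigr)$, the collection of all witnesses to these failures. On the one hand, $\H$ satisfies $(\Csp_1,\B_1)$: given any $\C \in \Csp_1$, the nonempty set $S_{\C} \setminus S_{\C_2^*}$ lies inside $\H$, so $\H \cap S_{\C} \neq \emptyset$, producing the classifier the condition demands. On the other hand, $\H$ fails $(\Csp_2,\B_2)$ against the matrix $\C_2^*$ itself: every element of $\H$ was chosen to lie outside $S_{\C_2^*}$, so $\H \cap S_{\C_2^*} = \emptyset$ and no $h \in \H$ can satisfy $\C_2^* \bullet (\vh - \B_2) \leq 0$.

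This contradicts the hypothesis that exactly the same classifier spaces satisfy both conditions. The mirror argument, interchanging the roles of conditions~1 and~2 (pick a bad $\C_1^* \in \Csp_1$ and take a union of $S_{\C_2} \setminus S_{\C_1^*}$'s), yields the reverse dominance. Hence the two conditions are game-theoretically stronger than each other, i.e.\ game-theoretically equivalent.

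The proof is a one-line pigeonhole once the right $\H$ is written down, so I do not expect a serious technical obstacle. The only mildly delicate point is reading the definition of ``game-theoretically stronger'' correctly: the prose formulation ``for every subset $S_{\C_2}$ there is a subset $S_{\C_1}$ with $S_{\C_1} \subseteq S_{\C_2}$'' is the intended one, matching the intuition that a stronger condition must let Booster match or improve on every move available in the weaker game. Under this reading the diagonal witness set above settles the theorem immediately; no use of the minimax machinery or of any structural assumption on $\Csp_i$ (such as being a convex cone) is needed.
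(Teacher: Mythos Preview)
Your argument is correct and essentially identical to the paper's: both assume a ``hard'' cost matrix in one condition whose feasibility set is not refined by any set from the other, and then diagonalize to build a classifier space satisfying one condition but not the other. The only cosmetic difference is that the paper selects one witness $h_{\C}$ from each nonempty $S_{\C}\setminus S_{\C^*}$ rather than taking the full union as you do; either choice works for the same reason.
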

\begin{proof}
  We argue by contradiction. Assume that despite
equivalence, the first condition (without loss of generality) includes a
particularly hard subset $S_{\C_1}\subseteq 
\Hall, \C_1\in\Csp_1$
which is not smaller than any subset in the second condition. In
particular, for every  subset $S_{\C_2},\C_2\in\Csp_2$ in the second
condition is satisfied by some weak classifier $h_{\C_2}$ not
satisfying the hard subset in the first condition: $h_{\C_2}\in
S_{\C_2}\setminus S_{\C_1}$. Therefore, the space
\[
\H = \set{h_{\C_2}:\C_2\in\Csp_2},
\]
formed by just these classifiers 
satisfies the second condition, but has an empty intersection with
$S_{\C_1}$. In other words, $\H$ satisfies the second but not the
first condition, a contradiction to their equivalence.
\end{proof}
An immediate corollary is the game theoretic equivalence of necessary
and equivalent conditions.
\begin{corollary}
  \label{multi:necsuf_gteq:cor}
  Any two necessary and sufficient weak learning conditions are
  game-theoretically equivalent. In particular the optimum Booster
  strategies based on AdaBoost.MR's condition $(\Cmr,\MR_\gamma)$ and
  \eqref{multi:minwl:eqn} have equal payoffs.
\end{corollary}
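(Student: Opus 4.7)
The plan is to derive this corollary as a direct consequence of Theorem~\ref{multi:eq_gteq:thm} together with Theorem~\ref{multi:mrminwl:thm}. The key observation is that ``necessary and sufficient'' is being used in the sense of being equivalent to boostability, so any two such conditions must, as predicates on weak classifier spaces, be equivalent to each other.

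For the general statement, I would argue as follows. Suppose $(\Csp_1,\B_1)$ and $(\Csp_2,\B_2)$ are both necessary and sufficient weak learning conditions. By definition, a weak classifier space $\H$ satisfies $(\Csp_1,\B_1)$ if and only if $\H$ is boostable, and similarly $\H$ satisfies $(\Csp_2,\B_2)$ if and only if $\H$ is boostable. Chaining these two biconditionals shows that $\H$ satisfies $(\Csp_1,\B_1)$ if and only if $\H$ satisfies $(\Csp_2,\B_2)$, so the two conditions are equivalent in the sense required by Theorem~\ref{multi:eq_gteq:thm}. Invoking that theorem immediately yields game-theoretic equivalence.

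For the specific claim about AdaBoost.MR and the condition \eqref{multi:minwl:eqn}, I would cite Theorem~\ref{multi:mrminwl:thm}, which establishes that each of $(\Cmr,\MR_\gamma)$ and \eqref{multi:minwl:eqn} is equivalent to $\H$ being boostable. Applying the general argument above to this pair gives that their optimal Booster payoffs, as defined in \eqref{multi:payoff:eqn}, are equal.

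There is essentially no obstacle here beyond bookkeeping, since the real content has already been packaged into Theorems~\ref{multi:mrminwl:thm} and~\ref{multi:eq_gteq:thm}; the only subtle point worth flagging is that the statement ``necessary and sufficient'' must be interpreted uniformly (both conditions matching the predicate ``is boostable''), which is how the conditions in question are defined in Section~\ref{multi:necsuf:sec}. Once that is made explicit, the proof is a one-line chain: equivalent to boostability on both sides $\Rightarrow$ equivalent to each other $\Rightarrow$ game-theoretically equivalent.
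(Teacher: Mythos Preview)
Your proposal is correct and matches the paper's approach exactly: the paper presents this as an immediate corollary of Theorem~\ref{multi:eq_gteq:thm}, with no separate proof, and your argument spells out precisely the intended chain (both conditions equivalent to boostability by Theorem~\ref{multi:mrminwl:thm}, hence equivalent to each other, hence game-theoretically equivalent by Theorem~\ref{multi:eq_gteq:thm}). The only minor caveat you might mention is that condition~\eqref{multi:minwl:eqn} is not literally of the form $(\Csp,\B)$ with a single fixed baseline, but the notion of game-theoretic equivalence is phrased in the paper in terms of families of subsets $S_{\C}$, and the proof of Theorem~\ref{multi:eq_gteq:thm} goes through verbatim at that level of generality.
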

Therefore, in deriving the optimal Booster strategy, it is sound to
work with either AdaBoost.MR's condition or
\eqref{multi:minwl:eqn}. In the next section, we actually compute 
the optimal strategy using the latter formulation.

\subsection{Optimal strategy with the minimal conditions}

In this section we compute the optimal Booster strategy that uses the
minimum weak learning condition provided in 
\eqref{multi:minwl:eqn}. We choose this instead of AdaBoost.MR's
condition because this description is more closely related to the
edge-over-random conditions, and the resulting algorithm has a close
relationship to the ones derived for fixed edge-over-random
conditions, and therefore more insightful. However, this formulation
does not state the condition as a single pair $(\C,\B)$, and therefore
we cannot directly use the result of
Theorem~\ref{multi:dgstrat:thm}. Instead, we define new potentials and
a modified OS strategy that is still nearly optimal, and this
constitutes the first part of this section. 
In the second part, we
show how to compute these new potentials and the resulting OS
strategy. 
 
\subsubsection{Modified potentials and OS strategy}
The condition in \eqref{multi:minwl:eqn} is not stated as
a single pair $(\Csig,\B)$, but uses all possible edge-over-random
baselines $\B\in\Bgam$. Therefore, we modify the definitions
\eqref{multi:dgrec:eqn} of the potentials accordingly to
extract an optimal Booster strategy.
Recall that $\dgam$ is defined in \eqref{multi:dgam:eqn} as the set of
all edge-over-random distributions which constitute the rows of
edge-over-random baselines $\B\in\Bgam$.
Using these, define new potentials $\phi_t(\vs)$ as follows:
 \begin{equation}
   \label{multi:dgnecsuf:eqn}
   \phi_t (\vs) =
   \begin{array}{ccc}
     \displaystyle \min_{\vc\in\csig} &
     \displaystyle
     \max_{\vb\in\dgam}
     \max_{\vp\in\Delta\set{1,\ldots,k}} &
     \E_{l\sim \vp}\enco{\phi_{t-1}\enc{\vs + \ve_l}} \\
     &  \mbox { s.t. } & \E_{l\sim\vp}[c(l)] \leq \dotp{\vb}{\vc}.
   \end{array}
 \end{equation}
 The main difference between \eqref{multi:dgnecsuf:eqn} and
 \eqref{multi:dgrec:eqn} is that while the older potentials 
 were defined using a fixed vector $\vb$ corresponding to some row in
 the fixed baseline $\B$, the new definition takes the maximum over all
 possible rows $\vb\in\dgam$ that an edge-over-random baseline $\B\in\Bgam$
 may have.
 \old{
 As before, we may write the recurrence in \eqref{multi:dgnecsuf:eqn}
 in its dual form
 \begin{equation}
   \label{multi:dgnecsuf_dual:eqn}
   \phi_t(\vs) =
   \min_{\vc\in\csig} 
   \max_{\vb\in\dgam}
   \max_{l=1}^k
   \enct{\phi_{t-1}\enc{\vs + \ve_l}
     -\enc{c(l)-\dotp{\vc}{\vb}}}.
 \end{equation}
 The proof is very similar to that of Lemma~\ref{multi:dgrec_dual:lem}
 and is omitted.
 We may now define a new OS strategy that chooses
 a cost-matrix in round $t$ analogously:
 \begin{equation}
    \label{multi:nos_ct:eqn}
  \C_{t}(i) \in
   \argmin_{\vc\in\csig} 
   \max_{\vb\in\dgam}
   \max_{l=1}^k
   \enct{\phi_{t-1}\enc{\vs + \ve_l}
     -\enc{c(l)-\dotp{\vc}{\vb}}}.
\end{equation}
where recall that $\vs_t(i)$ denotes the state vector (defined in
\eqref{multi:state:eqn}) of example $i$.
With this strategy, we can show an optimality result very similar to
Theorem~\ref{multi:dgstrat:thm}. 
\begin{theorem}
  \label{multi:mindg:thm}
  Suppose the weak-learning condition is given by
  \eqref{multi:minwl:eqn}. Let the potential functions $\phi_t^{\vb}$
  be defined as in \eqref{multi:dgnecsuf:eqn}, and assume the Booster
  employs the modified OS strategy, choosing $\alpha_t=1$ and $\C_t$
  as in \eqref{multi:nos_ct:eqn} in each round $t$. Then the average potential of
  the states,
  \[
  \frac{1}{m}\sum_{i=1}^m \phi_{T-t}\enc{\vs_t(i)},
  \]
  never increases in any round.
  In particular, the loss suffered after $T$
  rounds of play is at most $\phi_T(\vzero)$.

  Further, for any $\eps > 0$, when the loss function satisfies
  \eqref{multi:lossvar:eqn} and the number of examples $m$ is as large
  as in \eqref{multi:mlarge:eqn},
  no Booster strategy can guarantee to 
  achieve less than $\phi_T(\vzero)-\eps$ loss in $T$ rounds.
\end{theorem}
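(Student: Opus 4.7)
The proof has two parts: an upper bound showing the modified OS strategy keeps the loss below $\phi_T(\vzero)$, and a matching lower bound showing no Booster strategy can do significantly better once $m$ is large. The upper bound is a direct mimic of the drifting-games invariant argument for Theorem~\ref{multi:dgstrat:thm}, but adapted to the $\vb$-maximized recurrence; the lower bound requires constructing an adversarial Weak-Learner and a concentration argument.

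For the upper bound, the plan is to establish the potential invariant $\sum_i \phi_{T-t-1}(\vs_{t+1}(i)) \leq \sum_i \phi_{T-t}(\vs_t(i))$ in every round. First, I would rewrite \eqref{multi:dgnecsuf:eqn} in its Lagrangean dual form (exactly as in Lemma~\ref{multi:dgrec_dual:lem}, absorbing the multiplier into $\vc$ using that $\csig$ is a cone), obtaining
\[
\phi_t(\vs) = \min_{\vc\in\csig}\max_{\vb\in\dgam}\max_{l=1}^{k}\enct{\phi_{t-1}(\vs+\ve_l) - \enc{c(l)-\dotp{\vc}{\vb}}}.
\]
By definition~\eqref{multi:nos_ct:eqn}, $\C_t(i)$ attains the outer minimum for $(\phi_{T-t},\vs_t(i))$, so for every label $l$ and every $\vb\in\dgam$,
\[
\phi_{T-t-1}(\vs_t(i)+\ve_l) \leq \phi_{T-t}(\vs_t(i)) + C_t(i,l) - \dotp{\C_t(i)}{\vb}.
\]
Applying this with $l = h_t(x_i)$ and $\vb_i \in \argmax_{\vb\in\dgam}\dotp{\C_t(i)}{\vb}$ for each $i$, then summing, the key telescoping inequality
\[
\sum_i \phi_{T-t-1}(\vs_{t+1}(i)) - \sum_i \phi_{T-t}(\vs_t(i)) \leq \sum_i C_t(i,h_t(x_i)) - \sum_i \max_{\vb\in\dgam}\dotp{\C_t(i)}{\vb}
\]
emerges, and its right-hand side is $\leq 0$ by the minimal weak-learning condition~\eqref{multi:minwl:eqn}, using the fact that $\Bgam$ permits independent per-row choices so that $\max_{\B\in\Bgam}\C\bullet\B = \sum_i \max_{\vb\in\dgam}\dotp{\C(i)}{\vb}$. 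Iterating from $t=0$ (state $\vzero$, potential $\phi_T(\vzero)$) and using $\phi_0 = L$ yields the stated loss bound.

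For the lower bound, the plan is to exhibit a Weak-Learner strategy that forces every Booster to incur loss at least $\phi_T(\vzero) - \eps$. By induction on rounds remaining, I would show that if, faced with any $\C_t\in\csig$ chosen by Booster, Weak-Learner picks the pair $(\vb^*,\vp^*)$ attaining the inner $\max\max$ in~\eqref{multi:dgnecsuf:eqn} (separately per example $i$, using the relevant row of $\C_t$), then sampling each $h_t(x_i)\sim \vp_i^*$ independently yields an $\vh_t$ whose expectation satisfies the weak-learning condition and whose resulting next-round expected potential exactly equals the current round's potential. Thus in expectation the average potential is preserved, so $\E[\text{loss after $T$ rounds}] = \phi_T(\vzero)$. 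To derandomize and convert expected guarantees into worst-case ones, I would apply Hoeffding/Azuma-type concentration across the $m$ independent example samples: both the slack in the weak-learning inequality and the deviation of the empirical average loss from its expectation are $O(\sqrt{T\,\text{diameter}(L,T)^2 / m})$, so choosing $m$ as in~\eqref{multi:mlarge:eqn} guarantees an $\eps$-approximate saturation of the bound, with a small correction (absorbed into $\eps$) to ensure the sampled $h_t$ is actually a valid Weak-Learner response.

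The main obstacle will be the derandomization/concentration step in the lower bound: producing a single deterministic $h_t\in\Hall$ that simultaneously (i) satisfies the hard constraint $\C_t\bullet\vh_t \leq \max_{\B\in\Bgam}\C_t\bullet\B$ exactly rather than only in expectation, and (ii) keeps the induced state trajectory close enough to the idealized random walk driven by $\vp^*$ that the loss is within $\eps$ of $\phi_T(\vzero)$. A martingale argument over the $T$ rounds, with per-round fluctuations controlled by~\eqref{multi:lossvar:eqn} and summed against the $T/m$ budget in~\eqref{multi:mlarge:eqn}, should deliver the claim; the calculation is routine but delicate, and is the only place the large-$m$ hypothesis enters.
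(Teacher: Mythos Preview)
Your upper-bound argument is correct and is exactly the (adapted) drifting-games invariant that the paper uses for Theorem~\ref{multi:dgstrat:thm}; the paper itself omits the proof of Theorem~\ref{multi:mindg:thm} with the remark that it is ``very similar,'' and your version, with the per-row maximization over $\vb\in\dgam$ and the identity $\max_{\B\in\Bgam}\C\bullet\B=\sum_i\max_{\vb\in\dgam}\dotp{\C(i)}{\vb}$, is precisely that adaptation.

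For the lower bound you take a genuinely different route from the paper, and the obstacle you flag is real and not handled by the sketch you give. The paper (in the appendix proof of Theorem~\ref{multi:dgstrat:thm}) does \emph{not} sample $h_t(x_i)\sim\vp_i^*$ and appeal to concentration. Instead it derandomizes combinatorially: first it shows (Lemma~\ref{multi:supp_size:lem}) that each optimal $\vp_i$ may be taken with support $\{l_i^+,l_i^-\}$ of size at most two; then, writing $a_i=C(i,l_i^-)-C(i,l_i^+)\geq 0$ and $b_i=\phi_{T-t-1}(\vs_t(i)+\ve_{l_i^-})-\phi_{T-t-1}(\vs_t(i)+\ve_{l_i^+})$, it reduces the problem to finding a subset $S_+\subseteq\{1,\dots,m\}$ with $\sum_{i\in S_+}a_i\geq\sum_i p_i^+a_i$ and $\sum_{i\in S_+}b_i\leq \max_i|b_i|+\sum_i p_i^+b_i$. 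This two-constraint selection is solved deterministically (Lemma~\ref{multi:balance:lem}) by sorting examples by $(a_i-b_i)/a_i$ and taking a prefix. The single-example slack $\max_i|b_i|\leq\diameter(L,T)$ summed over $T$ rounds is what produces the \emph{linear} requirement $m\geq T\,\diameter(L,T)/\eps$ in~\eqref{multi:mlarge:eqn}.

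Your Hoeffding/Azuma plan runs into two concrete problems. First, the weak-learning inequality is a hard constraint that must hold exactly in every round; if $\E[\C_t\bullet\vh_t]$ sits on the boundary $\max_{\B}\C_t\bullet\B$, concentration alone does not produce a realization with $\C_t\bullet\vh_t\leq\max_{\B}\C_t\bullet\B$, and the phrase ``small correction absorbed into $\eps$'' does not explain how you simultaneously buy slack in the cost constraint without spending it on the potential. Second, even granting a fix, sub-Gaussian concentration of $m$ terms bounded by $\diameter(L,T)$ gives per-round fluctuations of order $\diameter(L,T)/\sqrt{m}$, leading to $m\gtrsim T\,\diameter(L,T)^2/\eps^2$ rather than the stated~\eqref{multi:mlarge:eqn}; so the theorem as written would not follow. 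The paper's two-point-support reduction plus the greedy subset lemma is exactly the missing idea that resolves both issues.
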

The proof is very similar to that of
Theorem~\ref{multi:dgstrat:thm} and is omitted.
}

\subsubsection{Computing the new potentials.} Here we show how to
compute the new potentials. The resulting algorithms will require
exponential time, and we provide some empirical evidence showing that
this might be necessary. Finally, we show how to carry out the
computations efficiently in certain special situations.

\paragraph{An exponential time algorithm.}
Here we show how the potentials may be computed as the expected loss
of some random walk, just as we did for the potentials arising with
fixed edge-over-random conditions. The main difference is there will
be several random walks to choose from.

We first begin by simplifying the recurrence
\eqref{multi:dgnecsuf:eqn}, and expressing the optimal cost matrix in
\eqref{multi:nos_ct:eqn} in terms of the potentials, just as we did
in Lemma~\ref{multi:homogsol:lem} for the case of fixed
edge-over-random conditions. 
\begin{lemma}
  \label{multi:rec_simp:lem}
  Assume $L$ is proper.
  Then the recurrence \eqref{multi:dgnecsuf:eqn} may be simplified as
   \begin{eqnarray}
   \label{multi:homognecsufsol:eqn}
    \phi_t(\vs) 
   &=& \max_{\vb\in\dgam}
   \E_{l\sim \vb}\enco{\phi_{t-1}\enc{\vs + \ve_l}}.
 \end{eqnarray}
 Further, if the cost matrix $\C_t$ is chosen as follows:
 \begin{equation}
   \label{multi:ctopt:eqn}
   C_t(i,l) = \phi_{T-t-1}(\vs_{t}(i)+\ve_l),
 \end{equation}
 then $\C_t$ satisfies the condition in \eqref{multi:nos_ct:eqn}.
\end{lemma}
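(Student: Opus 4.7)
The plan is to adapt the argument from Lemma~\ref{multi:homogsol:lem}, accounting for the additional outer maximization over baselines $\vb\in\dgam$. First I would convert the constraint $\E_{l\sim\vp}[c(l)]\leq\dotp{\vb}{\vc}$ in \eqref{multi:dgnecsuf:eqn} into an unconstrained Lagrangian with multiplier $\lambda\geq 0$, then use Theorem~\ref{multi:minmax:thm} to push $\min_{\lambda}$ past $\max_{\vp}$, and absorb $\lambda$ into $\vc$ via the convex cone property of $\csig$. This yields the intermediate form
\[
\phi_t(\vs)=\min_{\vc\in\csig}\max_{\vb\in\dgam}\max_{\vp\in\Delta}\enct{\E_{l\sim\vp}\enco{\phi_{t-1}(\vs+\ve_l)}+\dotp{\vb-\vp}{\vc}}.
\]
A second application of Theorem~\ref{multi:minmax:thm}, legitimate because $\dgam$ and $\Delta$ are compact, lets me swap $\min_\vc$ past the joint $\max_{\vb,\vp}$.

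Next I would analyze the resulting inner $\min_{\vc\in\csig}$ of $\dotp{\vb-\vp}{\vc}$. This infimum is $-\infty$ unless $\vb-\vp$ lies in the dual cone of $\csig$, which a direct calculation identifies as $\set{\vd:\sum_l d_l=0,\ d_l\geq 0\ \forall l\geq 2}$. So $\vp$ is constrained to satisfy $p(l)\leq b(l)$ for every $l\geq 2$, and the remaining objective reduces to $\E_{l\sim\vp}[\phi_{t-1}(\vs+\ve_l)]$. By induction $\phi_{t-1}$ is proper (extending Lemma~6 of \citep{Schapire01} to the multiclass drifting game, as already noted in the excerpt), hence $\phi_{t-1}(\vs+\ve_l)\geq\phi_{t-1}(\vs+\ve_1)$ for all $l\geq 2$; the maximum is therefore achieved by raising each $p(l)$, $l\geq 2$, as high as possible, namely to $b(l)$, forcing $\vp=\vb$. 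This establishes \eqref{multi:homognecsufsol:eqn}.

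For the second claim, I would verify that the choice $C_t(i,l)=\phi_{T-t-1}(\vs_t(i)+\ve_l)$ is both feasible and optimal for \eqref{multi:nos_ct:eqn}. Feasibility $\C_t(i)\in\csig$ is immediate from properness of $\phi_{T-t-1}$. For optimality, substituting this $\vc$ into the primal recurrence \eqref{multi:dgnecsuf:eqn} makes the constraint $\E_{l\sim\vp}[c(l)]\leq\dotp{\vb}{\vc}$ read $\E_{l\sim\vp}[\phi_{T-t-1}(\vs_t(i)+\ve_l)]\leq\E_{l\sim\vb}[\phi_{T-t-1}(\vs_t(i)+\ve_l)]$, which directly upper bounds the objective by $\E_{l\sim\vb}[\phi_{T-t-1}(\vs_t(i)+\ve_l)]$. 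Maximizing over $\vb\in\dgam$ reproduces $\phi_{T-t}(\vs_t(i))$ by the just-proven simplification, matching the minimum value of the outer problem, so this $\vc$ attains the min.

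The main obstacle I anticipate is the careful bookkeeping of the joint $\max_{\vb,\vp}$ after the min-max swap: in Lemma~\ref{multi:homogsol:lem}, $\vb$ was a fixed baseline row, whereas here the feasibility condition $b(l)\geq p(l)$ couples an optimized $\vb$ with $\vp$, and one must verify that the joint optimum is genuinely $\vp=\vb$ rather than some boundary configuration of $\dgam$. Once this coupling is handled, the rest of the argument follows the same template as the fixed-baseline case, so the verification of the cost-matrix formula is essentially a bookkeeping exercise.
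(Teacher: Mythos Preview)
Your approach matches the paper's intended argument, which it omits, saying only that ``the proof is very similar to that of Lemma~\ref{multi:homogsol:lem}.'' The Lagrangian manipulation, absorption of $\lambda$ into $\vc$, identification of the feasibility cone for $\vb-\vp$, and the properness argument forcing $\vp=\vb$ are exactly the right adaptations.

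One genuine technical gap: you justify the second minimax swap by saying it is ``legitimate because $\dgam$ and $\Delta$ are compact,'' but Theorem~\ref{multi:minmax:thm} requires \emph{convexity}, and $\dgam$ is not convex. Indeed, with $k=3$ and $\gamma=0$, the points $(1/2,1/2,0)$ and $(1/2,0,1/2)$ both lie in $\dgam$ but their midpoint $(1/2,1/4,1/4)$ does not, since $b(1)-\gamma=1/2\neq 1/4=\max\{b(2),b(3)\}$. The fix is easy: the objective is linear in $\vb$, so $\max_{\vb\in\dgam}$ equals $\max_{\vb\in\mathrm{conv}(\dgam)}$; apply the minimax theorem with the compact convex set $\mathrm{conv}(\dgam)\times\Delta$, and at the end use linearity again to pass back from $\mathrm{conv}(\dgam)$ to $\dgam$. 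You should make this explicit.

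A smaller point on the cost-matrix claim: \eqref{multi:nos_ct:eqn} is stated in the \emph{dual} form, whereas you verify optimality via the primal \eqref{multi:dgnecsuf:eqn}. The cleanest route, mirroring the paper's proof of Lemma~\ref{multi:homogsol:lem}, is to substitute $c(l)=\phi_{T-t-1}(\vs_t(i)+\ve_l)$ directly into the dual expression: the term $\phi_{T-t-1}(\vs+\ve_l)-c(l)$ vanishes identically, leaving $\max_{\vb\in\dgam}\dotp{\vc}{\vb}=\max_{\vb\in\dgam}\E_{l\sim\vb}[\phi_{T-t-1}(\vs+\ve_l)]=\phi_{T-t}(\vs)$ by the part you just proved, which is the minimum value.
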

The proof is very similar to that of Lemma~\ref{multi:homogsol:lem}
and is omitted.
Eq. \eqref{multi:ctopt:eqn} implies that, as before, computing the
optimal Booster strategy reduces to computing the new potentials. One
computational difficulty created by the new definitions
\eqref{multi:dgnecsuf:eqn} or \eqref{multi:homognecsufsol:eqn} is that
they require infinitely many possible distributions $\vb\in\dgam$ to
be considered. We show that we may in fact restrict our
attention to only finitely many of such distributions described next.

At any state $\vs$ and number of remaining iterations $t$, let $\pi$
be a permutation of the coordinates $\set{2,\ldots,k}$ that sorts the
potential values:
  \begin{equation}
    \label{multi:pi:eqn}
    \phi_{t-1}\enc{\vs + \ve_{\pi(k)}}
    \geq   \phi_{t-1}\enc{\vs + \ve_{\pi(k-1)}}
    \geq \ldots
    \geq \phi_{t-1}\enc{\vs + \ve_{\pi(2)}}.
  \end{equation}
  For any permutation $\pi$ of the coordinates $\set{2,\ldots,k}$, let
  $\vb^{\pi}_a$ denote the $\gamma$-biased uniform distribution on the
  $a$ coordinates $\set{1,\pi_k,\pi_{k-1},\ldots,\pi_{k-a+2}}$:
  \begin{equation}
    \label{multi:bpi:eqn}
    b^{\pi}_a(l) =
    \begin{cases}
      \frac{1-\gamma}{a} + \gamma & \mbox{ if } l = 1 \\
     \frac{1-\gamma}{a} & \mbox{ if } l \in \set{\pi_k,\ldots,
       \pi_{k-a+2}} \\
     0 & \mbox { otherwise. }
   \end{cases}
 \end{equation}
 Then, the next lemma shows that we may restrict our attention to only
 the distributions $\set{\vb^{\pi}_2, \ldots, \vb^{\pi}_k}$ when
 evaluating the recurrence in \eqref{multi:homognecsufsol:eqn}. 
\begin{lemma}
  \label{multi:homognecsufsol:lem}
  Fix a state $\vs$ and remaining rounds of boosting $t$.  Let
  $\pi$ be a permutation of the coordinates $\set{2,\ldots,k}$
  satisfying \eqref{multi:pi:eqn}, and define $\vb^{\pi}_a$ as in
  \eqref{multi:bpi:eqn}.  Then the recurrence
  \eqref{multi:homognecsufsol:eqn} may be simplified as follows:
  \begin{eqnarray}
   \label{multi:pot_simp:eqn}    
   \phi_t(\vs) 
   = \max_{\vb\in\dgam}
   \E_{l\sim \vb}\enco{\phi_{t-1}\enc{\vs + \ve_l}} 
   = \max_{2\leq a \leq k}
   \E_{l\sim \vb^{\pi}_a}\enco{\phi_{t-1}\enc{\vs + \ve_l}}.
 \end{eqnarray}
\end{lemma}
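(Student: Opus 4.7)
The plan is to recognize the inner maximization over $\vb \in \dgam$ as a linear program in $\vb$, identify its vertices, and then use the monotonicity of the potentials to discard all but finitely many candidates. Define $u_l \eqdef \phi_{t-1}(\vs + \ve_l)$ so that the objective is the linear function $\vb \mapsto \sum_l b(l) u_l$. I would work not with $\dgam$ itself (a union of facets, because of the equality $b(1) - \gamma = \max_{l>1} b(l)$) but with its closure, the compact polytope
\[
\dgam^+ \eqdef \set{\vb \in \Delta\set{1,\ldots,k}: b(l) \leq b(1) - \gamma \text{ for all } l > 1},
\]
on which the supremum is attained at a vertex.

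First I would record that $\phi_{t-1}$ is proper, by induction on $t$ using the same argument from Lemma 6 of \citep{Schapire01} that was already invoked in the proof of Lemma~\ref{multi:homogsol:lem}. Properness immediately implies $u_1 \leq u_l$ for all $l > 1$, since adding to coordinate $1$ decreases $\phi_{t-1}$ while adding to any other coordinate increases it.

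Next I would enumerate the vertices of $\dgam^+$. Apart from the simplex equality $\sum_l b(l) = 1$, the defining constraints are $b(l) \geq 0$ and $b(l) - b(1) + \gamma \leq 0$ for $l > 1$, and at a vertex, $k - 1$ of them are active. Letting $B \subseteq \set{2, \ldots, k}$ be the set of indices with $b(l) = b(1) - \gamma$, the complement forces $b(l) = 0$; setting $|B| = a - 1$ and solving $b(1) + (a-1)(b(1) - \gamma) = 1$ yields $b(1) = (1-\gamma)/a + \gamma$ and $b(l) = (1-\gamma)/a$ on $B$. For any $a \in \set{2, \ldots, k}$ this is precisely $\vb^\pi_a$ from \eqref{multi:bpi:eqn}, where $\pi$ is any permutation placing $B$ in positions $\set{k, k-1, \ldots, k-a+2}$. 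The lone remaining vertex, obtained when $B = \emptyset$, is $\ve_1$, which lies in $\dgam^+ \setminus \dgam$.

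Finally I would show the optimum is one of the $\vb^\pi_a$ from the statement. Properness disposes of $\ve_1$: its objective value is $u_1$, while any $\vb^\pi_a$ with $a \geq 2$ shifts weight off coordinate $1$ onto coordinates with $u_l \geq u_1$, giving a value $\geq u_1$. For fixed $a \geq 2$, the objective on the vertex is $b(1) u_1 + \tfrac{1-\gamma}{a} \sum_{l \in B} u_l$, which is clearly maximized by choosing $B$ to be the indices of the $a - 1$ largest values $u_l$ among $l > 1$; by the sorting \eqref{multi:pi:eqn}, this is exactly $B = \set{\pi_k, \pi_{k-1}, \ldots, \pi_{k-a+2}}$, matching the definition of $\vb^\pi_a$. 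Taking the max over $a \in \set{2, \ldots, k}$ gives \eqref{multi:pot_simp:eqn}. The only subtlety, rather than a genuine obstacle, is the passage from $\dgam$ to $\dgam^+$; properness handles it, since it rules out the single extra vertex $\ve_1$ as a maximizer, so the sup over $\dgam$ equals the max over $\dgam^+$ and is attained within $\dgam$.
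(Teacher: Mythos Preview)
Your argument is correct and takes a cleaner, more structural route than the paper's.  The paper does not enumerate vertices; instead it asserts (without detailed justification) that some optimal $\vb\in\dgam$ must already have the shape
\[
b(k)=\cdots=b(k-a+2)=b(1)-\gamma,\quad b(k-a+1)\in[0,b(1)-\gamma],\quad b(k-a)=\cdots=b(2)=0,
\]
for some $a\geq 2$, and then treats the single free coordinate $b(k-a+1)$ as a real variable $x$, observes that the objective is linear in $x$, and checks that both endpoints $x=0$ and $x=(1-\gamma)/(a+1)$ yield exactly $\vb^{\pi}_a$ and $\vb^{\pi}_{a+1}$.  Your approach buys two things: it makes explicit why only these finitely many candidates matter (they are the vertices of the closure $\dgam^+$), and it fills in the step the paper glosses over (why the optimum concentrates on the top-$a$ coordinates) via the properness inequality $u_1\leq u_l$ and the sorting.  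The paper's approach, in turn, avoids having to pass to $\dgam^+$ and then argue back to $\dgam$, since working inside $\dgam$ forces at least one coordinate to hit $b(1)-\gamma$ and hence $a\geq 2$ from the start, sidestepping the extra vertex $\ve_1$ entirely.  Both arguments ultimately rely on linearity of the objective over a polytope; yours makes the LP geometry explicit, the paper's does it by hand along a single edge.
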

\begin{proof}
  Assume (by relabeling the coordinates if necessary) that $\pi$ is
  the identity permutation, that is, $\pi(2) = 2, \ldots, \pi(k) =
  k$. Observe that
  the right hand side of \eqref{multi:homognecsufsol:eqn} is at least
  as much the right hand side of \eqref{multi:pot_simp:eqn}
  since the former considers more distributions. We complete the proof
  by showing that the former is also at most the latter.
  
  By \eqref{multi:homognecsufsol:eqn}, we may assume that some optimal $\vb$
  satisfies
  \begin{eqnarray*}
    b(k) = \cdots = b(k-a+2) &=& b(1) - \gamma,\\
    b(k-a+1)  &\leq& b(1)-\gamma,\\
    b(k-a) = \cdots = b(2) &=& 0.
  \end{eqnarray*}
  Therefore, $\vb$ is a distribution supported on $a+1$ elements, with
  the minimum weight placed on element $k-a+1$.
  This implies $b(k-a+1) \in [0,1/(a+1)]$.

  Now, $\E_{l\sim \vb}\enco{\phi_{t-1}(\vs+\ve_l)}$ may be written as
  \begin{eqnarray*}
    && \gamma\cdot\phi_{t-1}(\vs + \ve_1)  +  b(k-a+1)\phi_{t-1}(\vs + \ve_{k-a+1})\\
    &+& (1-\gamma - b(k-a+1))
    \frac{\phi_{t-1}(\vs+\ve_1)+\phi_{t-1}(\vs+\ve_{k-a+2})      + 
      \ldots \phi_{t-1}(\vs + \ve_k)}{a} \\
    &=&  \gamma\cdot\phi_{t-1}(\vs + \ve_1)  +
    \frac{b(k-a+1)}{1-\gamma}\phi_{t-1}(\vs + \ve_{k-a+1}) \\
    &+& (1-\gamma)\Big\{ 
     \enc{1 - \frac{b(k-a+1)}{1-\gamma}}
      \frac{\phi_{t-1}(\vs+\ve_1) + \phi_{t-1}(\vs+\ve_{k-a+2}) +
        \ldots \phi_{t-1}(\vs + \ve_k)}{a}  \Big\}
  \end{eqnarray*}
  Replacing $b(k-a+1)$ by $x$ in the above expression, we get a linear
  function of $x$. When restricted to $[0,1/(a+1)]$ the maximum value
  is attained at a boundary point. For $x=0$, the expression becomes
  \[
  \gamma\cdot\phi_{t-1}(\vs + \ve_1) + (1-\gamma)
  \frac{\phi_{t-1}(\vs+\ve_1) + \phi_{t-1}(\vs+\ve_{k-a+2}) + \ldots
    \phi_{t-1}(\vs + \ve_k)}{a}.
  \]
  For $x=1/(a+1)$, the expression becomes
  \[
  \gamma\cdot\phi_{t-1}(\vs + \ve_1) + (1-\gamma)
  \frac{\phi_{t-1}(\vs+\ve_1) + \phi_{t-1}(\vs+\ve_{k-a+1}) + \ldots
    \phi_{t-1}(\vs + \ve_k)}{a+1}.
  \]
  Since $b(k-a+1)$ lies in $[0,1/(a+1)]$, the optimal value is at most
  the maximum of the two. However each of these last two expressions
  is at most the right hand side of
  \eqref{multi:pot_simp:eqn},
  completing the proof.
\end{proof}
 Unraveling \eqref{multi:pot_simp:eqn}, we find that $\phi_t(\vs)$ is
 the expected loss of the final state reached by some random walk of
 $t$ steps starting at state $\vs$. However, the number of
 possibilities for the random-walk is huge; indeed, the distribution
 at each step can be any of the $k-1$ possibilities $\vb^{\pi}_a$ for
 $a\in\set{2,\ldots,k}$, where the parameter $a$ denotes the size of
 the support of the $\gamma$-biased uniform distribution chosen at
 each step. In other words, at a given state $\vs$ with $t$ rounds of
 boosting remaining, the parameter $a$ determines the number of
 directions the optimal random walk will consider taking; we will
 therefore refer to $a$ as the \emph{degree} of the random walk given
 $(\vs,t)$. Now, the total number of states reachable in $T$ steps is
 $O\enc{T^{k-1}}$.
 If the degree assignment  every such state, for every value of $t\leq
 T$ is fixed in advance, $\va = \set{a(\vs,t): t \leq T, \vs \mbox{
     reachable}} $, we may identify a
 unique random walk $\rw^{\va,t}(\vs)$ of length $t$ starting at step
 $\vs$. Therefore the potential may be computed as
\begin{equation}
  \label{multi:rws:eqn}
\phi_t(\vs) =
\max_{\va}
\E\enco{\rw^{\va,t}(\vs)}.
\end{equation}
A dynamic programming approach for
computing \eqref{multi:rws:eqn} requires time and memory linear in the
number of different states reachable by a random walk that takes $T$
coordinate steps: $O(T^{k-1})$. This is exponential in the dataset size,
and hence impractical.
In the next two sections we show that perhaps there may not be any way
of computing these efficiently in general, but provide efficient
algorithms in certain special cases.

\paragraph{Hardness of evaluating the potentials.} 

Here we provide empirical evidence for the hardness of computing the
new potentials. We first identify a computationally easier problem,
and show that even that is probably hard to compute.
Eq. \eqref{multi:pot_simp:eqn}
implies that if the potentials were 
efficiently computable, the correct value of the degree $a$ could be determined
efficiently. The problem of determining the degree $a$ given the state $\vs$ and
remaining rounds $t$ is therefore  easier than
evaluating the potentials. However, a plot of the degrees against
states and remaining rounds, henceforth called a \emph{degree map},
reveals very little structure that might be captured by a
computationally efficient function. 

We include three such degree maps in Figure~\ref{multi:deg:fig}.
\begin{figure}
  \begin{tabular}{ccc}
    \includegraphics[width=0.3\textwidth]{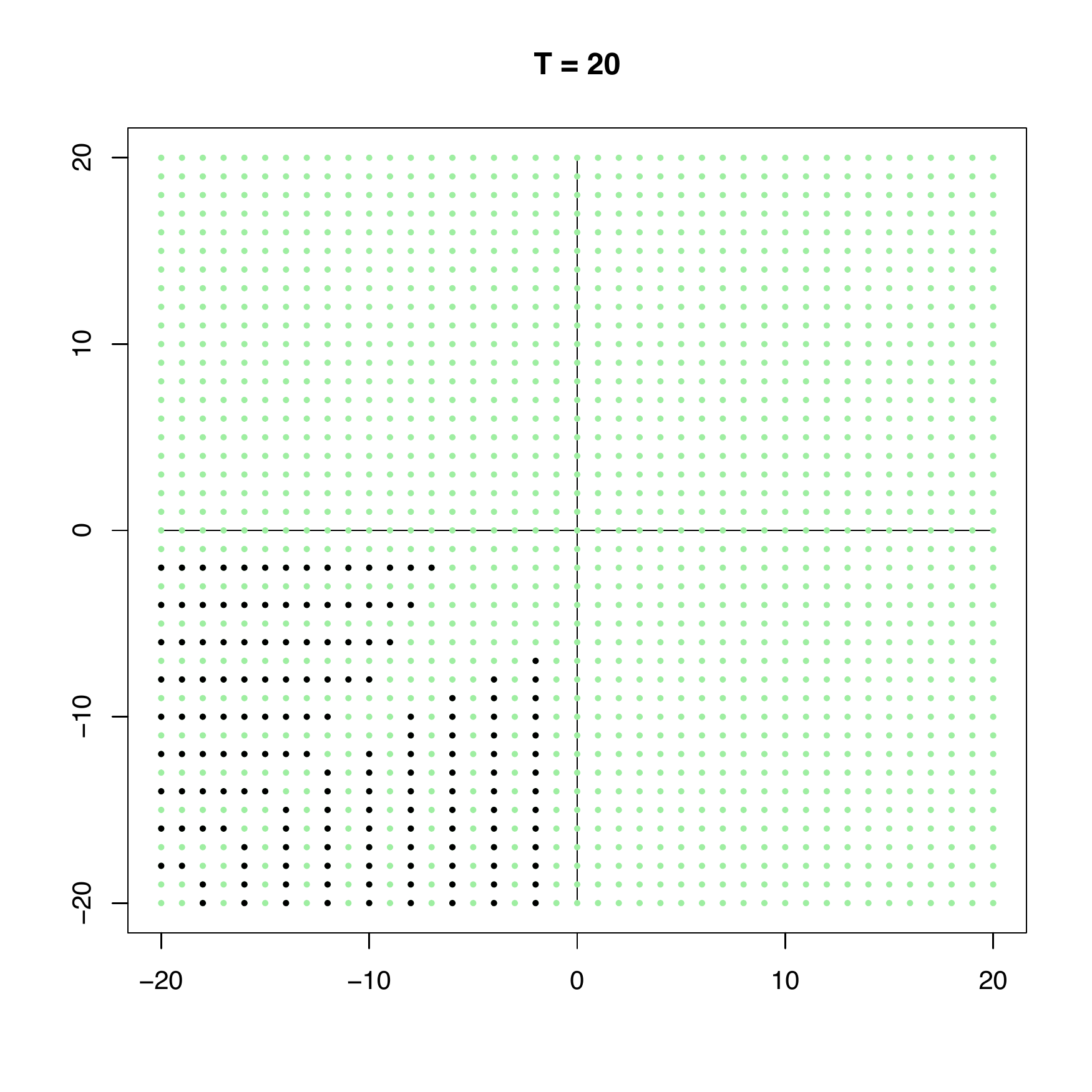} &
    \includegraphics[width=0.3\textwidth]{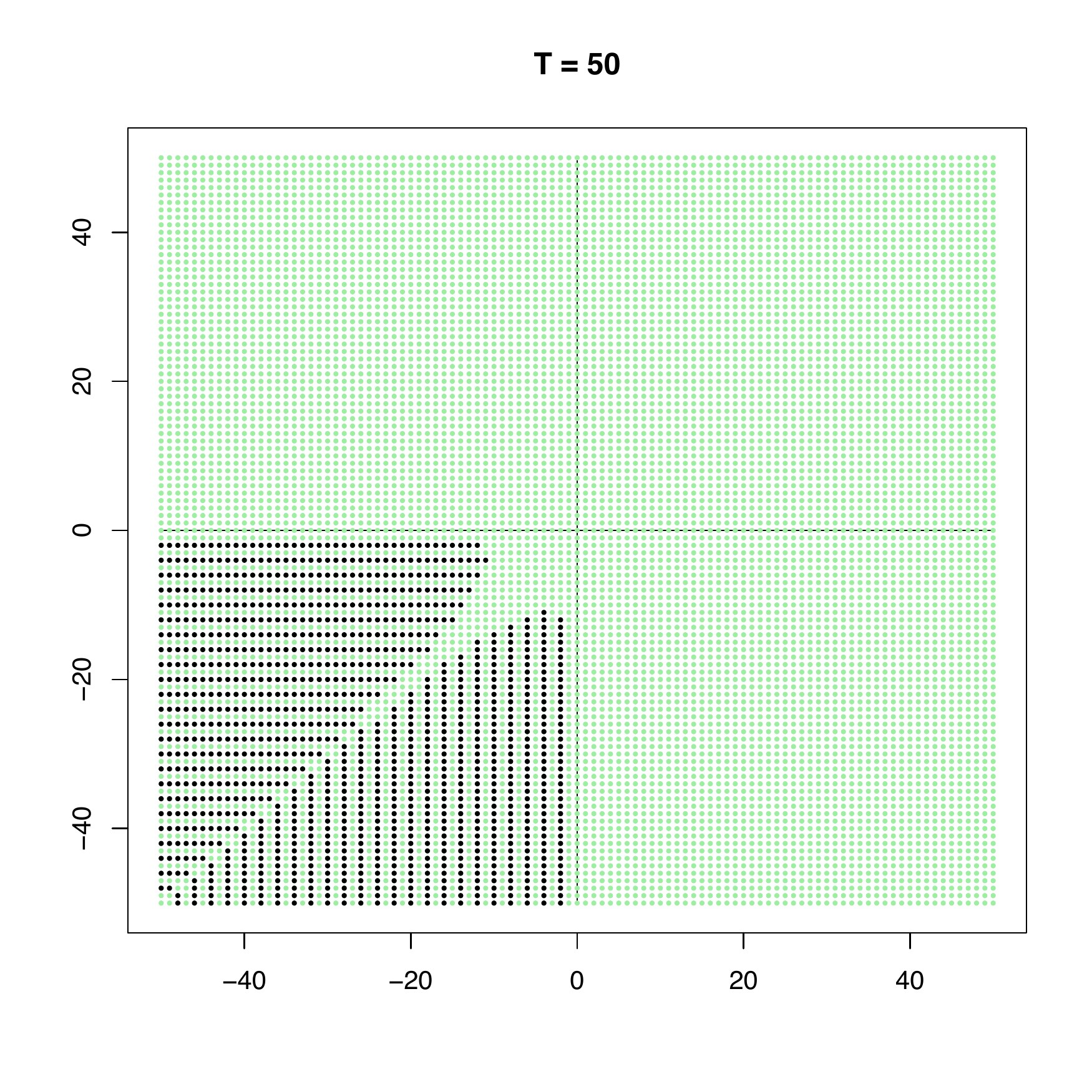} &
    \includegraphics[width=0.3\textwidth]{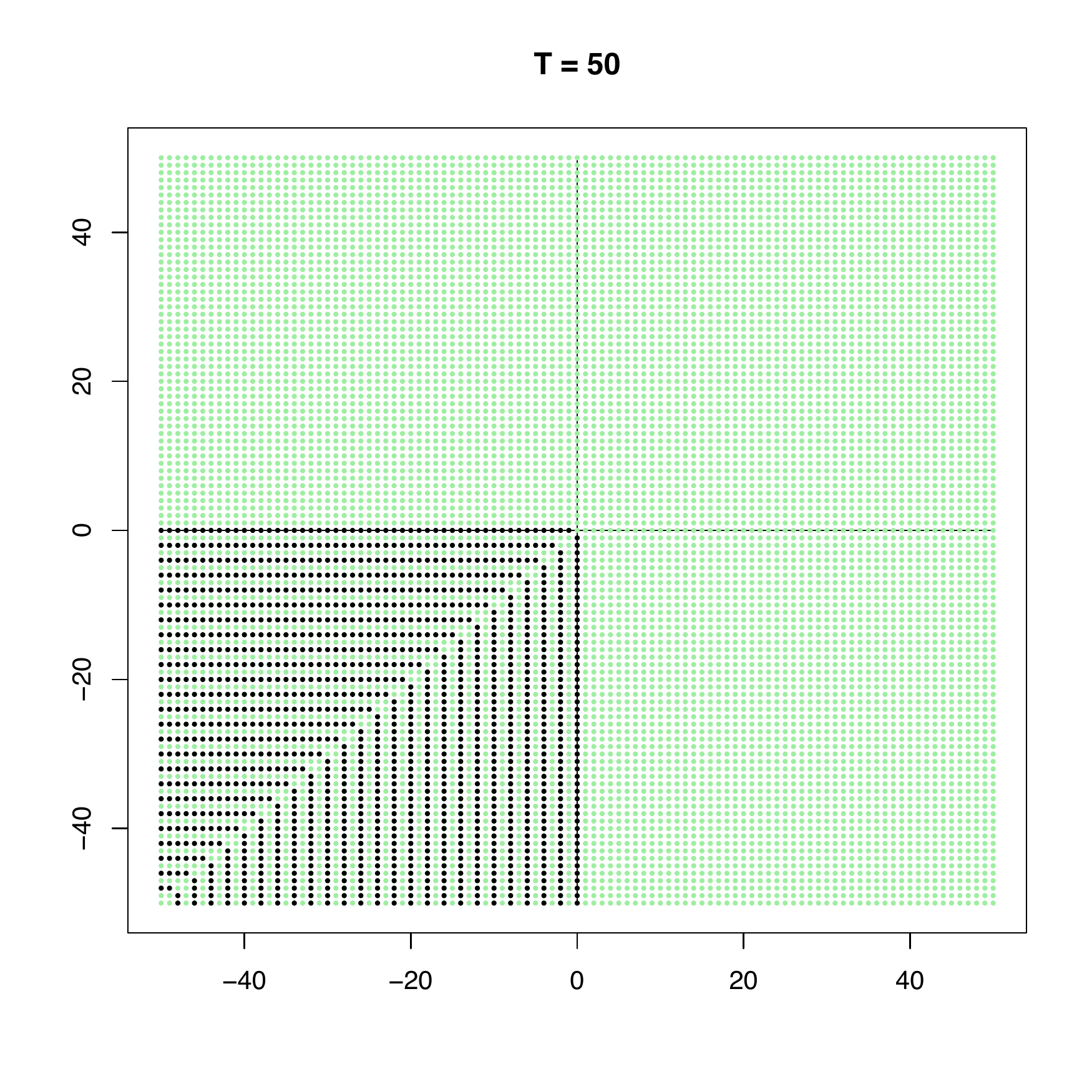}    
  \end{tabular}
  \caption[Degree plots with error as loss function]{Green pixels have
    degree 3, black pixels have degree 2. Each 
    step is diagonally down (left), and up (if $x < y$) and right (if
    $x > y$) and both when degree is 3. The rightmost figure uses
    $\gamma = 0.4$, and the other two $\gamma=0$. The loss function is
    0-1.}
   \label{multi:deg:fig}
\end{figure}
Only three
classes $k=3$ are used, and the loss function is 0-1 error. We also
fix the number $T$ of remaining rounds of boosting and the value of
the edge $\gamma$ for each plot. For ease of
presentation, the 3-dimensional states $\vs=(s_1,s_2,s_3)$ are
compressed into 2-dimensional pixel coordinates
$(u=s_2-s_1,v=s_3-s_2)$. It can be shown that this does not take away
information required to evaluate the potentials or the degree at
any pixel $(u,v)$. Further, only those states are considered whose
compressed coordinates $u,v$ lie in the range $[-T,T]$; in $T$ rounds,
these account for all the reachable states. The degrees are indicated
on the plot by colors. Our discussion in the previous sections implies
that the possible values of the degree is $2$ or $3$. When the degree
at a pixel $(u,v)$ is $3$, the pixel is colored green, and when the
degree is $2$, it is colored black.

Note that a random walk over the space $\vs\in\R^3$ consisting of
distributions over coordinate steps $\set{(1,0,0), (0,1,0), (0,0,1)}$
translates to a random walk over $(u,v)\in\R^2$ where each step lies
in the set $\set{(-1,-1), (1,0), (0,1)}$. In
Figure~\ref{multi:deg:fig}, these correspond to the directions
diagonally down, up or right. Therefore at a black pixel, the random
walk either chooses between diagonally down and  up, or between
 diagonally down and right, with probabilities
$\set{1/2+\gamma/2,1/2-\gamma/2}$. On the other hand, at a green
pixel, the random walk chooses among diagonally down, up and right
with probabilities
$(\gamma + (1-\gamma)/3,
(1-\gamma)/3,
(1-\gamma)/3)$.
The degree maps are shown for varying values of $T$ and the edge $\gamma$.
While some patterns emerge for the degrees, such as black or green
depending on the parity of $u$ or $v$, the authors found the region
near the line $u=v$ still too complex to admit any  solution apart
from a brute-force computation.

We also plot the potential values themselves in
Figure~\ref{multi:surf:fig} against different states.
\begin{figure}
  \begin{tabular}{cc}
    \includegraphics[width=0.4\textwidth]{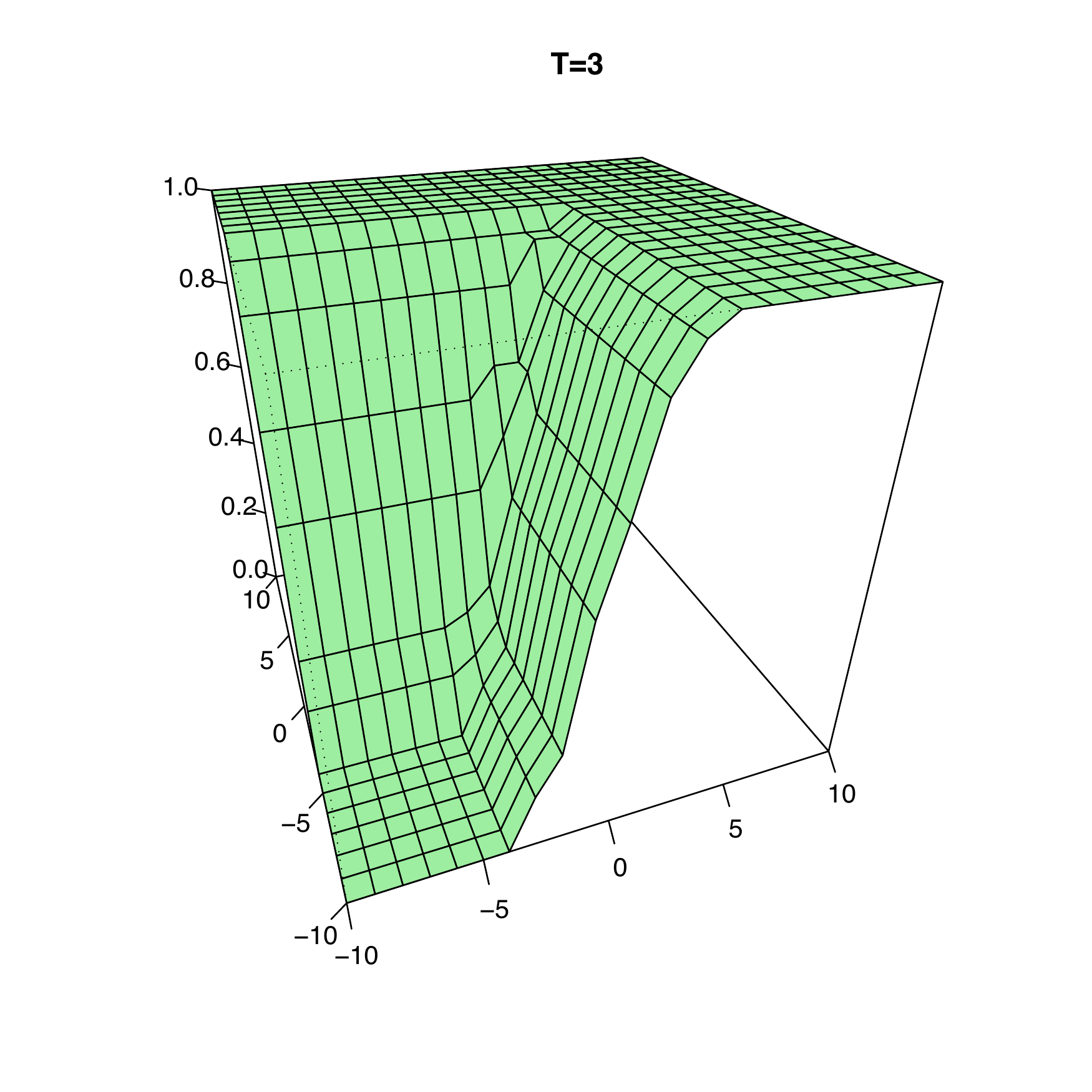} &
    \includegraphics[width=0.4\textwidth]{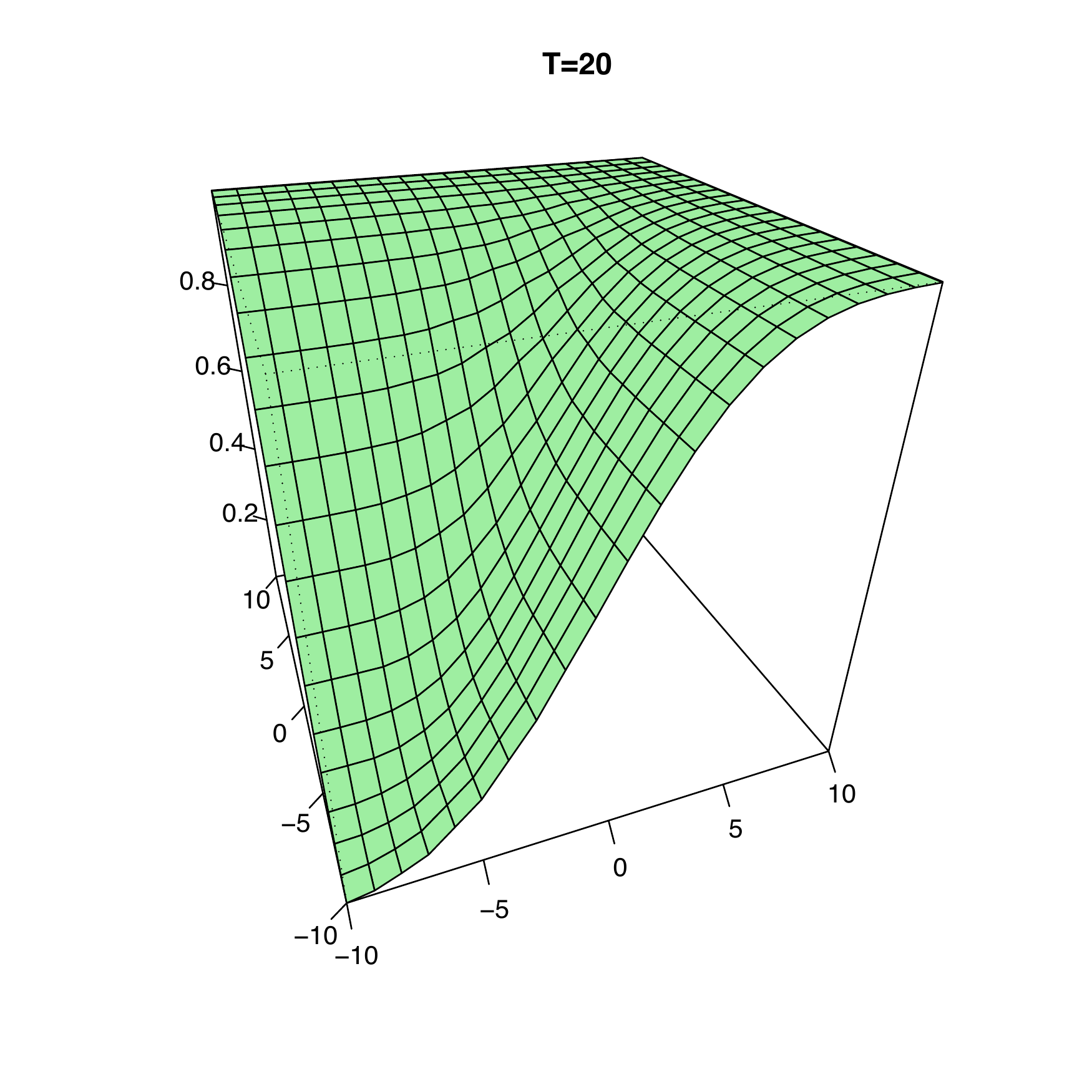}
  \end{tabular}
  \caption[Plot of potential values against states]{Optimum recurrence
    value. We set $\gamma=0$. Surface is 
    irregular for smaller values of $T$, but smoother for larger
    values, admitting hope for approximation.}
    \label{multi:surf:fig}
\end{figure}
In each plot,
the number of iterations remaining, $T$, is held constant, the number of classes is
chosen to be $3$, and the edge $\gamma=0$. The states are compressed
into pixels as before, and the potential is plotted against each
pixel, resulting in a 3-dimensional surface. We include two plots,
with different values for $T$. The surface is irregular for $T=3$
rounds, but smoother for $20$ rounds, admitting some hope for approximation. 

An alternative approach would be to approximate the potential $\phi_t$
by the potential $\phi_t^{\vb}$ for some fixed $\vb\in\dgam$
corresponding to some particular edge-over-random condition. Since
$\phi_t \geq \phi_t^{\vb}$ for all edge-over-random distributions
$\vb$, it is natural to approximate by choosing $\vb$ that maximizes
the fixed edge-over-random potential. (It can be shown that this $\vb$
corresponds to the $\gamma$-biased uniform distribution.) Two plots of
comparing the potential values at $\vzero$, $\phi_T(\vzero)$ and
$\max_{\vb}\phi^{\vb}_T(\vzero)$, which correspond to the respective error upper
bounds, is shown in Figure~\ref{multi:phi_psi:fig}.
\begin{figure}
  \begin{tabular}{cc}
    \includegraphics[width=0.4\textwidth]{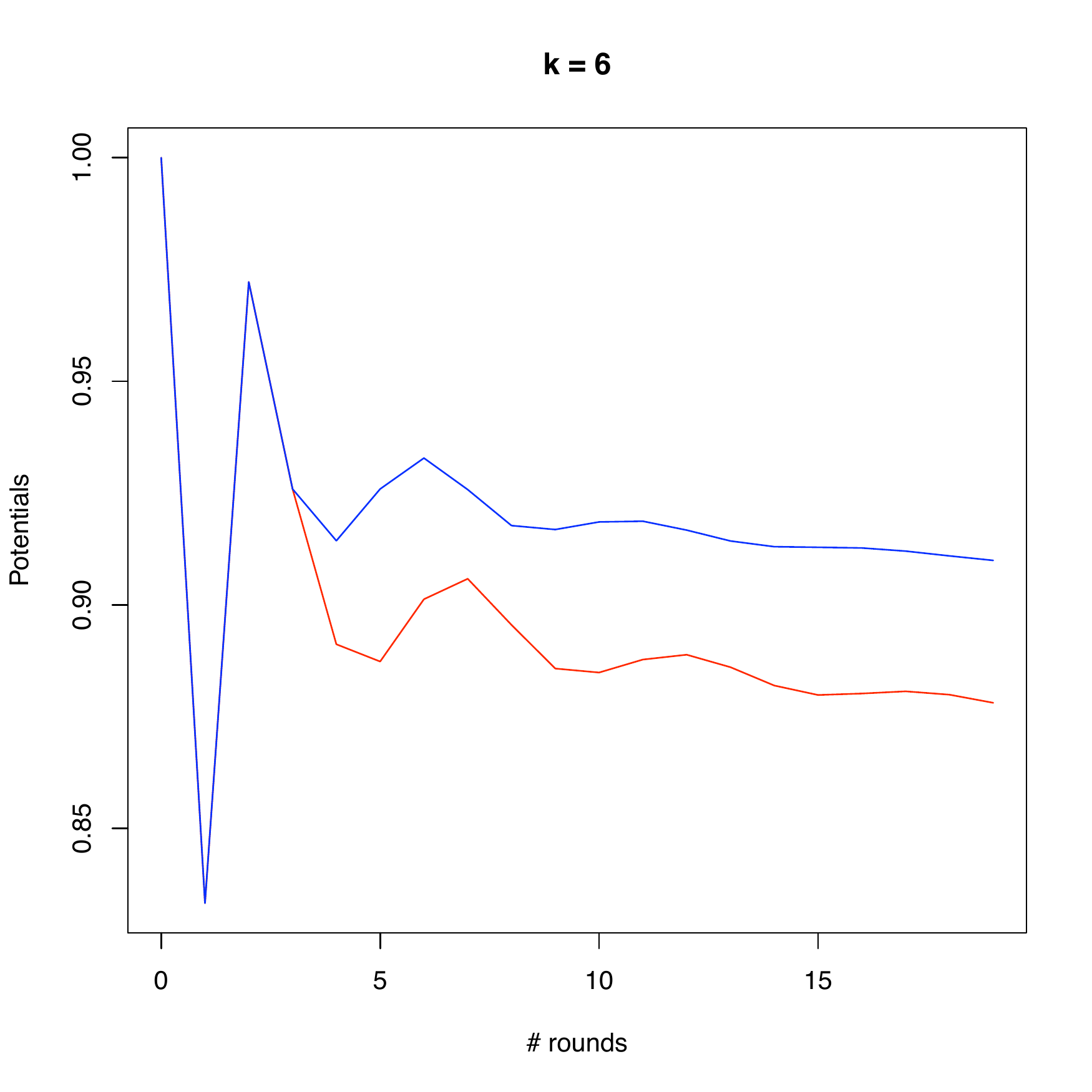} &
    \includegraphics[width=0.4\textwidth]{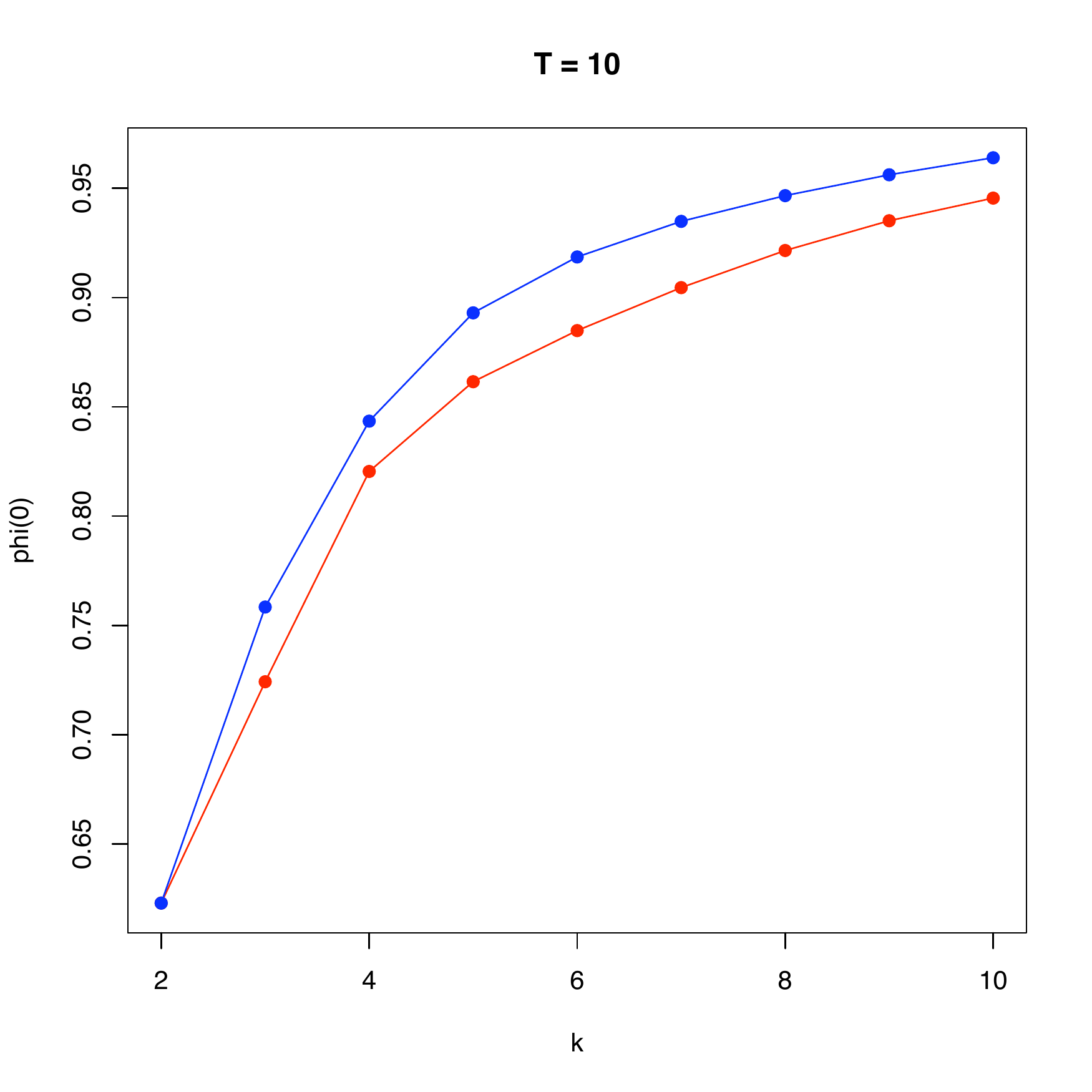}
  \end{tabular}
  \caption[Potential values with minimal and edge-over-random
  conditions]{Comparison of $\phi_t(\vzero)$ (blue) with 
    $\max_{\vq}\phi_t^{\vq}(\vzero)$ (red) over different rounds $t$
    and different number of classes $k$. We set $\gamma=0$ in both.}
  \label{multi:phi_psi:fig}
\end{figure}
In the first plot,
the number of classes $k$ is held fixed at $6$, and the values are
plotted for different values of iterations $T$. In the second plot,
the number of classes vary, and the number of iterations is held at
10. Both plots show that the difference in the values is significant,
and hence $\max_{\vb}\phi^{\vb}_T(\vzero)$ would be a rather
optimistic upper bound on the error when using the minimal weak
learning condition.

If we use exponential loss \eqref{multi:exploss:eqn},
the situation is not much better. The degree maps for varying values
of the weight parameter $\eta$ against fixed values of edge $\gamma=0.1$,
rounds remaining $T=20$ and number of classes $k=3$ are plotted in
Figure~\ref{multi:degexp:fig}.
\begin{figure}
  \begin{tabular}{cccc}
    \includegraphics[width=0.3\textwidth]{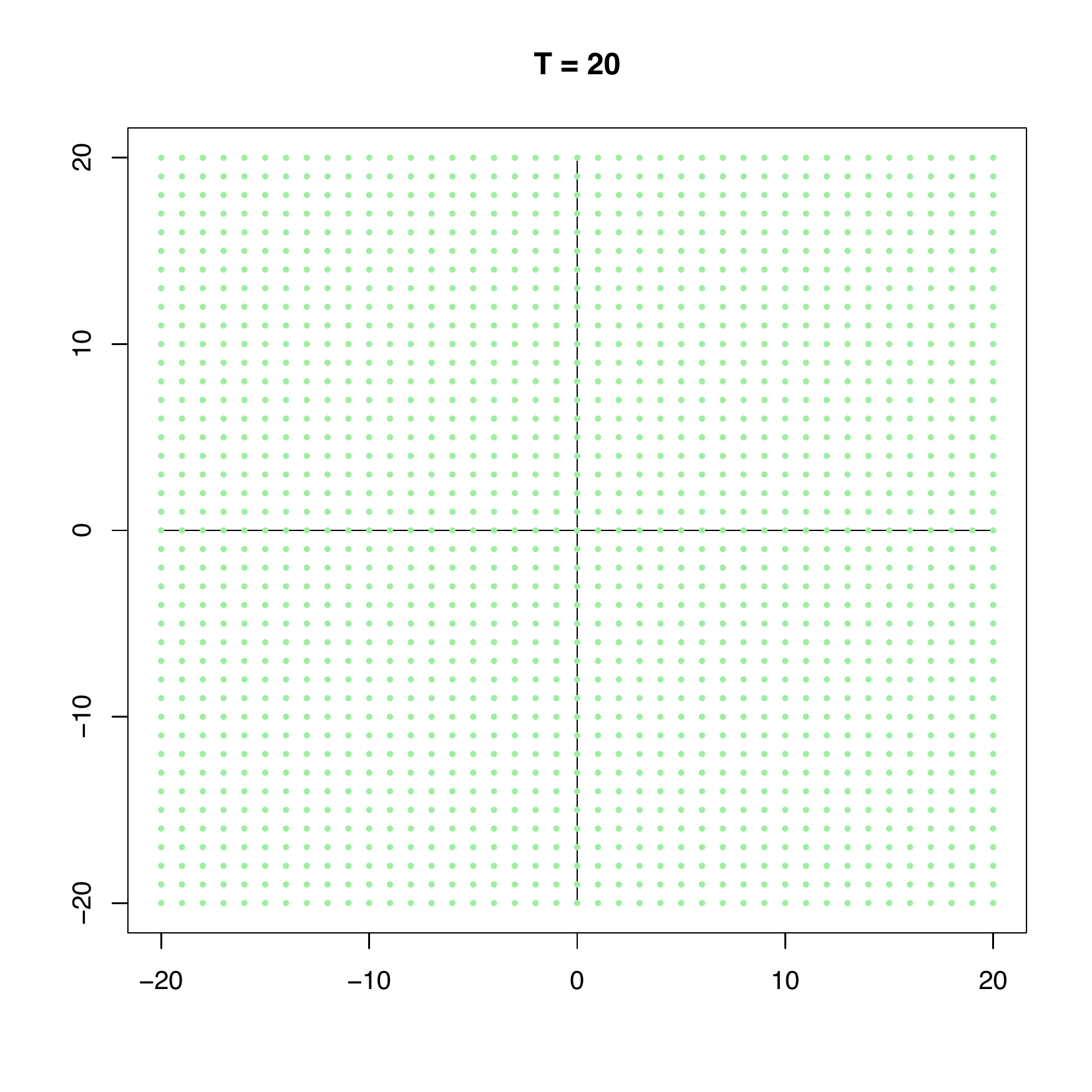} &
    \includegraphics[width=0.3\textwidth]{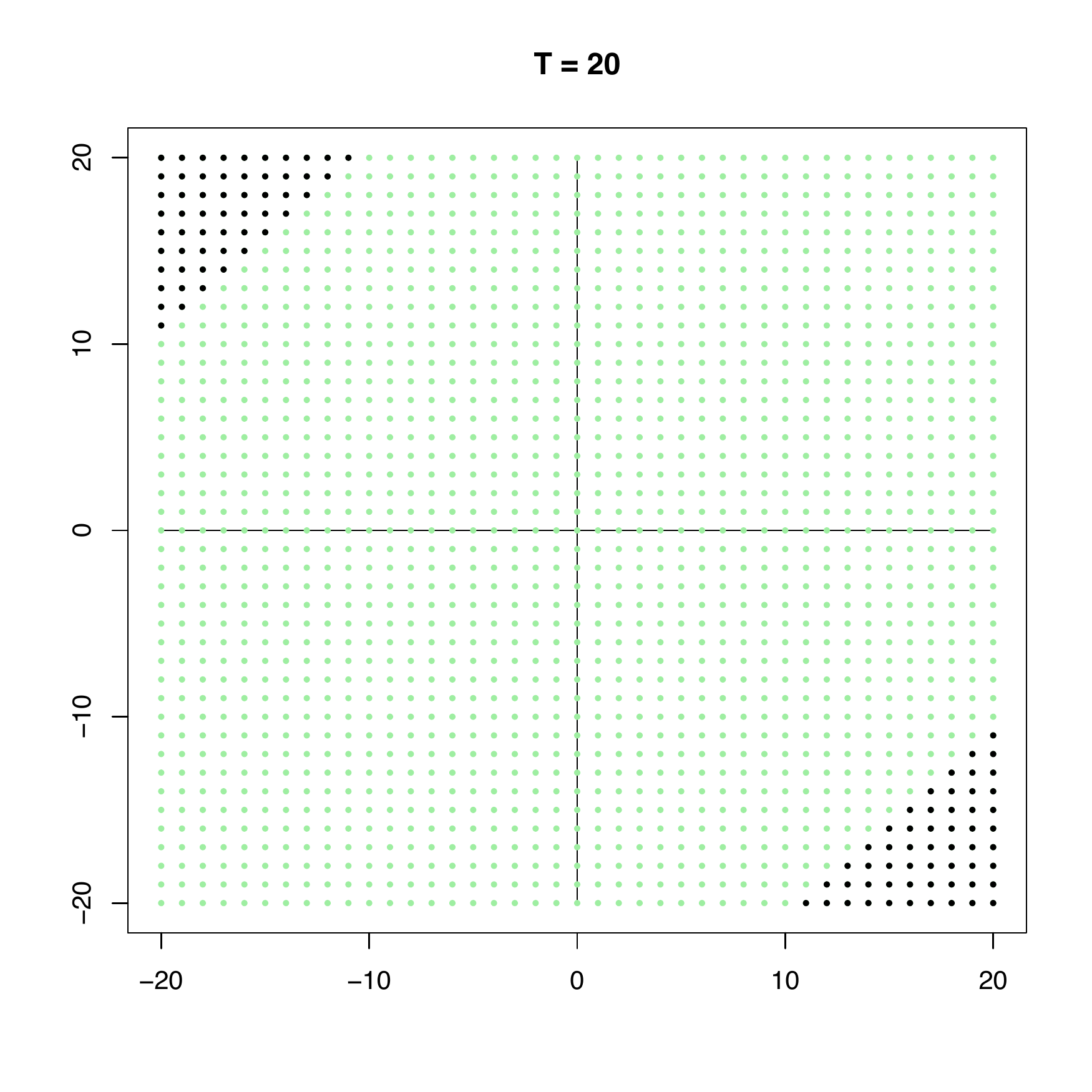} &
    \includegraphics[width=0.3\textwidth]{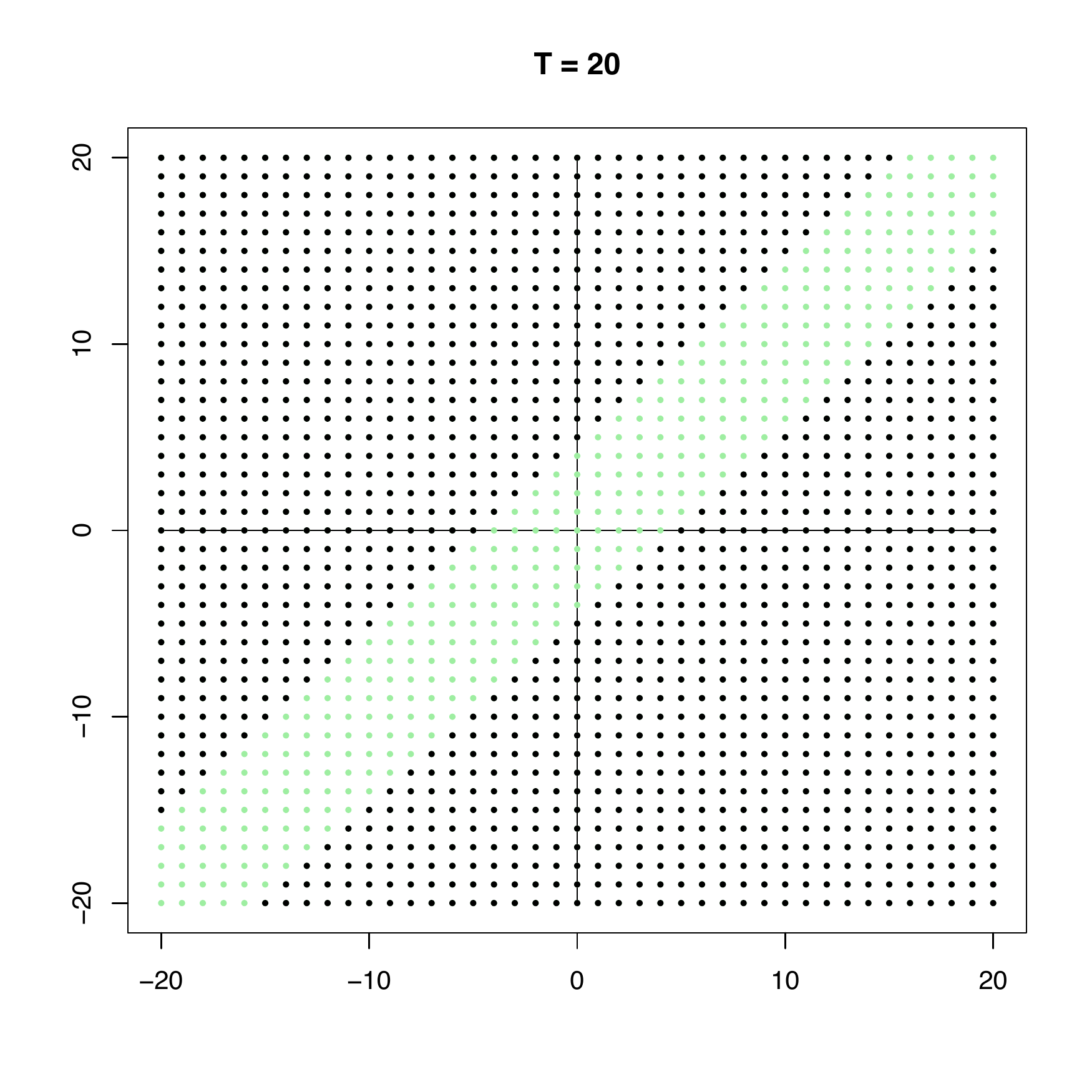}
  \end{tabular}
  \caption[Degree plots with the exponential loss function]{Green
    pixels have degree 3, black pixels have degree 2. Each 
    step is diagonally down (left), and up (if $x < y$) and right (if
    $x > y$) and both when degree is 3. Each plot uses
    $T=20,\gamma=0.1$. The values of $\eta$ are $0.08$, $0.1$ and
    $0.3$, respectively. With smaller values of $\eta$, more pixels
    have degree 3.}
    \label{multi:degexp:fig}
\end{figure}
Although the patterns are simple, with
the degree 3 pixels forming a diagonal band, we found it hard to prove
this fact formally, or compute the exact boundary of the band. However
the plots suggest that when $\eta$ is small, all pixels have degree
3. We next find conditions under which this opportunity for tractable
computation exists.

\paragraph{Efficient computation in special cases.}
Here we show that when using the exponential loss, if the edge
$\gamma$ is very small, then the potentials can be computed
efficiently. We first show an intermediate result. We already observed
empirically that when the weight parameter $\eta$ is small, the degrees all
become equal to $k$.  Indeed, we can prove this fact.
\begin{lemma}
  \label{multi:small_eta:lem}
  If the loss function being used is exponential loss
  \eqref{multi:exploss:eqn}
  and the weight parameter $\eta$ is small
  compared to the number of rounds
  \begin{equation}
    \label{multi:small_eta:eqn}
  \eta \leq \frac{1}{4}\min\set{\frac{1}{k-1}, \frac{1}{T}},
  \end{equation}
  then the optimal value
  of the degree $a$ in \eqref{multi:pot_simp:eqn} is always $k$. Therefore, in
  this situation, the
  potential $\phi_t$ using the minimal weak learning condition is the
  same as the potential $\phi_t^{\vu}$ using the $\gamma$-biased
  uniform distribution $\vu$,
  \begin{equation}
    \label{multi:ugam:eqn}
  \vu=\enc{\frac{1-\gamma}{k} + \gamma,
  \frac{1-\gamma}{k},
  \ldots,
  \frac{1-\gamma}{k}},
\end{equation}
and hence can be efficiently computed.
\end{lemma}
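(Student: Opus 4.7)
The plan is to induct on the number of remaining rounds $t$. The base case $t = 0$ is immediate since $\phi_0(\vs) = L(\vs) = \phi_0^{\vu}(\vs)$ and no degree is chosen. For the inductive step I assume $\phi_{t-1} = \phi_{t-1}^{\vu}$, so by Theorem~\ref{multi:expdgsol:thm} the inductive hypothesis gives the closed form $\phi_{t-1}(\vs) = \kappa^{t-1}\sum_{l=2}^k u_l$, where $u_l = e^{\eta(s_l - s_1)}$ and $\kappa = \kappa(\gamma,\eta)$. Evaluating $\phi_{t-1}(\vs + \ve_l)$ yields $\kappa^{t-1}e^{-\eta}S$ when $l = 1$ and $\kappa^{t-1}(S + (e^{\eta}-1)u_l)$ when $l \geq 2$, where $S = \sum_{l=2}^k u_l$.

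Next I would substitute these into the simplified recurrence of Lemma~\ref{multi:homognecsufsol:lem}. For each degree $a \in \{2,\ldots,k\}$ this gives
\[
E_a := \E_{l\sim \vb^{\pi}_a}\enco{\phi_{t-1}(\vs + \ve_l)} = \kappa^{t-1}\left\{\tfrac{1-\gamma}{a}\bigl[(e^{-\eta} + a - 1)S + (e^{\eta}-1)T_a\bigr] + \gamma e^{-\eta} S\right\},
\]
where $T_a$ denotes the sum of the largest $a-1$ of $u_2,\ldots,u_k$. A direct calculation verifies $E_k = \kappa^t S = \phi_t^{\vu}(\vs)$ (using $T_k = S$ and the definition of $\kappa$), so it suffices to show $E_k \geq E_a$ for every $a < k$. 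After cancellation this reduces to the inequality
\[
\tfrac{a(e^{\eta}+e^{-\eta}-2)}{k}S + (1-e^{-\eta})S \geq (e^{\eta}-1)T_a.
\]

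To handle this I would exploit the fact that any state $\vs$ reachable in at most $T$ rounds satisfies $|s_l - s_1| \leq T$, so the ratio between the largest and smallest $u_l$ is at most $e^{2\eta T}$. Combining this with the definition of $T_a$ as the top $a-1$ of $k-1$ positive terms, the extremal configuration gives $T_a / S \leq \rho_a := (a-1)e^{2\eta T} / \bigl[(a-1)e^{2\eta T} + (k-a)\bigr]$. Substituting and Taylor-expanding $e^{\pm\eta}$ to second order, the target inequality becomes (to leading order) $\frac{a}{k} + O(\eta) \geq \rho_a \cdot (1 + O(\eta))$, and the hypothesis $\eta \leq \tfrac{1}{4}\min\{1/(k-1),\,1/T\}$ makes both $\eta T$ and $\eta(k-1)$ at most $1/4$, which provides exactly the slack needed: $e^{2\eta T} \leq 1 + 1/(a-1)$ forces $\rho_a \leq a/k$, and the positive $\Theta(\eta^2)$ contribution from $\frac{a(e^\eta+e^{-\eta}-2)}{k}S$ absorbs the higher-order error terms.

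The main obstacle will be the last step: the crude bound $T_a \leq S$ is insufficient because it gives only $a/k \leq 1$ with equality at $a = k$, so the proof genuinely requires using the range-boundedness of reachable states together with careful second-order Taylor estimates to close the gap. Once $E_k \geq E_a$ is established for all $a < k$, Lemma~\ref{multi:homognecsufsol:lem} yields $\phi_t(\vs) = E_k = \phi_t^{\vu}(\vs)$, closing the induction, and the efficient computability of $\phi_t$ then follows immediately from the closed form in Theorem~\ref{multi:expdgsol:thm}.
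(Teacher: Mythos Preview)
Your overall architecture matches the paper's exactly: induct on $t$, use the closed form $\phi_{t-1}^{\vu}$ from Theorem~\ref{multi:expdgsol:thm}, and reduce via Lemma~\ref{multi:homognecsufsol:lem} to showing $E_k \geq E_a$ for all $a<k$. Your algebraic reduction to the inequality
\[
\frac{a(e^\eta+e^{-\eta}-2)}{k}\,S + (1-e^{-\eta})\,S \;\geq\; (e^\eta-1)\,T_a
\]
is correct, and the bound $T_a/S \leq \rho_a$ is also valid.

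The gap is in the final step. Dividing your target inequality by $(e^\eta-1)S$ and simplifying (using $e^\eta+e^{-\eta}-2=(e^\eta-1)(1-e^{-\eta})$) gives the exact equivalent
\[
\frac{T_a}{S} \;\leq\; 1 - \Bigl(1-\tfrac{a}{k}\Bigr)(1-e^{-\eta}),
\]
so the leading-order target is \emph{not} $a/k \geq \rho_a$ but rather $\rho_a \leq 1 - (1-a/k)\eta + O(\eta^2)$, which is a much weaker requirement. More concretely, your claimed bound $e^{2\eta T} \leq 1 + 1/(a-1)$ is false for $a\geq 3$: the hypothesis only gives $e^{2\eta T} \leq e^{1/2}\approx 1.65$, while $1+1/(a-1) \leq 3/2$ once $a\geq 3$. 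So the argument as written does not close.

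The fix is easy and stays within your framework: from $1-\rho_a = (k-a)/[(a-1)e^{2\eta T}+(k-a)]$ one needs only $(1-e^{-\eta})[(a-1)e^{2\eta T}+(k-a)] \leq k$, and since $(1-e^{-\eta})(k-1) \leq \eta(k-1) \leq 1/4$ and $e^{2\eta T} \leq e^{1/2}<2$, the left side is below $1/2 < k$. The paper's own proof takes a slightly different and cleaner route: it writes $E_a$ as $\gamma\xi_1 + (1-\gamma)N_a/a$ for a numerator $N_a$, rewrites $N_a = (e^\eta+e^{-\eta}-2)S - (e^\eta-1)(S-T_a)$, and shows $N_a<0$ for $a<k$ directly from $S\leq (k-1)e^{2\eta T}$ and $S-T_a \geq e^{-2\eta T}$, together with $(k-1)(1-e^{-\eta})e^{4\eta T} < \tfrac14\cdot e < 1$. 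This avoids the ratio $\rho_a$ and any Taylor expansion.
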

\begin{proof}
  We show $\phi_t = \phi_t^{\vu}$ by induction on the remaining number $t$
  of boosting iterations.
  The base case holds since, by definition, $\phi_0 = \phi_0^{\vu} =
  \Lexpe$.
  Assume, inductively that
  \begin{equation}
    \label{multi:small_eta_zero:eqn}
  \phi_{t-1}(\vs) = \phi_{t-1}^{\vu}(\vs)
  = \kappa(\gamma,\eta)^{t-1}\sum_{l=2}^ke^{\eta(s_l-s_1)},
  \end{equation}
  where the second equality follows from \eqref{multi:unifpot:eqn}.
  We show that
  \begin{equation}
    \label{multi:small_eta_one:eqn}
  \phi_t(\vs) = \E_{l\sim\vu}\enco{\phi_{t-1}(\vs + \ve_l)}.
  \end{equation}
  By the inductive hypothesis and \eqref{multi:fixed_dgsimp:eqn}, the
  right hand side of \eqref{multi:small_eta_one:eqn} is in fact equal to
  $\phi_t^{\vu}$, and we will have shown $\phi_t = \phi_t^{\vu}$. The
  proof will then follow by induction.
  
  In order to show \eqref{multi:small_eta_one:eqn}, by
  Lemma~\ref{multi:homognecsufsol:lem}, it suffices to show that 
  the optimal degree $a$ maximizing the right hand side of
  \eqref{multi:pot_simp:eqn} is 
  always $k$:
  \begin{equation}
    \label{multi:akineq:eqn}
  \E_{l\sim \vb^{\pi}_a}\enco{\phi_{t-1}\enc{\vs + \ve_l}}
  \leq
  \E_{l\sim \vb^{\pi}_k}\enco{\phi_{t-1}\enc{\vs + \ve_l}}.
  \end{equation}
  By \eqref{multi:small_eta_zero:eqn}, $\phi_{t-1}\enc{\vs+\ve_{l_0}}$ may
  be written as $\phi_{t-1}(\vs) + \kappa(\gamma,\eta)^{t-1}\cdot \xi_{l_0}$,
  where the term $\xi_{l_0}$ is:
  \[
  \xi_{l_0} =
  \begin{cases}
    (e^\eta-1)e^{\eta(s_{l_0} - s_1)} & \mbox { if } l_0 \neq 1, \\
    (e^{-\eta}-1)\sum_{l=2}^ke^{\eta(s_l-s_1)} & \mbox { if } l_0 = 1.
  \end{cases}
  \]
  Therefore \eqref{multi:akineq:eqn} is the same as:
  $
  \E_{l\sim \vb^{\pi}_a}\enco{\xi_l}
  \leq
  \E_{l\sim \vb^{\pi}_k}\enco{\xi_l}
  $.
  Assume (by relabeling if necessary) that $\pi$ is the identity
  permutation on coordinates $\set{2,\ldots,k}$. 
  Then the expression $\E_{l\sim \vb^{\pi}_a}\enco{\xi_l}$ can be
  written as
  \begin{eqnarray*}
    \E_{l\sim \vb^{\pi}_a}\enco{\xi_l} &=&
    \enc{\frac{1-\gamma}{a} + \gamma}\xi_1
    + \sum_{l=k-a+2}^k \enc{\frac{1-\gamma}{a}}\xi_l \\
    &=&
    \gamma\xi_1 +
    (1-\gamma)
    \enct{\frac{\xi_1 + \sum_{l=k-a+2}^k\xi_l}{a}}.
  \end{eqnarray*}
  It suffices to show that the term in curly brackets is maximized
  when $a=k$. We will in fact show that the numerator of the term is
  negative if $a<k$, and non-negative for $a=k$, which will complete
  our proof.
  Notice that the numerator can be written as
  \begin{eqnarray*}
    &&
    (e^\eta-1)\enct{\sum_{l=k-a+2}^ke^{\eta(s_{l} - s_1)}}
   -
   (1-e^{-\eta})\sum_{l=2}^ke^{\eta(s_l-s_1)}\\
   &=&
   (e^\eta-1)\enct{\sum_{l=k-a+2}^ke^{\eta(s_{l} -
       s_1)}-\sum_{l=2}^ke^{\eta(s_l-s_1)}}
   +
   \enct{(e^\eta-1)-(1-e^{-\eta})}
   \sum_{l=2}^ke^{\eta(s_l-s_1)}\\
   &=&
   \enct{e^\eta+e^{-\eta}-2}
   \sum_{l=2}^ke^{\eta(s_l-s_1)}
   -
   (e^\eta-1)\enct{
     \sum_{l=2}^{k-a+1}e^{\eta(s_l-s_1)}}.
 \end{eqnarray*}
 When $a=k$, the second summation disappears, and we are left with a
 non-negative expression.
 However when $a<k$, the second summation is
 at least $e^{\eta\enc{s_2-s_1}}$.
 Since $t\leq T$, and in $t$ iterations
 the absolute value of any state coordinate $\abs{s_t(l)}$ is at most
 $T$, the first summation is at most $(k-1)e^{2\eta T}$ and the
 second summation is at least $e^{-2\eta T}$. Therefore 
 the previous expression is at most
 \begin{eqnarray*}
   &&
   (k-1)\enc{e^{\eta}+e^{-\eta}-2}e^{2\eta T}
   -
   (e^\eta-1)e^{-2\eta T} \\
   &=&
   (e^\eta-1)e^{-2\eta T}
   \enct{(k-1)(1-e^{-\eta})e^{4\eta T} -1}.
 \end{eqnarray*}
 We show that the term in curly brackets is negative.
 Firstly, using $e^x \geq 1+x$, we have
 $1-e^{-\eta} \leq \eta \leq 1/(4(k-1))$ by choice of
 $\eta$. Therefore it suffices to show that $e^{4\eta T} < 4$. By
 choice of $\eta$ again, $e^{4\eta T} \leq e^{1} < 4$.
 This completes our proof.
\end{proof}
The above lemma seems to suggest that under certain conditions, a sort
of degeneracy occurs, and the
optimal Booster payoff \eqref{multi:payoff:eqn} is nearly unaffected
by whether we use the minimal weak learning condition, or the
condition $(\Csig,\Ugam)$. Indeed, we next prove this fact.
\old{
\begin{theorem}
  \label{multi:same_wlc:thm}
  Suppose the loss function is as in
  Lemma~\ref{multi:small_eta:lem}, and for some parameter $\eps > 0$,
  the number of examples $m$ is large enough
  \begin{equation}
    \label{multi:mlargetwo:eqn}
    m \geq \frac{Te^{1/4}}{\eps}.
  \end{equation}
  Consider the minimal weak learning
  condition \eqref{multi:minwl:eqn}, and the fixed edge-over-random
  condition $(\Csig,\Ugam)$ corresponding to the $\gamma$-biased
  uniform baseline $\Ugam$.
  Then the optimal booster payoffs using
  either condition is within $\eps$ of each other.
\end{theorem}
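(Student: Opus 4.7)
The overall strategy is a sandwich argument centered on Lemma~\ref{multi:small_eta:lem}: under the smallness condition on $\eta$, the new potentials $\phi_T$ defined in \eqref{multi:dgnecsuf:eqn} for the minimal weak learning condition coincide exactly with the fixed-baseline potentials $\phi_T^{\vu}$ for the $\gamma$-biased uniform baseline $\vu$ from \eqref{multi:ugam:eqn}. Once this equality is secured, the two optimal payoffs can differ only by the slack in the drifting-games upper and lower bounds.

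Let $P_{\min}$ and $P_{\text{fix}}$ denote the optimal booster payoffs under the minimal condition \eqref{multi:minwl:eqn} and the fixed condition $(\Csig,\Ugam)$, respectively. First I would note the monotonicity $P_{\text{fix}} \leq P_{\min}$: any space satisfying $(\Csig,\Ugam)$ automatically satisfies \eqref{multi:minwl:eqn} (the latter requires only that \emph{some} $\B\in\Bgam$ be beaten), so the minimal condition constrains the Weak-Learner less, making Booster's task weakly harder. Next, the upper-bound half of Theorem~\ref{multi:dgstrat:thm} applied to $(\Csig,\Ugam)$ gives $P_{\text{fix}} \leq \phi_T^{\vu}(\vzero)$, and the analogous statement for the new potentials (an analog of Theorem~\ref{multi:dgstrat:thm}, e.g.\ Theorem~\ref{multi:mindg:thm}) gives $P_{\min} \leq \phi_T(\vzero)$. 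Lemma~\ref{multi:small_eta:lem} identifies these two right-hand sides.

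For the matching lower bound I would invoke the near-tightness half of Theorem~\ref{multi:dgstrat:thm} on the fixed condition $(\Csig,\Ugam)$: provided $m$ is at least $T$ times the loss diameter divided by $\eps$, no booster strategy can guarantee loss smaller than $\phi_T^{\vu}(\vzero) - \eps$. The diameter must be bounded under the assumption $\eta T \leq 1/4$: any reachable state $\vs$ in $T$ rounds has $|s_l - s_1| \leq T$, so each summand $e^{\eta(s_l-s_1)}$ of $\Lexpe(\vs)$ lies in $[e^{-1/4}, e^{1/4}]$, justifying the sample-size requirement \eqref{multi:mlargetwo:eqn}. Putting everything together,
\[
\phi_T^{\vu}(\vzero) - \eps \;\leq\; P_{\text{fix}} \;\leq\; P_{\min} \;\leq\; \phi_T(\vzero) \;=\; \phi_T^{\vu}(\vzero),
\]
so $|P_{\min} - P_{\text{fix}}| \leq \eps$, which is the desired conclusion.

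The main technical nuisance is the diameter bookkeeping: a naive bound on $\Lexpe$ carries a factor of $(k-1)$ from the outer sum that is missing from the hypothesis \eqref{multi:mlargetwo:eqn}, so I would either sharpen this estimate (using that the coordinates of a reachable $\vs$ sum to $T$, which forces cancellation among the summands) or rescale the loss so that the constant $e^{1/4}$ suffices. All other steps --- monotonicity in the strength of the weak-learning condition, the two halves of the drifting-games theorem, and the invocation of Lemma~\ref{multi:small_eta:lem} --- follow mechanically from the machinery already in place.
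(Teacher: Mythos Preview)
Your sandwich argument is correct and is a genuinely different route from the paper's proof. The paper proceeds by showing that under the hypothesis of Lemma~\ref{multi:small_eta:lem} the two OS strategies are \emph{literally identical}: not only are the potentials equal (so the cost matrices in \eqref{multi:fixed_ctopt:eqn} and \eqref{multi:ctopt:eqn} coincide), but the constraints these cost matrices impose on Weak-Learner also coincide, i.e.\ $\C_t\bullet\Ugam = \max_{\B\in\Bgam}\C_t\bullet\B$ for the particular $\C_t$ chosen. Once the two games are shown to be the same game, both the upper and lower bounds of Theorems~\ref{multi:dgstrat:thm} and~\ref{multi:mindg:thm} transfer directly. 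Your approach bypasses the constraint-matching step entirely: the monotonicity $P_{\text{fix}}\leq P_{\min}$ together with the potential equality $\phi_T(\vzero)=\phi_T^{\vu}(\vzero)$ and the lower bound on $P_{\text{fix}}$ alone suffice for the sandwich. This is more economical for the theorem at hand. The paper's stronger intermediate fact, however, is not wasted: the identity $\C_t\bullet\Ugam = \max_{\B\in\Bgam}\C_t\bullet\B$ (equation~\eqref{multi:max_is_ugam:eqn}) is reused later in Section~\ref{multi:adapt:sec} to simplify the computation of the edge $\delta_t$ in \eqref{multi:deltat:eqn}, so their detour pays off downstream.

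On the diameter bookkeeping: the paper handles this with the one-line claim that the discrepancy in losses between reachable states is at most $e^{\eta T}$, which with $\eta T\leq 1/4$ gives the $e^{1/4}$ in \eqref{multi:mlargetwo:eqn}. Your concern about a stray $(k-1)$ factor is reasonable, and the paper's bound is stated somewhat loosely on this point as well; in either argument this is a side calculation that does not affect the structure of the proof.
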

}
\begin{proof}
  \old{
  We show that the OS strategies arising out of either condition is
  the same.
  In other words, at any iteration $t$ and state $\vs_t$,
  both strategies play the same cost matrix and enforce the same
  constraints on the response of Weak-Learner.
  The theorem will then
  follow if we can invoke Theorems~\ref{multi:dgstrat:thm} and
  \ref{multi:mindg:thm}.
  For that, the number of examples needs to be as large as in
  \eqref{multi:mlarge:eqn}.
  The required largeness would follow from \eqref{multi:mlargetwo:eqn}
  if the loss function satisfied \eqref{multi:lossvar:eqn} with
  $\diameter(L,T)$ at most $\exp(1/4)$.
  Since the largest discrepancy in losses between two states reachable
  in $T$ iterations is at most $e^{\eta T} - 0$, the bound follows
  from the choice of $\eta$ in \eqref{multi:small_eta:eqn}.
  Therefore, it suffices to show the equivalence of the OS strategies
  corresponding to the two weak learning conditions.
  }

  We first show both strategies play the same cost-matrix.
  Lemma~\ref{multi:small_eta:lem} states that the potential function
  using the minimal weak learning condition is the same as
  when using the fixed condition $(\Csig,\Ugam)$: $\phi_t =
  \phi^{\vu}_t$, where $\vu$ is as in \eqref{multi:ugam:eqn}. Since,
  according to \eqref{multi:fixed_ctopt:eqn} and
  \eqref{multi:ctopt:eqn}, 
  given a state $\vs_t$ and iteration $t$, the two
  strategies choose cost matrices that are identical functions of the
  respective potentials, by the equivalence of the potential
  functions, the resulting cost matrices must be the same.

  Even with the same cost matrix, the two different conditions could
  be imposing different constraints on Weak-Learner, which might
  affect the final payoff. For instance,
  with the baseline $\Ugam$, Weak-Learner has to return a weak
  classifier $h$ satisfying
  \[
  \C_t\bullet\vh \leq \C_t\bullet\Ugam,
  \]
  whereas with the minimal condition, the constraint on $h$ is
  \[
  \C_t\bullet\vh \leq \max_{\B\in\Bgam}\C_t\bullet\B.
  \]
  In order to show that the constraints are the same we therefore need
  to show that for the common cost matrix $\C_t$ chosen, the right
  hand side of the two previous expressions are the same:
  \begin{equation}
    \label{multi:max_is_ugam:eqn}
  \C_t\bullet\Ugam = \max_{\B\in\Bgam}\C_t\bullet\Bgam.
  \end{equation}
  We will in fact show the stronger fact that the equality holds for every row
  separately:
  \begin{equation}
    \label{multi:same_wlc_one:eqn}
  \forall i: \dotp{\C_t(i)}{\vu} = \max_{\vb\in\dgam}\dotp{\C_t(i)}{\vb}.
  \end{equation}
  To see this, first observe that the choice of the optimal
  cost matrix $\C_t$ in \eqref{multi:ctopt:eqn} implies the following
  identity
  \[
  \dotp{\C_t(i)}{\vb}
  = \E_{l\sim\vb}\enco{\phi_{T-t-1}(\vs_t(i)+ \ve_l)}.
  \]
  On the other hand, \eqref{multi:pot_simp:eqn} and
  Lemma~\ref{multi:small_eta:lem} together imply that the distribution
  $\vb$ maximizing the right hand side of the above is the
  $\gamma$-biased uniform distribution, from which
  \eqref{multi:same_wlc_one:eqn} follows.
  Therefore, the constraints placed on Weak-Learner by the
  cost-matrix $\C_t$ is the same whether we
  use minimum weak learning condition or the
  fixed condition $(\Csig,\Ugam)$.
\end{proof}

One may wonder why $\eta$ would be chosen so small, especially since
the previous theorem indicates that such choices for $\eta$ lead to
degeneracies.
To understand this, recall that $\eta$ represents the size of the
weights $\alpha_t$ chosen in every round, and was introduced as a
tunable parameter to help achieve the best possible upper bound on
zero-one error. More 
precisely, recall that the exponential loss $\Lexpe(\vs)$ of the
unweighted state, defined in \eqref{multi:exploss:eqn}, is equal to the
exponential loss $\Lexp(\f)$ on the weighted state, defined in
\eqref{multi:exploss_varstate:eqn}, which in turn is an upper bound on 
the error $\Lzero(\f_T)$ of the final weighted state $\f_T$. Therefore
the potential value $\phi_T(\vzero)$ based on the exponential loss
$\Lexpe$ is an upper bound on the minimum 
error attainable after $T$ rounds of boosting. 
At the same time, $\phi_T(\vzero)$ is a function of $\eta$.
Therefore, we
 may tune this parameter to attain the best bound possible.
 Even with this motivation, it may seem that a properly tuned $\eta$ will
 not be as small as in Lemma~\ref{multi:small_eta:lem}, especially
 since it can be shown that the resulting loss bound $\phi_T(\vzero)$
 will always be larger than a fixed constant (depending on 
 $\gamma,k$), no matter how many rounds $T$ of boosting is used.
 However, the next result identifies conditions under which the tuned
 value of $\eta$ is indeed as small as in Lemma~\ref{multi:small_eta:lem}.
 This happens when the edge $\gamma$ is very small, as is described in
 the next theorem. Intuitively, a weak classifier achieving small edge
 has low accuracy, and a low weight reflects Booster's lack of
 confidence in this classifier.
\begin{theorem}
  \label{multi:small_edge:thm}
  When using the exponential loss function \eqref{multi:exploss:eqn},
  and the minimal weak learning condition \eqref{multi:minwl:eqn},
  the loss upper bound $\phi_T(\vzero)$ provided by
  Theorem~\ref{multi:mindg:thm} is more than $1$ and hence trivial
  unless the parameter $\eta$ is chosen sufficiently small compared to
  the edge $\gamma$:
  \begin{equation}
    \label{multi:eta_gam:eqn}
    \eta \leq \frac{k\gamma}{1-\gamma}.
  \end{equation}
  In particular, when the edge is very small:
  \begin{equation}
    \label{multi:small_gam:eqn}
  \gamma \leq
  \min\set{
    \frac{1}{2},
    \frac{1}{8k}
    \min\set{\frac{1}{k},\frac{1}{T}}
    },
  \end{equation}
  the value of $\eta$ needs to be as small as in
  \eqref{multi:small_eta:eqn}. 
\end{theorem}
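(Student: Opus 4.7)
The plan is to lower-bound the minimal-condition potential $\phi_T(\vzero)$ by the fixed-baseline potential $\phi_T^{\vu}(\vzero)$ corresponding to $\vu = \Ugam$, the $\gamma$-biased uniform edge-over-random distribution from \eqref{multi:ugam:eqn}, and then to show that this quantity alone already exceeds $1$ whenever $\eta > k\gamma/(1-\gamma)$.

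First I will establish the pointwise monotonicity $\phi_t(\vs) \geq \phi_t^{\vu}(\vs)$ by induction on $t$. The base case $t=0$ is immediate since both sides equal the loss $\Lexpe(\vs)$. For the inductive step, I will use that $\vu\in\dgam$, so from the recurrences \eqref{multi:homognecsufsol:eqn} and \eqref{multi:fixed_dgsimp:eqn},
\[
\phi_t(\vs)
= \max_{\vb\in\dgam}\E_{l\sim\vb}\bigl[\phi_{t-1}(\vs+\ve_l)\bigr]
\geq \E_{l\sim\vu}\bigl[\phi_{t-1}(\vs+\ve_l)\bigr]
\geq \E_{l\sim\vu}\bigl[\phi_{t-1}^{\vu}(\vs+\ve_l)\bigr]
= \phi_t^{\vu}(\vs),
\]
where the middle inequality is the inductive hypothesis applied pointwise inside the expectation. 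Combined with the closed form $\phi_T^{\vu}(\vzero) = (k-1)\kappa(\gamma,\eta)^T$ from Theorem~\ref{multi:expdgsol:thm} and \eqref{multi:unifpot:eqn}, the task reduces to showing $\kappa(\gamma,\eta) > 1$ whenever $\eta > k\gamma/(1-\gamma)$.

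For this algebraic step I will use the identity $e^\eta + e^{-\eta} - 2 = (e^\eta-1)(1-e^{-\eta})$ to simplify the definition \eqref{multi:kappa:eqn}. Since $1-e^{-\eta}>0$ for $\eta>0$, dividing through rearranges $\kappa(\gamma,\eta) > 1$ into the equivalent condition $e^\eta > 1 + k\gamma/(1-\gamma)$. Because $e^\eta > 1+\eta$ strictly for $\eta>0$, the hypothesis $\eta > k\gamma/(1-\gamma)$ yields exactly this, so $\phi_T(\vzero)\geq (k-1)\kappa(\gamma,\eta)^T > k-1 \geq 1$. This proves the contrapositive of the first claim; the degenerate $\eta=0$ case is immediate since then $\phi_T^{\vu}(\vzero) = k-1 \geq 1$ directly.

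For the second claim, I will substitute the hypothesized bound on $\gamma$ in \eqref{multi:small_gam:eqn}. Since $\gamma\leq 1/2$, we have $1-\gamma\geq 1/2$ and hence $k\gamma/(1-\gamma)\leq 2k\gamma$; combined with $\gamma\leq \frac{1}{8k}\min\{1/k,1/T\}$ this yields $\eta \leq k\gamma/(1-\gamma) \leq 2k\gamma \leq \tfrac{1}{4}\min\{1/k,1/T\}\leq \tfrac{1}{4}\min\{1/(k-1),1/T\}$, which matches \eqref{multi:small_eta:eqn}. The only non-routine step in the entire argument is the monotonicity $\phi_T\geq\phi_T^{\vu}$; once it is in hand, the rest is elementary manipulation of the closed-form $\kappa$ and of the hypothesis on $\gamma$.
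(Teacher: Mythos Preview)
Your proof is correct and follows essentially the same approach as the paper's: both lower-bound $\phi_T(\vzero)$ by the fixed-baseline potential $\phi_T^{\vu}(\vzero)=(k-1)\kappa(\gamma,\eta)^T$, reduce the question to $\kappa(\gamma,\eta)>1$, and obtain this via the same algebra (your identity $e^\eta+e^{-\eta}-2=(e^\eta-1)(1-e^{-\eta})$ is exactly the simplification the paper uses to write $(e^\eta+e^{-\eta}-2)/(1-e^{-\eta})=e^\eta-1$, after which both invoke $e^\eta-1\geq\eta$). Your inductive argument for $\phi_t\geq\phi_t^{\vu}$ via the recurrences \eqref{multi:homognecsufsol:eqn} and \eqref{multi:fixed_dgsimp:eqn} is a slightly more explicit version of the paper's appeal to the random-walk solutions \eqref{multi:rws:eqn} and \eqref{multi:fixed_rw:eqn}, and your handling of the second claim is identical to the paper's.
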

\begin{proof}
  Comparing solutions \eqref{multi:rws:eqn} and
  \eqref{multi:fixed_rw:eqn} to the potentials corresponding to the
  minimal weak learning condition and a fixed edge-over-random
  condition, we may conclude that the loss bound $\phi_T(\vzero)$ is
  in the former case is larger than $\phi_T^{\vb}(\vzero)$, for any
  edge-over-random distribution $\vb\in\dgam$. In particular, when
   $\vb$ is set to be the $\gamma$-biased uniform distribution $\vu$, as defined
  in \eqref{multi:ugam:eqn}, we get $\phi_T(\vzero) \geq
  \phi_T^{\vu}(\vzero)$. Now the latter bound, according to
  \eqref{multi:unifpot:eqn}, is $\kappa(\gamma,\eta)^T$, where
  $\kappa$ is defined as in \eqref{multi:kappa:eqn}. Therefore, to get
  non-trivial loss bounds which are at most $1$, we need to choose $\eta$
  such that $\kappa(\gamma,\eta) \leq 1$. By \eqref{multi:kappa:eqn},
  this happens when
  \begin{eqnarray*}
  \enc{1-e^{-\eta}}\gamma &\geq&
  \enc{e^\eta+e^{-\eta}-2}\enc{\frac{1-\gamma}{k}} \\
  \mbox{ i.e., }
  \frac{k\gamma}{1-\gamma}
  &\geq&
  \frac{e^\eta+e^{-\eta}-2}{1-e^{-\eta}}
  = e^\eta-1 \geq \eta.
\end{eqnarray*}
Therefore \eqref{multi:eta_gam:eqn} holds.
When $\gamma$ is as small as in \eqref{multi:small_gam:eqn}, then
$1-\gamma \leq \frac{1}{2}$, and therefore, by
\eqref{multi:eta_gam:eqn}, the bound on $\eta$ becomes 
identical to that in \eqref{multi:small_gam:eqn}.
\end{proof}
The condition in the previous theorem, that of the existence of only a
very small edge, is the most we can assume for most practical
datasets. Therefore, in such situations, we can compute the optimal
Booster strategy that uses the minimal weak learning conditions.
More importantly, using this result, we derive, in the next section,
a highly efficient and practical \emph{adaptive} algorithm,
that is, one that does not require any prior knowledge about the edge
$\gamma$, and will therefore work with any dataset.

\section{Variable edges}
\label{multi:adapt:sec}
So far we have required Weak-Learner to beat random by at least a
fixed amount $\gamma > 0$ in each round of the boosting game. In
reality, the edge over random is larger initially, and gets smaller as
the OS algorithm creates harder cost matrices. Therefore requiring a
fixed edge is either unduly pessimistic or overly optimistic. If the
fixed edge is too small, not enough progress is made in the initial
rounds, and if the edge is too large, Weak-Learner fails to meet the
weak-learning condition in latter rounds. We fix this by not making
any assumption about the edges, but instead 
\emph{adaptively} responding to the edges returned by 
Weak-Learner. In the rest of the section we describe the adaptive
procedure, and the resulting loss bounds guaranteed by it.

The philosophy behind the adaptive algorithm is a boosting game where
Booster and Weak Learner no longer have opposite goals, but cooperate
to reduce error as fast as possible.
However, in order to create a clean abstraction and separate
implementations of the boosting algorithms and the weak learning
procedures as much as possible, we assume neither of the players has
any knowledge of the details of the algorithm employed by the other player.
In particular Booster may only assume that Weak Learner's strategy is
barely strong enough to guarantee boosting. Therefore, Booster's
demands on the weak classifiers returned by Weak Learner should be
minimal, and it should send the weak learning algorithm the
``easiest'' cost matrices that will ensure boostability.
In turn, Weak Learner may only assume a very weak Booster strategy,
and therefore return a weak classifier that performs as well as
possible with respect to the cost matrix sent by Booster.

At a high level, the adaptive strategy proceeds as follows. At any
iteration, based on the states of the examples and number of remaining
rounds of boosting, Booster chooses the game-theoretically optimal
cost matrix assuming only infinitesimal edges in the remaining
rounds.
Intuitively, Booster has no high expectations of Weak Learner,
and supplies it the easiest cost matrices with which it may be able to
boost. 
However, in the adaptive setting, Weak-Learner is no longer adversarial.
Therefore, although only infinitesimal edges are anticipated by
Booster, Weak Learner cooperates in returning weak classifiers that
achieve as large edges as possible, which will be more than just
inifinitesimal.
Based on the exact edge received in each round, Booster chooses the
weight $\alpha_t$ adaptively to reach the most favourable state possible.
Therefore, Booster plays game theoretically assuming an adversarial
Weak Learner and expecting only the smallest edges in the future rounds,
although Weak Learner actually cooperates, and Booster adaptively
exploits this favorable behavior as much as possible.
This way the boosting algorithm remains robust to a poorly performing
Weak Learner, and yet can make use of a powerful weak learning
algorithm whenever possible.

We next describe the details of the adaptive procedure.
With variable weights we need to work with the weighted state
$\f_t(i)$ of each example $i$, defined in \eqref{multi:var_state:eqn}.
To keep the compuations tractable, we will only be
working with the exponential loss $\Lexp(\f)$ on the weighted states.
We first
describe how Booster chooses the cost-matrix in each round. Following
that we describe how it adaptively computes the weights in each round
based on the edge of the weak classifier received.

\paragraph{Choosing the cost-matrix.}
As discussed before, at any iteration $t$ and state $\f_t$ Booster
assumes that it will receive an infinitesimal edge $\gamma$ in each of the
remaining rounds. Since the step size is a function of the edge, which
in turn is expected to be the same tiny value in each round, we may
assume that the step size in each round will also be some fixed
value $\eta$. We are therefore in the setting of
Theorem~\ref{multi:small_edge:thm}, which states that 
the parameter $\eta$ in the exponential loss function
\eqref{multi:exploss:eqn} should also be tiny to get any non-trivial
bound.
But then the loss function satisfies the conditions in
Lemma~\ref{multi:small_eta:lem}, and by
Theorem~\ref{multi:same_wlc:thm}, the game theoretically optimal
strategy remains the same whether we use the minimal condition or
$(\Csig,\Ugam)$. When using the latter condition, 
the optimal choice of the cost-matrix at iteration $t$ and state
$\f_t$, according to \eqref{multi:unifsol2:eqn}, is 
\begin{equation}
  \label{multi:smalleta:eqn}
  C_t(i,l) =
  \begin{cases}
    \enc{e^{\eta}-1}e^{f_{t-1}(i,j)-f_{t-1}(i,1)}& \mbox { if } l>1, \\
    \enc{e^{-\eta}-1}\sum_{j=2}^k e^{f_{t-1}(i,j)-f_{t-1}(i,1)} & \mbox { if
    } l=1.
  \end{cases}
\end{equation}
Further, when using the condition $(\Csig,\Ugam)$,
the average potential of the states $\f_t(i)$, according to
\eqref{multi:unifpot:eqn}, 
is given by the average loss 
\eqref{multi:avgexploss:eqn} of the state times $\kappa(\gamma,\eta)^{T-t}$,
where the function $\kappa$ is defined in
\eqref{multi:kappa:eqn}.
Our goal is to choose $\eta$ as a function of $\gamma$ so that
$\kappa(\gamma,\eta)$ is as small as possible.
Now, there is no lower bound on how small the edge $\gamma$ may get,
and, anticipating the worst, it makes sense to choose an infinitesimal
$\gamma$, in the spirit of~\citep{Freund01}.
Eq.~\eqref{multi:kappa:eqn} then implies that the
choice of $\eta$ should also be infinitesimal.
Then the above choice of the cost matrix becomes the following
(after some rescaling):
\begin{eqnarray}
  \label{multi:adapt_copt:eqn}
  C_t(i,l) &=& \lim_{\eta\to 0} C_\eta(i,l) \eqdef \frac{1}{\eta}
\begin{cases}
  \enc{e^{\eta}-1}e^{f_{t-1}(i,j)-f_{t-1}(i,1)}& \mbox { if } l>1, \\
  \enc{e^{-\eta}-1}\sum_{j=2}^k e^{f_{t-1}(i,j)-f_{t-1}(i,1)} & \mbox { if
  } l=1.
\end{cases} \nonumber \\
&=&
\begin{cases}
  e^{f_{t-1}(i,j)-f_{t-1}(i,1)}& \mbox { if } l>1, \\
  -\sum_{j=2}^k e^{f_{t-1}(i,j)-f_{t-1}(i,1)} & \mbox { if } l=1.
\end{cases}
\end{eqnarray}
We have therefore derived the optimal cost matrix played by the
adaptive boosting strategy, and we record this fact.
\begin{lemma}
  \label{multi:adapt_copt:lem}
  Consider the boosting game using the minimal weak learning condition
  \eqref{multi:minwl:eqn}.
  Then, in iteration $t$ at state $\f_t$, the game-theoretically
  optimal Booster strategy chooses the cost matrix $\C_t$ given in
  \eqref{multi:adapt_copt:eqn}. 
\end{lemma}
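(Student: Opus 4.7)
The plan is to trace through the chain of reductions that has been built up in the preceding sections, ending with a rescaled limit as $\eta\to 0$. First I would set up the adaptive Booster's reasoning: at iteration $t$ with weighted state $\f_t$, Booster does not know what edge the Weak-Learner will return in subsequent rounds, so it should plan against the worst admissible edge, i.e.\ an arbitrarily small $\gamma>0$. Under this pessimistic assumption the step size $\eta$ must also be taken small, since otherwise the per-round contraction factor $\kappa(\gamma,\eta)$ of \eqref{multi:kappa:eqn} exceeds $1$ and the bound of Theorem~\ref{multi:mindg:thm} becomes trivial; this is exactly the content of Theorem~\ref{multi:small_edge:thm}.

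Next I would invoke Lemma~\ref{multi:small_eta:lem} together with Theorem~\ref{multi:same_wlc:thm}: once $\eta$ is small enough, the potential $\phi_t$ associated with the minimal weak-learning condition coincides with the potential $\phi_t^{\vu}$ for the fixed edge-over-random condition $(\Csig,\Ugam)$, and the OS strategies for the two conditions produce identical cost matrices. This allows me to replace the (computationally intractable) minimal-condition optimization by the closed-form optimum available for the uniform baseline. In particular, the optimal cost matrix in this regime is given by \eqref{multi:unifsol2:eqn}, rewritten in terms of the weighted states $\f_{t-1}$.

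I would then carry out the $\gamma\downarrow 0$, $\eta\downarrow 0$ limit. Since scaling a cost matrix by a positive constant does not change the constraint it places on the weak classifier (both $\C_t\bullet\vh$ and the right-hand side scale together), I may divide the expression in \eqref{multi:unifsol2:eqn} by $\eta$ before taking the limit. Using $(e^{\eta}-1)/\eta\to 1$ and $(e^{-\eta}-1)/\eta\to -1$, the entries $\bigl(e^{\eta}-1\bigr)e^{f_{t-1}(i,l)-f_{t-1}(i,1)}$ converge to $e^{f_{t-1}(i,l)-f_{t-1}(i,1)}$ for $l>1$, and $\bigl(e^{-\eta}-1\bigr)\sum_{l>1}e^{f_{t-1}(i,l)-f_{t-1}(i,1)}$ converges to $-\sum_{l>1}e^{f_{t-1}(i,l)-f_{t-1}(i,1)}$ for $l=1$. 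This produces exactly \eqref{multi:adapt_copt:eqn}.

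The only subtle point, and the main obstacle, is justifying that the game-theoretic optimum commutes with the $\eta\to 0$ limit after rescaling: I need to argue that the rescaling is harmless (so the limiting $\C_t$ induces the same feasibility constraint on Weak-Learner as every $\C_\eta$ in the family), and that Theorem~\ref{multi:same_wlc:thm} applies for all sufficiently small $\eta>0$, so the limit is taken along a family of actually optimal cost matrices rather than merely approximately optimal ones. Both follow from the cone structure of $\Csig$ (so positive rescaling preserves admissibility) and from the quantitative bound \eqref{multi:small_eta:eqn}, which is eventually satisfied for any $\eta$ that is small enough. Once this is in place, the formula in \eqref{multi:adapt_copt:eqn} is the game-theoretically optimal cost matrix, as claimed.
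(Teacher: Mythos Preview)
Your proposal is correct and follows essentially the same derivation as the paper: invoke Theorem~\ref{multi:small_edge:thm} to force $\eta$ small, then use Lemma~\ref{multi:small_eta:lem} and Theorem~\ref{multi:same_wlc:thm} to reduce the minimal condition to the fixed condition $(\Csig,\Ugam)$, take the closed-form optimum \eqref{multi:unifsol2:eqn}, rescale by $1/\eta$, and let $\eta\to 0$ to obtain \eqref{multi:adapt_copt:eqn}. If anything, your treatment of the limiting step (appealing to the cone structure of $\Csig$ and the quantitative bound \eqref{multi:small_eta:eqn}) is more explicit than the paper's, which simply records the lemma after the preceding computation.
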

We next show how to adaptively choose the weights $\alpha_t$.

\paragraph{Adaptively choosing weights.}
Once Weak Learner returns a weak classifier $h_t$, Booster chooses the
optimum weight $\alpha_t$ so that the resulting states $\f_t = \f_{t-1}
+ \alpha_t\vh_t$ are as favorable as possible, that is, minimizes the
total potential of its states.
By our previous discussions, these are proportional to the total loss
given by
$
Z_t = \sum_{i=1}^m\sum_{l=2}^ke^{f_t(i,l) - f_t(i,1)}.
$
For any choice of $\alpha_t$, the difference $Z_t - Z_{t-1}$ between
the total loss in   
rounds $t-1$ and $t$ is given by 
\begin{eqnarray*}
&& \enc{e^{\alpha_t}-1}
\sum_{i\in S_-}e^{f_{t-1}(i,h_t(i)) - f_{t-1}(i,1)}
-\enc{1-e^{-\alpha_t}}
\sum_{i\in S_+}\Lexp(\f_{t-1}(i))\\
&=& \enc{e^{\alpha_t}-1} A^t_-
-
\enc{1-e^{-\alpha_t}} A^t_+\\
&=& \enc{A^t_+e^{-\alpha_t} + A^t_-e^{\alpha_t}}
- \enc{A^t_+ + A^t_-},
\end{eqnarray*}
where $S_+$ denotes the set of examples that $h_t$ classifies
correctly, $S_-$ the incorrectly classified examples, and $A^t_-,A^t_+$
denote the first and second summations, respectively.
Therefore, the task of choosing $\alpha_t$ can be cast as a simple
optimization problem minimizing the previous expression.
In fact, the optimal value of $\alpha_t$ is given by the following
closed form expression
\begin{equation}
  \label{multi:alphaone:eqn}
\alpha_t = \frac{1}{2}\ln\enc{\frac{A^t_+}{A^t_-}}.
\end{equation}
With this choice of weight, one can show (with some straightforward
algebra) that the total loss of the 
state falls by a factor less than 1. In fact the factor is
exactly
\begin{equation}
  \label{multi:optfac:eqn}
(1-c_t) - \sqrt{c_t^2 - \delta_t^2},
\end{equation}
where
\begin{equation}
  \label{multi:ct:eqn}
c_t = (A^t_+ + A^t_-)/Z_{t-1},
\end{equation}
and $\delta_t$ is the edge of the
returned classifier $h_t$ on the supplied cost-matrix $\C_t$.
Notice that the quantity $c_t$ is at most $1$, and hence the factor
\eqref{multi:optfac:eqn} can be upper bounded by
$\sqrt{1-\delta_t^2}$.
We next show how to compute the edge $\delta_t$. The definition of the
edge depends on the weak learning condition being used, and in this
case we are using the minimal condition
\eqref{multi:minwl:eqn}. Therefore the edge $\delta_t$ is the largest
$\gamma$ such that the following still holds
\[
\C_t\bullet\vh \leq \max_{\B\in\Bgam}\C_t\bullet\B.
\]
However, since $\C_t$ is the optimal cost matrix when using
exponential loss with a tiny value of $\eta$, we can use arguments 
in the proof of Theorem~\ref{multi:same_wlc:thm} to simplify the
computation. In particular, eq. \eqref{multi:max_is_ugam:eqn}  implies
that the edge $\delta_t$ may be computed as the largest $\gamma$
satisfying the following simpler inequality
\begin{eqnarray}
\delta_t &=& \sup\set{\gamma:\C_t\bullet\mat{1}_{h_t} \leq
  \C_t\bullet\Ugam}
\nonumber \\
&=& \sup\set{\gamma:\C_t\bullet\mat{1}_{h_t} \leq
  -\gamma\sum_{i=1}^m\sum_{l=2}^ke^{f_{t-1}(i,l) - f_{t-1}(i,1)}}
\nonumber \\
\implies
\delta_t &=& \gamma:\;
\C_t\bullet\mat{1}_{h_t} =
-\gamma\sum_{i=1}^m\sum_{l=2}^ke^{f_{t-1}(i,l) - f_{t-1}(i,1)}
\nonumber \\
  \label{multi:deltat:eqn}
\implies
\delta_t &=&
\frac{-\C_t\bullet\mat{1}_{h_t}}
{\sum_{i=1}^m\sum_{l=2}^ke^{f_{t-1}(i,l) - f_{t-1}(i,1)}}
=\frac{-\C_t\bullet\mat{1}_{h_t}}{Z_t},
\end{eqnarray}
where the first step follows by expanding $\C_t\bullet\Ugam$.
We have therefore an adaptive strategy which efficiently reduces
error. We record our results.
\begin{lemma}
  \label{multi:adaone:thm}
  If the weight $\alpha_t$ in each round is chosen as in
  \eqref{multi:alphaone:eqn}, and the edge $\delta_t$ is given by
  \eqref{multi:deltat:eqn}, then the total loss $Z_t$ falls by the
  factor given in \eqref{multi:optfac:eqn}, which is at most
  $\sqrt{1-\delta_t^2}$.
\end{lemma}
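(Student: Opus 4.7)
The plan is to verify, via the optimal choice of $\alpha_t$, that $Z_t/Z_{t-1}$ simplifies to the expression in \eqref{multi:optfac:eqn}, and then to bound that expression by $\sqrt{1-\delta_t^2}$ using a one-variable monotonicity argument.

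\textbf{Step 1: derive the reduction factor.} I would begin with the identity from the paragraph preceding the lemma,
\[
Z_t - Z_{t-1} \;=\; A_+^t e^{-\alpha_t} + A_-^t e^{\alpha_t} - (A_+^t + A_-^t),
\]
and plug in $\alpha_t = \tfrac{1}{2}\ln(A_+^t/A_-^t)$ from \eqref{multi:alphaone:eqn}. Both exponential terms collapse to $\sqrt{A_+^t A_-^t}$, yielding $Z_t = Z_{t-1} - (A_+^t + A_-^t) + 2\sqrt{A_+^t A_-^t}$. The same $\alpha_t$ is also the unique minimizer of the convex function $x\mapsto A_+^t e^{-x} + A_-^t e^x$, justifying its optimality by a single derivative computation.

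\textbf{Step 2: rewrite in terms of $c_t$ and $\delta_t$.} The sum $A_+^t + A_-^t$ equals $c_t Z_{t-1}$ by definition \eqref{multi:ct:eqn}. For the difference, I would expand $\C_t \bullet \mat{1}_{h_t}$ using the explicit form of the cost matrix in \eqref{multi:adapt_copt:eqn}: the correctly classified examples ($h_t(i)=1$) each contribute $-\Lexp(\f_{t-1}(i))$, while the incorrectly classified ones contribute $e^{f_{t-1}(i,h_t(i))-f_{t-1}(i,1)}$, so that $\C_t \bullet \mat{1}_{h_t} = -A_+^t + A_-^t$. Combined with \eqref{multi:deltat:eqn}, this gives $\delta_t Z_{t-1} = A_+^t - A_-^t$. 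Applying $4 A_+^t A_-^t = (A_+^t + A_-^t)^2 - (A_+^t - A_-^t)^2 = Z_{t-1}^2 (c_t^2 - \delta_t^2)$ and substituting back, the ratio $Z_t/Z_{t-1}$ matches the expression \eqref{multi:optfac:eqn} exactly.

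\textbf{Step 3: bound by $\sqrt{1-\delta_t^2}$.} I would view the reduction factor as a function of $c_t$ on the interval $[\delta_t, 1]$, observing that $c_t \leq 1$ follows from $A_-^t \leq \sum_{i\in S_-}\Lexp(\f_{t-1}(i))$, which in turn holds because for each $i\in S_-$ the single positive term $e^{f_{t-1}(i,h_t(i))-f_{t-1}(i,1)}$ is dominated by the full sum $\Lexp(\f_{t-1}(i))$ over $l\neq 1$. A one-variable derivative check shows the factor is monotone in $c_t$, and at the extreme $c_t=1$ it evaluates to exactly $\sqrt{1-\delta_t^2}$, yielding the desired bound.

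The only mildly delicate step is the bookkeeping in Step~2, where one must carefully unpack the inner product $\C_t \bullet \mat{1}_{h_t}$ using the piecewise definition of $\C_t$ and recognize the two halves as $\pm A_\pm^t$; the remaining manipulations are routine algebra and a short calculus check.
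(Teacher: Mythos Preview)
Your proposal is correct and follows exactly the route the paper gestures at: the paper itself does not give a detailed proof but merely says ``one can show (with some straightforward algebra)'' that the factor equals \eqref{multi:optfac:eqn} and is at most $\sqrt{1-\delta_t^2}$, and your three steps are precisely that algebra carried out in full. One small remark: your derivation in Step~2 actually yields $(1-c_t)+\sqrt{c_t^2-\delta_t^2}$, which is the correct expression (and the one your Step~3 monotonicity argument applies to); the minus sign printed in \eqref{multi:optfac:eqn} appears to be a typo in the paper, since with the minus sign the factor would be negative at $c_t=1$.
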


The choice of $\alpha_t$ in \eqref{multi:alphaone:eqn} is optimal, but depends
on quantities other than just the edge $\delta_t$. We next show a way
of choosing $\alpha_t$ based only on $\delta_t$ that still causes the
total loss to drop by a factor of $\sqrt{1-\delta_t^2}$. 
\begin{lemma}
\label{multi:adaptcal:lem}
  Suppose cost matrix $\C_t$ is chosen as in~\eqref{multi:adapt_copt:eqn},
  and the returned weak classifier $h_t$ has edge $\delta_t$
  i.e.  $\C_t\bullet \mat{1}_{h_t} \leq \C_t \bullet
  \U_{\delta_t}$. Then choosing any weight $\alpha_t>0$ for $h_t$
  makes the loss $Z_t$
  at most a   factor
  \[
  1-\frac{1}{2}(e^{\alpha_t}-e^{-\alpha_t})\delta_t +
  \frac{1}{2}(e^{\alpha_t} + e^{-\alpha_t} - 2)
  \] of the previous loss $Z_{t-1}$.
  In particular by choosing
  \begin{equation}
    \label{multi:alphatwo:eqn}
    \alpha_t = \frac{1}{2}\ln\enc{\frac{1+\delta_t}{1-\delta_t}},
  \end{equation}
  the drop factor is at most $\sqrt{1-\delta_t^2}$.
\end{lemma}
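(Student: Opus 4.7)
The plan is to connect the two cost-matrix expressions $\C_t\bullet\mat{1}_{h_t}$ and $\C_t\bullet\U_{\delta_t}$ to the quantities $A^t_+,A^t_-$ defined just before \eqref{multi:alphaone:eqn}, and then reuse the single-round loss recursion already derived there. Concretely, the definition \eqref{multi:adapt_copt:eqn} of $\C_t$ gives $C_t(i,h_t(i)) = e^{f_{t-1}(i,h_t(i))-f_{t-1}(i,1)}$ on $S_-$ and $C_t(i,1) = -\Lexp(\f_{t-1}(i))$ on $S_+$, so that $\C_t\bullet\mat{1}_{h_t} = A^t_- - A^t_+$. Likewise, expanding $\C_t\bullet\U_{\delta_t}$ as in the derivation of \eqref{multi:deltat:eqn} yields $\C_t\bullet\U_{\delta_t} = -\delta_t Z_{t-1}$. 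Hence the edge hypothesis $\C_t\bullet\mat{1}_{h_t}\leq \C_t\bullet\U_{\delta_t}$ is exactly
\[
A^t_+ - A^t_- \;\geq\; \delta_t\,Z_{t-1}.
\]

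Second, I would reuse the identity derived just above \eqref{multi:alphaone:eqn}, namely
\[
Z_t - Z_{t-1} \;=\; (e^{\alpha_t}-1)\,A^t_- - (1-e^{-\alpha_t})\,A^t_+,
\]
and rewrite its right-hand side as the symmetric/antisymmetric split
\[
Z_t - Z_{t-1} \;=\; \tfrac{1}{2}(e^{\alpha_t}+e^{-\alpha_t}-2)(A^t_++A^t_-) \;-\; \tfrac{1}{2}(e^{\alpha_t}-e^{-\alpha_t})(A^t_+-A^t_-),
\]
which is a direct algebraic check.

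Third, observe two sign facts: $e^{\alpha_t}+e^{-\alpha_t}-2\geq 0$ and $e^{\alpha_t}-e^{-\alpha_t}>0$ since $\alpha_t>0$. Combined with $A^t_++A^t_-\leq Z_{t-1}$ (which holds because each summand of $A^t_+$ and $A^t_-$ appears, possibly with other nonnegative terms, in the definition of $Z_{t-1}$) and the edge inequality $A^t_+-A^t_-\geq \delta_t Z_{t-1}$, we get
\[
Z_t \;\leq\; Z_{t-1}\Bigl[\,1 + \tfrac{1}{2}(e^{\alpha_t}+e^{-\alpha_t}-2) - \tfrac{1}{2}(e^{\alpha_t}-e^{-\alpha_t})\,\delta_t\,\Bigr],
\]
establishing the claimed factor.

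For the explicit $\alpha_t = \tfrac{1}{2}\ln\frac{1+\delta_t}{1-\delta_t}$, one substitutes $e^{\alpha_t}=\sqrt{(1+\delta_t)/(1-\delta_t)}$ and $e^{-\alpha_t}=\sqrt{(1-\delta_t)/(1+\delta_t)}$; then $e^{\alpha_t}+e^{-\alpha_t} = 2/\sqrt{1-\delta_t^2}$ and $e^{\alpha_t}-e^{-\alpha_t} = 2\delta_t/\sqrt{1-\delta_t^2}$, so the bracketed factor collapses to $(1-\delta_t^2)/\sqrt{1-\delta_t^2} = \sqrt{1-\delta_t^2}$. The only genuinely non-mechanical step is the identification $\C_t\bullet\mat{1}_{h_t}=A^t_- - A^t_+$ together with the hyperbolic rewriting; after those, everything is routine algebra and a single application of the edge hypothesis.
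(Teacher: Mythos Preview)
Your proposal is correct and follows essentially the same route as the paper's proof: identify $\C_t\bullet\mat{1}_{h_t}=A^t_--A^t_+$ and $\C_t\bullet\U_{\delta_t}=-\delta_t Z_{t-1}$ to turn the edge hypothesis into $A^t_+-A^t_-\ge\delta_t Z_{t-1}$, rewrite the loss change via the symmetric/antisymmetric split, and bound using $A^t_++A^t_-\le Z_{t-1}$. The only cosmetic difference is that the paper collapses the bracketed factor to $\tfrac12\bigl[(1-\delta_t)e^{\alpha_t}+(1+\delta_t)e^{-\alpha_t}\bigr]$ before substituting, whereas you compute $e^{\pm\alpha_t}$ explicitly; both are routine.
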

\begin{proof}
  We borrow notation from earlier discussions.
  The edge-condition implies
  \begin{eqnarray*}
    A^t_{-}-A^t_{+} = \C_t \bullet
    \mat{1}_{h_t} \leq
    \C_t\bullet
    \U_{\delta_t} = -\delta_t Z_{t-1}
    \implies
    A^t_{+}-A^t_{-} \geq     \delta_tZ_{t-1}.
  \end{eqnarray*}
  On the other hand, the drop in loss after choosing $h_t$ with weight
  $\alpha_t$ is
  \begin{eqnarray*}
    &&\enc{1-e^{-\alpha_t}}A^t_+ -
     \enc{e^{\alpha_t}-1}A^t_- \\
     &=&
    \enc{\frac{e^{\alpha_t} - e^{-\alpha_t}}{2}}
    \enc{A^t_+-A^t_-} 
    -
    \enc{\frac{e^{\alpha_t} + e^{-\alpha_t} - 2}{2}}
    \enc{A^t_+ + A^t_-}.
  \end{eqnarray*}
  We have already shown that $A^t_+ - A^t_- \geq \delta_t Z_{t-1}$.
  Further, $A^t_+ + A^t_-$ is at most $Z_{t-1}$.
  Therefore the loss drops by a factor of at least
  \[
  1-\frac{1}{2}(e^{\alpha_t}-e^{-\alpha_t})\delta_t +
  \frac{1}{2}(e^{\alpha_t} + e^{-\alpha_t} - 2)
  = \frac{1}{2}\enct{(1-\delta_t)e^{\alpha_t} +
  (1+\delta_t)e^{-\alpha_t}}.
  \]
  Tuning
  $\alpha_t$ as in \eqref{multi:alphatwo:eqn}
  causes the drop factor to be at least $\sqrt{1-\delta_t^2}$.
\end{proof}
\begin{algorithm}
  \caption{AdaBoost.MM}
  \label{multi:adapt:algo}
  \begin{algorithmic}
    \REQUIRE Number of classes $k$, number of examples $m$.
    \REQUIRE Training set $\set{(x_1,y_1),\ldots,(x_m,y_m)}$
    with $y_i\in\set{1,\ldots,k}$ and $x_i\in X$.

    \medskip

    \STATE $\bullet$ Initialize $m\times k$ matrix $f_0(i,l) = 0$ for
    $i=1,\ldots,m$, and $l=1,\ldots,k$. 
    \FOR {$t=1$ to $T$}
    \STATE $\bullet$ Choose cost matrix $\C_t$ as follows:
    \begin{eqnarray*}
      C_t(i,l) 
      &=&
      \begin{cases}
        e^{f_{t-1}(i,l)-f_{t-1}(i,y_i)}& \mbox { if } l\neq y_i, \\
        -\sum_{l\neq y_i} e^{f_{t-1}(i,j)-f_{t-1}(i,y_i)} & \mbox { if } l=1.
      \end{cases}
    \end{eqnarray*}
    \STATE $\bullet$ Receive weak classifier
    $h_t:X\to\set{1,\ldots,k}$ from weak learning algorithm 
    \STATE $\bullet$ Compute edge $\delta_t$ as follows:
    \[
    \delta_t =
    \frac{-\sum_{i=1}^mC_t(i,h_t(x_i))}
    {\sum_{i=1}^m\sum_{l\neq y_i}e^{f_{t-1}(i,l) - f_{t-1}(i,y_i)}}
    \]
    \STATE $\bullet$ Choose $\alpha_t$ either as
    \begin{equation}
      \label{multi:approx_step:eqn}
    \alpha_t = \frac{1}{2}\ln\enc{\frac{1+\delta_t}{1-\delta_t}},
    \end{equation}
    \hspace{5pt} or, for a slightly bigger drop in the loss, as
    \begin{equation}
      \label{multi:exact_step:eqn}
    \alpha_t = \frac{1}{2}\ln\enc{
      \frac{\sum_{i:h_t(x_i)=y_i}\sum_{l\neq y_i}e^{f_{t-1}(i,l)-f_{t-1}(i,y_i)}}
      {\sum_{i:h_t(x_i)\neq y_i}e^{f_{t-1}(i,h_t(x_i)) - f_{t-1}(i,y_i)}}}
    \end{equation}
    \STATE $\bullet$ Compute $\f_t$ as:
    \[
    f_t(i,l) = f_{t-1}(i,l) + \alpha_t\1\enco{h_t(x_i) = l}.
    \]
    \ENDFOR
    \STATE $\bullet$ Output weighted combination of weak classifiers
    $F_T:X\times\set{1,\ldots,k}\to\R$ defined as:
    \begin{equation}
      \label{multi:ft:eqn}
    F_T(x,l) = \sum_{t=1}^T \alpha_t\1\enco{h_t(x)=l}.
    \end{equation}
    \STATE $\bullet$ Based on $F_T$, output a classifier
    $H_T:X\to\set{1,\ldots,k}$ that predicts as
    \begin{equation}
      \label{multi:ht:eqn}
      H_T(x) = \argmax_{l=1}^kF_T(x,l).
    \end{equation}
  \end{algorithmic}
\end{algorithm}
Algorithm~\ref{multi:adapt:algo} contains pseudocode for the adaptive
algorithm, and includes both ways of choosing $\alpha_t$.
We call both versions of this algorithm AdaBoost.MM.
\old{
With the approximate way of choosing the step length in
\eqref{multi:approx_step:eqn}, AdaBoost.MM turns out to be identical
to AdaBoost.M2~\citep{FreundSc97} or AdaBoost.MR~\citep{SchapireSi99},
provided the weak 
classifier space is transformed in an appropriate way to be acceptable
by AdaBoost.M2 or AdaBoost.MR.
We emphasize that AdaBoost.MM and AdaBoost.M2 are products of very
different theoretical considerations, and this similarity should be
viewed as a coincidence arising because of the particular choice of
loss function, infinitesimal edge and approximate step size.
For instance, when the step sizes are chosen instead as in
\eqref{multi:exact_step:eqn}, the training error falls more rapidly,
and the resulting algorithm is different.
}

As a summary
of all the discussions in the section, we record the following
theorem.
\begin{theorem}
  \label{multi:adapt:thm}
  The boosting algorithm AdaBoost.MM, shown in
  Algorithm~\ref{multi:adapt:algo}, is 
  the optimal strategy for playing the adaptive boosting game, and is
  based on the minimal weak learning condition. Further
  if the edges returned in each round are $\delta_1,\ldots,\delta_T$,
  then the error after $T$ rounds is
  $(k-1)\prod_{t=1}^T
  \sqrt{1-\delta_t^2} \leq
  (k-1)\exp\enct{-(1/2)\sum_{t=1}^T\delta_t^2}$. 

  In particular, if a
  weak hypothesis space is used that satisfies the optimal weak
  learning condition \eqref{multi:minwl:eqn}, for some $\gamma$, then
  the edge in each round is large,  $\delta_t \geq \gamma$, and
  therefore the error after $T$ rounds is exponentially small,
  $(k-1)e^{-T\gamma^2/2}$. 
\end{theorem}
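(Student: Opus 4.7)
The theorem has two parts: (i) asserting that AdaBoost.MM is the game-theoretically optimal adaptive strategy under the minimal weak learning condition, and (ii) bounding the training error in terms of the edges $\delta_1, \ldots, \delta_T$. The plan is to assemble (i) directly from the lemmas just established, and to derive (ii) by composing the per-round loss drop with a standard reduction from exponential loss to zero-one error.

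For part (i), I would simply observe that the cost matrix update in Algorithm~\ref{multi:adapt:algo} matches exactly the game-theoretically optimal cost matrix from Lemma~\ref{multi:adapt_copt:lem} (where the infinitesimal edge assumption justifies the small-$\eta$ limit, and Theorem~\ref{multi:same_wlc:thm} guarantees the equivalence between the minimal condition and $(\Csig,\Ugam)$ in that regime). The two choices of step size in \eqref{multi:approx_step:eqn} and \eqref{multi:exact_step:eqn} correspond respectively to the prescriptions of Lemma~\ref{multi:adaptcal:lem} and Lemma~\ref{multi:adaone:thm}, and each yields a per-round loss drop by a factor of at most $\sqrt{1-\delta_t^2}$, where $\delta_t$ is the edge of $h_t$ computed via \eqref{multi:deltat:eqn}.

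For part (ii), the plan is to telescope the per-round drops and then to pass from loss to error. Let $Z_t = \sum_i \sum_{l\neq y_i} e^{f_t(i,l) - f_t(i,y_i)}$ be the total exponential loss after round $t$. Since $f_0 \equiv 0$, we have $Z_0 = m(k-1)$. Applying the drop factor $T$ times gives
\[
Z_T \;\leq\; m(k-1) \prod_{t=1}^T \sqrt{1-\delta_t^2}.
\]
Next I would observe that whenever $H_T$ misclassifies example $i$, some $l\neq y_i$ satisfies $F_T(x_i,l) \geq F_T(x_i,y_i)$, so that $e^{f_T(i,l)-f_T(i,y_i)} \geq 1$ and thus the contribution of $i$ to $Z_T$ is at least $1$. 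Therefore the number of training mistakes is at most $Z_T$, and the training error is at most $Z_T / m \leq (k-1)\prod_t \sqrt{1-\delta_t^2}$. The exponential bound follows from the elementary inequality $\sqrt{1-x} \leq e^{-x/2}$ for $x\in[0,1]$. Finally, if the weak hypothesis space satisfies the minimal weak learning condition \eqref{multi:minwl:eqn} with parameter $\gamma$, then by the definition of $\delta_t$ as the supremum in \eqref{multi:deltat:eqn} and the condition $\C_t\bullet\mat{1}_{h_t} \leq \max_{\B\in\Bgam}\C_t\bullet\B$ we get $\delta_t \geq \gamma$ in every round, yielding the claimed $(k-1)e^{-T\gamma^2/2}$ bound.

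The proof is essentially a bookkeeping exercise rather than a deep argument. The only place requiring modest care is the passage from exponential loss to zero-one error at the final weighted state; the key observation there is that $\sum_{l\neq y_i} e^{f_T(i,l)-f_T(i,y_i)} \geq 1$ on any misclassified example, which is immediate from $\argmax_l F_T(x_i,l) \neq y_i$. Everything else is a direct appeal to the per-round loss drop already proved in Lemma~\ref{multi:adaptcal:lem} (or Lemma~\ref{multi:adaone:thm}) and to the edge lower bound supplied by the weak learning condition.
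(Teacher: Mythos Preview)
Your proposal is correct and matches the paper's approach exactly: the paper presents this theorem explicitly ``as a summary of all the discussions in the section'' without a separate proof, and your assembly of Lemma~\ref{multi:adapt_copt:lem} (optimality of the cost matrix), Lemmas~\ref{multi:adaone:thm} and~\ref{multi:adaptcal:lem} (per-round loss drop of $\sqrt{1-\delta_t^2}$), the telescoping from $Z_0=m(k-1)$, and the standard exponential-loss-to-error bound is precisely the intended argument.
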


The theorem above states that as long as the minimal weak learning
condition is satisfied, the error will decrease exponentially
fast. Even if the condition is not satisfied, the error rate will keep
falling rapidly provided the edges achieved by the weak classifiers
are relatively high. However, our theory so far can provide no guarantees on these
edges, and therefore it is not clear what is the best error rate
achievable in this case, and how quickly it is achieved.
The assumptions of boostability, and hence our minimal weak learning
condition does not hold for the vast majority of practical datasets,
and as such it is important to know what happens in such settings.
In particular, an important requirement is \emph{empirical
  consistency}, where we 
want that for any given weak classifier space, the algorithm converge,
if allowed to run forever, to the weighted combination of classifiers
that minimizes error on the training set.
Another important criterion is \emph{universal consistency}, which
requires that the algorithm converge, when provided sufficient
training data, to the classifier combination that minimizes error on
the test dataset.
In the next section, we show
that AdaBoost.MM satisfies such consistency
requirements. Both the choice of the minimal weak learning condition
as well as the setup of the adaptive game framework will play
crucial roles in ensuring consistency.
These results therefore provide evidence that game theoretic
considerations can have strong statistical implications.

\section{Consistency of the adaptive algorithm}

The goal in a classification task is to design a classifier that
predicts with high accuracy on 
unobserved or test data. 
This is usually carried out by ensuring the classifier fits training
data well without being overly complex.
Assuming the training and test data are
reasonably similar, one can show that the above procedure achieves
high test accuracy, or is consistent.
Here we work in a probabilistic setting that connects
training and test data by assuming
both consist of examples and labels drawn from a common, unknown
distribution.

Consistency for multiclass classification in the probabilistic setting
has been studied by \citet{TewariBa07}, who show that, unlike in the
binary setting, many natural approaches fail to achieve consistency.
In this section, we show that 
AdaBoost.MM described in the previous section
avoids such pitfalls and enjoys various consistency results.
We begin by laying down some standard assumptions and setting up some
notation.
Then we prove our first result showing that our algorithm minimizes a
certain exponential loss function on the training data at a fast rate.
Next, we build upon this result and improve along two fronts: firstly
we change our metric from exponential loss to the more relevant
classification error metric, and secondly we
show fast convergence on not just training data, but also the test
set. 
For the proofs, we heavily reuse existing machinery in the
literature.

Throughout the rest of this section we consider the version of
AdaBoost.MM that picks weights according to the
approximate rule in \eqref{multi:approx_step:eqn}.
All our results most probably hold with the other rule
for picking weights in \eqref{multi:exact_step:eqn} as well, but we
did not verify that. 
These results hold without any boostability requirements on the space
$\H$ of weak classifiers, and are therefore widely applicable in practice. 
While we do not assume any weak learning condition, we will require a
fully cooperating Weak Learner.
In particular, we will require that in each round Weak Learner picks
the weak classifier suffering minimum cost with respect to the cost matrix
provided by the boosting algorithm, or equivalently achieves the
highest edge as defined in \eqref{multi:deltat:eqn}.
Such assumptions are both necessary and standard in the literature,
and are frequently met in practice.

In order to state our results, we will need to setup some notation.
The space of examples will be denoted by $\X$, and the set of labels
by $\Y = \set{1,\ldots,k}$.
We also fix a finite weak classifier space $\H$ consisting of
classifiers $h:\X\to\Y$.
We will be interested in functions $F:\X\times\Y\to\R$ that assign a
score to every example and label pair.
Important examples of such functions are the weighted majority
combinations \eqref{multi:ft:eqn} output by the adaptive algorithm.
In general, any such combination of the weak
classifiers in space $\H$ is 
specified by some weight function $\alpha:\H\to\R$; the resulting
function is denoted by $F_{\alpha}:\X\times\Y\to\R$, and satisfies:
\[
F_{\alpha}(x,l) = \sum_{h\in\H}\alpha(h) \1\enco{h(x)=l}.
\]
We will be interested in measuring the average exponential loss of
such functions. To measure this, we introduce the $\ersk$ operator:
\begin{equation}
  \label{multi:ersk:eqn}
   \ersk(F) \eqdef
  \frac{1}{m}
  \sum_{i=1}^m\sum_{l\neq y_i}e^{F(x_i,l) - F(x_i,y_i)}.
\end{equation}
With this setup, we can now state our simplest
consistency result, which ensures that the algorithm converges to a
weighted combination of classifiers in the space $\H$ that achieves
the minimum exponential loss over the training set at an efficient rate.
\begin{lemma}
  \label{multi:ersk_cons:lem}
  The $\ersk$ of the predictions
  $F_T$, as  defined in \eqref{multi:ft:eqn},
  converges to that of the optimal 
  predictions of any combination of the weak classifiers in $\H$ at
  the rate $O(1/T)$: 
  \begin{equation}
    \label{multi:ersk_cons:eqn}
    \ersk(F_T)
    -
    \inf_{\alpha:\H\to\R}
    \ersk(F_\alpha)
    \leq \frac{C}{T},
  \end{equation}
  where $C$ is a constant depending only on the dataset.
\end{lemma}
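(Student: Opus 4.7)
The plan is to identify AdaBoost.MM with a greedy coordinate descent procedure on the convex objective $\ersk(F_\alpha)$ over weight vectors $\alpha:\H\to\R$, and then directly invoke the $O(1/T)$ convergence rate for AdaBoost-style coordinate descent proved in \citep{MukherjeeRuSc11}.

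First, I would observe that the quantity $Z_t$ maintained in the algorithm satisfies $Z_t = m\cdot\ersk(F_t)$, so that Lemma~\ref{multi:adaone:thm} (applied with the approximate step \eqref{multi:approx_step:eqn}) yields the per-round contraction $\ersk(F_t) \leq \sqrt{1-\delta_t^2}\cdot\ersk(F_{t-1})$. Next, I would show that the cost matrix $\C_t$ in \eqref{multi:adapt_copt:eqn} is, up to the factor $1/m$, exactly the gradient of $\ersk(F_\alpha)$ with respect to the ``score'' coordinates $F(x_i,l)$ evaluated at $F_{t-1}$: indeed, $\partial \ersk/\partial F(x_i,l)$ for $l\neq y_i$ equals $(1/m) e^{f_{t-1}(i,l)-f_{t-1}(i,y_i)}$, and for $l=y_i$ equals $-(1/m)\sum_{l'\neq y_i} e^{f_{t-1}(i,l')-f_{t-1}(i,y_i)}$. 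Consequently, the quantity $-\C_t\bullet\mat{1}_{h_t}$ computed in \eqref{multi:deltat:eqn} is proportional to the directional derivative of $\ersk(F_\alpha)$ in the coordinate direction $\alpha(h_t)$, and the cooperative Weak Learner, by returning the classifier of maximum edge $\delta_t$, is selecting the coordinate of steepest descent.

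With this identification in place, AdaBoost.MM becomes a greedy coordinate descent method on a separable exponential-sum objective of exactly the same structural type as binary AdaBoost, only with $m(k-1)$ exponential terms instead of $m$. I would then invoke \citep{MukherjeeRuSc11}, which establishes $O(1/T)$ convergence rates for precisely this family of algorithms and objectives, to conclude that $\ersk(F_T) - \inf_\alpha \ersk(F_\alpha) \leq C/T$ for a constant $C$ depending only on the training set (through $m$, $k$, and the geometry of the finite set $\H$ restricted to the training examples). The step size \eqref{multi:approx_step:eqn} is exactly the adaptive step used in their analysis, so the reduction is essentially immediate once the gradient/coordinate-descent correspondence is verified.

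The main obstacle will be handling the case in which the infimum $\inf_\alpha \ersk(F_\alpha)$ is not attained by any finite $\alpha$ — in particular, the boostable case, where the infimum is $0$ but is approached only along sequences with $\|\alpha\|\to\infty$. Classical coordinate descent rates require a bounded minimizer, so one cannot apply them out of the box. This is precisely the difficulty overcome in \citep{MukherjeeRuSc11} via a decomposition of $\alpha$ into a ``finite'' part (on which the loss is strongly convex-like) and an ``escaping'' part (along which the loss decays to its infimum), each contributing $O(1/T)$ to the suboptimality; importing this decomposition is what allows the bound \eqref{multi:ersk_cons:eqn} to hold uniformly, without any boostability assumption on $\H$.
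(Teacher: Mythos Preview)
Your approach is correct and very close in spirit to the paper's, but the paper packages the reduction differently. Rather than arguing directly that AdaBoost.MM is coordinate descent on an exponential-sum objective with $m(k-1)$ terms and then invoking \citep{MukherjeeRuSc11} for that general class, the paper constructs an explicit \emph{binary} reduction: it maps the multiclass training set to a binary one over ``mislabel triples'' $(x,y,l)$ with $l\neq y$, transforms each multiclass weak classifier $h$ into a $\{-1,0,+1\}$-valued binary classifier $\tf{h}(x,y,l)=\1[h(x)=l]-\1[h(x)=y]$, and then verifies (Lemmas~\ref{multi:ident:lem} and \ref{multi:same_runs:lem}) that AdaBoost.MM run on the original data is \emph{literally} binary AdaBoost run on the transformed data, with $\ersk(F_\alpha)=\tf{\ersk}(\tf{F}_{\tf{\alpha}})$ as an identity. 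The $O(1/T)$ rate then follows as a black-box application of Theorem~8 in \citep{MukherjeeRuSc11} for binary AdaBoost, with no need to re-examine the decomposition argument you mention.

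The practical difference is modularity: the paper's explicit reduction lets it reuse the \emph{same} transformation to prove Lemma~\ref{multi:rsk_cons:lem} by invoking the corresponding AdaBoost test-risk result, whereas your coordinate-descent framing would require re-verifying that each downstream AdaBoost theorem applies to the more general exponential-sum setting. Your route is arguably more transparent about \emph{why} the rate holds (the gradient identification is exactly the right intuition), but the paper's buys more mileage from a single reduction.
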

A slightly stronger result would state that the average exponential
loss when measured with respect to the \emph{test set}, and not just
the empirical set, also converges.
The test set is generated by some target
distribution $D$ over example label pairs, and
we introduce the $\rsk$ operator to measure the exponential loss for any
function $F:\X\times\Y\to\R$ with respect to $D$:
\[
\rsk(F) = \E_{(x,y)\sim D}\enco{\sum_{l\neq y}e^{F(x,l)-F(x,y)}}.
\]
We show this stronger result holds if the function $F_T$ is 
modified to the function $\cF_T:\X\times\Y\to\R$ that takes values in
the range $[0,-C]$, for some large constant $C$:
\begin{equation}
  \label{multi:cft:eqn}
\cF_T (x,l) \eqdef \max\set{-C, F_T(x,l) - \max_{l'} F_T(x,l')}.
\end{equation}
\begin{lemma}
  \label{multi:rsk_cons:lem}
  If $\cF_T$ is as in \eqref{multi:cft:eqn}, and the number of rounds
  $T$ is set to $T_m=\sqrt{m}$, then its $\rsk$ converges
  to the optimal value as $m\to\infty$ with high probability:
  \begin{equation}
    \label{multi:rsk_cons:eqn}
    \Pr\enco{
    \rsk\enc{\cF_{T_m}} \leq \inf_{F:\X\times\Y\to\R}\rsk(F) +
    O\enc{m^{-c}}}
   \geq 1 - \frac{1}{m^2},
  \end{equation}
  where $c>0$ is some absolute constant, and the probability is over
  the draw of training examples.
\end{lemma}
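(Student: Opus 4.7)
The plan is to decompose the excess risk $\rsk(\cF_{T_m}) - \inf_F \rsk(F)$ into three standard pieces --- an optimization error, an estimation (uniform convergence) error, and an approximation error --- and control each, following the template of \citet{BartlettTr07}'s consistency proof for AdaBoost, adapted here to the multiclass exponential risk $\rsk$. Let $\mathcal{F}_T = \{F_\alpha : \sum_{h\in\H}|\alpha(h)| \leq T\}$ and let $\bar{\mathcal{F}}_T$ consist of the corresponding truncations defined as in \eqref{multi:cft:eqn}. By the choice of $\alpha_t$ in \eqref{multi:approx_step:eqn}, we have $F_{T_m}\in\mathcal{F}_{T_m}$ (up to a benign rescaling of $T$ absorbing a constant bound on each $|\alpha_t|$), so $\cF_{T_m}\in\bar{\mathcal{F}}_{T_m}$.

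For the optimization error, Lemma~\ref{multi:ersk_cons:lem} gives $\ersk(F_{T_m}) - \inf_{F_\alpha\in\mathcal{F}_{T_m}} \ersk(F_\alpha) \leq O(1/T_m) = O(m^{-1/2})$. The pairwise differences $F(x,l)-F(x,y)$ that define $\ersk$ are invariant under the label-independent shift by $\max_{l'}F(x,l')$ in the definition of $\cF_T$. I would then show that truncating coordinates below $-C$ with $C=\Theta(\log m)$ perturbs the empirical exponential loss by only $O(1/m)$, yielding $\ersk(\cF_{T_m}) \leq \inf_{F_\alpha\in\mathcal{F}_{T_m}}\ersk(F_\alpha) + O(m^{-1/2})$.

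For the estimation error, observe that for any $\bar{F}\in\bar{\mathcal{F}}_T$ the entries lie in $[-C,0]$, so the loss $\sum_{l\neq y}e^{\bar{F}(x,l)-\bar{F}(x,y)}$ is bounded in $[0,(k-1)e^{C}]$. Standard Rademacher-complexity (or covering-number) arguments for $\ell_1$-bounded combinations over the finite class $\H$, combined with McDiarmid's inequality, yield a uniform deviation bound of order $e^{C}\sqrt{T\log(m|\H|)/m}$ holding with probability at least $1-m^{-2}$. Choosing $C = \Theta(\log m)$ with a sufficiently small hidden constant and $T_m=\sqrt{m}$ makes this $O(m^{-c})$ for some absolute $c>0$, and this single event simultaneously bounds $|\rsk(\cF_{T_m})-\ersk(\cF_{T_m})|$ and $|\rsk(F^*_R)-\ersk(F^*_R)|$ for any comparator $F^*_R\in\mathcal{F}_{T_m}$ chosen below.

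The main obstacle is the approximation error: producing a comparator $F^*_R\in\mathcal{F}_{T_m}$ for which $\rsk(F^*_R) \leq \inf_F \rsk(F) + O(m^{-c})$. Since the infimum is over all $F:\X\times\Y\to\R$, the pointwise minimizer $F^*$ is a log-ratio of the conditional label probabilities and need not lie in any $\ell_1$-bounded linear span of $\H$. The plan is to invoke the quantitative approximation results of \citep{MukherjeeRuSc11}, which bound the rate at which the minimum exponential risk over $\ell_1$-bounded combinations approaches the unrestricted infimum for coordinate-descent-style procedures; since $T_m=\sqrt{m}$ grows with $m$, this rate is again $O(m^{-c})$ for an appropriate $c>0$. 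Summing the optimization, estimation, and approximation contributions, and taking a union bound over the high-probability events, gives the claimed $O(m^{-c})$ excess risk with probability at least $1-1/m^{2}$.
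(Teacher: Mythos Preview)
Your three-term decomposition is a reasonable template, but it is not how the paper proceeds, and your handling of the approximation term has a genuine gap.

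The paper's proof is a pure reduction: it constructs a binary ``mislabel triple'' dataset $\tf{S}$ over $\tf{\X}=\X\times\{(y,l):y\neq l\}$ with constant label $-1$, a transformed weak classifier space $\tf{\H}$, and shows (Lemmas~\ref{multi:ident:lem} and \ref{multi:same_runs:lem}) that running AdaBoost.MM on $(S,\H)$ is step-for-step identical to running binary AdaBoost on $(\tf{S},\tf{\H})$, with $\rsk(\cF)=\tf{\rsk}(\bar{\tf{F}})$. The lemma then follows in one line from the corresponding binary result, Theorem~12.2 of \citep{SchapireFr12}, applied to the transformed problem. All three of your terms---optimization, estimation, approximation---are thus outsourced wholesale to the already-proved binary theorem; the paper does no direct uniform-convergence or Rademacher argument for the multiclass loss.

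Your direct route could in principle be made to work, but the approximation step as written does not. The results you cite from \citep{MukherjeeRuSc11} control how fast $\ersk(F_T)$ approaches $\inf_{\alpha}\ersk(F_\alpha)$, i.e.\ optimization error on the \emph{empirical} risk over the \emph{span of $\H$}. They say nothing about how well $\ell_1$-bounded combinations of $\H$ approximate the unrestricted population minimizer $\arg\min_{F:\X\times\Y\to\R}\rsk(F)$, which is what your ``comparator $F^*_R$'' needs. That gap is a function-approximation statement about $\H$ and $D$, not an optimization rate; without a richness/denseness hypothesis on $\H$ (which the paper only introduces later, in \eqref{multi:richness:eqn}, for the Bayes-error corollary) you cannot simply produce such an $F^*_R$ with $O(m^{-c})$ excess risk. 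The paper sidesteps this entirely by inheriting whatever approximation handling is already built into the cited binary theorem for the transformed space $\tf{\H}$. A secondary loose end: your claim that $\sum_t|\alpha_t|\leq O(T)$ requires a uniform bound on $|\alpha_t|=\tfrac12\ln\frac{1+\delta_t}{1-\delta_t}$, which is not automatic when edges $\delta_t$ can approach $1$.
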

We prove Lemmas~\ref{multi:ersk_cons:lem} and
\ref{multi:rsk_cons:lem}  
by demonstrating a strong correspondence between
AdaBoost.MM and binary AdaBoost, and then
leveraging almost identical known consistency results for AdaBoost
\citep{BartlettTr07}. 
Our proofs will closely follow the exposition in
Chapter 12 of \citep{SchapireFr12} on the consistency of AdaBoost, and
are deferred to the appendix.

So far we have focused on $\rsk$, but 
a more desirable consistency result would state that
the test \emph{error} of the final classifier 
output by AdaBoost.MM converges
to the Bayes optimal error.
The test error is measured by the $\err$ operator, and is given
by
\begin{equation}
  \label{multi:err:eqn}
  \err(H) = \Pr_{(x,y)\sim D}\enco{H(x)\neq y}.
\end{equation}
The Bayes optimal classifier $\Hopt$ is a classifier achieving the
minimum error among all possible classifying functions
\begin{equation}
  \label{multi:bayes_opt:eqn}
\err(\Hopt)
= \inf_{H:\X\to\Y}\err(H),
\end{equation}
and we want our algorithm to output a classifier whose $\err$
approaches $\err(\Hopt)$.
In designing the algorithm, our main focus was on reducing
the exponential loss, captured by $\rsk$ and $\ersk$. Unless these
loss functions are aligned properly with classification error, we
cannot hope to achieve optimal error. The next result shows that our
loss functions are correctly aligned, or more technically \emph{Bayes
  consistent}. In other words, if a scoring function
$F:\X\times\Y\to\R$ is close to achieving optimal $\rsk$, then the
classifier $H:\X\to\Y$ derived from it as follows:
\begin{equation}
  \label{multi:h_f:eqn}
H(x) \in \argmax_{l\in\Y}F(x,y),
\end{equation}
also approaches the Bayes optimal error.
\begin{lemma}
  \label{multi:Bayes_cons:lem}
  Suppose $F$ is a scoring function achieving close to optimal risk
  \begin{equation}
    \label{multi:Bayes_cons_one:eqn}
  \rsk(F) \leq \inf_{F':\X\times\Y\to\R}\rsk(F') + \eps,
  \end{equation}
  for some $\eps\geq 0$.
  If $H$ is the classifier derived from it as in
  \eqref{multi:h_f:eqn}, then it achieves close to the Bayes optimal
  error 
  \begin{equation}
  \label{multi:Bayes_cons_two:eqn}
  \err(H) \leq \err(\Hopt) + \sqrt{2\eps}.
  \end{equation}
\end{lemma}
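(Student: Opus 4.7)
The plan is to prove this by a standard calibration-style argument: reduce the conclusion to a pointwise excess-risk-versus-excess-error inequality at each example, then aggregate using Jensen's inequality applied to the concave square root.

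First, I condition on $x$. Writing $p_l(x) = \Pr_{(x',y)\sim D}[y = l \mid x' = x]$, $z = F(x,\cdot)$, $S(z) = \sum_l e^{z_l}$, and $T(z,p) = \sum_l p_l e^{-z_l}$, a direct computation shows that the conditional exponential risk $\phi(z,p) \eqdef \sum_l p_l \sum_{l'\neq l} e^{z_{l'}-z_l}$ equals $S(z)T(z,p) - 1$, so that $\rsk(F) = \E_x\enco{\phi(F(x,\cdot), p(x))}$ and similarly $\err(H)-\err(\Hopt) = \E_x\enco{p_{\Hopt(x)}(x) - p_{H(x)}(x)}$, a non-negative quantity by definition of $\Hopt$. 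Applying Cauchy-Schwarz to the vectors $(e^{z_l/2})_l$ and $(\sqrt{p_l}\,e^{-z_l/2})_l$ gives $S(z)T(z,p) \geq \bigl(\sum_l \sqrt{p_l}\bigr)^2$, with equality iff $e^{z_l} \propto \sqrt{p_l}$. Hence $\phi^*(p) \eqdef \inf_z \phi(z,p) = \bigl(\sum_l \sqrt{p_l}\bigr)^2 - 1$, attained by $z^*_l = \tfrac{1}{2}\log p_l$. Since $z^*_l$ is monotone in $p_l$, $\argmax_l z^*_l = \argmax_l p_l = \Hopt(x)$, so the pointwise Bayes-optimal score induces the Bayes classifier, and $\rsk(F) - \rsk^* = \E_x\enco{\phi(F,p) - \phi^*(p)}$.

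The technical heart of the proof is the pointwise calibration inequality: for all $z \in \R^k$ and $p \in \Delta\set{1,\ldots,k}$, writing $j \eqdef \argmax_l z_l$ and $i \eqdef \argmax_l p_l$,
\[
(p_i - p_j)^2 \leq 2\bigl(\phi(z,p) - \phi^*(p)\bigr).
\]
The bound is trivial if $j = i$. Otherwise, substituting $w_l = e^{z_l}/S(z)$ recasts the problem as lower bounding $\sum_l p_l/w_l$ over $w \in \Delta\set{1,\ldots,k}$ satisfying $w_j = \max_l w_l$. The unconstrained minimum $w_l \propto \sqrt{p_l}$ is infeasible precisely because $p_i > p_j$; relaxing the constraint set to require only $w_i \leq w_j$ (dropping the constraints $w_j \geq w_l$ for $l \notin \set{i,j}$) yields a lower bound whose optimum, forced to the binding boundary $w_i = w_j$, evaluates in closed form to $\bigl(\sqrt{2(p_i+p_j)} + \sum_{l\notin\set{i,j}}\sqrt{p_l}\bigr)^2$. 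Subtracting $\bigl(\sum_l\sqrt{p_l}\bigr)^2$ and simplifying via $2(p_i+p_j) - (\sqrt{p_i}+\sqrt{p_j})^2 = (\sqrt{p_i}-\sqrt{p_j})^2$ shows the excess is at least $(\sqrt{p_i} - \sqrt{p_j})^2$. Finally, $(p_i - p_j)^2 = (\sqrt{p_i} - \sqrt{p_j})^2 (\sqrt{p_i} + \sqrt{p_j})^2 \leq 2(\sqrt{p_i} - \sqrt{p_j})^2$ because $(\sqrt{p_i} + \sqrt{p_j})^2 \leq 2(p_i + p_j) \leq 2$, completing the claim.

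Aggregating pointwise, Jensen's inequality applied to the concave function $\sqrt{\cdot}$ yields
\begin{align*}
\err(H) - \err(\Hopt) &= \E_x\enco{p_{\Hopt(x)} - p_{H(x)}} \leq \E_x\enco{\sqrt{2(\phi - \phi^*)}} \\
&\leq \sqrt{2\,\E_x\enco{\phi - \phi^*}} = \sqrt{2(\rsk(F) - \rsk^*)} \leq \sqrt{2\eps}.
\end{align*}
The main obstacle is the pointwise calibration step. A tempting shortcut via a swap argument, defining $z'$ by exchanging coordinates $i$ and $j$ of $z$, yields the clean identity $\phi(z,p) - \phi(z',p) = (p_i - p_j)(e^{z_j} - e^{z_i}) S(z) / e^{z_i + z_j}$ together with $\phi(z',p) \geq \phi^*(p)$, but this bound vanishes at the boundary $z_i = z_j$ and must therefore be supplemented by the explicit constrained minimization calculation above; the generic case in which additional labels tie with $j$ requires some bookkeeping but only increases the excess and hence preserves the bound.
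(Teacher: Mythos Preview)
Your proof is correct and reaches the same pointwise calibration inequality $(\sqrt{p_i}-\sqrt{p_j})^2 \leq \phi(z,p)-\phi^*(p)$ followed by the same factorization $(p_i-p_j)^2\leq 2(\sqrt{p_i}-\sqrt{p_j})^2$, but your route to the pointwise bound differs from the paper's. You keep all $k$ labels, use the factorization $\phi(z,p)=S(z)T(z,p)-1$, and then solve a constrained simplex minimization (relaxed to $w_i\leq w_j$ and pushed to the boundary). The paper instead exploits a pairwise decomposition $\rsk(F')=\sum_x p(x)\sum_{l<l'}L_{F'}^{l,l'}(x)$ with $L_{F'}^{l,l'}=p_l e^{F'(l')-F'(l)}+p_{l'}e^{F'(l)-F'(l')}$; because the global minimizer $F^*(x,l)=\tfrac12\ln p_l$ also minimizes each pair separately, every term $L_F^{l,l'}-L_{F^*}^{l,l'}$ is nonnegative and all but the pair $(H(x),\Hopt(x))$ can simply be dropped. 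That reduces the pointwise step to an elementary two-variable inequality: with $A=H(x),\,B=\Hopt(x)$, $F(x,A)\geq F(x,B)$ and $p_A\leq p_B$, one shows directly $L_F^{A,B}\geq p_A+p_B$ and $L_{F^*}^{A,B}=2\sqrt{p_Ap_B}$. Your constrained optimization is perfectly valid and yields (in principle) a sharper pointwise bound since nothing is discarded, but the paper's two-label reduction is shorter and avoids the Lagrange/boundary analysis altogether. For aggregation you use Jensen on $\sqrt{\cdot}$ while the paper uses Cauchy--Schwarz over examples; these are equivalent here.
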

\begin{proof}
  The proof is similar to that of Theorem~12.1 in
  \citep{SchapireFr12}, which in turn is based on the work by
  \citet{Zhang04} and \citet{BartlettJoMc06}. 
  Let $p(x) = \Pr_{(x',y')\sim D}\enc{x'=x}$ denote the the
  marginalized probability of drawing 
  example $x$ from $D$, and let $p^x_y = \Pr_{(x',y')\sim
    D}\enco{y'=y|x'=x}$ denote the conditional probability of drawing
  label $y$ given we have drawn example $x$.
  We first rewrite the difference in errors between $H$ and $\Hopt$
  using these probabilities.
  Firstly note that the accuracy of any classifier $H'$ is given by
  \[
  \sum_{x\in\X}D(x,H'(x)) = \sum_{x\in\X}p(x)p^x_{H'(x)}.
  \]
  If $\X'$ is the set of examples where the
  predictions of $H$ and $\Hopt$ differ, $\X'=\set{x\in\X: H(x) \neq
    \Hopt(x)}$, then we may bound the error differences as
  \begin{equation}
    \label{multi:errbnd:eqn}
  \err(H) - \err(\Hopt) =
  \sum_{x\in\X'}p(x)\enc{p^x_{\Hopt(x)}-p^x_{H(x)}}.
  \end{equation}
  We next relate this expression to the difference of the losses.
  
  Notice that for any scoring function $F'$, the $\rsk$ can be
  rewritten as follows :
  \[
  \rsk(F') = \sum_{x\in\X}p(x)\sum_{l< l'}
  \enct{p^x_le^{F'(x,l') - F'(x,l)}
    + p^x_{l'}e^{F'(x,l) - F'(x,l')}}.
  \]
  Denote the inner summation in curly brackets by
  $L_{F'}^{l,l'}(x)$,
  and
  notice this quantity is minimized if
  \begin{eqnarray*}
    e^{F'(x,l) - F'(x,l')} =
    \sqrt{p^x_{l}/p^{x}_{l'}},
    &\mbox{ i.e., if }
    F'(x,l) - F'(x,l') =
    \frac{1}{2}\ln p^x_l
    - \frac{1}{2}\ln p^x_{l'}.
  \end{eqnarray*}
  Therefore, defining $\Fopt(x,l) =  \frac{1}{2}\ln p^x_l$ leads
  to a $\rsk$ minimizing function $\Fopt$. Furthermore, for any
  example and pair
  of labels $l,l'$, the quantity $L_{\Fopt}^{l,l'}(x)$ is at most
  $L_{F}^{l,l'}(x)$, and therefore the difference in losses of $\Fopt$
  and $F$ may be lower bounded as follows:
  \begin{eqnarray}
  \eps \geq \rsk(F) - \rsk(\Fopt)
  &=& \sum_{x\in\X}p(x)
  \sum_{l\neq l'}\enc{L_{F}^{l,l'}-L_{\Fopt}^{l,l'}} \nonumber \\
  \label{multi:Bayes_cons_three:eqn}
  &\geq& \sum_{x\in\X'}p(x)
  \enct{L_{F}^{H(x),\Hopt(x)}-L_{\Fopt}^{H(x),\Hopt(x)}}.
  \end{eqnarray}
  We next study the term in the curly brackets for a fixed $x$.
  Let $A$ and $B$ denote $H(x)$ and $\Hopt(x)$, respectively.
  We have already seen that $L_{\Fopt}^{A,B} = 2\sqrt{p^x_Ap^x_B}$.
  Further, by definition of Bayes optimality, $p^x_A \geq p^x_B$. 
  On the other hand, since $x\in\X'$, we know that $B\neq A$, and
  hence, $F(x,A) \geq F(x,B)$. Let $e^{F(x,B) - F(x,A)} = 1+\eta$, for
  some $\eta\geq 0$.
  The quantity $L_{F}^{A,B}$ may be lower bounded as:
  \begin{eqnarray*}
  L_{F}^{A,B} &=&
  p^x_Ae^{F(x,B) - F(x,A)}
  + p^x_{B}e^{F(x,A) - F(x,B)}\\
  &=& (1+\eta)p^x_A + (1+\eta)^{-1}p^x_B\\
  &\geq& (1+\eta)p^x_A + (1-\eta)p^x_B \\
  &=& p^x_A + p^x_B + \eta(p^x_A - p^x_B)
  \geq p^x_A + p^x_B.
\end{eqnarray*}
Combining we get
\[
L_F^{A,B} - L^{A,B}_{\Fopt} \geq
p^x_A + p^x_B - 2\sqrt{p^x_Ap^x_B} =
\enc{\sqrt{p^x_A} -  \sqrt{p^x_B}}^2.
\]
Plugging back into \eqref{multi:Bayes_cons_three:eqn} we get
\begin{equation}
  \label{multi:Bayes_cons_four:eqn}
\sum_{x\in\X'}p(x)\enc{\sqrt{p^x_{H(x)}} -  \sqrt{p^x_{\Hopt(x)}}}^2
\leq \eps.
\end{equation}
Now we connect \eqref{multi:errbnd:eqn} to the previous expression as
follows
\begin{eqnarray}
  \lefteqn{
  \enct{\err(H)-\err(\Hopt)}^2}
  \nonumber \\
  &=&
  \enct{\sum_{x\in\X'}p(x)\enc{p^x_{\Hopt(x)}-p^x_{H(x)}}}^2
  \nonumber \\
  &\leq&
  \enc{\sum_{x\in\X'}p(x)}
  \enc{\sum_{x\in\X'}p(x)
    \enc{p^x_{\Hopt(x)}-p^x_{H(x)}}^2}
  \mbox{ (Cauchy-Schwartz) }
  \nonumber \\
  &\leq&
  \label{multi:Bayes_cons_six:eqn}  
  \sum_{x\in\X'}p(x)
  \enc{\sqrt{p^x_{\Hopt(x)}}-\sqrt{p^x_{H(x)}}}^2
  \enc{\sqrt{p^x_{\Hopt(x)}}+\sqrt{p^x_{H(x)}}}^2  \\
  \label{multi:Bayes_cons_five:eqn}  
  &\leq&
  2\sum_{x\in\X'}p(x)
  \enc{\sqrt{p^x_{\Hopt(x)}}-\sqrt{p^x_{H(x)}}}^2
  \\
  &\leq&
  2\eps, \mbox{ (by \eqref{multi:Bayes_cons_four:eqn})}
  \nonumber
\end{eqnarray}
where \eqref{multi:Bayes_cons_six:eqn} holds since
\[
\sum_{x\in\X'}p(x) = \Pr_{(x',y')\sim D}\enco{x'\in\X'} \leq 1,
\]
and \eqref{multi:Bayes_cons_five:eqn} holds since
\begin{eqnarray*}
  p^x_{H(x)}+p^x_{\Hopt(x)} &=& \Pr_{(x',y')\sim
  D}\enco{y'\in\set{H(x),\Hopt(x)} | x}
\leq 1 \\
\implies
\sqrt{p^x_{H(x)}}+\sqrt{p^x_{\Hopt(x)}} &\leq& \sqrt{2}.
\end{eqnarray*}
Therefore, $\err(H) - \err(\Hopt) \leq \sqrt{2\eps}$.
\end{proof}
Note that the classifier $\cH_T$, derived from the truncated scoring function
$\cF_T$ in the manner provided in \eqref{multi:h_f:eqn}, makes
identical predictions to, and hence has the same $\err$ as, the
classifier $H_T$ output by the adaptive algorithm.
Further, Lemma~\ref{multi:rsk_cons:lem} seems to suggest that $\cF_T$
satisfies the condition in \eqref{multi:Bayes_cons_one:eqn}, which,
combined with our previous observation $\err(H)=\err(\cH_T)$,
would imply $H_T$ approaches the optimal error.
However, the condition \eqref{multi:Bayes_cons_one:eqn} requires
achieving optimal 
risk over all scoring functions, and not just ones achievable as a
combination of weak classifiers in $\H$.
Therefore, in order to use Lemma~\ref{multi:Bayes_cons:lem}, we 
require the weak classifier space to be sufficiently rich, so that
some combination of the weak
classifiers in $\H$  
attains $\rsk$ arbitrarily close to the minimum attainable by any
function:
\begin{eqnarray}
  \label{multi:richness:eqn}
  \inf_{\alpha:\H\to\R}\rsk(F_\alpha)
 =
\inf_{F:\X\times\Y\to\R}\rsk(F).
\end{eqnarray}
The richness condition, along with our previous arguments and
Lemma~\ref{multi:rsk_cons:lem}, immediately imply the following
result. 
\begin{theorem}
  \label{multi:err_convg:thm}
  If the weak classifier space $\H$ satisfies the richness condition
  \eqref{multi:richness:eqn}, and the number of rounds
  $T$ is set to $\sqrt{m}$, then the error of the final classifier $H_T$
  approaches the Bayes optimal error:
  \begin{equation}
    \label{multi:err_convg:eqn}
    \Pr\enco{
    \err\enc{H_{\sqrt{m}}} \leq \err(\Hopt) +
    O\enc{m^{-c}}}
   \geq 1 - \frac{1}{m^2},
  \end{equation}
  where $c>0$ is some positive constant, and the probability is over
  the draw of training examples.  
\end{theorem}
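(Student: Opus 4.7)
The plan is to assemble the three ingredients already established, namely Lemma~\ref{multi:rsk_cons:lem} (risk convergence of the truncated function $\cF_{T_m}$), Lemma~\ref{multi:Bayes_cons:lem} (risk-to-error conversion), and the richness hypothesis \eqref{multi:richness:eqn}, into a direct chain of inequalities. The key point is that Lemma~\ref{multi:Bayes_cons:lem} requires the scoring function to have risk close to the infimum over \emph{all} scoring functions $F\colon\X\times\Y\to\R$, whereas AdaBoost.MM only optimizes over convex combinations of weak classifiers in $\H$; the richness condition is precisely what bridges these two infima.

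First I would observe that the classifier $\cH_T$ derived from $\cF_T$ via \eqref{multi:h_f:eqn} agrees pointwise with the output classifier $H_T$ of \eqref{multi:ht:eqn}, because the operations used to form $\cF_T$ from $F_T$ in \eqref{multi:cft:eqn}, namely subtracting the per-example maximum and truncating below at $-C$, are monotonic transformations of $F_T(x,\cdot)$ that preserve the argmax label. Hence $\err(H_T)=\err(\cH_T)$, and it suffices to control the error of $\cH_T$.

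Next I would apply Lemma~\ref{multi:rsk_cons:lem} with $T_m=\sqrt m$, which guarantees that with probability at least $1-1/m^2$ one has $\rsk(\cF_{T_m})\leq \inf_{F}\rsk(F)+O(m^{-c})$; if this lemma's statement is read as comparing to $\inf_{\alpha}\rsk(F_{\alpha})$, then this is exactly the point at which the richness hypothesis \eqref{multi:richness:eqn} enters, allowing replacement of $\inf_{\alpha}\rsk(F_{\alpha})$ by $\inf_{F}\rsk(F)$. I would then feed this bound into Lemma~\ref{multi:Bayes_cons:lem} with $\eps=O(m^{-c})$ to obtain $\err(\cH_{T_m})\leq \err(\Hopt)+\sqrt{2\eps}=O(m^{-c/2})$. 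Combining with the identification $\err(H_{T_m})=\err(\cH_{T_m})$ from the previous paragraph yields \eqref{multi:err_convg:eqn}, possibly with a halved but still positive exponent $c$.

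I do not expect a substantive obstacle: all the genuine work, including the $1/T$ empirical-risk bound, the uniform-convergence transfer to $\rsk$, and the Bayes consistency of the multiclass exponential loss, was carried out in the three preceding lemmas. The present theorem is essentially a corollary whose only content is to verify that the pieces compose and that the truncation in \eqref{multi:cft:eqn} does not disturb the argmax prediction, so that the risk bound for $\cF_{T_m}$ translates into an error bound for the actual output classifier $H_{T_m}$.
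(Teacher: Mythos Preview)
Your proposal is correct and matches the paper's own approach essentially step for step: the paper likewise observes (in the paragraph preceding the theorem) that $\cH_T$ and $H_T$ make identical predictions, then states that the richness condition together with Lemma~\ref{multi:rsk_cons:lem} and Lemma~\ref{multi:Bayes_cons:lem} ``immediately imply'' the result. Your remark that the exponent may be halved by the square root in Lemma~\ref{multi:Bayes_cons:lem} is also exactly right and consistent with the paper's vague ``some positive constant.''
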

A consequence of the theorem is our strongest consistency result:
\begin{corollary}
  \label{multi:cons:cor}
  Let $\Hopt$ be the Bayes optimal classifier, and let the weak
  classifier space $\H$ 
  satisfy the richness condition \eqref{multi:richness:eqn}.
  Suppose $m$ example and label pairs
  $\set{(x_1,y_1),\ldots,(x_m,y_m)}$ are sampled from the distribution
  $D$, the number of rounds $T$ is set to be $\sqrt{m}$, and these are
  supplied to AdaBoost.MM.
  Then, in the limit $m\to\infty$,
  the final classifier $H_{\sqrt{m}}$ output by AdaBoost.MM achieves
  the Bayes optimal error almost surely:
  \begin{equation}
    \label{multi:cons:eqn}
    \Pr\enco{\enct{\lim_{m\to\infty}\err(H_{\sqrt{m}})}
      = \err(\Hopt)} = 1,
  \end{equation}
  where the probability is over the randomness due to the draw of
  training examples. 
\end{corollary}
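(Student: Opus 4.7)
The plan is to derive the corollary directly from Theorem~\ref{multi:err_convg:thm} via a standard Borel-Cantelli argument. For each $m$, let $E_m$ denote the bad event that $\err(H_{\sqrt{m}}) > \err(\Hopt) + K m^{-c}$ for the constant $K$ implicit in the $O(m^{-c})$ bound of~\eqref{multi:err_convg:eqn}. Theorem~\ref{multi:err_convg:thm} gives $\Pr[E_m] \leq 1/m^2$, which is summable: $\sum_{m=1}^{\infty} \Pr[E_m] < \infty$.

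By the first Borel-Cantelli lemma, with probability $1$ only finitely many of the $E_m$ occur. On the event that only finitely many $E_m$ occur, there is some random but finite $m_0$ such that for all $m \geq m_0$, $\err(H_{\sqrt{m}}) \leq \err(\Hopt) + K m^{-c}$, and hence $\limsup_{m\to\infty}\err(H_{\sqrt{m}}) \leq \err(\Hopt)$. Since by definition of the Bayes optimal classifier~\eqref{multi:bayes_opt:eqn} we also have $\err(H_{\sqrt{m}}) \geq \err(\Hopt)$ deterministically for every $m$, the $\liminf$ is likewise at least $\err(\Hopt)$, so the limit exists and equals $\err(\Hopt)$. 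This shows $\lim_{m\to\infty}\err(H_{\sqrt{m}}) = \err(\Hopt)$ on an event of probability $1$, which is~\eqref{multi:cons:eqn}.

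There is no real obstacle here: the entire technical content was already absorbed into Theorem~\ref{multi:err_convg:thm} (which in turn rests on Lemmas~\ref{multi:rsk_cons:lem} and \ref{multi:Bayes_cons:lem}, and ultimately on the richness assumption~\eqref{multi:richness:eqn}). The only point worth a line of care is the summability of $1/m^2$, which is what makes the schedule $T_m = \sqrt{m}$ compatible with a Borel-Cantelli conclusion; had Lemma~\ref{multi:rsk_cons:lem} only given a $1/m$ failure probability, one would have needed to pass to a subsequence or strengthen the concentration bound. Given the $1/m^2$ rate already in hand, the corollary follows in a couple of lines.
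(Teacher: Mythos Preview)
Your proposal is correct and matches the paper's intended approach: the paper explicitly states that the proof of this corollary is based on the Borel--Cantelli Lemma (citing Corollary~12.3 of \citep{SchapireFr12}) and omits the details. Your write-up fills in exactly those details, including the observation that $\err(H_{\sqrt{m}}) \geq \err(\Hopt)$ deterministically handles the lower bound on the $\liminf$.
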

The proof of Corollary~\ref{multi:cons:cor}, based on the
Borel-Cantelli Lemma,
is very similar to that of Corollary~12.3 in \citep{SchapireFr12}, and so
we omit it.
When $k=2$, AdaBoost.MM is identical to
AdaBoost. For Theorem~\ref{multi:err_convg:thm} to hold for AdaBoost, the
richness assumption \eqref{multi:richness:eqn} is necessary, since
there are examples due to \citet{LongSe10} showing that the theorem
may not hold when that assumption is violated.

Although we have seen that technically
AdaBoost.MM is consistent under broad
assumptions, intuitively perhaps it is not clear what properties were
responsible for this desirable behavior. We next briefly study the
high level ingredients necessary for consistency in boosting
algorithms.

\paragraph{Key ingredients for consistency.}
We show here how both the
choice of the loss function as well as the weak learning condition
play crucial roles in ensuring consistency. If the loss function were
not Bayes consistent as in Lemma~\ref{multi:Bayes_cons:lem}, driving
it down arbitrarily could still lead to high test error. For example,
the loss employed by SAMME~\citep{ZhuZoRoHa09} does not upper bound
the error, and therefore although it can manage to drive down its loss
arbitrarily when supplied by the dataset discussed in
Figure~\ref{multi:samme:fig}, although its error remains
high.

Equally important is the weak learning condition. Even if the loss
function is chosen to be error, so that it is trivially Bayes
consistent, choosing the wrong weak learning condition could lead to
inconsistency. In particular, if the weak learning condition is
stronger than necessary, then, even on a boostable dataset where the
error can be driven to zero, the boosting algorithm may get stuck
prematurely because its stronger than necessary demands cannot be met by the weak
classifier space. We have already seen theoretical examples of such
datasets, and we will see some practical instances of this phenomenon
in the next section. 

On the other hand, if the weak learning condition
is too weak, then a lazy Weak Learner may satisfy the Booster's demands
by returning weak classifiers belonging only to a non-boostable subset
of the available weak classifier space. For instance, consider again
the dataset in Figure~\ref{multi:samme:fig}, and assume that this time
the weak classifier space is much richer, and consists of all possible
classifying functions. However, in any round, Weak Learner searches
through the space, first trying hypotheses $h_1$ and $h_2$ shown in
the figure, and only if neither satisfy the Booster, search for
additional weak classifiers. In that case, any algorithm using SAMME's
weak learning condition, which is known to be too weak and satisfiable
by just the two hypotheses $\set{h_1,h_2}$, would only receive $h_1$
or $h_2$ in each round, and therefore be unable to reach the optimum
accuracy. Of course, if the Weak Learner is extremely generous and
helpful, then it may return the right collection of weak classifiers
even with a null weak learning condition that places no demands on
it. However, in practice, many Weak Learners used are similar to the
lazy weak learner described since these are computationally
efficient.

To see the effect of inconsistency arising from too weak
learning conditions in practice, we need to test boosting algorithms
relying on such datasets on significantly hard datasets, where only
the strictest Booster strategy can extract the necessary service from Weak Learner
for creating an optimal classifier.
We did not include
such experiments, and it will be an interesting empirical conjecture to
be tested in the future.
However, we did include experiments that illustrate
the consequence of using too strong conditions, and we discuss those
in the next section.

\section{Experiments}
\label{multi:expts:section}

In the final section of this \thesis{chapter}\notthesis{paper},
we report preliminary experimental results on 13 UCI datasets:
\textsf{letter, nursery, pendigits, satimage, segmentation, vowel,
  car, chess, connect4, forest, magic04, poker, abalone}. These
datasets are all multiclass except for \textsf{magic04}, have a wide
range of sizes, contain all combinations of real and categorical features,
have different number of examples to number of features per example
ratios, and are drawn from a 
variety of real-life situations. Most sets come with prespecified
train and test splits which we use; if not, we picked a random $4:1$
split. 
Throughout this section by \opt{} we refer to the version of
AdaBoost.MM studied in the consistency section, which uses the
approximate step size \eqref{multi:approx_step:eqn}.

There were two kinds of experiments. In the first, we took
a standard implementation \ada{} of Adaboost.M1 with C4.5 as weak
learner, and the Boostexter implementation \mh{} of Adaboost.MH using
stumps~\citep{SchapireSi00}, and compared it against
 our method \opt{}
with a naive greedy tree-searching weak-learner \greedy{}. The size of
trees to be used can be specified to our weak learner, and was chosen
to be the of the same order as the tree sizes used by \ada{}. The
test-error after $500$ rounds of boosting for each algorithm and
dataset is bar-plotted in Figure~\ref{multi:final:fig}. The performance is
comparable with \ada{} and far better than \mh{} (understandably since
stumps are far weaker than trees), even though our weak-learner is
very naive. The convergence rates of error with rounds of \ada{} and
\opt{} are also comparable, as shown in Figure~\ref{multi:ratebig:fig} (we
omitted the curve for \mh{} since it lay far above both \ada{} and \opt{}).
\begin{figure}
  \caption[Comparing final test error with standard boosting algorithms]{This is a plot
    of the final test-errors of standard 
    implementations of \ada{}, \mh{} and \opt{} after 500 rounds of
    boosting on different datasets. Both \ada{} and \opt{} achieve
    comparable error, which is often larger than that achieved by
    \mh{}.
    This is because \ada{} and \opt{} used trees of
    comparable sizes which were often much larger and powerful
    than the decision stumps that \mh{} boosted.}
  \label{multi:final:fig}
  \begin{center}
  \includegraphics[width=\textwidth]{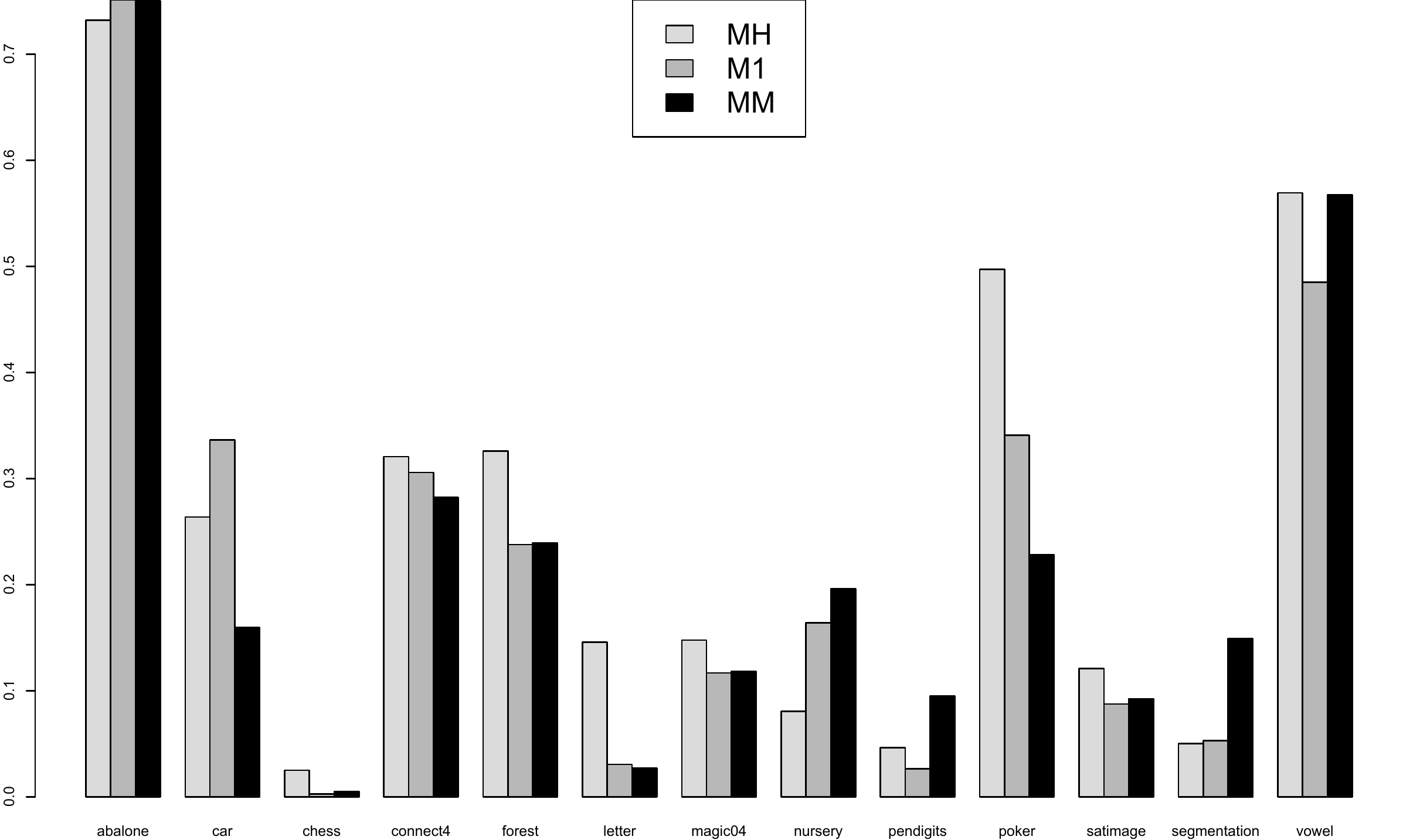}
  \end{center}
\end{figure}
\begin{figure}
  \caption[Comparing rate of convergence of test error with
  AdaBoost.M1]{Plots of the rates at which \ada{}(black,dashed) and 
    \opt{}(red,solid) drive down test-error on different data-sets
    when using trees of comparable sizes as weak classifiers.
    \ada{} called C4.5, and \opt{} called \greedy{}, respectively, as
    weak-learner.
    The tree sizes returned by C4.5  were used as a bound on the size
    of the trees that \greedy{} was allowed to return.
    This bound on the tree-size depended on the dataset, and are shown next to
    the dataset labels.}
  \label{multi:ratebig:fig}
  \begin{center}
  \includegraphics[width=\textwidth]{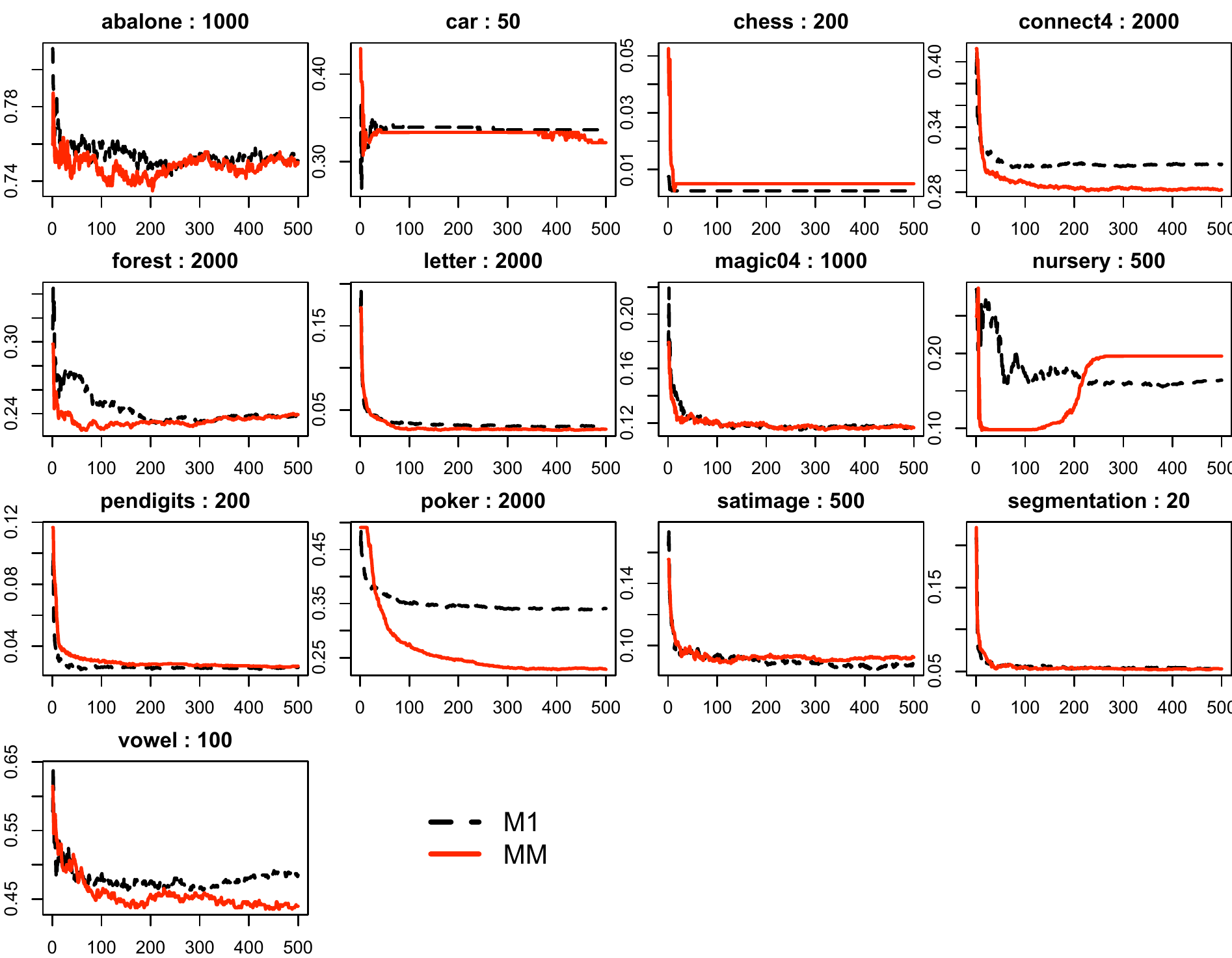}
  \end{center}
\end{figure}

We next investigated how each algorithm performs with less powerful
weak-learners. We modified \mh{} so that it uses a tree returning
a single multiclass prediction on each example.
For \mh{} and \opt{} we used the \greedy{} weak
learner, while for \ada{} we used a more powerful-variant \greedyinfo{}
whose greedy criterion was information gain rather than error (we also
ran \ada{} on top of \greedy{} but \greedyinfo{} consistently gave better
results so we only report the latter). We tried all tree-sizes in the
set \{10, 20, 50, 100, 200, 500, 1000, 2000, 4000\} up to the
tree-size used 
by \ada{} on C4.5 for each data-set. We plotted the error of each
algorithm against tree size for each data-set in
Figure~\ref{multi:errVsTreeSz:fig}.
\begin{figure}
  \caption[Test error with weak classifiers of varying complexity]
  {For this figure, \ada{}(black,
    dashed), \mh{}(blue, dotted) and \opt{}(red,solid) were designed to
    boost decision trees of restricted sizes.
    The final test-errors of the three algorithms after 500 rounds of
    boosting are plotted  against the maximum tree-sizes allowed for
    the weak classifiers. 
    \opt{} achieves much lower error when the weak classifiers are very
    weak, that is, with smaller trees.}
  \label{multi:errVsTreeSz:fig}
  \begin{center}
  \includegraphics[width=\textwidth]{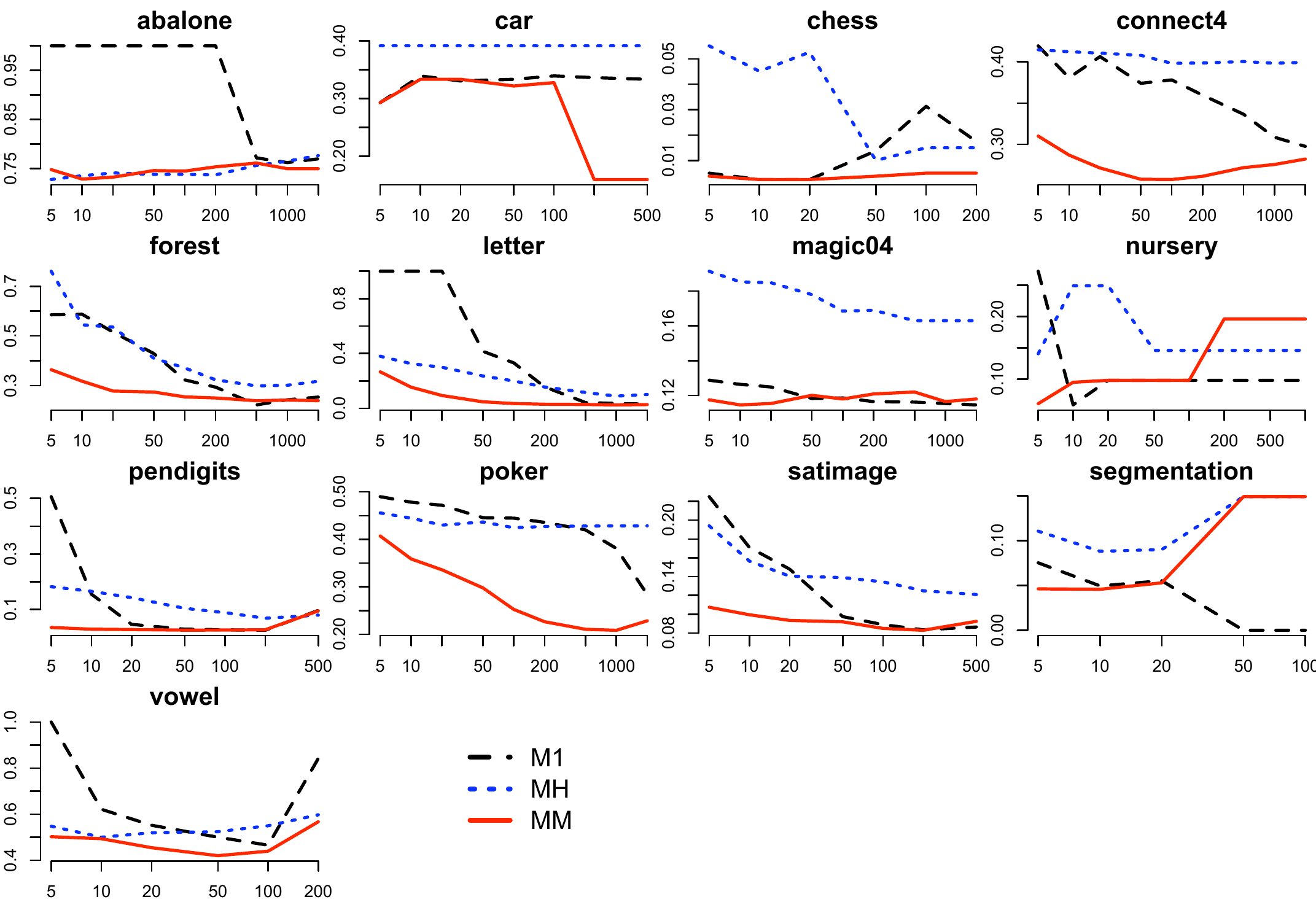}
  \end{center}
\end{figure}
As predicted by our theory, our
algorithm succeeds in boosting the accuracy even when the tree size
is too small to meet the stronger weak learning assumptions of the
other algorithms. More insight is provided by plots in
Figure~\ref{multi:errs-5:fig} of the rate of convergence of error with
rounds when the tree size allowed is very small ($5$).
\begin{figure}
  \caption[Test error through rounds with very simple weak classifiers]
  {A plot of how fast the test-errors of the three algorithms
    drop with rounds when the weak classifiers are trees with a size
    of at most 5. Algorithms \ada{} and \mh{} make strong demands which
  cannot be met by the extremely weak classifiers after a few rounds,
  whereas \opt{} makes gentler demands, and is hence able to drive
  down error through all the rounds of boosting.}
  \label{multi:errs-5:fig}
  \begin{center}
  \includegraphics[width=\textwidth]{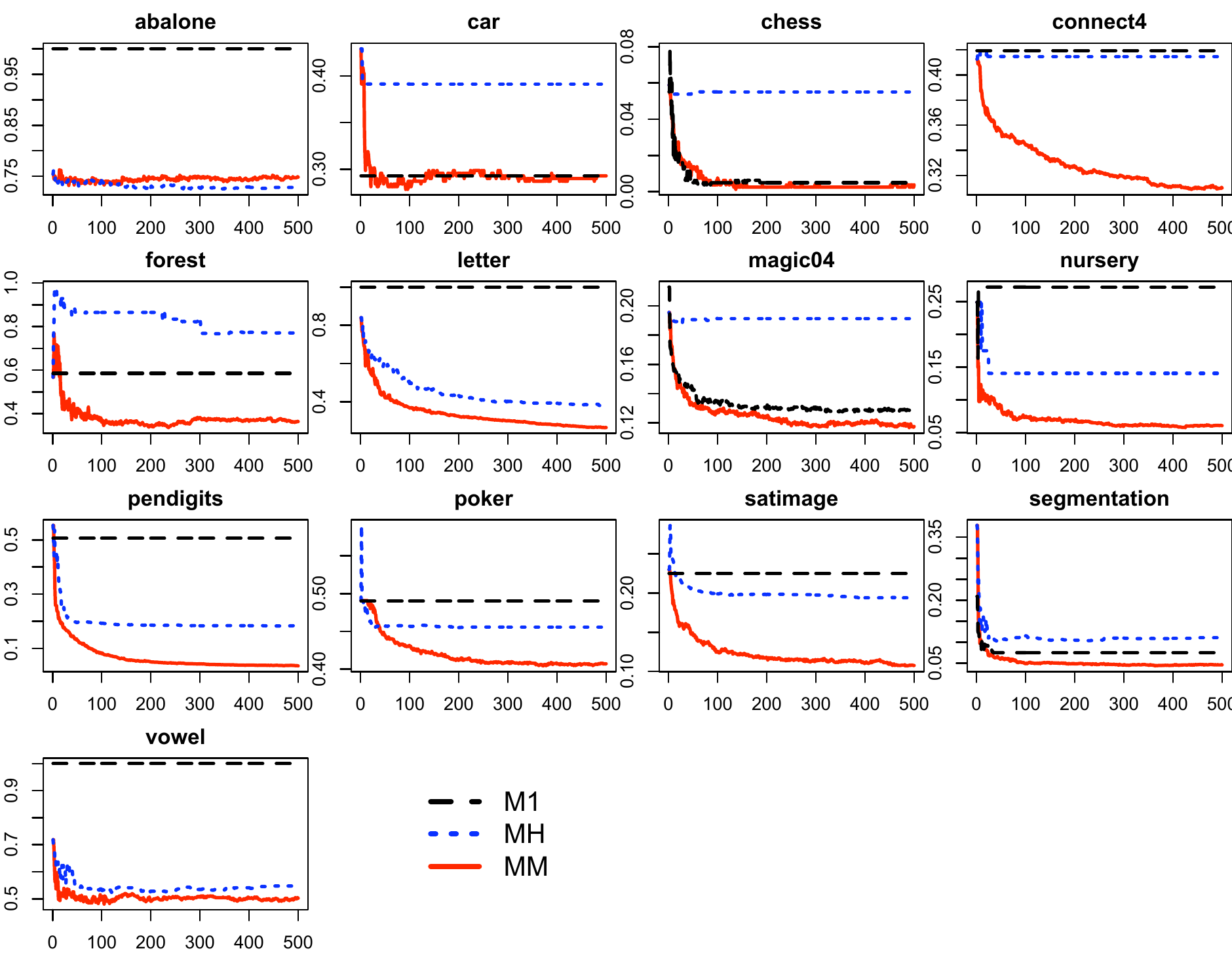}
  \end{center}
\end{figure}
Both \ada{}
and \mh{} drive down the error for a few rounds. But since boosting
keeps creating harder distributions, very soon the small-tree learning
algorithms \greedy{} and \greedyinfo{} are no longer able to meet the
excessive requirements of \ada{} and \mh{} respectively. However, our
algorithm makes more reasonable demands that are easily met by
\greedy{}.

\old{
  \section{Conclusion}
  In summary, we create a new framework for studying multiclass
  boosting.
  This framework is very general and captures the weak learning
  conditions implicitly used by many earlier multiclass boosting
  algorithms as well as novel conditions, including
  the minimal condition under which boosting is possible.
  We also show how to design boosting algorithms relying on
  these weak learning conditions that drive down training error
  rapidly.
  These algorithms are the optimal strategies for playing  certain two
  player games.
  Based on this game-theoretic approach, we also design a
  multiclass boosting algorithm that is consistent, i.e.,
  approaches the minimum  empirical risk, and under some basic
  assumptions, the Bayes optimal test error.
  Preliminary experiments show that this algorithm can achieve much
  lower error compared to existing algorithms when used with very
  weak classifiers. 

  Although we can efficiently compute the game-theoretically optimal
  strategies under most conditions, when using the minimal weak
  learning condition, and non-convex 0-1 error as
  loss function, we require
  exponential computational time to solve the corresponding boosting
  games.
  Boosting algorithms based on error are potentially far more noise
  tolerant than those based on convex loss functions, and finding
  efficiently computable near-optimal strategies in this situation
  is an important problem left for future work.
  Further, we primarily work with weak classifiers that output a single
  multiclass prediction per example, whereas weak hypotheses that make
  multilabel multiclass predictions are typically more powerful.
  We believe that multilabel predictions do not increase the power of
  the weak learner in our framework, and our theory can be extended
  without much work to include such hypotheses, but we do not address
  this here. 
  Finally, it will be interesting to see if the notion of minimal weak
  learning condition can be extended to boosting settings beyond
  classification, such as ranking.
}

\notthesis{
  \subsection*{Acknowledgments}
  This research was funded by the National Science Foundation under
  grants IIS-0325500 and IIS-1016029.
}

\thesis{\section{Appendix}}
\notthesis{\section*{Appendix}}

\old{
\notthesis{\subsection*{Optimality of the OS strategy}}
\thesis{\subsection{Optimality of the OS strategy}}

  Here we prove Theorem~\ref{multi:dgstrat:thm}.
  The proof of the upper bound on the loss is very similar to the
  proof of Theorem~2 in \citep{Schapire01}.
  For the lower bound, a similar result is proven in Theorem~3 in
  \citep{Schapire01}.
  However, the proof relies on certain assumptions that may not hold
  in our setting\thesis{.}\notthesis{,
  and we instead follow the more direct lower bounding techniques in
  Section 5 of \citep{MukherjeeSc10a}.} 

  We first show that the average potential of states does not increase
  in any round.
  The dual form of the recurrence \eqref{multi:dgrec_dual:eqn} and the
  choice of the cost matrix $\C_t$ in \eqref{multi:os_ct:eqn} together ensure
  that for each example $i$,
  \begin{eqnarray*}
  \phi^{\B(i)}_{T-t}\enc{\vs_t(i)}
  &=&
  \max_{l=1}^k
  \enct{\phi^{\B(i)}_{T-t-1}\enc{\vs_t(i)+\ve_l}
    - \enc{\C_t(i)(l)-\dotp{\C_t(i)}{\B(i)}}}\\
  &\geq&
  \phi^{\B(i)}_{T-t-1}\enc{\vs_t(i)+\ve_{h_t(x_i)}}
  - \enc{C_t(i,h_t(x_i))-\dotp{\C_t(i)}{\B(i)}}.
\end{eqnarray*}
Summing up the inequalities over all examples, we get
\[
\sum_{i=1}^m \phi^{\B(i)}_{T-t-1}\enc{\vs_t(i)+\ve_{h_t(x_i)}}
\leq
\sum_{i=1}^m\phi^{\B(i)}_{T-t}\enc{\vs_t(i)}
+
\sum_{i=1}^m\enct{C_t(i,h_t(x_i))-\dotp{\C_t(i)}{\B(i)}}
\]
The first two summations are the total potentials in round $t+1$ and
$t$, respectively, and the third summation is the difference in the
costs incurred by the weak-classifier $h_t$ returned in iteration $t$
and the baseline $\B$.
By the weak learning condition, this difference is non-positive, implying
that the average potential does not increase.

Next we show that the bound is tight.
In particular choose any accuracy parameter $\eps>0$, and total number
of iterations $T$, and let $m$ be as large as in \eqref{multi:mlarge:eqn}.
We show that in any iteration $t\leq T$, based on Booster's choice
of cost-matrix $\C$, an adversary can choose a weak
classifier $h_t\in\Hall$ such that the weak learning
condition is satisfied, and the average potential does
not fall by more than an amount $\eps/T$.
In fact, we show how to choose labels $l_1,\ldots, l_m$ such 
that the following hold simultaneously:
\begin{eqnarray}
  \label{multi:label_cost:eqn}
  \sum_{i=1}^m C(i,l_i) &\leq& \sum_{i=1}^m \dotp{\C(i)}{\B(i)}\\
  \label{multi:label_pot:eqn}
  \sum_{i=1}^m\phi^{\B(i)}_{T-t}\enc{\vs_t(i)}
  &\leq&
  \frac{m\eps}{T} + \sum_{i=1}^m\phi^{\B(i)}_{T-t-1}\enc{\vs_t(i)+\ve_{l_i}}
\end{eqnarray}
This will imply that the final potential or loss is at least $\eps$
less than the bound in \eqref{multi:dgstratbnd:eqn}.

We first construct, for each example $i$, a distribution
$\vp_i\in\Delta\set{1,\ldots,k}$ such that the size of the support of
$\vp_i$ is either 1 or 2, and
\begin{equation}
  \label{multi:prec:eqn}
\phi^{\B(i)}_{T-t}(\vs_t(i)) = \E_{l\sim\vp_i}
\enco{\phi_{T-t-1}^{\B(i)}\enc{\vs_t(i)+\ve_l}}.
\end{equation}
To satisfy \eqref{multi:prec:eqn}, by \eqref{multi:dgrec:eqn}, we may
choose $\vp_i$ as any optimal response of the max player in the minmax
recurrence when the min player chooses $\C(i)$:
\begin{eqnarray}
  \label{multi:popt:eqn}
\vp_i &\in& 
  \argmax_{\vp\in\mathcal{P}_i}
  \enct{\E_{l\sim \vp}\enco{\phi^{\B(i)}_{t-1}\enc{\vs + \ve_l}}}
  \\
  \label{multi:ptope:eqn}
  \mbox { where }
  \ptope_i
  &=& \set{\vp\in\Delta\set{1,\ldots,k}:
  \E_{l\sim \vp} \enco{C(i,l)} \leq \dotp{\C(i)}{\B(i)}}.
\end{eqnarray}
The existence of $\vp_i$ is guaranteed, since, by
Lemma~\ref{multi:pexists:lem}, the polytope $\ptope_i$ is non-empty
for each $i$.
The next result shows that we may choose $\vp_i$ to have a support of
size 1 or 2.
\begin{lemma}
  \label{multi:supp_size:lem}
  There is a $\vp_i$ satisfying \eqref{multi:popt:eqn} with
  either 1 or 2 non-zero coordinates.
\end{lemma}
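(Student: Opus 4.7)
The plan is to exploit the fact that $\ptope_i$ is a polytope in $\R^k$ and the objective in \eqref{multi:popt:eqn} is a linear function of $\vp$. Specifically, viewing $\phi^{\B(i)}_{t-1}\enc{\vs_t(i)+\ve_l}$ for $l=1,\ldots,k$ as fixed scalars, the expectation $\E_{l\sim\vp}\enco{\phi^{\B(i)}_{t-1}\enc{\vs_t(i)+\ve_l}}$ is a linear functional of $\vp$. Since $\ptope_i$ is a nonempty (by Lemma~\ref{multi:pexists:lem}) closed and bounded convex polytope -- bounded because it is contained in the simplex $\Delta\set{1,\ldots,k}$ -- this linear objective attains its maximum over $\ptope_i$ at some vertex $\vp_i$.

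The key step is then to argue that every vertex of $\ptope_i$ has at most two nonzero coordinates. The polytope $\ptope_i$ sits inside $\R^k$ and is cut out by the following constraints: the $k$ nonnegativity inequalities $p(l) \geq 0$, the single equality $\sum_l p(l) = 1$, and the one cost inequality $\sum_l p(l)C(i,l) \leq \dotp{\C(i)}{\B(i)}$. At a vertex, the set of active constraints must have rank $k$. The simplex equality always contributes rank one, and the cost inequality, when active, contributes at most one more. Thus at least $k-2$ of the nonnegativity constraints must be tight, forcing $\vp_i$ to have at most $2$ nonzero coordinates. If the cost inequality is slack at the vertex, then $k-1$ of the nonnegativity constraints are tight and $\vp_i$ has exactly $1$ nonzero coordinate.

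The main conceptual content is simply identifying that the problem is a linear program and invoking the standard fact that an optimum of an LP is attained at an extreme point of its feasible region. The only thing to be careful about is the degree-counting of active constraints to pin down the support size: I would phrase this as counting that the number of tight constraints at a vertex is at most $1 + 1 + (\text{number of zero coordinates of } \vp_i)$, which must be at least $k$, giving the bound on the support. I expect no substantive obstacle beyond writing out this linear-algebraic argument cleanly.
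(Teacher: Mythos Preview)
Your argument is correct but takes a different route from the paper. You invoke the standard LP fact that a linear objective over a bounded polytope attains its maximum at a vertex, and then count active constraints at a vertex (one simplex equality, at most one cost inequality, hence at least $k-2$ tight nonnegativity constraints) to bound the support size by $2$. The paper instead gives a direct, self-contained Carath\'eodory-style reduction: starting from any optimal $\vp^*$ with support larger than $2$, it finds two labels whose costs straddle the mean cost $\mu_i$, builds a two-point distribution $\vq$ on those labels with the same mean cost, writes $\vp^* = \lambda\vq + (1-\lambda)\vp$ with both $\vq,\vp\in\ptope_i$ having strictly smaller support, and uses linearity of the objective to conclude one of them is also optimal, then iterates. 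Your approach is cleaner if one is willing to quote LP vertex theory; the paper's approach is more elementary and constructive, never appealing to any black-box polytope result.
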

\begin{proof}
  Let $\vps$ satisfy \eqref{multi:popt:eqn}, and let its support set
  be $S$.
  Let $\mu_i$ denote the mean cost under this distribution:
  \[
  \mu_i = \E_{l\sim\vps}\enco{C(i,l)} \leq \dotp{\C(i)}{\B(i)}.
  \]
  If the support has size at most 2, then we are done.
  Further, if each non-zero coordinate $l\in S$ of $\vps$ satisfies
  $C(i,l) = \mu_i$,
  then the distribution $\vp_i$ that concentrates all its weight
  on the label $l^{\min}\in S$ minimizing
  $\phi^{\B(i)}_{t-1}\enc{\vs + \ve_{l^{\min}}}$ is an optimum solution
  with support of size 1.
  Otherwise, we can pick labels $l^{\min}_1,l^{\min}_2\in S$ such that
  \[
  C(i,l^{\min}_1) <  \mu_i < C(i,l^{\min}_2).
  \]
  Then we may choose a distribution $\vq$ supported on these two labels
  with mean $\mu_i$:
  \[
  \E_{l\sim\vq}\enco{C(i,l)}
  = q(l^{\min}_1)C(i,l^{\min}_1) + q(l^{\min}_2)C(i,l^{\min}_2)
  = \mu_i.
  \]
  Choose $\lambda$ as follows:
  \[
  \lambda = \min\set{
    \frac{p^*(l^{\min}_1)}{q(l^{\min}_1)},
    \frac{p^*(l^{\min}_2)}{q(l^{\min}_2)}
  },
  \]
  and
  write $\vps = \lambda \vq + (1-\lambda)\vp$.
  Then both $\vp,\vq$ belong to the polytope $\ptope_i$, and have
  strictly fewer non-zero coordinates than $\vps$.
  Further, by linearity, one of $\vq,\vp$ is also optimal.
  We repeat the process on the new optimal distribution till we
  find one which has only 1 or 2 non-zero entries.
\end{proof}

We next show how to choose the labels $l_1,\ldots,l_m$ using the
distributions $\vp_i$.
For each $i$, let $\set{l^+_i,l^-_i}$ be the support of $\vp_i$ so
that
\[
C\enc{i,l^+_i} \leq \E_{l\sim\vp_i}\enco{C(i,l)} \leq
C\enc{i,l^-_i}.
\]
(When $\vp_i$ has only one non-zero element, then $l^+_i = l^-_i$.)
For brevity, we use $p^+_i$ and $p^-_i$ to denote $p_i\enc{l^+_i}$ and
$p_i\enc{l^-_i}$, respectively.
If the costs of both labels are equal, we assume without loss of
generality that $\vp_i$ is concentrated on label $l^-_i$:
\begin{equation}
  \label{multi:zero_conv:eqn}
  C\enc{i,l^-_i} - C\enc{i,l^-_i} = 0
  \implies
  p^+_i = 0, p^-_i = 1.
\end{equation}

We will choose each label $l_i$ from the set $\set{l^-_i,l^+_i}$.
In fact, we will choose a partition $S_+,S_-$  of the examples
$1,\ldots,m$ and choose the label depending on which side $S_{\xi}$, for
$\xi\in\set{-,+}$, of the partition element $i$ belongs to:
\[
l_i = l^{\xi}_i \mbox{ if } i \in S_{\xi}.
\]
In order to guide our choice for the partition, 
we introduce parameters $a_i,b_i$ as follows:
\begin{eqnarray*}
  a_i &=& C(i,l^-_i) - C(i,l^+_i), \\
  b_i &=& \phi^{\B(i)}_{T-t-1}\enc{\vs_t(i) + \ve_{l^-_i}}
  - \phi^{\B(i)}_{T-t-1}\enc{\vs_t(i) + \ve_{l^+_i}}.
\end{eqnarray*}
Notice that for each example $i$ and each sign-bit $\xi\in\set{-1,+1}$,
we have the following relations:
\begin{eqnarray}
  \label{multi:pcost:eqn}
  C(i,l^{\xi}_i) &=& \E_{l\sim\vp_i}\enco{C(i,l)} - \xi(1-p^{\xi}_i)a_i\\
  \label{multi:ppot:eqn}
  \phi^{\B(i)}_{T-t-1}\enc{\vs_t(i) + \ve_{l^{\xi}_i}}
  &=&
  \E_{l\sim\vp_i}\enco{\phi^{\B(i)}_{T-t}(i,l)}
   - \xi(1-p^{\xi}_i)b_i.
\end{eqnarray}
Then the cost incurred by the choice of labels can be expressed in
terms of the parameters $a_i,b_i$ as follows:
\begin{eqnarray}
  \sum_{i\in S_+}C(i,l^+_i)
  + \sum_{i\in S_-}C(i,l^-_i)
  &=&
  \sum_{i\in S_+}
  \enct{ \E_{l\sim\vp_i}\enco{C(i,l)} - a_i + p^+_ia_i}
  \nonumber \\
  && +
  \sum_{i\in S_-}
  \enct{ \E_{l\sim\vp_i}\enco{C(i,l)} + p^+_ia_i}
  \nonumber \\
  &=&
  \sum_{i=1}^m\E_{l\sim\vp_i}\enco{C(i,l)}
  + \enc{
    \sum_{i=1}^mp^+_ia_i
    - \sum_{i\in S_+}a_i}
  \nonumber \\
  \label{multi:aineq:eqn}
  &\leq&
  \sum_{i=1}^m \dotp{\C(i)}{\B(i)}
 + \enc{
    \sum_{i=1}^mp^+_ia_i
    - \sum_{i\in S_+}a_i},
\end{eqnarray}
where the first equality follows from \eqref{multi:pcost:eqn}, and the
inequality follows from the constraint on $\vp_i$ in
\eqref{multi:ptope:eqn}.
Similarly, the potential of the new states is given by
\begin{eqnarray}
  \lefteqn{
    \sum_{i\in S_+}\phi^{\B(i)}_{T-t-1}\enc{\vs_t(i) + \ve_{l^+_i}} +
    \sum_{i\in S_-}\phi^{\B(i)}_{T-t-1}\enc{\vs_t(i) + \ve_{l^-_i}}
  }\\
  &=&
  \sum_{i\in S_+}
  \enct{ \E_{l\sim\vp_i}
    \enco{\phi^{\B(i)}_{T-t-1}\enc{\vs_t(i) + \ve_{l}}}
    - b_i + p^+_ib_i}
  \nonumber \\
  && +
  \sum_{i\in S_-}
  \enct{ \E_{l\sim\vp_i}
    \enco{\phi^{\B(i)}_{T-t-1}\enc{\vs_t(i) + \ve_{l}}}
    + p^+_ib_i}
  \nonumber \\
  &=&
  \sum_{i=1}^m\E_{l\sim\vp_i}
    \enco{\phi^{\B(i)}_{T-t-1}\enc{\vs_t(i) + \ve_{l}}}
  + \enc{
    \sum_{i=1}^mp^+_ib_i
    - \sum_{i\in S_+}b_i}
  \nonumber \\
  \label{multi:bineq:eqn}
  &=&
  \sum_{i=1}^m \phi^{\B(i)}_{T-t}\enc{\vs_t(i)}
 + \enc{
    \sum_{i=1}^mp^+_ib_i
    - \sum_{i\in S_+}b_i},
\end{eqnarray}
where the first equality follows from \eqref{multi:ppot:eqn}, and the
last equality from an optimal choice of $\vp_i$ satisfying
\eqref{multi:prec:eqn}.
Now, \eqref{multi:aineq:eqn} and \eqref{multi:bineq:eqn} imply that in
order to satisfy \eqref{multi:label_cost:eqn} and 
\eqref{multi:label_pot:eqn}, it suffices to choose a subset $S_+$ 
satisfying
\begin{align}
  \label{multi:req:eqn}
  \sum_{i\in S_+}a_i &\geq \sum_{i=1}^mp^+_ia_i, &
\sum_{i\in S_+}b_i &\leq \frac{m\eps}{T} + \sum_{i=1}^mp^+_ib_i.
\end{align}
We simplify the required conditions.
Notice the first constraint tries to ensure that $S_+$ is big, while
the second constraint forces it to be small, provided the $b_i$ are
non-negative.
However, if $b_i<0$ for any example $i$, then
adding this example to $S_+$ only helps both inequalities.
In other words, if we can always construct a set $S_+$ satisfying
\eqref{multi:req:eqn} in the case where the $b_i$ are non-negative,
then we may handle the more general situation by just adding the
examples $i$ with negative $b_i$ to the set $S_+$ that would be
constructed by considering only the examples $\set{i:b_i \geq 0}$.
Therefore we may assume without loss of generality that the $b_i$ are
non-negative. 
Further, assume (by relabeling if necessary) that $a_1,\ldots,a_{m'}$ are
positive and $a_{m'+1}, \ldots a_m = 0$, for some $m'\leq m$.
By \eqref{multi:zero_conv:eqn}, we have $p^+_i = 0$ for $i>m'$.
Therefore, by assigning the examples $m'+1,\ldots,m$ to the opposite partition
$S_-$, we can ensure that \eqref{multi:req:eqn} holds if the following
is true:
\begin{eqnarray}
  \label{multi:areq:eqn}
  \sum_{i\in S_+}a_i &\geq& \sum_{i=1}^{m'}p^+_ia_i, \\
  \label{multi:breq:eqn}
\sum_{i\in S_+}b_i &\leq& \max_{i=1}^{m'}\abs{b_i} +
\sum_{i=1}^{m'}p^+_ib_i,
\end{eqnarray}
where, for \eqref{multi:breq:eqn}, we additionally used that, by the
choice of $m$ \eqref{multi:mlarge:eqn} and the bound on loss
variation \eqref{multi:lossvar:eqn}, we have 
\[
\frac{m\eps}{T} \geq \diameter(L,T) \geq b_i
\mbox{ for } i=1,\ldots,m.
\]
The next lemma shows how to construct such a subset $S_+$, and
concludes our lower bound proof.
\begin{lemma}
  \label{multi:balance:lem}
  Suppose $a_1,\ldots,a_{m'}$ are positive and
  $b_1,\ldots,b_{m'}$ are non-negative reals, and
  $p^+_1,\ldots,p^+_{m'}\in [0,1]$ are probabilities.
  Then there exists a subset $S_+\subseteq\set{1,\ldots,m'}$ such that
  \eqref{multi:areq:eqn} and \eqref{multi:breq:eqn} hold.
\end{lemma}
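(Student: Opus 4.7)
The plan is to construct $S_+$ greedily by sorting the indices according to the ratio $b_i/a_i$, which is well-defined since every $a_i > 0$. Specifically, assume (by relabeling) that
\[ b_1/a_1 \leq b_2/a_2 \leq \cdots \leq b_{m'}/a_{m'}, \]
and let $k$ be the smallest index in $\set{1,\ldots,m'}$ for which $\sum_{i\leq k} a_i \geq \sum_{i=1}^{m'} p^+_i a_i$. Such a $k$ exists because $\sum_{i=1}^{m'} a_i \geq \sum_i p^+_i a_i$ (as each $p^+_i \in [0,1]$ and $a_i \geq 0$). Define $S_+ = \set{1,\ldots,k}$. By construction, inequality \eqref{multi:areq:eqn} holds.

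For inequality \eqref{multi:breq:eqn}, the key is to translate the ``$a$-surplus'' bound into a ``$b$-surplus'' bound using the sorted ratios. By minimality of $k$, we have $\sum_{i<k} a_i < \sum_i p^+_i a_i$, so
\[ 0 \leq \sum_{i\leq k} a_i - \sum_{i=1}^{m'} p^+_i a_i < a_k. \]
Now I would rewrite
\[ \sum_{i\leq k} b_i - \sum_{i=1}^{m'} p^+_i b_i = \sum_{i\leq k}(1-p^+_i)b_i - \sum_{i>k} p^+_i b_i, \]
and similarly for $a$. The sorted order gives $b_i \leq (b_k/a_k)a_i$ for $i \leq k$ and $b_i \geq (b_k/a_k)a_i$ for $i > k$. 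Combining these with the non-negativity of $1-p^+_i$ and $p^+_i$, I would bound
\[ \sum_{i\leq k}(1-p^+_i)b_i - \sum_{i>k} p^+_i b_i \;\leq\; \frac{b_k}{a_k}\enct{\sum_{i\leq k}(1-p^+_i)a_i - \sum_{i>k} p^+_i a_i} \;=\; \frac{b_k}{a_k}\enc{\sum_{i\leq k}a_i - \sum_i p^+_i a_i} \;<\; b_k \leq \max_i|b_i|. \]
This yields \eqref{multi:breq:eqn}.

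The main obstacle is handling the degenerate case $b_k = 0$, where the ratio-based reasoning becomes delicate. But in that case the sorted order forces $b_i = 0$ for every $i \leq k$, and the inequality \eqref{multi:breq:eqn} reduces to $0 \leq \max_i|b_i| + \sum_i p^+_i b_i$, which is immediate from $b_i \geq 0$. So the only substantive step is the ratio-monotonicity argument above, and the rest is routine bookkeeping.
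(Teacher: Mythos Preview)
Your proof is correct and follows essentially the same approach as the paper: both sort the indices by the ratio $b_i/a_i$ (the paper writes this equivalently as sorting $(a_i-b_i)/a_i$ in decreasing order), take $S_+$ to be the shortest prefix whose $a$-sum meets $\sum_i p^+_i a_i$, and then use the monotonicity of the sorted ratios to control the $b$-surplus. Your pivot argument via $b_k/a_k$ is a slightly cleaner variant of the paper's bounding step, which instead introduces auxiliary scaled probabilities $\tilde p_i\le p^+_i$ with $\sum_i\tilde p_ia_i=\sum_{i\le I}a_i$ and manipulates the resulting identity; the two arguments are interchangeable.
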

\begin{proof}
  Assume, by relabeling if necessary, that the following ordering holds:
  \begin{equation}
    \label{multi:order:eqn}
  \frac{a(1)-b(1)}{a(1)}
  \geq
  \cdots
  \geq
  \frac{a(m')-b(m')}{a(m')}.
  \end{equation}
  Let $I\leq m'$ be the largest integer such that
  \begin{equation}
    \label{multi:Isuma:eqn}
  a_1 + a_2 + \cdots + a_I < \sum_{i=1}^{m'}p^+_ia_i.
\end{equation}
Since the $p^+_i$ are at most $1$, $I$ is in fact at most
$m'-1$. 
We will choose $S_+$ to be the first $I+1$ examples
  $S_+ = \set{1,\ldots,I+1}$. 
  Observe that \eqref{multi:areq:eqn} follows immediately from the
  definition of $I$.
  Further, \eqref{multi:breq:eqn} will hold if the following is true
  \begin{equation}
    \label{multi:Isumb:eqn}
  b_1 + b_2 + \cdots + b_I \leq \sum_{i=1}^{m'}p^+_ib_i,
  \end{equation}
  since the addition of one more example $I+1$ can exceed this
  bound by at most $b_{I+1} \leq \max_{i=1}^{m'}\abs{b_i}$.
  We prove \eqref{multi:Isumb:eqn} by showing that the left hand side
  of this equation is not much more than the left hand side of
  \eqref{multi:Isuma:eqn}.
  We first rewrite the latter summation differently.
  The inequality in \eqref{multi:Isuma:eqn} implies we can pick
  $\tp_1,\ldots,\tp_{m'}\in [0,1]$ (e.g., by simply scaling the
  $p^+_i$'s appropriately) such that 
  \begin{eqnarray}
    \label{multi:tpeq:eqn}
    a_1 + \ldots + a_I &=& \sum_{i=1}^{m'}\tp_ia_i\\
    \label{multi:tpineq:eqn}
    \mbox{ for } i=1,\ldots, m' \mbox{: } \tp_i &\leq& p_i.
  \end{eqnarray}
  By subtracting off the first $I$ terms in the right hand side of
  \eqref{multi:tpeq:eqn} from both sides we get
  \[
  (1-\tp_1)a_1 + \cdots + (1-\tp_I)a_I
  =
  \tp_{I+1}a_{I+1} + \cdots + \tp_{m'}a_{m'}.
  \]
  Since the terms in the summations are non-negative,
  we may combine the above with the ordering property in
  \eqref{multi:order:eqn} to get
  \begin{eqnarray}
  \lefteqn{(1-\tp_1)a_1\enc{\frac{a_1-b_1}{a_1}}
  + \cdots +
  (1-\tp_I)a_I\enc{\frac{a_I-b_I}{a_I}}}
\nonumber \\
\label{multi:part_ineq:eqn}
&\geq&
  \tp_{I+1}a_{I+1}\enc{\frac{a_{I+1}-b_{I+1}}{a_{I+1}}}
  + \cdots +
  \tp_{m'}a_{m'}\enc{\frac{a_{m'}-b_{m'}}{a_{m'}}}.
  \end{eqnarray}
Adding the expression
\[
\tp_1a_1\enc{\frac{a_1-b_1}{a_1}}
+ \cdots +
\tp_Ia_I\enc{\frac{a_I-b_I}{a_I}}
\]
to both sides of \eqref{multi:part_ineq:eqn} yields
\begin{eqnarray}
  \sum_{i=1}^I a_i\enc{\frac{a_i-b_i}{a_i}}
  &\geq&
  \sum_{i=1}^{m'}\tp_ia_i\enc{\frac{a_i-b_i}{a_i}}
  \nonumber \\
  \mbox{ i.e. }
  \sum_{i=1}^Ia_i - \sum_{i=1}^Ib_i
  &\geq&
  \sum_{i=1}^{m'}\tp_ia_i
  -\sum_{i=1}^{m'}\tp_ib_i
  \nonumber \\
  \label{multi:btpineq:eqn}
  \mbox{ i.e. }
  \sum_{i=1}^Ib_i
  &\leq&
  \sum_{i=1}^{m'}\tp_ib_i,
\end{eqnarray}
where the last inequality follows from \eqref{multi:tpeq:eqn}.
Now \eqref{multi:Isumb:eqn} follows from \eqref{multi:btpineq:eqn}
using \eqref{multi:tpineq:eqn} and the fact that the $b_i$'s are
non-negative.
\end{proof}
This completes the proof of the lower bound.
}

\notthesis{\subsection*{Consistency proofs}}
\thesis{\subsection{Consistency proofs}}

Here we sketch the proofs of Lemmas~\ref{multi:ersk_cons:lem}
  and \ref{multi:rsk_cons:lem}.
Our approach will be to relate our algorithm to AdaBoost and then use
relevant known results on the consistency of AdaBoost.
We first describe the correspondence between the
two algorithms, and then state and connect the relevant
results on AdaBoost to the ones in this section.

For any given
multiclass dataset and weak 
classifier space, we will 
obtain a transformed binary dataset and weak classifier space, such
that the run of AdaBoost.MM on the original
dataset will be in perfect correspondence with the run of AdaBoost on
the transformed dataset. In particular, the loss and error on both the
training and test set of the combined classifiers produced by our
algorithm will be exactly equal 
to those produced by AdaBoost, while the space of functions and
classifiers on the two datasets will be in
correspondence.

Intuitively, we transform our multiclass classification problem into a
single binary classification problem in a way similar to the all-pairs
multiclass to binary reduction.
A very similar reduction was carried out by \cite{FreundSc97}.
Borrowing their terminology, the transformed dataset roughly consists
of \emph{mislabel} triples $(x,y,l)$ where $y$ is the true label of
the example and $l$ is an incorrect example.
The new binary label of a mislabel triple is always $-1$, signifying
that $l$ is not the true label.
A multiclass classifier becomes a binary classifier that predict $\pm
1$ on the mislabel triple $(x,y,l)$ depending on whether the
prediction on $x$ matches label $l$; therefore error on the
transformed binary dataset is low whenever the multiclass accuracy is high.
The details of the transformation are provided in
Figure~\ref{multi:tf:fig}.

\begin{figure}
  \centering
  \thesis{\footnotesize}
  \notthesis{\footnotesize}
  \begin{tabular}{|p{2.5cm}|l|l|}
    \hline
    & AdaBoost.MM
    & AdaBoost
     \bigstrut \\ \hline \hline \bigstrut
    Labels
    & $\Y=\set{1,\ldots,k}$
    & $\tf{\Y}=\set{-1,+1}$
     \bigstrut \\ \hline \bigstrut
    Examples
    & $\X$
    &
    $\tf{\X} = \X \times
    \enc{\enc{\Y\times \Y}
      \setminus
      \set{(y,y):y\in\Y}}$
     \bigstrut \\ \hline \bigstrut
    Weak classifiers
    & $h:\X\to\Y$
    &
    $ \tf{h}:\tf{X}\to\set{-1,0,+1}$, where\\
    &&
    $\tf{h}(x,y,l) = \1\enco{h(x)=l} - \1\enco{h(x)=y}$
    \bigstrut \\ \hline \bigstrut
    Classifier space
    & $\H$
    & $\tf{\H} = \set{\tf{h}:h\in\H}$
    \bigstrut \\ \hline \bigstrut
    Scoring function
    & $F:\X\times\Y\to\R$
    & $\tf{F}:\tf{\X}\to\R$ where \\
    &&
    $\tf{F}(x,y,l) = F(x,l)-F(x,y)$
    \bigstrut \\ \hline \bigstrut
    Clamped function
    & $\cF(x,y) = $
    &
    $\bar{\tf{F}}(x,y,l) = \tf{F}(x,y,l)$,
    if $\abs{\tf{F}(x,y,l)}  \leq C$\\
    &
    $\max\set{-C, F(x,l) - \max_{l'} F_T(x,l')}$
    &
    $\bar{\tf{F}}(x,y,l) = C$,
    if $\abs{\tf{F}(x,y,l)} > C$
    \bigstrut \\ \hline \bigstrut
    Classifier weights
    & $\alpha:\H\to\R$
    & $\tf{\alpha}:\tf{\H}\to\R$ where \\
    &&
    $\tf{\alpha}\enc{\tf{h}} =
    \alpha(h)$
    \bigstrut \\ \hline \bigstrut
    Combined hypothesis
    & $F_\alpha$ where
    & $\tf{F}_{\tf{\alpha}}$ where \\
    &
    $F_{\alpha}(x,l) =
    \sum_{h\in\H}\alpha(h)
    \1\enco{h(x)=l}
    $
    &
    $\tf{F}_{\tf{\alpha}}(x,y,l)
    =
    \sum_{\tf{h}\in\tf{\H}}
    \tf{\alpha}\enc{\tf{h}}
    \tf{h}(x,y,l)
    $
    \bigstrut \\ \hline \bigstrut
    Training set
    & $S = \set{(x_i,y_i): 1\leq i\leq m}$
    & $\tf{S} =$\\
    &&
    $\set{((x_i,y_i,l),\xi): \xi=-1,l\neq y_i, 1\leq i\leq m}$ 
    \bigstrut \\ \hline \bigstrut
    Test distribution
    & $D$ over $\X\times\Y$
    & $\tf{D}$ over $\tf{X}\times\tf{Y}$ where\\
    &&
    $\tf{D}((x,y,l),-1) = D(x,y)/(k-1)$\\
    &&
    $\tf{D}((x,y,l),+1) = 0$
    \bigstrut \\ \hline \bigstrut
    Empirical risk
    & $\ersk(F) =$
    & $\tf{\ersk}\enc{\tf{F}}$\\
    &
    $\frac{1}{m}
    \sum_{i=1}^m\sum_{l\neq y_i}e^{F(x_i,l) - F(x_i,y_i)}$
    &
    $\frac{1}{m(k-1)}
    \sum_{i=1}^m\sum_{l\neq y_i}
    e^{-\xi\tf{F}(x_i,y_i,l)}$
    \bigstrut \\ \hline \bigstrut
    Test risk
    &
    $\rsk(F) =$
    &
    $\tf{\rsk}\enc{\tf{F}} =$ \\    
    &
    $\E_{(x,y)\sim D}
    \enco{\sum_{l\neq y}
    e^{F(x,l)-F(x,y)}}$
    &
    $\E_{((x,y,l),\xi)\sim \tf{D}}
    \enco{e^{-\xi\tf{F}(x,y,l)}}$
    \bigstrut \\ \hline    
  \end{tabular}
  \thesis{\normalsize}
  \notthesis{\normalsize}
  \caption{Details of transformation between
    AdaBoost.MM and AdaBoost.}
  \label{multi:tf:fig}
\end{figure}

Some of the properties between the functions and their transformed
counterparts are described in the next lemma, showing that we are
essentially dealing with similar objects.
\begin{lemma}
  \label{multi:ident:lem}
  The following are identities for any scoring function
  $F:\X\times\Y\to\R$ and weight function $\alpha:\H\to\R$:
  \begin{eqnarray}
    \label{multi:ersk_map:eqn}
    \ersk \enc{F_\alpha} &=& \tf{\ersk} \enc{\tf{F}_{\tf{\alpha}}} \\
    \label{multi:rsk_map:eqn}    
    \rsk \enc{\cF} &=& \tf{\rsk} \enc{\bar{\tf{F}}}.
  \end{eqnarray}
\end{lemma}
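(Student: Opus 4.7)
The plan is to prove both identities by direct substitution using the explicit correspondences listed in Figure~\ref{multi:tf:fig}, after first establishing an intermediate structural link between weighted combinations on the two sides.

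First I would unfold the definitions of $\tf{\alpha}$ and $\tf{h}$ to show that for every weight function $\alpha:\H\to\R$ and every mislabel triple $(x,y,l)$ with $l\neq y$,
\begin{equation*}
\tf{F}_{\tf{\alpha}}(x,y,l) = \sum_{h\in\H}\alpha(h)\bigl(\1\enco{h(x)=l}-\1\enco{h(x)=y}\bigr) = F_{\alpha}(x,l) - F_{\alpha}(x,y).
\end{equation*}
This is the basic bridge between the two formalisms: it says that the binary scoring function associated with $\tf{\alpha}$ is exactly the pairwise difference of the multiclass scoring function associated with $\alpha$.

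For \eqref{multi:ersk_map:eqn}, I would plug this identity into the definition of $\tf{\ersk}$, using the fact that every transformed training example carries binary label $\xi=-1$, so that $-\xi\,\tf{F}_{\tf{\alpha}}(x_i,y_i,l) = F_{\alpha}(x_i,l)-F_{\alpha}(x_i,y_i)$. The double sum over $i\in\{1,\ldots,m\}$ and $l\neq y_i$ then matches term-for-term with the definition of $\ersk(F_{\alpha})$, up to the fixed normalization constant built into $\tf{\ersk}$, yielding the claimed equality.

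For \eqref{multi:rsk_map:eqn}, I would first prove the clamped analogue of the bridge identity, namely that for every triple $(x,y,l)$ with $l\neq y$,
\begin{equation*}
\bar{\tf{F}}(x,y,l) = \cF(x,l) - \cF(x,y).
\end{equation*}
This requires a case analysis on which label $l^{\star}\in\Y$ attains $\max_{l'}F(x,l')$: the three cases $l^{\star}=y$, $l^{\star}=l$, and $l^{\star}\notin\{y,l\}$. In each case I would verify that the clip of $\tf{F}(x,y,l)=F(x,l)-F(x,y)$ into the symmetric window of half-width $C$ agrees with the difference $\cF(x,l)-\cF(x,y)$ of the individually clamped scores. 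Once this is in hand, substituting into the definition of $\tf{\rsk}$, using the explicit form of $\tf{D}$ which places mass $D(x,y)/(k-1)$ on each triple $((x,y,l),-1)$ and zero mass on $\xi=+1$, and then summing over $l\neq y$ collapses back to $\rsk(\cF)$.

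The main obstacle is the case analysis for the clamped identity: clamping the pairwise difference $\tf{F}(x,y,l)$ is not a priori the same operation as taking the pairwise difference of individually clamped coordinates of $F$, so one must check carefully near the boundaries where some $F(x,l')$ falls more than $C$ below $\max_{l'}F(x,l')$. By contrast, once this clamped identity is settled, both reductions to the risk equalities are straightforward bookkeeping using linearity of expectation and the explicit form of $\tf{D}$.
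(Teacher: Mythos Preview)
The paper does not actually prove this lemma: it says only that ``the proofs involve doing straightforward algebraic manipulations to verify the identities and are omitted.'' Your plan to establish the bridge $\tf{F}_{\tf{\alpha}}(x,y,l)=F_\alpha(x,l)-F_\alpha(x,y)$ and then substitute term-by-term is exactly the natural way to carry out that verification, so in spirit your approach coincides with the paper's.

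That said, one step in your plan would fail as written. The clamped bridge identity $\bar{\tf{F}}(x,y,l)=\cF(x,l)-\cF(x,y)$ does \emph{not} hold in your third case $l^\star\notin\{y,l\}$. Take $k=3$, label $l^\star=1$ achieving the maximum with $F(x,1)=0$, true label $y=2$ with $F(x,2)=-C/2$, and $l=3$ with $F(x,3)=-2C$. Then $\cF(x,3)-\cF(x,2)=(-C)-(-C/2)=-C/2$, whereas $\tf{F}(x,2,3)=F(x,3)-F(x,2)=-3C/2$, which any symmetric clip to width $C$ sends to $-C$ (and the table's literal rule sends to $+C$). Either way the two sides disagree, and the discrepancy propagates to the risks: the $l=3$ term contributes $e^{-C/2}$ on the $\rsk(\cF)$ side but $e^{\pm C}$ on the $\tf{\rsk}(\bar{\tf{F}})$ side. (There is also the $k-1$ normalization mismatch you already flagged for \eqref{multi:ersk_map:eqn}.)

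This is almost certainly imprecision in the paper's Figure~\ref{multi:tf:fig} definitions of the clamped objects rather than a flaw in your overall strategy; since the paper omits the argument, it never confronts this boundary case. Your case analysis is the right instrument, but in the third case it will expose the problem rather than close it, so you should either amend the definition of $\bar{\tf{F}}$ (or of $\cF$) so that the two clampings are compatible, or weaken \eqref{multi:rsk_map:eqn} to an inequality or an equality up to a bounded factor, which is all the downstream application to Lemma~\ref{multi:rsk_cons:lem} actually needs.
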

The proofs involve doing straightforward algebraic manipulations to
verify the identities and are omitted.

The next lemma connects the two algorithms.
We show that the scoring function output by AdaBoost when run on
the transformed dataset is the transformation of the function output
by our algorithm. The proof again involves tedious but straightforward
checking of details and is omitted.
\begin{lemma}
  \label{multi:same_runs:lem}
  If AdaBoost.MM produces scoring function
  $F_{\alpha}$ when run for $T$ rounds with the training set $S$ and
  weak classifier space $\H$, then
  AdaBoost produces the scoring function $\tf{F}_{\tf{\alpha}}$ when
  run for $T$ rounds with the training set $\tf{S}$ and space
  $\tf{\H}$. We assume that for both the algorithms, Weak Learner
  returns the weak classifier in each round that achieves the maximum
  edge. Further we consider the version of
  AdaBoost.MM that chooses weights according to the approximate rule
  \eqref{multi:approx_step:eqn}. 
\end{lemma}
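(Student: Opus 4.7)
The plan is to induct on the round number $t$. The induction hypothesis is that after $t-1$ rounds, the weak classifiers $h_1,\ldots,h_{t-1}$ and weights $\alpha_1,\ldots,\alpha_{t-1}$ chosen by AdaBoost.MM on $(S,\H)$ are the pointwise transformations under the map of Figure~\ref{multi:tf:fig} of the weak classifiers $\tf h_1,\ldots,\tf h_{t-1}$ and weights $\tf\alpha_1,\ldots,\tf\alpha_{t-1}$ chosen by AdaBoost on $(\tf S,\tf\H)$, so that in particular the partial scoring functions satisfy $\tf F_{\tf\alpha_{1:t-1}}(x,y,l)=F_{\alpha_{1:t-1}}(x,l)-F_{\alpha_{1:t-1}}(x,y)=f_{t-1}(i,l)-f_{t-1}(i,y_i)$ for every mislabel triple $(x_i,y_i,l)$. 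The base case $t=1$ is immediate, since both scoring functions are identically zero.

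For the inductive step I would proceed in four short stages. First, compute AdaBoost's distribution on round $t$: its update rule gives $D_t(x_i,y_i,l)\propto\exp(-\xi\cdot\tf F_{\tf\alpha_{1:t-1}}(x_i,y_i,l))=\exp(f_{t-1}(i,l)-f_{t-1}(i,y_i))$ for $l\neq y_i$, where $\xi=-1$. These are (up to the common normalizer $Z^{\text{AB}}_{t-1}=\sum_i\sum_{l\neq y_i}e^{f_{t-1}(i,l)-f_{t-1}(i,y_i)}$) precisely the positive-off-diagonal entries of AdaBoost.MM's cost matrix $C_t$ defined in Algorithm~\ref{multi:adapt:algo}. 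Second, verify that for every $h\in\H$ with image $\tf h\in\tf\H$, the AdaBoost edge of $\tf h$ under $D_t$, namely $\sum_{i,l\neq y_i}D_t(x_i,y_i,l)\bigl(\mathbbm 1[h(x_i)=l]-\mathbbm 1[h(x_i)=y_i]\bigr)$, equals AdaBoost.MM's edge $\delta_t=-(\sum_i C_t(i,h(x_i)))/Z^{\text{AB}}_{t-1}$ from \eqref{multi:deltat:eqn}; this is a direct algebraic expansion. Since the map $h\mapsto \tf h$ is a bijection $\H\to\tf\H$ and Weak Learner selects the max-edge hypothesis in either setting, AdaBoost.MM's choice $h_t$ and AdaBoost's choice $\tf h_t$ are in correspondence.

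Third, match the step sizes. AdaBoost.MM's approximate rule \eqref{multi:approx_step:eqn} sets $\alpha_t=\tfrac12\ln\tfrac{1+\delta_t}{1-\delta_t}$, and standard $\pm 1$-output AdaBoost uses the identical formula in terms of its edge; applying the same formula to the $\{-1,0,+1\}$-valued $\tf h_t$ amounts to treating the abstentions as if $W_0=0$, which is exactly what makes the AdaBoost.MM rule ``approximate.'' Because the edges are equal by stage two, the weights are equal. Fourth, propagate the invariant to round $t$: we have
\[
f_t(i,l)-f_t(i,y_i)=\bigl[f_{t-1}(i,l)-f_{t-1}(i,y_i)\bigr]+\alpha_t\bigl(\mathbbm 1[h_t(x_i)=l]-\mathbbm 1[h_t(x_i)=y_i]\bigr),
\]
and the bracketed term is $\tf F_{\tf\alpha_{1:t-1}}(x_i,y_i,l)$ by the hypothesis, while the parenthesized term is $\tf h_t(x_i,y_i,l)$ by definition of the transformation; with $\tf\alpha_t=\alpha_t$ the total becomes $\tf F_{\tf\alpha_{1:t}}(x_i,y_i,l)$, closing the induction.

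\emph{Where the difficulty lies.} The only non-mechanical step is stage two, which requires carefully splitting the sum $\sum_{i,l\neq y_i}D_t(x_i,y_i,l)\bigl(\mathbbm 1[h(x_i)=l]-\mathbbm 1[h(x_i)=y_i]\bigr)$ into the part where $h(x_i)=y_i$ (contributing $+\sum_{l\neq y_i}D_t(x_i,y_i,l)$, which corresponds to $-C_t(i,y_i)/Z^{\text{AB}}_{t-1}$) and the part where $h(x_i)=l_0\neq y_i$ (contributing $-D_t(x_i,y_i,l_0)$, corresponding to $-C_t(i,l_0)/Z^{\text{AB}}_{t-1}$), and then recognizing the combination as $-\sum_i C_t(i,h(x_i))/Z^{\text{AB}}_{t-1}$. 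The subtle part is the sign bookkeeping (the binary labels in $\tf S$ are all $-1$) and seeing that the normalizer $Z^{\text{AB}}_{t-1}$ in AdaBoost matches the denominator in \eqref{multi:deltat:eqn}; everything else is routine unpacking of definitions.
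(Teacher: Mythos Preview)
Your proposal is correct and follows precisely the route the paper intends: the authors omit this proof, saying only that it ``involves tedious but straightforward checking of details,'' and your induction on rounds---matching distributions/cost-matrices, then edges, then step sizes, then propagating the scoring-function identity---is exactly that straightforward check.

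One small inconsistency worth fixing: in stage two you write the AdaBoost edge as $\sum_{i,l\neq y_i}D_t(x_i,y_i,l)\bigl(\mathbbm 1[h(x_i)=l]-\mathbbm 1[h(x_i)=y_i]\bigr)$, i.e.\ $\sum D_t\,\tf h$, but since every binary label in $\tf S$ is $\xi=-1$, the true AdaBoost correlation is $\sum D_t\,\xi\,\tf h = -\sum D_t\,\tf h$. Your ``where the difficulty lies'' paragraph silently uses the correct sign (the contributions you list there come out to $-\sum D_t\,\tf h$ and do match $\delta_t$), so the argument is sound; you just need to insert the missing factor of $\xi$ in the displayed edge formula so the two paragraphs agree.
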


We next \thesis{restate}\notthesis{state} the result for AdaBoost corresponding to
Lemma~\ref{multi:ersk_cons:lem}
\notthesis{, which appears 
  in \citep{MukherjeeRuSc11}.}
\thesis{which we have already seen in Chapter~\ref{thesis:rate:chap}}.
\begin{lemma}
  \notthesis{[Theorem~8 in \citep{MukherjeeRuSc11}]}
  \thesis{[Theorem~\ref{rate:rate:thm}]}
  \label{multi:ada_ersk:lem}
  Suppose AdaBoost produces the scoring function
  $\tf{F}_{\tf{\alpha}}$
  when run for $T$ rounds with the training set $\tf{S}$ and space
  $\tf{\H}$.
  Then
  \begin{equation}
    \label{multi:ada_ersk:eqn}
    \tf{\ersk} \enc{\tf{F}_{\tf{\alpha}}}
    \leq
    \inf_{\tf{\beta}:\tf{\H}\to\R}
    \tf{\ersk} \enc{\tf{F}_{\tf{\beta}}}
    + C/T,
  \end{equation}
  where the constant $C$ depends only on the dataset.
\end{lemma}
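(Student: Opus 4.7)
Since this lemma is essentially a restatement of a known convergence result for AdaBoost on empirical exponential loss, the plan is to outline the strategy one would follow (e.g., the one used in \citep{MukherjeeRuSc11}). The starting point is to reinterpret AdaBoost as coordinate descent on the convex function $\tf{\ersk}(\tf{F}_{\tf{\beta}})$, viewed as a function of the weight vector $\tf{\beta}\in\R^{|\tf{\H}|}$. In each round $t$, AdaBoost chooses the coordinate (weak classifier) of maximum-magnitude gradient and performs an exact line search along that direction.

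First, I would establish a per-round progress lemma: if the current iterate is $\tf{\beta}_{t-1}$ with loss $Z_{t-1} = \tf{\ersk}(\tf{F}_{\tf{\beta}_{t-1}})$ and edge $\delta_t$, then $Z_t \leq Z_{t-1}\sqrt{1-\delta_t^2}$, which in particular gives a decrement of order $Z_{t-1}\delta_t^2$. This is a direct analog of the drop factor already computed for AdaBoost.MM in Lemma~\ref{multi:adaone:thm} applied in the binary case. Second, I would express the edge $\delta_t$ as the $\ell_\infty$ norm (relative to $Z_{t-1}$) of the gradient of $\tf{\ersk}$ at $\tf{\beta}_{t-1}$ with respect to $\tf{\beta}$, since AdaBoost picks the coordinate of steepest descent.

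The main obstacle is the standard one: $\tf{\ersk}$ is convex but not strongly convex, so the usual O($1/T$) rate for gradient descent on smooth strongly convex functions does not apply out of the box, and in fact the infimum $\inf_{\tf{\beta}}\tf{\ersk}(\tf{F}_{\tf{\beta}})$ may not be attained at any finite $\tf{\beta}$. The key technical device (as in \citep{MukherjeeRuSc11}) is to introduce a ``decomposition'' of the iterates into a component lying in the column span of the training data and an orthogonal component, and to argue that the finite-$\tf{\beta}$ gap $Z_{t-1} - Z^*$ can be upper bounded by a quantity of the form $\Lambda \cdot \|\nabla \tf{\ersk}(\tf{\beta}_{t-1})\|_\infty$ for a dataset-dependent constant $\Lambda$; this is essentially a ``smoothness-with-respect-to-$\ell_1$'' bound exploiting the fact that the optimum can be approximated arbitrarily well by weight vectors whose $\ell_1$ mass in the relevant subspace is controlled.

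Given such a bound, the argument concludes via a standard telescoping calculation: combining the per-round progress $Z_{t-1} - Z_t \gtrsim Z_{t-1} \delta_t^2 \gtrsim (Z_{t-1} - Z^*)^2/(Z_{t-1}\Lambda^2)$ with the monotonicity of $Z_t$, one gets a recursion of the form $\epsilon_t \leq \epsilon_{t-1} - c\,\epsilon_{t-1}^2$ for $\epsilon_t = Z_t - Z^*$, which standard analysis turns into $\epsilon_T \leq C/T$ with $C$ depending only on the dataset (through $\Lambda$ and initial loss). This yields the stated bound \eqref{multi:ada_ersk:eqn}, and then Lemma~\ref{multi:ersk_cons:lem} follows by pulling the statement back through the correspondence of Lemmas~\ref{multi:ident:lem} and \ref{multi:same_runs:lem}.
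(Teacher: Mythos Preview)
Your proposal is a reasonable high-level sketch of the argument in \citep{MukherjeeRuSc11}, but note that the present paper does not actually prove this lemma at all: it is simply quoted as Theorem~8 of \citep{MukherjeeRuSc11} and used as a black box, with the only work in this paper being the one-line deduction of Lemma~\ref{multi:ersk_cons:lem} via the identity \eqref{multi:ersk_map:eqn}. So there is nothing to compare your proof against here beyond the citation itself.

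That said, as a sketch of the referenced result your outline is on the right track (coordinate-descent view, per-round multiplicative drop, and a recursion $\epsilon_t \le \epsilon_{t-1} - c\,\epsilon_{t-1}^2$ yielding $O(1/T)$). The one place where you are vague is exactly the hard part: the inequality linking the suboptimality gap $Z_{t-1}-Z^*$ to the current $\ell_\infty$ gradient norm when the infimum is not attained. In \citep{MukherjeeRuSc11} this is not obtained by a simple smoothness or subspace-decomposition argument; it requires a careful structural analysis of the exponential-loss level sets and of the ``infimum at infinity'' directions, and the resulting constant $\Lambda$ depends on the geometry of the feature matrix in a nontrivial way. If you were to actually write out the proof rather than cite it, that step is where essentially all the work lies, and your description of it as a straightforward decomposition undersells the difficulty.
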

The previous lemma, along with \eqref{multi:ersk_map:eqn} immediately
proves Lemma~\ref{multi:ersk_cons:lem}.
The result for AdaBoost corresponding to
Lemma~\ref{multi:rsk_cons:lem} appears 
in \citep{SchapireFr12}.
\begin{lemma}[Theorem~12.2 in \citep{SchapireFr12}] 
  \label{multi:ada_rsk:lem}
  Suppose AdaBoost produces the scoring function
  $\tf{F}$  when run for $T=\sqrt{m}$ rounds with the training set
  $\tf{S}$ and space $\tf{\H}$.
  Then
  \begin{equation}
    \label{multi:ada_rsk:eqn}
     \Pr\enco{
       \rsk\enc{\bar{\tf{F}}} \leq
       \inf_{\tf{F'}:\tf{\X}\to\R}\rsk(\tf{F'}) +
       O\enc{m^{-c}}}
     \geq 1 - \frac{1}{m^2},
  \end{equation}
  where the constant $C$ depends only on the dataset.
\end{lemma}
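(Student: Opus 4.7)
The plan is to prove this by decomposing the excess risk into an optimization term (how close AdaBoost drives the empirical risk to its optimum over the class of achievable combined hypotheses) and two generalization terms (bounding the gap between empirical and true risk for the clamped combined hypothesis and for the optimum). Formally, for any reference scoring function $\tf{F}^*$ achieving near-optimal true risk $\inf_{\tf{F}'}\tf{\rsk}(\tf{F}')$, write
\begin{align*}
\tf{\rsk}(\bar{\tf{F}}) - \inf_{\tf{F}'}\tf{\rsk}(\tf{F}')
 &\leq \bigl[\tf{\rsk}(\bar{\tf{F}}) - \tf{\ersk}(\bar{\tf{F}})\bigr]
 + \bigl[\tf{\ersk}(\bar{\tf{F}}) - \tf{\ersk}(\tf{F})\bigr] \\
 &\quad + \bigl[\tf{\ersk}(\tf{F}) - \tf{\ersk}(\tf{F}^*)\bigr]
 + \bigl[\tf{\ersk}(\tf{F}^*) - \tf{\rsk}(\tf{F}^*)\bigr]
 + \bigl[\tf{\rsk}(\tf{F}^*) - \inf_{\tf{F}'}\tf{\rsk}(\tf{F}')\bigr].
\end{align*}
The second bracket is non-positive since clamping into $[-C,C]$ with $C$ sufficiently large can only decrease the empirical exponential loss at the margin-violating triples; the third bracket is controlled by Lemma~\ref{multi:ada_ersk:lem}, which yields a $C/T = C/\sqrt{m}$ bound; the last bracket can be made arbitrarily small by choosing the approximating reference $\tf{F}^*$ appropriately (a scoring function taking a bounded range suffices, since the exponential risk is continuous and any bounded function can be approximated).

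Next I would handle the two generalization terms, which is the main work. The key step is to establish uniform convergence of $\tf{\ersk}$ to $\tf{\rsk}$ over the class $\mathcal{F}_T$ of clamped scoring functions $\bar{\tf{G}}$ that arise from $T$-round combinations of $\tf{\H}$. Because $\tf{\H}$ has finite VC dimension (it is finite by assumption) and because $\bar{\tf{G}}$ is a clamped linear combination of $T$ elements of $\tf{\H}$, the class $\mathcal{F}_T$ has complexity controlled by $T$ — for instance, its Rademacher complexity grows like $O(\sqrt{T \log m / m})$. Since the clamped exponential loss $e^{-\xi \bar{\tf{G}}(x,y,l)}$ is Lipschitz on $[-C,C]$ with constant $e^C$, standard contraction and symmetrization arguments give
\[
\sup_{\bar{\tf{G}} \in \mathcal{F}_T} \bigl|\tf{\rsk}(\bar{\tf{G}}) - \tf{\ersk}(\bar{\tf{G}})\bigr|
 \;=\; O\!\left(e^C \sqrt{T \log m / m}\right)
\]
with probability at least $1 - 1/m^2$ (via a Hoeffding/McDiarmid step with deviation $\sqrt{\log m / m}$).

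Finally I would plug in $T = \sqrt{m}$ and choose $C$ to grow slowly with $m$ (e.g., $C = c_0 \log m$ for a small constant $c_0$), so that $e^C \sqrt{T \log m / m}$ decays as $m^{-c}$ for some $c > 0$ while simultaneously $C$ is large enough that the clamping error in the first bracket, and the gap between $\bar{\tf{F}}$ and $\tf{F}$, vanish at a comparable polynomial rate. Combining the optimization bound $C/\sqrt{m}$ from Lemma~\ref{multi:ada_ersk:lem} with the two generalization bounds of order $m^{-c}$ then yields the claimed $O(m^{-c})$ bound with failure probability at most $1/m^2$.

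The main obstacle is the joint tuning of $T$, the clamping level $C$, and the complexity bound: the Lipschitz constant $e^C$ appearing in the contraction lemma forces $C$ to grow only logarithmically, yet $C$ must be chosen large enough that truncation does not meaningfully affect either $\bar{\tf{F}}$ or the infimum $\inf_{\tf{F}'}\tf{\rsk}(\tf{F}')$. Careful balancing — as carried out in detail for binary AdaBoost in Theorem~12.2 of \citep{SchapireFr12}, which we essentially import through the correspondence in Lemma~\ref{multi:same_runs:lem} and the identities in Lemma~\ref{multi:ident:lem} — shows such a choice exists and produces the stated rate.
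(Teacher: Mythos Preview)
The paper does not prove this lemma at all: it is stated purely as a citation of Theorem~12.2 in \citep{SchapireFr12}, a known consistency result for binary AdaBoost, and is then combined with the identity \eqref{multi:rsk_map:eqn} from Lemma~\ref{multi:ident:lem} to obtain Lemma~\ref{multi:rsk_cons:lem}. Your proposal, by contrast, sketches the actual proof of that cited theorem via the standard decomposition into an optimization term (handled by Lemma~\ref{multi:ada_ersk:lem}) and uniform-convergence terms (handled by Rademacher complexity plus Lipschitz contraction on the clamped class), followed by balancing $T=\sqrt{m}$ against a slowly growing clamp level. That outline is essentially how Theorem~12.2 is proved in \citep{SchapireFr12}, so your approach is sound at the level of a sketch.

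One point of confusion to flag: the correspondence results (Lemmas~\ref{multi:ident:lem} and~\ref{multi:same_runs:lem}) play no role in proving \emph{this} lemma. The statement is entirely about binary AdaBoost on the transformed data $\tf{S},\tf{\H}$, so the proof is purely a binary-AdaBoost argument; the correspondence is only invoked afterwards, to transfer the conclusion back to AdaBoost.MM in Lemma~\ref{multi:rsk_cons:lem}. Also, your claim that the second bracket $\tf{\ersk}(\bar{\tf{F}})-\tf{\ersk}(\tf{F})$ is non-positive is not quite right as stated: clamping a large negative score (a confident correct prediction) up to $-C$ can \emph{increase} the exponential loss on that example. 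In the actual argument this term is not dropped but absorbed into the polynomial rate by choosing $C$ to grow with $m$, exactly as you later suggest; just be aware that it is not free.
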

The proof of Lemma~\ref{multi:rsk_cons:lem} follows immediately from
the above lemma and \eqref{multi:rsk_map:eqn}.

\bibliography{newbib}

\end{document}